\documentclass[11pt,a4paper,onecolumn,oneside]{report}
\usepackage{tgtermes}
\usepackage{mathptmx}
\usepackage[T1]{fontenc}
\usepackage[utf8]{inputenc}

\RequirePackage[top=3cm, bottom=1in, left=1in, right=1in]{geometry}
\linespread{1.3}

\usepackage{titlesec}
\usepackage{amsmath}
\usepackage{amssymb}
\usepackage{mathtools}
\usepackage{amsthm}
\usepackage{enumerate}
\usepackage{bbm}
\usepackage{algorithm}
\usepackage{algorithmic}
\usepackage{epsfig}
\usepackage{color}
\usepackage{graphicx}
\usepackage{caption}
\usepackage{subcaption}
\usepackage{cases}
\usepackage{url}
\usepackage{cite}
\usepackage{float}
\usepackage{fancyhdr}
\usepackage{tocloft}
\usepackage{pdfpages}
\usepackage{sectsty,lmodern}
\usepackage{fontsize}
\usepackage{booktabs}
\usepackage{multirow}
\usepackage{rotating}
\usepackage[hidelinks]{hyperref}
\usepackage[table,xcdraw,dvipsnames]{xcolor}

\renewcommand{\thefigure}{\arabic{figure}}

\theoremstyle{plain}
\newtheorem{theorem}{Theorem}

\newtheorem{lemma}{Lemma}
\newtheorem{corollary}{Corollary}

\theoremstyle{definition}
\newtheorem{definition}{Definition}

\newtheorem{remark}{Remark}

\DeclareMathOperator*{\argmin}{arg\,min}

\fancyhf{}

\cfoot{\thepage}
\pagestyle{fancy}
\begin{document}

\begin{center}
\vspace{3cm}
\LARGE Doctoral Thesis

\vspace{3cm}
\huge Algorithms for Collaborative Machine Learning under Statistical Heterogeneity
\vfill

\LARGE Seok-Ju Hahn
\vspace{2cm}

\LARGE Department of Industrial Engineering
\vspace{2cm}

\LARGE Ulsan National Institute of Science and Technology
\vspace{2cm}

\LARGE 2024
\vspace{3cm}

\end{center}
\thispagestyle{empty}
\clearpage

\begin{center}
\hbox{ }
\hbox{ }
\huge Algorithms for Collaborative Machine Learning under Statistical Heterogeneity
\vspace{5cm}

\LARGE Seok-Ju Hahn
\vspace{6cm}

\LARGE Department of Industrial Engineering
\vspace{2cm}

\LARGE Ulsan National Institute of Science and Technology

\end{center}
\thispagestyle{empty}
\clearpage

\includepdf[fitpaper=true, pages=-]{./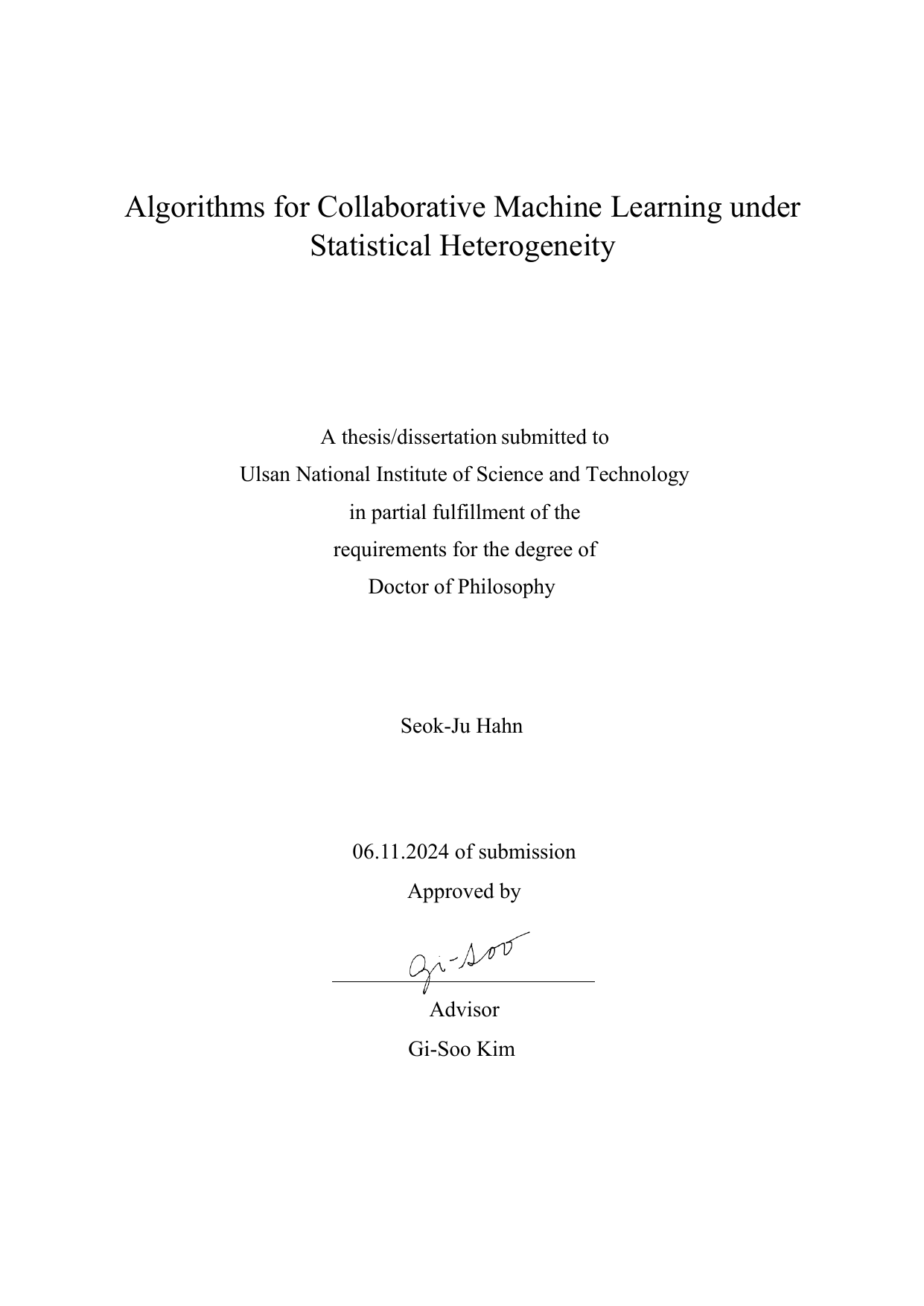}

\includepdf[fitpaper=true, pages=-]{./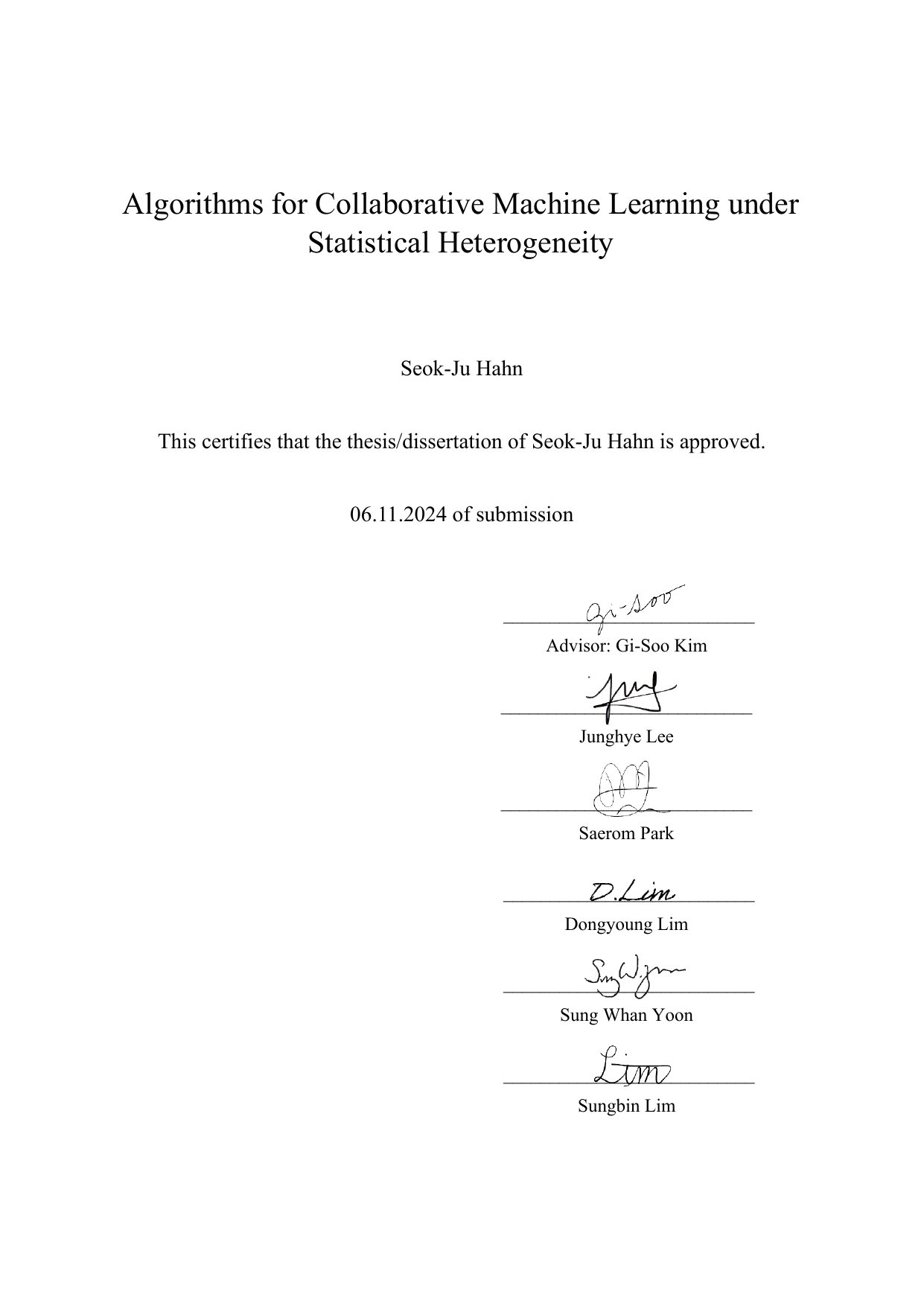}

\begin{abstract}
  Learning from distributed data without accessing them is undoubtedly a challenging and non-trivial task.
  Nevertheless, the necessity for distributed training of a statistical model has been increasing, 
  due to the privacy concerns of local data owners and the cost in centralizing the massively distributed data. 
  Federated learning (FL) is currently the de facto standard of training a machine learning model across heterogeneous data owners,   without leaving the raw data out of local silos.
  Nevertheless, several challenges must be addressed in order for FL to be more practical in reality. 
  Among these challenges, the statistical heterogeneity problem is the most significant and requires immediate attention.
  From the main objective of FL, three major factors can be considered as starting points --- \textit{parameter}, \textit{mixing coefficient}, and \textit{local data distributions}.
  
  In alignment with the components, this dissertation is organized into three parts.
  In Chapter~\ref{ch:superfed}, a novel personalization method, \texttt{SuPerFed}, inspired by the mode-connectivity is introduced. 
  This method aims to find better \textit{parameters} that are suitable for achieving enhanced generalization ability in all local data distributions. 
  In Chapter~\ref{ch:aaggff}, an adaptive decision-making algorithm, \texttt{AAggFF}, is introduced for inducing uniform performance distributions in participating clients,  which is realized by online convex optimization framework. 
  This method explicitly learns fairness-inducing \textit{mixing coefficients} sequentially, and is also specialized for two practical FL settings.
  Finally, in Chapter~\ref{ch:fedevg}, a collaborative synthetic data generation method, \texttt{FedEvg}, is introduced, leveraging the flexibility and compositionality of an energy-based modeling approach. 
  The objective of this method is to emulate the joint density of disparate local data distributions without accessing them, which enables to generate server-side synthetic dataset.
  
  Taken together, all of these approaches provide practical solutions to mitigate the statistical heterogeneity problem in data-decentralized settings, paving the way for distributed systems and applications using collaborative machine learning methods.
\vfill
\end{abstract}

\clearpage

\hbox{ }
\thispagestyle{empty}
\clearpage

\addtocontents{toc}{\protect\pagestyle{empty}}
\addtocontents{toc}{\protect\thispagestyle{empty}}
\tableofcontents
\thispagestyle{empty}
\clearpage

\listoffigures{}
\thispagestyle{empty}
\clearpage

\listoftables{}
\thispagestyle{empty}
\clearpage

\setcounter{page}{1}
\renewcommand{\baselinestretch}{1.5} 
\setlength{\parskip}{12pt} 
\chapternumberfont{\fontsize{36pt}{32pt}\selectfont} 
\chaptertitlefont{\fontsize{24pt}{20pt}\selectfont} 
\sectionfont{\fontsize{18pt}{16pt}\selectfont} 
\subsectionfont{\fontsize{16pt}{14pt}\selectfont} 
\subsubsectionfont{\fontsize{14pt}{12pt}\selectfont} 
\fontsize{11pt}{13pt}\selectfont

\chapter{Introduction} 
\numberwithin{equation}{chapter}
\renewcommand{\theequation}{1.\arabic{equation}}  

    Data is being collected every time, almost everywhere, including e.g., wearable devices, electronic transactions, and satellites orbiting the earth.
    Undoubtedly, the abundance of data is a critical component of the current success of machine learning.
    Once a sufficient amount of data is secured, one can train an accurate statistical model with a good generalization performance on the unseen data distribution.
    However, preparing a satisfactory amount of data is not always feasible in reality.
    This is due to practical reasons: some data may contain sensitive personal information, such as electronic health records, 
    while other data is massively distributed to a large population, such as smartphone users. 
    
    Federated Learning (FL)\cite{fedavg} is a tailor-made method for this problematic situation, 
    since it aims to train a machine learning model \textit{without} accessing or centralizing distributed data in one place.  
    Instead, an orchestrator (which is usually a central server) requests each data owner (i.e., client) to commit a locally updated statistical model trained on each client's data distribution.
    Subsequently, the server aggregates local updates into a single global model, which may contain knowledge from heterogeneous clients.
    With enough repetition of these local updates \& global aggregation procedures, the central server can ultimately obtain a global model with satisfactory performance.

\newpage
\section{Federated Learning: Concepts and Challenges}
\subsection{Overview}
    \begin{figure}
        \centering
        \includegraphics[width=\linewidth]{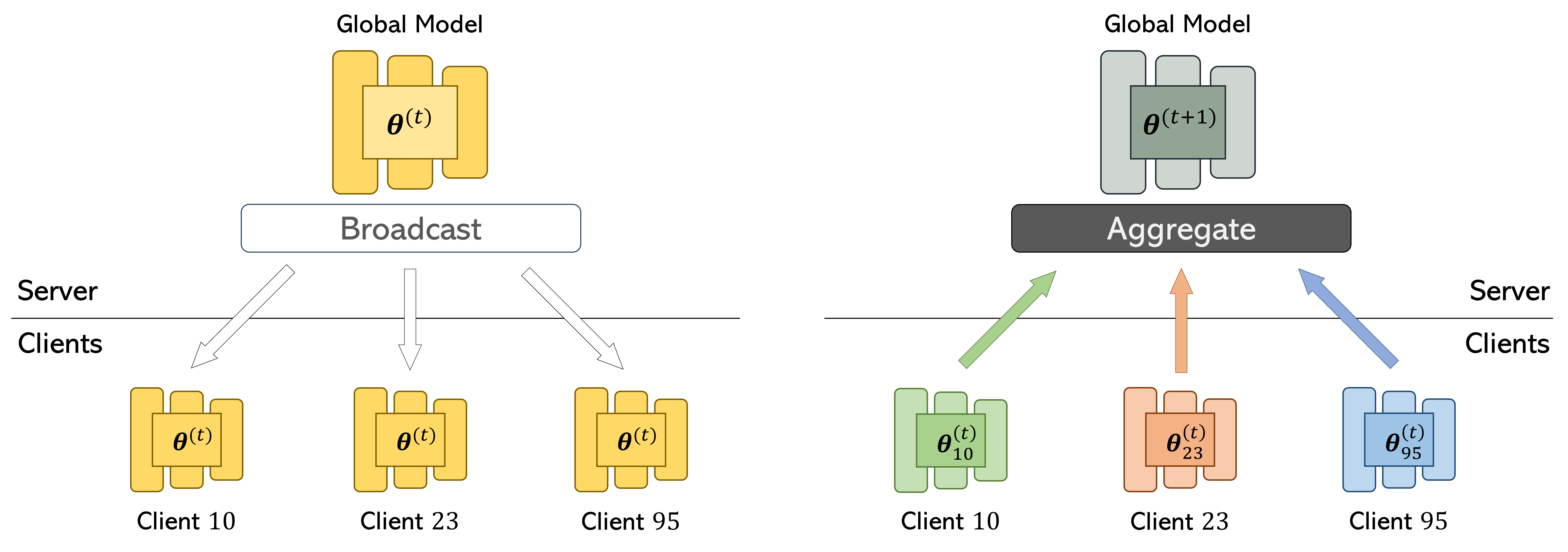}
        \caption[Overview of a one step of federated learning procedure] 
        {Overview of a one step of federated learning procedure.
        The central server broadcasts a global model to a random subset of clients (left). 
        Each client updates the global model using its own local dataset and computing power, 
        and the local updates are then uploaded to the server to be aggregated into a new global model (right).
        This process is repeated until a global model converges, orchestrated by the server.}
        \label{fig:main_fl}
    \end{figure}
    Federated learning (FL) is a distributed machine learning technique in a network, where a central server orchestrates the whole procedure of FL with participating clients.
    Each client has its own local data set and computing power, but it is typically assumed that the data set may be limited in quality or quantity.
    Thus, clients are motivated to participate in the federated system to benefit from the collaborative intelligence induced by the FL process.
    One of the privileged advantages of FL is a mild privacy protection, since data centralization is missing in the whole procedure.
    Instead of collecting data in one place as in traditional machine learning practices, FL requires each client to provide other signals, such as locally updated model parameters or gradients.
    By aggregating these proxy signals, the server can update a global statistical model without accessing distributed data silos.
    Moreover, since local computation can be performed parallelly on each client, the central server can avoid massive computation costs.
    This appealing feature has led to the practical application of FL, such as mobile applications~\cite{app1,app2}, autonomous driving~\cite{app3}, unmanned aerial vehicles~\cite{app4}, clinical risk predictions~\cite{app5,app6,app7,app8}, and financial domains~\cite{app9,app10,app11}.

\newpage
\subsection{Objective of Federated Learning}
    The main objective of FL is defined as follows:
    \begin{equation}
    \label{eq:fl_obj}
    \begin{gathered}
        \min_{\boldsymbol{\theta}\in\Theta\subseteq\mathbb{R}^d}
        F\left(\boldsymbol{\theta}\right)
        \triangleq
        \sum\nolimits_{i=1}^K w_i F_i\left({\boldsymbol{\theta}}\right), \\
        \text{ where }
        F_i\left({\boldsymbol{\theta}}\right) = \frac{1}{n_i} \sum\nolimits_{j=1}^{n_i} \mathcal{L}\left(\xi_j;\boldsymbol{\theta}\right),
        \xi_j\sim p(\mathcal{D}_i).
    \end{gathered}    
    \end{equation}
    In the FL system, one can assume that there are $K$ participating clients.
    The FL objective aims to minimize the composite local objectives of $K$ clients, where client $i$'s local objective is $F_i\left({\boldsymbol{\theta}}\right)$.
    Each local objective is to find an appropriate \textbf{\color{ForestGreen}\textit{parameter}}, $\boldsymbol{\theta}$, using \textbf{\color{Maroon} \textit{local data distribution}} $\mathcal{D}_i, i\in[K]$.
    The local objective is weighted by a corresponding \textbf{\color{YellowOrange} \textit{mixing coefficient}} $w_i \geq 0$ ($\sum_{i=1}^K w_i=1$),
    which is usually set to be a \textit{static} value proportional to the local sample size, $n_i$: e.g., $w_i = \frac{n_i}{n}$, where $ n=\sum_{j=1}^K n_j$.
    Note that the local objective is defined as the average of per-sample training loss $\mathcal{L}\left(\cdot;\boldsymbol{\theta}\right)$ w.r.t. the parameter $\boldsymbol{\theta}$ (which is an element of a parameter space, $\Theta$). 
    It is calculated from local samples, $\xi_k, k\in[n_i]$, following the convention of the empirical risk minimization (ERM) principle~\cite{erm}.

    One of the basic solvers for the main objective in eq.~\eqref{eq:fl_obj} is Federated Averaging (i.e., \texttt{FedAvg})~\cite{fedavg}, which iteratively updates a global model by aggregating locally updated signals from clients. 
    Each client downloads a copy of a global model and locally updates the model using their own dataset.
    This local update is again uploaded to the central server, and it is aggregated into a new global model.
    The aggregation scheme is no more than a simple weighted averaging of local updates proportional to the local sample sizes used for local updates.
    See Algorithm~\ref{alg:fedavg} in detail.
    \begin{algorithm}[H]
    \caption{\texttt{FedAvg} (Federated Averaging~\cite{fedavg})}
    \label{alg:fedavg}
    \begin{algorithmic}
        \STATE{\textbf{Inputs}: 
        number of clients ($K$),
        cilent sampling rate ($C$),
        number of communication rounds ($T$)} 
        \STATE{\textbf{Procedure:}} 
        \STATE Server initializes a global model $\boldsymbol{\theta}^{(0)}$.
        \FOR{each round $t=0,...,T-1$}
        \STATE $S^{(t)}\leftarrow$ randomly selected $\max(C \cdot K, 1)$ clients.
            \FOR{each client $i \in S^{(t)}$ \textbf{in parallel}}
                \STATE{$\boldsymbol{\theta}^{(t+1)}_i\leftarrow$\texttt{ClientUpdate}$(i,\boldsymbol{\theta}^{(t)})$}.
            \ENDFOR
        \STATE $\boldsymbol{\theta}^{(t+1)}\leftarrow\frac{1}{\sum_{j \in S^{(t)}}^{K} n_j}\sum_{i \in S^{(t)}}^{K} n_i \boldsymbol{\theta}^{(t+1)}_i$.
        \ENDFOR
        \STATE{\textbf{Return:} $\boldsymbol{\theta}^{(T)}$.}
    \end{algorithmic}
\end{algorithm}

\begin{algorithm}[H]
    \caption{\texttt{ClientUpdate}}
    \begin{algorithmic}
    \STATE{\textbf{Inputs}: 
        local batch size ($B$), 
        number of local epochs ($E$), 
        and local learning rate ($\eta$)} 
    \STATE{\textbf{Procedure:} }
    \STATE{Receive a global model $\boldsymbol{\theta}$ from the server.}
    \STATE{$\mathcal{B}\leftarrow$ split local dataset into batches of size $B$.}
    \FOR{each local epoch $e=0,...,E-1$}
        \FOR{mini-batch $\xi\in\mathcal{B}$}
            \STATE $\boldsymbol{\theta}\leftarrow\boldsymbol{\theta}-\eta\nabla_{\boldsymbol{\theta}}\mathcal{L}(\xi;\boldsymbol{\theta})$
        \ENDFOR
    \ENDFOR
    \STATE{\textbf{Return:} $\boldsymbol{\theta}$ to the server.}
    \end{algorithmic}
\end{algorithm}
    
\newpage
\subsection{Challenges in Federated Learning}
    We briefly introduce major challenges in FL that should be addressed to achieve a practical and faithful federated system.
    While this dissertation mainly focuses on the most critical and widespread challenge, a \underline{statistical heterogeneity}, other challenges should also be addressed in practical FL services.
    Interested readers may find more details about each challenge in the seminal paper by Kairouz et al.~\cite{prob_fl}.
    
\paragraph{Statistical Heterogeneity}
    In the federated system, each participating client corresponds to, for example, a smartphone user, a car owner, a clinical institution, or a financial organization.
    These clients generate and store data in a heterogeneous manner, which is also known as a non-IID problem (i.e., not independent and not identically distributed).
    In other words, each client has its own unique local distribution that is different from the others.
    Therefore, the resulting local updates computed from different local distributions will inevitably differ from each other.
    This negatively affects the update direction of a global model; therefore, careful tuning is required for FL configurations, such as the number of local updates and the global synchronization frequency.

\paragraph{Systematic Heterogeneity}
    Orthogonal to the aforementioned challenge, there is another line of heterogeneity in terms of system constraints, e.g., the communication or computation capacity of each client.
    Each client is both a model trainer and a data owner.
    However, the capabilities of all participating clients cannot be uniformly adjusted.
    Therefore, there should be inevitable differences between them, such as operating systems, computing capacity, and communication bandwidth.
    These sometimes affect the availability of certain clients in the federated system, where some clients may be dropped in a given communication round due to an unstable connection status or a response timeout due to slow computation speed.

\paragraph{Privacy}
    While FL guarantees a minimal level of privacy protection by only sharing local updates, the local update itself still has some risk of compromising privacy if a malicious attacker (including the client and server) intends to do so.
    Hence, by introducing privacy-preserving techniques such as Differential Privacy (DP)~\cite{dp}, Secure Multi-Party Computation (MPC)~\cite{smpc}, Homomorphic Encryption (HE)~\cite{he1,he2} into FL, the federated system can be more trustworthy and robust.
    Although these methods provide security and robustness to the federated system, they also involve trade-offs: performance degradation and computation/communication overhead.
    Thus, striking a good balance between performance and privacy is a key challenge in practice.

\newpage
\section{Overview of Dissertation Structure}
    This dissertation addresses the statistical heterogeneity from each of the three major components of the FL objective from an \textbf{independent perspective}: {\color{ForestGreen}\textit{parameter}}, 
    {\color{YellowOrange} \textit{mixing coefficient}} and {\color{Maroon} \textit{local data distribution}}.
    The statistical heterogeneity is the main problem to be addressed by modifying each component.
    Meanwhile, as each component is associated with several characteristics of the federated system, 
    it is still possible to address some of the other challenges, too.

    In \textbf{Chapter~\ref{ch:superfed}}, we introduce a new method, \texttt{SuPerFed} to find a better parameter in terms of personalization scheme.
    We first examine that existing model mixture-based personalized FL algorithms are actually an an implicit ensemble of different (sub-)models.
    We then adopt one of the explicit schemes of a model ensemble method, named as \textit{mode connectivity}~\cite{fort2019deep, fort2019large, modeconnect, garipov+18, draxler18}, 
    to boost the performance of the model mixture-based personalized FL algorithm.
    This approach not only enhances the overall accuracy but also renders the system more resilient to label noise and improves calibration error, 
    thereby making it scalable and practical.

    In \textbf{Chapter~\ref{ch:aaggff}}, we introduce the first sequential decision-making framework for fair FL, \texttt{AAggFF}. 
    It aims to guide a single global model to have a uniform performance distribution across heterogeneous clients, 
    which is also known as client-level fairness~\cite{clientlevelfairness}. 
    It is inspired by the observation that existing fair FL algorithms could be unified into a specific instance of an \textit{online convex optimization} framework~\cite{oco1,oco2,oco3}.
    We then further improve our method using an advanced framework, and subdivide it into two practical FL settings: cross-silo FL and cross-device FL.

    In \textbf{Chapter~\ref{ch:fedevg}}, we introduce \texttt{FedEvg}, 
    a communication-efficient synthetic dataset generation method in a heterogeneous network by leveraging the flexibility of an \textit{energy-based modeling}~\cite{ebm,your,ebmcomp1}.
    This approach integrates both discriminative and generative modeling, with the former being a common task in FL literature while the latter has not been sufficiently explored.
    We demonstrate that a central server can obtain a faithful synthetic dataset with communication efficiency, 
    which can also be exploited for a variety of downstream tasks.  
    Compared to existing works, \texttt{FedEvg} can construct synthetic datasets with better quality and downstream performances. 

\newpage
\section{Software}
    All implementations of the methods in this thesis are available in the links below.
    \begin{itemize}
        \item \textbf{Chapter~\ref{ch:superfed}} (\texttt{SuPerFed}): \href{https://github.com/vaseline555/SuPerFed}{\color{blue}https://github.com/vaseline555/SuPerFed}
        \item \textbf{Chapter~\ref{ch:aaggff}} (\texttt{AAggFF}): \href{https://github.com/vaseline555/AAggFF}{\color{blue}https://github.com/vaseline555/AAggFF}
        \item \textbf{Chapter~\ref{ch:fedevg}} (\texttt{FedEvg}): \href{https://github.com/vaseline555/FedEvg}{\color{blue}https://github.com/vaseline555/FedEvg}
    \end{itemize}

    All code is implemented in PyTorch \cite{pytorch}, simulating a central parameter server that orchestrates a whole FL procedure and operates each method (\texttt{SuPerFed}, \texttt{AAggFF}, and \texttt{FedEvg}). 
    It is further simulated that $K$ participating clients have their own local samples, computation power.
    The communication structure is a star topology between a single central server and multiple clients. 
    All experiments are conducted on a single computing server with two Intel\textsuperscript{\textregistered} Xeon\textsuperscript{\textregistered} Gold 6226R CPUs (@ 2.90GHz) and one NVIDIA\textsuperscript{\textregistered}  Tesla\textsuperscript{\textregistered} V100-PCIE-32GB GPU.  

\newpage 
\chapter{Perspective on Parameter, $\boldsymbol{\theta}$}
\label{ch:superfed}
\numberwithin{equation}{chapter}
\numberwithin{figure}{chapter}
\numberwithin{table}{chapter}
\numberwithin{algorithm}{chapter}
\renewcommand{\theequation}{2.\arabic{equation}}
\renewcommand{\thefigure}{2.\arabic{figure}}
\renewcommand{\thetable}{2.\arabic{table}}
\renewcommand{\thealgorithm}{2.\arabic{algorithm}}

\section*{Connecting Low-Loss Subspace for Personalized Federated Learning}
\addcontentsline{toc}{section}{\protect\numberline{}\textbf{Connecting Low-Loss Subspace for Personalized Federated Learning}}
    In this section, we examine potential improvements in the perspective of the \textbf{parameter}, denoted as $\boldsymbol{\theta}$ in eq.~\eqref{eq:fl_obj}.
    One intuitive modification is to learn multiple parameters in each client rather than a single one.
    It can be advantageous to learn multiple parameters simultaneously, particularly when one parameter acquires a global knowledge while the other becomes an expert in local knowledge.
    By simultaneously learning multiple parameters, the federated system can possibly mitigate statistical heterogeneity across disparate clients.
    
    This is precisely aligned with the model-based personalized federated learning (PFL) approaches. However, existing methodologies either exchange a partial parameter or necessitate additional local updates for personalization, which places a burden on computational resources and overlooks novel clients.
    Furthermore, the mixture between a parameter of a personalized model and a parameter of a federated model is \textit{not} designed to create an explicit synergy between them.
    
    Here comes the main research question:
    \begin{center} 
    \textit{How can we explicitly induce the synergy between the \textbf{parameters} of\\ the model mixture-based personalization methods to mitigate statistical heterogeneity?}
    \end{center}
    
\newpage
\section{Introduction}
    Individuals and institutions are nowadays actively engaged in the production and management of data, as a consequence of the advent of sophisticated communication and computational technologies. 
    Consequently, the training of statistical models in a data-centralized setting is not always a viable option due to a number of practical constraints. 
    These include the existence of massive clients having their own data, and the data privacy concerns that restrict the access and collection of data. 
    FL has been served as a promising solution to this problematic situation, as it enables the parallel training of a machine learning model across clients or silos without the need to share their private data. 
    This is usually proceeded under the orchestration of the central server, which is, for example, a service provider. 
    In the most common FL setting, such as \texttt{FedAvg}~\cite{fedavg}, when the server prepares and broadcasts a model appropriate for a target task, each participant trains the model with its own data and transmits the resulting model parameters to the server. 
    The server then aggregates (e.g., weighted averaging) the locally updated parameters into a new global model, which is broadcast again to some fraction of participating clients. 
    This collaborative learning process is repeated until convergence of the model is achieved. 
    However, obtaining a single global model through FL is not sufficient to provide satisfactory experiences to all clients, due to the \textit{statistical heterogeneity}. 
    That is, there exists an inherent difference among local distributions across clients, 
    or the data is not independent and identically distributed (non-IID). 
    Hence, the application of a straightforward FL approach, such as \texttt{FedAvg}, 
    is insufficient to circumvent the inherent limitations of high generalization error or model divergence~\cite{ka+19}. 
    In such cases, the performance of the global model may be inferior to that of the locally trained model, 
    prompting clients to question the value of their participation in the FL training.
    
    As a consequence, it is natural to reconsider the fundamental principles of FL. 
    A personalized FL (PFL) has become a promising alternative to single model training scheme, 
    which has led to a plethora of related studies.
    Many approaches are proposed for PFL, based on multi-task learning \cite{mocha, mar+21, fedu}, regularization technique \cite{l2sgd, pfedme, ditto}; meta-learning  \cite{Fallah, jiang+19}, clustering \cite{ghosh+20, clustered, mansour+20}, and a model mixture method \cite{fedrep, FedPer, lgfedavg, ditto, apfl, mansour+20}.
    In this chapter, we will focus on the \textit{model mixture-based PFL} method, 
    which has demonstrated decent performance and allows each client to have their own model. 
    The model mixture-based PFL method assumes that each client has two distinct parameters: 
    a \textit{local (sub-)model} and a \textit{federated (sub-)model}
    \footnote{We intentionally coined this term to avoid confusion with the global model at the server.} 
    (i.e., federated (sub-)model is originally a \textit{global model} before it is downloaded to a client).
    In the model mixture-based PFL method, it is expected that the local model will capture the information pertaining to the heterogeneous distribution of client data by remaining solely at the client-side.
    In contrast, the federated model will focus on learning common information across clients by being communicated with the server.
    Each of the federated model trained at the client is then uploaded to the server, where it is aggregated into a single global model (Figure~\ref{fig:fig1_1}).
    Note that this can be viewed as an implicit collaboration between the personalized model and the federated model.
    Upon completion of FL training, a participating client is eventually equipped with a personalized parameter (i.e., a local model), 
    which may mitigate the problem associated with non-IID nature.
    The central server, in turn, obtains a global model that can be exploited for other downstream tasks or for novel clients entering in the system.
    \begin{figure}
    \centering
        \includegraphics[width=0.85\linewidth]{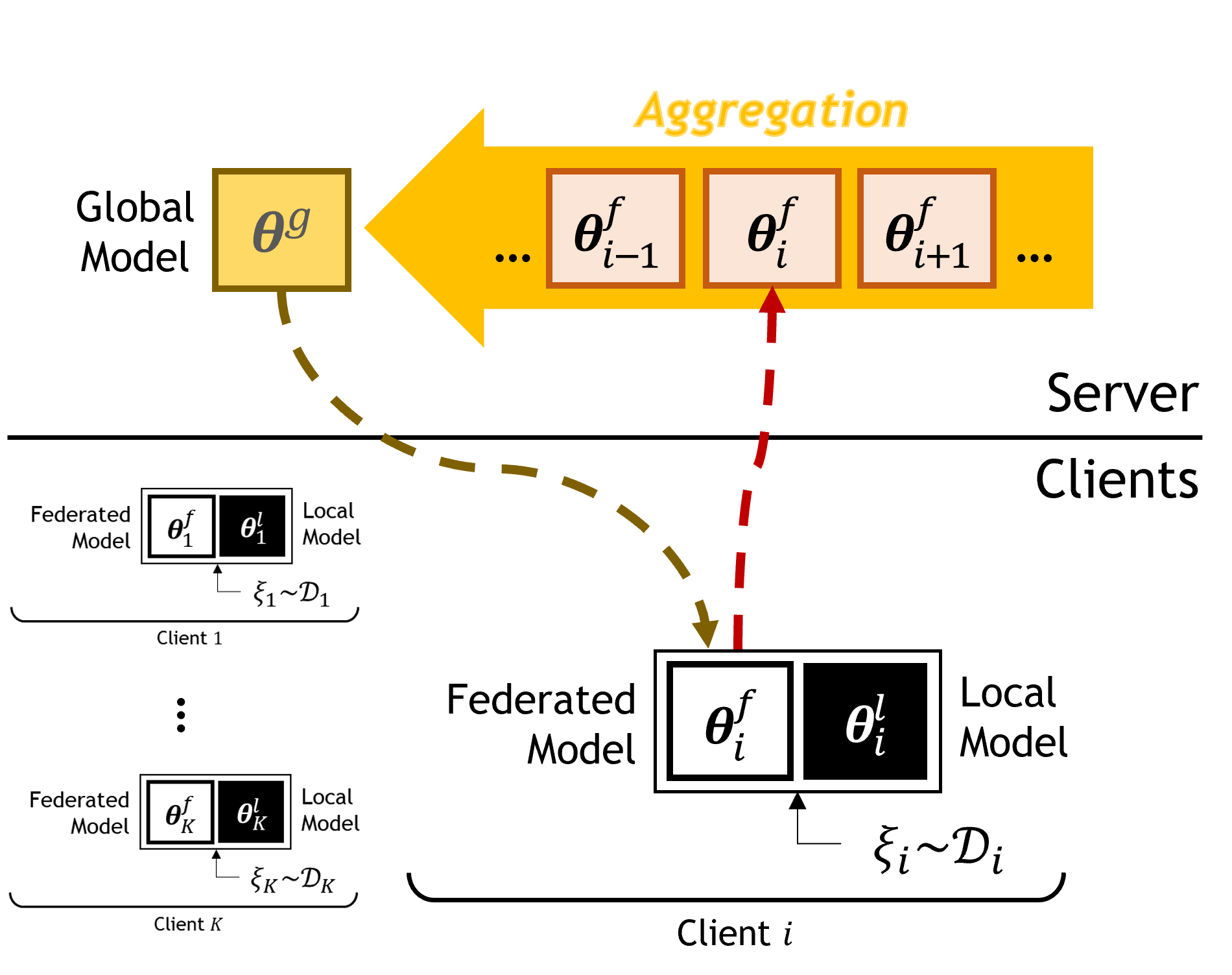}
        \caption{Overview of the model mixture-based personalized federated learning method}
        \label{fig:fig1_1}
    \end{figure}
    The model mixture-based PFL method comprises two sub-branches. 
    One involves sharing only a partial set of parameters of a single model as a federated model~\cite{fedrep, FedPer, lgfedavg} (e.g., only exchanging the weights of a penultimate layer). 
    The other involves having two identically structured models as a local and a federated model~\cite{ditto, apfl, l2sgd, pfedme, mansour+20}. 
    However, each of these approaches has certain limitations. 
    In the former case, novel clients entering the federated system cannot immediately exploit the model trained by FL, as the server only has a partial set of model parameters. 
    In the latter case, all of them require separate (or sequential) updates for the local and the federated model, each of which increases the local training time. This is sometimes burdensome to resource-poor clients and slows down the whole FL procedure.
    Moreover, both approaches \textbf{do NOT explicitly seek to achieve mutual benefits} between the local and federated models.

    Here comes our main motivation: the development of a novel model mixture-based PFL method that jointly trains both federated and local models by inducing explicit synergy between them. 
    We attempted to achieve this goal through the lens of \textit{mode connectivity}~\cite{garipov+18, fort2019deep, fort2019large, draxler18, modeconnect, nnsubspaces}. 
    As numerous studies on the loss landscape of the deep network have been conducted, one intriguing phenomenon is being actively discussed: 
    the connected subspace between two or more different deep networks. 
    Despite the well-known fact of the existence of numerous local minima in deep networks, recent findings have demonstrated that the performance of these minima is, in fact, quite similar~\cite{choro}.
    Additionally, it has recently been discovered that two different local minima obtained by two independent deep networks can be connected through the linear path~\cite{frankle20} or even through the non-linear path~\cite{garipov+18, draxler18} in the parameter space, where \textit{all} parameters along the path have a low loss. 
    Note that the two endpoints of such a path are two different local minima reached from the two different deep networks. 
    These findings have been extended more to a multi-dimensional subspace~\cite{nnsubspaces, benton21} with only a few gradient steps on two or more \textit{independently initialized} deep networks. 
    The resulting subspace, including the linear connected path, contains functionally diverse models having high accuracy. 
    It can also be viewed as an \textit{ensemble of deep networks} in the parameter space, thereby sharing similar properties such as good calibration performance and robustness to the label noise. 
    By introducing the \textit{mode connectivity} to a model mixture-based PFL method, such good properties can also be reproduced accordingly.

\paragraph{Contributions}
    We propose \texttt{SuPerFed}, a connected low-loss \textbf{su}bspace construction method for the \textbf{per}sonalized \textbf{fed}erated learning, adapting the concept of \textit{connectivity} to the model mixture-based PFL method. 
    This method aims to find a low-loss subspace between a single global model and many different local models at clients in a mutually beneficial way. Consequently, our objective is to enhance the personalization performance of model mixture-based PFL methods, while overcoming the aforementioned limitations.
    Adopting the \textit{mode connectivity} to FL is non-trivial as it obviously differs from the setting where the \textit{mode connectivity} is first considered. 
    In FL, the main reason is that each local model observes distinct data distributions. 
    Thus, only the global model (i.e., the federated model when transmitted to the client) should be connected to numerous local models that learn from on disparate data distributions of clients.
    
    The main contributions of this work are listed as follows:
    \begin{enumerate}
        \item[$\bullet$] We propose \texttt{SuPerFed}, a model mixture-based PFL method that aims to achieve superior personalization performance by connecting between each distinct local model and the global model in the parameter space. This connection serves to induce an explicit ensemble between the two models.
        \item[$\bullet$] Our proposed method extends the limitations of existing model mixture-based PFL methods by jointly updating both federated and local models in clients and exchanging the entire federated model with the server, allowing novel clients to benefit from it.
        \item[$\bullet$] Personalized models trained by adopting our method are well-calibrated and also robust to potential label noise that is a common problem in the realistic federated system. 
    \end{enumerate}

\newpage
\section{Prior Arts}
\subsection{Federated Learning with Non-IID Data} 
    After \texttt{FedAvg}~\cite{fedavg} proposed the basic FL algorithm, handling non-IID data across clients is one of the major points to be resolved in the FL field. 
    While some methods are proposed such as sharing a subset of client's local data at the server~\cite{zhao+18}, accumulating previous model updates at the server ~\cite{fedavgm}. These are either unrealistic assumptions for FL or not enough to handle a realistic level of statistical heterogeneity. Other branches to aid the stable convergence of a single global model include modifying a model aggregation method at the server~\cite{pfnm, fedma, pillu+19, fedbn} and adding a regularization to the optimization~\cite{fedprox, ka+19, feddyn}. 
    However, a single global model may still not be sufficient to provide a satisfactory experience to clients using FL-driven services in practice.
    
\subsection{Personalized Federated Learning Methods} 
    As an extension of the above, PFL methods shed light on the new perspective of FL. PFL aims to learn a client-specific personalized model, and many methodologies for PFL have been proposed.
    Multi-task learning-based PFL~\cite{mocha, fedu, mar+21} treats each client as a different task and learns a personalized model for each client. 
    Local fine-tuning-based PFL methods~\cite{jiang+19, Fallah} adopt a meta-learning approach for a global model to be adapted promptly to a personalized model for each client, and clustering-based PFL methods~\cite{ghosh+20, clustered, mansour+20} mainly assume that similar clients may reach a similar optimal global model. 
    Model mixture-based PFL methods~\cite{apfl, FedPer, lgfedavg, mansour+20, pfedme, fedrep, ditto, l2sgd} divide a model into two parts: 
    one (i.e., a local model) for capturing local knowledge, and the other (i.e., a federated model) for learning common knowledge across clients. 
    In these methods, only the federated model is shared with the server while the local model resides locally on each client. ~\cite{FedPer} keeps weights of the last layer as a local model (i.e., the personalization layers), and~\cite{fedrep} is similar to this except it requires a separate update on a local model before the update of a federated model. 
    In contrast, \texttt{LG-FedAvg}~\cite{lgfedavg} retains lower layer weights in clients and only exchanges higher layer weights with the server. 
    Due to the partial exchange, new clients in some PFL schemes~\cite{FedPer, fedrep, lgfedavg} should train own models from scratch or need at least some steps of fine-tuning. 
    In \cite{mansour+20, apfl, l2sgd, ditto, pfedme}, each client holds at least two separate models with the same structure:
    one for a local model and the other for a federated model. 
    While in \cite{ditto, pfedme}, the federated model affects the update of the local model in the form of proximity regularization.
    Close to our proposed method, in \cite{mansour+20, apfl,l2sgd}, they are explicitly interpolating the two different models in the form of a convex combination after the independent update of two models. 
    However, these methods did not seek into an explicit mutual ensemble of two models, and one should find the specific convex combination for exploiting a model with acceptable performances.

\newpage
\subsection{Mode Connectivity of Deep Networks} 
    The existence of either linear path~\cite{modeconnect} or non-linear path~\cite{garipov+18, draxler18} between two different minima derived by two different deep networks has been discovered through extensive studies on the loss landscape of deep networks~\cite{fort2019deep, fort2019large, modeconnect, garipov+18, draxler18}.
    This observation is named as \textit{mode connectivity}.
    In detail, there exists a low-loss subspace (e.g., line, simplex) connecting two or more deep networks independently trained on the same data. 
    Though there exist some studies on constructing such a low loss subspace~\cite{blundell+15, snapshot, swa, swa-gaussian, izmailov2020subspace, deepensembles}, they require multiple updates of different deep networks. 
    After it is observed that independently trained deep networks have a low cosine similarity with each other in the parameter space, as well as functionally dissimilar to each other~\cite{fort2019deep}, 
    a recent study proposes a straightforward and efficient method for explicitly inducing linear connectivity in a single training run~\cite{nnsubspaces}. 
    Our method is inspired by this technique and adapts it to construct a low-loss subspace between each different local model and a federated model 
    (which will later be aggregated into a global model at the server) suited for an effective PFL in heterogeneously distributed data across clients.
    
    In detail, our method resorts to the \textit{linear} mode connectivity between two non-convex deep networks,
    which does not typically hold for two independently trained deep networks~\cite{modeconnect,lmcdifficult}.
    However, it is observed either i) when the two deep networks shares partial or complete optimization trajectories~\cite{modeconnect,lmcdifficult},
    or ii) by permuting neurons of one independently trained networks to be aligned with those of the other~\cite{lmcperm1,lmcperm2,lmcperm3}.
    Our method is the former, since both federated and local models share training from start to finish.
    Intriguingly, the latter approach may have some connections to existing FL methods, e.g.,~\cite{pfnm,fedma} in that both of works aim to find suitable matching strategies when aggregating heterogeneous local updates by permuting neurons. 
    Concurrently, \texttt{FedSAM}~\cite{fedsam} seeks to find flat minima of a model in heterogeneous federated setting by using sharpness-aware minimization~\cite{sam} in local update and stochastic weight averaging (SWA)~\cite{swa} in server aggregation.
    While the mode connectivity is induced by the server (through SWA) in \texttt{FedSAM},
    it is induced by each client through random linear interpolation and orthogonality regularization in \texttt{SuPerFed}.  
    Though \texttt{FedSAM} is not devised for the personalization, it shares the same philosophy with ours --- flatness is favorable for the better generalization.

    \noindent Here, we provide the informal definition of the \textit{mode connectivity} as follows.
    
    \begin{definition} (Linear Mode Connectivity~\cite{frankle20, modeconnect, lmcdifficult, nnsubspaces}). 
    The mode connectivity (i.e., flat minima) is defined when there exists a connected path (i.e., a subspace of the parameter space) 
    between two distinct optima in the parameter space of different deep networks, $\boldsymbol{\boldsymbol{\theta}}_1 \text{ and } \boldsymbol{\boldsymbol{\theta}}_2$, such that $\text{Acc}(\lambda\boldsymbol{\theta}_1+(1-\lambda)\boldsymbol{\theta}_2)\geq\text{Acc}(\boldsymbol{\theta}_1), \text{Acc}(\boldsymbol{\theta}_2)$, where $\text{Acc}(\cdot)$ is the accuracy and $\lambda\in[0,1]$.
    \end{definition}

\newpage
\section{Proposed Method: \texttt{SuPerFed}}
\subsection{Outline} 
    In the standard FL scheme, the server orchestrates the entire learning process across participating clients through iterative communication of model (or parameters).\footnote{Note that the terms "\textit{model}" and "\textit{parameter}" is used interchangeably throughout this chapter.} 
    Since our method is essentially a model mixture-based PFL method, we require two models with the same structure per client (one for a federated model, the other for a local model). 
    We provide notation table in Table~\ref{tab:2_table1}.
    \begin{table}[!h]
\centering
\caption{Notations in Chapter~\ref{ch:superfed}}
\label{tab:2_table1}
\begin{tabular}{@{}ll@{}}
\toprule
\textbf{Notation} & \textbf{Description} \\
\midrule
$T$ & Total number of communication rounds \\
$L$ & Start round of a local model training for personalization \\
$B$ & Local batch size \\
$E$ & Number of local epochs \\
$K$ & Total number of participating clients, indexed by $i\in[K]$ \\
$C$ & Fraction of clients selected at each round \\
$\eta$ & Local learning rate \\
\midrule
$\boldsymbol{\theta}^{g}$ & Parameter/Model of a global model in the central server \\ 
    \begin{tabular}{@{}l} $\boldsymbol{\theta}^{f}_i$ \\ $\text{}$ \end{tabular} & 
    \begin{tabular}{@{}l} Parameter/Model of a federated model \\ (i.e., a global model downloaded and updated in each client) \end{tabular} \\
$\boldsymbol{\theta}^{l}_i$ & Parameter/Model of a local model in each client \\
$\lambda$ & Mixing variable sampled from the uniform distribution, $\text{U}(0,1)$ \\
\midrule
$\mu$ & Proximity regularization constant \\
$\nu$ & Orthogonality regularization constant \\
\bottomrule
\end{tabular}
\end{table}
    We introduce two hyperparameters for the optimization of \texttt{SuPerFed}:
    a constant $\mu$ for proximity regularization on the federated model so that it is not too distant from the global model, 
    and a constant $\nu$ for inducing \textit{mode connectivity} along parameter space between local and federated models.
    See Figure~\ref{fig:fig1_2} for the overview of \texttt{SuPerFed}.
    
    \begin{sidewaysfigure}[p]  
    \includegraphics[width=\textwidth,keepaspectratio]{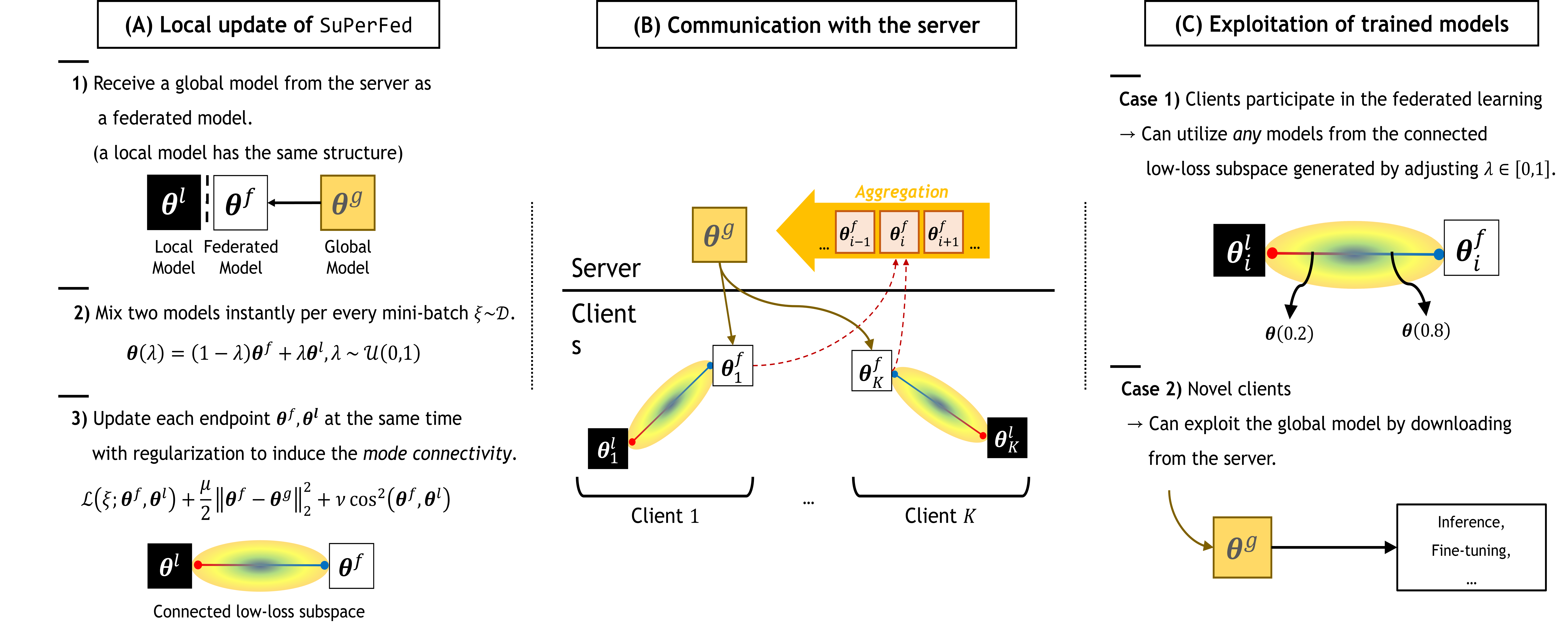}
    \caption[An illustration of the proposed method \texttt{SuPerFed}] 
        {An illustration of the proposed method \texttt{SuPerFed}.
        (A) Local update of \texttt{SuPerFed}: in each round, selected clients receive a global model from the server and sets it as a federated model. 
        After combining the federated model with a local model using a randomly sampled $\lambda~\mathcal{U}(0,1)$, 
        two models are jointly updated with regularization terms. 
        (B) Communication with the server: only the updated federated model is uploaded to the server (dotted arrow in crimson color) to be aggregated (e.g., weighted averaging) as a new global model, and it is broadcast again to other pool of sampled clients in the next round (arrow in gray color). 
        (C) Exploitation of trained models: 
        (\textbf{Case 1}) Clients already in the federated system can sample and exploit any parameter from the connected subspace (e.g., $\boldsymbol{\theta}(0.2)$ and $\boldsymbol{\theta}(0.8)$) because the subspace only contains low-loss solutions.
        (\textbf{Case 2}) Novel clients can download and use the trained global model $\boldsymbol{\theta}^g$ in the central server.
        }
     
    \label{fig:fig1_2}
    \end{sidewaysfigure}
    
\newpage
\subsection{Problem Statement} 
    Consider that each client $i\in[{K}]$ has its own local dataset $\left\{\xi_k\right\}_{k=1}^{n_i}\sim\mathcal{D}_i$, 
    as well as parameters $\boldsymbol{\theta}^l_i, \boldsymbol{\theta}^g\subseteq\Theta$. 
    A model mixture-based PFL assumes each client has a set of paired parameters $\boldsymbol{\boldsymbol{\theta}}_i^{l}$ (a local model) and $\boldsymbol{\theta}^g$ (a global model, which becomes a federated model $\boldsymbol{\theta}^f_i$ when it is transmitted to each client). 
    In previous works, the two parameters may be considered either as a unified model or as distinct parameters.
    To clarify this, we informally define a grouping function, $\mathrm{G}(\cdot,\cdot)$, to summarize existing model mixture-based PFL methods as follows.
    \begin{enumerate}
        \item[$\bullet$] Concatenation. It refers to stacking semantically separated layers (e.g., feature extractor and classifier)~\cite{fedrep, FedPer, lgfedavg}) into a single model: $\mathrm{G}(\boldsymbol{\theta}^g, \boldsymbol{\theta}^l_i) = \text{concat}(\boldsymbol{\theta}^g, \boldsymbol{\theta}^l_i)$, 
        \item[$\bullet$] Simple enumeration. It refers to using two models (having the same architecture) at once~\cite{ditto, pfedme}: 
        $\mathrm{G}(\boldsymbol{\theta}^g, \boldsymbol{\theta}^l_i) = (\boldsymbol{\theta}^g, \boldsymbol{\theta}^l_i)$,
        \item[$\bullet$] Convex combination. It refers to adding parameters given a constant $\lambda\in[0,1]$~\cite{apfl,mansour+20,l2sgd}: 
        $\mathrm{G}(\boldsymbol{\theta}^g, \boldsymbol{\theta}^l_i) = (1-\lambda)\boldsymbol{\boldsymbol{\theta}}^g + \lambda\boldsymbol{\boldsymbol{\theta}}_i^{l}$. 
    \end{enumerate}
     
    Next, the local loss function (e.g., cross-entropy loss, mean-squared loss) w.r.t. the parameters is defined similarly as in eq.~\eqref{eq:fl_obj}, 
    $\mathcal{L}:\mathcal{D}\times\Theta \rightarrow \mathbb{R}_{\geq0}$. 
    Denote further that each client aims to minimize the empirical loss defined as: 
    $F_i(\boldsymbol{\theta}^g, \boldsymbol{\theta}^l_i) 
    \triangleq \frac{1}{n_i}\sum_{k=1}^{n_i} 
    \mathcal{L}(\xi_k;\boldsymbol{\theta}^g, \boldsymbol{\theta}^l_i)$.
    While minimizing the empirical loss, one can additionally consider a regularization term w.r.t. the parameter, $\Omega(\boldsymbol{\theta}^g, \boldsymbol{\theta}^l_i)$.
    To sum up, the local objective is defines as follows.
    \begin{equation}
    \label{eq:pfl_obj}
    \begin{gathered}
        \min_{(\boldsymbol{\theta}^g, \boldsymbol{\theta}^l_i)\in\Theta\subseteq\mathbb{R}^d}
        F_i(\boldsymbol{\theta}^g, \boldsymbol{\theta}^l_i) 
        \triangleq 
        \frac{1}{n_i}\sum_{k=1}^{n_i} 
        \mathcal{L}(\xi_k;\boldsymbol{\theta}^g, \boldsymbol{\theta}^l_i) 
        + 
        \Omega(\boldsymbol{\theta}^g, \boldsymbol{\theta}^l_i).
    \end{gathered}
    \end{equation}
    Note that this is also a typical formulation of the structural risk minimization (SRM) principle~\cite{srm}.
    As there exist $K$ clients in total, the global objective is reduced to minimize:
    \begin{equation}
    \begin{gathered}
        \min_{\boldsymbol{\theta}^g; \boldsymbol{\theta}^l_i, i\in[K]} 
        \frac{1}{\sum_{j=1}^K n_j} \sum_{i=1}^{K} n_i F_i(\boldsymbol{\theta}^g, \boldsymbol{\theta}^l_i).
    \end{gathered}
    \end{equation}
    Note again that the federated model $\boldsymbol{\theta}^f_i$ will be presented when it comes to explaining a \textit{client-side} optimization procedure in order for preventing potential confusion in notations. 
    (See Algorithm~\ref{alg:SuPerFed} and~\ref{alg:LocalUpdate})
    
    Now, we have a problem at hand: how can we induce \textbf{explicit synergies} between $\boldsymbol{\theta}^g$ and $\{\boldsymbol{\theta}^l_i\}_{i=1}^K$?

\newpage
\subsection{\texttt{SuPerFed}: Subspace Learning for the Personalized Federated System} 
    \begin{algorithm}[h]
\caption{\texttt{SuPerFed}}
\label{alg:SuPerFed}
\begin{algorithmic}
\STATE{\textbf{Inputs}:
number of clients ($K$),
client sampling ratio (${C}\in(0,1)$),
total rounds ($T$),
start round of local model training ($L$),
local batch size ($B$), 
number of local epochs ($E$), 
local learning rate ($\eta$), 
regularization constants ($\mu$ and $\nu$)
}
\STATE{\textbf{Procedure}:} 
\STATE Server initializes a global model $\boldsymbol{\theta}^{g}$.
\FOR{$t=0,..,T-1$}
\STATE Server randomly selects $\max(C\cdot K, 1)$ clients as $S^{(t)}$.
\STATE Server broadcasts the current global model $\boldsymbol{\theta}^g$ to selected clients in $S^{(t)}$.
\FOR{each client $i\in S^{(t)}$ \textbf{in parallel}} 
    \STATE $\boldsymbol{\theta}^{f,(t)}_i\leftarrow$ \texttt{SuPerFedClientUpdate}($t \geq L, \boldsymbol{\theta}^{g,(t)}$)
\ENDFOR
\STATE Update a global model: $\boldsymbol{\theta}^{g,(t+1)}
\leftarrow
\boldsymbol{\theta}^{g,(t)} - \frac{1}{\sum_{j\in S^{(t)}}{n_j}} \sum_{i \in S^{(t)}}{n_i}\left( \boldsymbol{\theta}^{g,(t)} - \boldsymbol{\theta}_i^{f,(t)} \right)$.
\ENDFOR
\STATE{\textbf{Return:}} $\boldsymbol{\theta}^{g,(T)}$ (in the server) and $\{ \boldsymbol{\theta}^l_1, ..., \boldsymbol{\theta}^l_K \}$ (in each client)
\end{algorithmic}
\end{algorithm}

    In \texttt{SuPerFed}, we suppose both of a global model and a local model shares the same structure,
    thus $\mathrm{G}$ is a function for constructing the random linear interpolation with the constant $\lambda\in[0, 1]$. 
    With a slight abuse of notation, we explicitly combine $\lambda$ in notation as: 
    $\boldsymbol{\theta}_i(\lambda) 
    \triangleq 
    (1-\lambda)\boldsymbol{\boldsymbol{\theta}}^g + \lambda\boldsymbol{\boldsymbol{\theta}}_i^{l}$
    (or equivalently, $\boldsymbol{\theta}_i(\lambda) 
    \triangleq 
    (1-\lambda)\boldsymbol{\boldsymbol{\theta}}^f_i + \lambda\boldsymbol{\boldsymbol{\theta}}_i^{l}$).
    
    The main goal of the local update in \texttt{SuPerFed} is to find a flat minima between the local model and the federated model in the parameter space,
    as well as faithfully devoting to the training of a single global model as same in the single model-based FL scheme.
    This is naturally proceeded in the client side, since the local data can only be accessed by each client.
    In the initial round, each client receives a global model transmitted from the server, sets it as a federated model, $\boldsymbol{\theta}^f_i$.
    Simultaneously, the local model \textit{should also be initialized differently} from each other, while the architecture should be equivalent to the federated model.
    It is noteworthy that the disparate initialization scheme is not a common occurrence in FL~\cite{fedavg}. 
    However, this is an \textit{intended} behavior in our scheme, 
    which aims to facilitate the construction of a connected low-loss subspace between a federated model and a local model, following the scheme of~\cite{nnsubspaces}.
    
    In each step of update using random batch of samples, a client should at first mix parameters of the two models into a single mixed model, $\boldsymbol{\theta}_i(\lambda)$, using $\lambda$ sampled from $\mathcal{U}(0, 1)$.
    From this ad-hoc mixing scheme, the client optimizes $\mathbb{E}_{\lambda\sim\mathcal{U}(0,1)}\left[ \boldsymbol{\theta}_i(\lambda) \right]$ in effective per each update. 
    Since the local model and the federated model are dynamically interpolated during training, we can expect that both models should be synergistic.
    In addition, because two models act as a single model, the client does not have to update each model separately, reducing the training time for local updates.
    In detail, two endpoints of the mixed model 
    (i.e., $\boldsymbol{\theta}_i(0)=\boldsymbol{\boldsymbol{\theta}}^{f}_{i}$; federated model
    and $\boldsymbol{\theta}_i(1)=\boldsymbol{\boldsymbol{\theta}}^{l}_{i}$; local model) 
    can be jointly updated at the same time using a sample as follows.
    \begin{equation}
    \label{eq:pfl_joint_update}
    \begin{gathered}
        \frac{\partial F_i}{\partial \boldsymbol{\boldsymbol{\theta}}^{f}_{i}}
        =
        \frac{\partial F_i}{\partial \boldsymbol{\theta}_i(\lambda)}
        \frac{\partial \boldsymbol{\theta}_i(\lambda)}{\partial \boldsymbol{\boldsymbol{\theta}}^{f}_{i}}
        =
        (1-\lambda)\frac{\partial \mathcal{L}}{\partial \boldsymbol{\theta}^{f}_{i}} 
        + \frac{\partial \Omega}{\partial \boldsymbol{\theta}^{f}_{i}}, \\
        \frac{\partial F_i}{\partial \boldsymbol{\boldsymbol{\theta}}^{l}_{i}}
        =
        \frac{\partial F_i}{\partial \boldsymbol{\theta}_i(\lambda)}
        \frac{\partial \boldsymbol{\theta}_i(\lambda)}{\partial \boldsymbol{\boldsymbol{\theta}}^{l}_{i}}
        =
        \lambda\frac{\partial \mathcal{L}}{\partial \boldsymbol{\theta}^{l}_{i}} 
        + \frac{\partial \Omega}{\partial \boldsymbol{\theta}^{l}_{i}}.
    \end{gathered}
    \end{equation}  
    After the local update is finished, the server and clients only communicates federated models, thus the communication cost remains the same as the single model-based FL methods, e.g.,~\cite{fedavg, fedprox}.
    See Algorithm~\ref{alg:LocalUpdate} for the overall procedure of a local update.
    
    Note that the instant ensemble of both models, $\boldsymbol{\theta}_i(\lambda)$, should be optimized with two regularization terms hidden in the term of $\Omega(\cdot,\cdot)$, 
    which will be introduced in the subsequent section.
    
\newpage
\subsection{Regularization}
    \begin{algorithm}[!h]
\caption{\texttt{SuPerFedClientUpdate}}
\label{alg:LocalUpdate}
\begin{algorithmic}
    \STATE{\textbf{Inputs}: 
    local batch size ($B$), 
    number of local epochs ($E$), 
    local learning rate ($\eta$), 
    regularization constants ($\mu$ and $\nu$), 
    a local model ($\boldsymbol{\theta}^l$)}
    \STATE{\textbf{Procedure:}} 
    \STATE Receive a global model $\boldsymbol{\theta}^{g}$ from the server and set it as the federated model: 
    $\boldsymbol{\theta}^{f}\leftarrow\boldsymbol{\theta}^{g}$.
    \STATE{Receive a personalization flag, $t \geq L$.}
    \FOR{each local epoch $e=0,...,E-1$}
        \STATE{$\mathcal{B}\leftarrow$ split local dataset into batches of size $B$.}
        \FOR{mini-batch $\xi\in\mathcal{B}$}
            \IF{\textbf{not} $t \geq L$}
            \STATE Set $\lambda=0$
            \ELSE
            \STATE Set $\lambda \sim \mathcal{U}(0, 1)$.
            \ENDIF
            \STATE Generate a mixed model: $\boldsymbol{\theta}(\lambda)=(1-\lambda)\boldsymbol{\theta}^{f}+\lambda\boldsymbol{\theta}^{l}$.        
            \STATE Update $\boldsymbol{\theta}^{f}$ and $\boldsymbol{\theta}^{l}$ by optimizing eq.~\eqref{eq:pfl_obj} using the estimate in eq.~\eqref{eq:pfl_joint_update} with learning rate $\eta$.
        \ENDFOR
    \ENDFOR
\STATE{\textbf{Return:}} $\boldsymbol{\theta}^{f}$
\end{algorithmic}
\end{algorithm}
    In \texttt{SuPerFed}, two regularization terms are required for i) stable training of a global model from locally updated federated models, 
    as well as ii) inducing the \textit{functional diversities} between a global model and multiple local models.
    For i), the proximity regularization term is introduced, which is applied between the federated model and the previous round's global model as follows.
    \begin{equation}
    \begin{gathered}
        \mathcal{R}_{\text{prox}}\left( \boldsymbol{\theta}^{f}_i, \boldsymbol{\theta}^{g} \right)
        \triangleq 
        \frac{\mu}{2}\left\Vert \boldsymbol{\theta}^{f}_i - \boldsymbol{\theta}^{g} \right\Vert_2^2, 
    \end{gathered}
    \end{equation}
    where $\Vert\cdot\Vert_2$ is $L_2$-norm.
    This is a simple and classic $L_2$-regularization term adjusted by a constant $\mu\in\mathbb{R}_{\geq 0}$, which has also been used in previous works~\cite{fedprox, pfedme}
    The proximity regularization term constrains the update of the federated model to not deviate far from the current optimization trajectory of a global model, $\boldsymbol{\boldsymbol{\theta}}^{g}$. 
    By doing so, we expect to prevent possible divergence of a new global model when it is aggregated from federated models. 
    Note that a large degree of proximity regularization can rather hinder the local model's update, thus $\mu$ should be carefully tuned.  
    For ii), the orthogonality regularization term is additionally introduced, which is applied between the federated model and the local model as follows.
    \begin{equation}
    \begin{gathered}
        \mathcal{R}_{\text{ortho}} \left( \boldsymbol{\theta}^{f}_i, \boldsymbol{\theta}^{l}_i \right)
        \triangleq 
        \nu\operatorname{cos}^2 \left( \boldsymbol{\boldsymbol{\theta}}^{f}_i, \boldsymbol{\boldsymbol{\theta}}^{l}_i \right),
    \end{gathered}
    \end{equation}
    where $\operatorname{cos}(\boldsymbol{\theta}_1, \boldsymbol{\theta}_2) 
    = {\langle \boldsymbol{\theta}_1, \boldsymbol{\theta}_2 \rangle}/\left(\Vert \boldsymbol{\theta}_1 \Vert_2 \Vert \boldsymbol{\theta}_1 \Vert_2\right)$ is a cosine similarity between two vectors (note that $\langle \cdot,\cdot \rangle$ is an inner product).
    This aims to let each network learn functionally diverse information from the data, thereby helping to induce the \textit{mode connectivity} between the local model and the federated model in the parameter space.
    Together with on-the-fly mixing of two models using randomly sampled $\lambda\sim\mathcal{U}(0,1)$, this regularization allows the connected subspace between optima of the two deep networks, where all parameters in the subspace are low-loss solutions. 
    The magnitude of this regularization term is controlled by the constant $\nu\in\mathbb{R}_{\geq 0}$. 
    Since simply applying the mixing strategy between the local and the federated models has little benefit~\cite{nnsubspaces},
    and the mixing should be manually tuned~\cite{apfl}, it is required to set $\nu>0$. 
    The rationale behind the orthogonality regularization is based on the observation that functionally similar deep networks show a non-zero cosine similarity~\cite{fort2019deep}. 
    As a contrapostivie statement, one can induce functionally dissimilar (i.e., diverse) deep networks when if zero cosine similarity (i.e., orthogonality) is forced between the weights of two deep networks~\cite{nnsubspaces}. 
    Moreover, inducing orthogonality (i.e., forcing cosine similarity to zero) between different parameters can prevent deep networks from learning redundant features given their learning capacity~\cite{ortho2,ortho3}. 
    In this context, we expect that the local model of each client $\boldsymbol{\boldsymbol{\theta}}^{l}_{i}$ and the federated model $\boldsymbol{\boldsymbol{\theta}}^{f}_{i}$ will learn mutually beneficial knowledge in a complementary manner and to be harmoniously combined to improve the personalization performance.
    Taken together, the regularization term is summarized as follows.
    \begin{equation}
    \begin{gathered}
        \Omega(\boldsymbol{\theta}^g, \boldsymbol{\theta}^l_i)
        \triangleq
        \mathcal{R}_{\text{prox}}\left(\boldsymbol{\theta}^{f}_i, \boldsymbol{\theta}^{g} \right) 
        +
        \mathcal{R}_{\text{ortho}}\left( \boldsymbol{\theta}^{f}_i, \boldsymbol{\theta}^{l}_i \right).
    \end{gathered}
    \end{equation}

    Interestingly, \texttt{SuPerFed} can reduced to the existing FL methods by adjusting aforementioned hyperparameters. 
    In detail, when fixing $\lambda=0, \nu=0, \mu=0$, the objective of \texttt{SuPerFed} is reduced to \texttt{FedAvg}~\cite{fedavg}, when $\lambda=0, \nu=0, \mu>0$, it is equivalent to \texttt{FedProx}~\cite{fedprox}.
    See Algorithm~\ref{alg:SuPerFed} for finding a complete procedure of \texttt{SuPerFed}.

\paragraph{Note on the Orthogonality Regularization}
    The hypothesis supporting the effectiveness of the orthogonality regularization is based on the seminal work of~\cite{fort2019deep}.
    This paper presents empirical evidence that can be translated into the following proposition: ``\textit{functionally similar deep networks share similar optimization trajectories, represented by a non-zero cosine similarity between their weights}''.
    For an effective esnemble learning, however, functionally diverse networks are required to leverage the power of cooperation between different models.
    Thus, a contrapositive statement can be used as a mechanism to boost an ensemble perspective:
    ``\textit{deep networks having a zero cosine similarity in their weight space are functionally dissimilar with each other}''.
    This has been proved to work empirically well in a data-centralized setting~\cite{nnsubspaces}. 

\newpage
\section{Experiments}
\subsection{Experimental Setup} 
    \begin{table}[b]
\caption{Detailed experimental configurations of main experiments of Chapter~\ref{ch:superfed}}
\label{tab:2_table2}
\centering 
\begin{center}
\resizebox{\textwidth}{!}{%
\begin{tabular}{l|cc|cc|cc|cc}
\hline
\multicolumn{1}{c|}{\textbf{Setting}} & \multicolumn{2}{c|}{\begin{tabular}[c]{@{}c@{}}Pathological\\ Non-IID\end{tabular}} & \multicolumn{2}{c|}{\begin{tabular}[c]{@{}c@{}}Dirichlet distribution-based\\ Non-IID\end{tabular}} & \multicolumn{2}{c|}{\begin{tabular}[c]{@{}c@{}}Realistic\\ Non-IID\end{tabular}} & \multicolumn{2}{c}{\begin{tabular}[c]{@{}c@{}}Label Noise\\ (Pair/Symmetric)\end{tabular}} \\ \hline
\multicolumn{1}{c|}{\textbf{Dataset}} & \textbf{MNIST} & \textbf{CIFAR-10} & \textbf{CIFAR-100} & \textbf{TinyImageNet} & \textbf{FEMNIST} & \textbf{Shakespeare} & \textbf{MNIST} & \textbf{CIFAR-10} \\ \hline
$T$ & 500 & 500 & 500 & 500 & 500 & 500 & 500 & 500 \\
$E$ & 10 & 10 & 5 & 5 & 10 & 5 & 10 & 10 \\
$B$ & 10 & 10 & 20 & 20 & 10 & 50 & 10 & 10 \\
$K$ & 50,100,500 & 50,100,500 & 100 & 200 & 730 & 660 & 100 & 100 \\
$\eta$ & 0.01 & 0.01 & 0.01 & 0.02 & 0.01 & 0.8 & 0.01 & 0.01 \\
\hline
Model & TwoNN & TwoCNN & ResNet & MobileNet & VGG & StackedLSTM & TwoNN & TwoCNN \\ \hline
\end{tabular}%
}
\end{center}
\end{table}
    To verify the superiority of \texttt{SuPerFed}, we focus on two points in our experiments: 
    (i) personalization performance in various statistical heterogeneity scenarios, 
    and (ii) merits of explicit introduction of the \textit{mode connectivity}, e.g., improved calibration and robustness to possible label noise. 
    Until now, we only considered the setting of applying $\lambda$ identically to the whole parameters of $\boldsymbol{\theta}^f$ and $\boldsymbol{\theta}^l$. 
    We name this mixing scheme as \textit{model-mixing}, in short, \texttt{SuPerFed-MM}. 
    On the one hand, it is also possible to sample different $\lambda$ for each parameter of different layers, i.e., mixing two models in a layer-wise manner. 
    We also adopt this setting and name it \textit{layer-mixing}, in short, \texttt{SuPerFed-LM}.
    Accordingly, we compare these subdivided methods with other baselines in all experiments.
    For baseline methods, we select model mixture-based PFL methods: 
    \texttt{FedPer}~\cite{FedPer}, \texttt{LG-FedAvg}~\cite{lgfedavg}, \texttt{APFL}~\cite{apfl}, \texttt{pFedMe}~\cite{pfedme}, \texttt{Ditto}~\cite{ditto}, \texttt{FedRep}~\cite{fedrep}, along with basic single-model based FL methods, \texttt{FedAvg}~\cite{fedavg} \texttt{FedProx}~\cite{fedprox} and \texttt{SCAFFOLD}~\cite{ka+19}.
    
    Throughout all experiments, if not specified, we set 5 clients to be sampled at every round and used stochastic gradient descent (SGD) optimizer with a momentum of 0.9 and a weight decay factor of 0.0001. 
    We also let the client randomly split their data into a training set and a test set with a fraction of 0.2 to estimate the performance of each FL algorithm on each client's test set with task-specific metrics: (i) Top-1 \& Top-5 accuracy for measuring PFL performances and (ii) expected calibration error (ECE~\cite{ece}) and maximum calibration error (MCE~\cite{mce}) for measuring calibration of a model. 
    For a stable convergence of a global model, we applied applied learning rate decay by 1\% per each round, which is necessary in theory as well for the convergence of a FL algorithm~\cite{convergence}. 
    Each model architecture is borrowed from the original paper that proposed it: 
    {TwoNN}, {TwoCNN}, {StackedLSTM}~\cite{fedavg}, 
    {ResNet}~\cite{resnet}, 
    {MobileNet}~\cite{mobilenet}, 
    and {VGG}~\cite{vgg}.
    For more details, we summarized all information in Table~\ref{tab:2_table2}, including the number of clients $K$, total rounds $T$, local batch size $B$, number of local epochs $E$, learning rate $\eta$, and the model architecture. 
    
\newpage   
\subsection{Personalization Performance}
    For the estimation of the performance of \texttt{SuPerFed} as PFL methods, we simulate three different non-IID scenarios. 
    i) a \textit{pathological non-IID} setting proposed by~\cite{fedavg}, which assumes most clients have samples from two classes for a multi-class classification task. i) \textit{Dirichlet distribution-based non-IID} setting proposed by~\cite{diri}, in which the Dirichlet distribution with its concentration parameter $\alpha$ determines the skewness of local class distribution of each client. 
    All clients have samples from only one class when using $\alpha\rightarrow0$, whereas $\alpha\rightarrow\infty$ divides samples into an identical distribution. 
    iii) \textit{Realistic non-IID} setting proposed in~\cite{leaf}, which provides several benchmark datasets tailored to PFL.
    
    \paragraph{Pathological Non-IID Setting}
    In this setting, we used two multi-class classification datasets, MNIST~\cite{mnist} and CIFAR-10~\cite{cifar} and both have 10 classes. We used the two-layered fully-connected network for the MNIST dataset and the two-layered convolutional neural network (CNN) for the CIFAR-10 dataset as proposed in~\cite{fedavg}. For each dataset, we set the number of total clients ($K$=50, 100, 500) to check the scalability of the PFL methods. 
    The results of the \textit{pathological non-IID setting} are shown in Table~\ref{tab:2_table3}. 
    It is notable that our proposed method beats most of the existing model mixture-based PFL methods with a small standard deviation.
    
    \paragraph{Dirichlet Distribution-based Non-IID Setting} 
    In this setting, we used other multi-class classification datasets to simulate a more challenging setting than the \textit{pathological non-IID} setting. 
    We use CIFAR-100\cite{cifar} and TinyImageNet\cite{tinyimagenet} datasets, having 100 classes and 200 classes each. 
    We also selected deeper architectures, ResNet\cite{resnet} for CIFAR-100 and MobileNet\cite{mobilenet} for TinyImageNet. 
    For each dataset, we adjust the concentration parameter $\alpha={1, 10, 100}$ to control the degree of statistical heterogeneity across clients. 
    Note that the smaller the $\alpha$, the more heterogeneous each client's data distribution is. 
    Table~\ref{tab:2_table4} presents the results of the \textit{Dirichlet distribution-based non-IID} setting. 
    Both of our methods are less affected by the degree of statistical heterogeneity (i.e., non-IIDness) determined by $\alpha$.
    
    \paragraph{Realistic Non-IID Setting} 
    We used FEMNIST and Shakespeare datasets in the LEAF benchmark~\cite{leaf}. 
    As the purpose of these datasets is to simulate a realistic FL scenario, each dataset is naturally split for a image recognition task sequence modeling task, each. 
    The selected two datasets are for multi-class classification (62 classes) and next character prediction (80 characters) given a sentence, respectively. 
    We used VGG~\cite{vgg} for the FEMNIST dataset and networks with two stacked LSTM layers were used in~\cite{fedavg} for the Shakespeare dataset. 
    The results of the \textit{realistic non-IID} setting are shown in Table~\ref{tab:2_table5}. 
    In this massively distributed setting, our methods showed a consistent gain in terms of personalization performance.
    \begin{table}[H]
\centering
\caption[Experimental results on the \textit{pathological non-IID} setting of Chapter~\ref{ch:superfed}]
{Experimental results on the \textit{pathological non-IID} setting (MNIST and CIFAR-10 datasets) compared with other FL and PFL methods. 
The Top-1 accuracy is reported with a standard deviation.}
\label{tab:2_table3}
\resizebox{0.8\linewidth}{!}{%
\begin{tabular}{!{}lccc|ccc!{}}
\toprule
\textbf{Dataset} & \multicolumn{3}{c|}{\textbf{MNIST}} & \multicolumn{3}{c}{\textbf{CIFAR-10}} \\
\textbf{Method} & \multicolumn{3}{c|}{(Acc. 1)} & \multicolumn{3}{c}{(Acc. 1)} \\ \cmidrule(l){2-7} 
\multicolumn{1}{c}{\# clients} & 50 & 100 & 500 & 50 & 100 & 500 \\
\multicolumn{1}{c}{\# samples} & 960 & 480 & 96 & 800 & 400 & 80 \\ \midrule
\texttt{FedAvg}~\cite{fedavg} & \begin{tabular}[c]{@{}c@{}}95.69\\ \footnotesize \color[HTML]{9B9B9B}(2.39)\end{tabular} & \begin{tabular}[c]{@{}c@{}}89.78\\ \footnotesize \color[HTML]{9B9B9B}(11.30)\end{tabular} & \begin{tabular}[c]{@{}c@{}}96.04\\ \footnotesize\color[HTML]{9B9B9B}(4.74)\end{tabular} & \begin{tabular}[c]{@{}c@{}}43.09\\ \footnotesize\color[HTML]{9B9B9B}(24.56)\end{tabular} & \begin{tabular}[c]{@{}c@{}}36.19\\ \footnotesize\color[HTML]{9B9B9B}(29.54)\end{tabular} & \begin{tabular}[c]{@{}c@{}}47.90\\ \footnotesize\color[HTML]{9B9B9B}(25.05)\end{tabular} \\
\texttt{FedProx}~\cite{fedprox} & \begin{tabular}[c]{@{}c@{}}95.13\\ \footnotesize \color[HTML]{9B9B9B}(2.67)\end{tabular} & \begin{tabular}[c]{@{}c@{}}93.25\\ \footnotesize \color[HTML]{9B9B9B}(6.12)\end{tabular} & \begin{tabular}[c]{@{}c@{}}96.50\\ \footnotesize \color[HTML]{9B9B9B}(4.52)\end{tabular} & \begin{tabular}[c]{@{}c@{}}49.01\\ \footnotesize \color[HTML]{9B9B9B}(19.87)\end{tabular} & \begin{tabular}[c]{@{}c@{}}38.56\\ \footnotesize \color[HTML]{9B9B9B}(28.11)\end{tabular} & \begin{tabular}[c]{@{}c@{}}48.60\\ \footnotesize \color[HTML]{9B9B9B}(25.71)\end{tabular} \\
\texttt{SCAFFOLD}~\cite{ka+19} & \begin{tabular}[c]{@{}c@{}}95.50\\ \footnotesize \color[HTML]{9B9B9B}(2.71)\end{tabular} & \begin{tabular}[c]{@{}c@{}}90.58\\ \footnotesize \color[HTML]{9B9B9B}(10.13)\end{tabular} & \begin{tabular}[c]{@{}c@{}}96.60\\ \footnotesize \color[HTML]{9B9B9B}(4.26)\end{tabular} & \begin{tabular}[c]{@{}c@{}}43.81\\ \footnotesize \color[HTML]{9B9B9B}(24.30)\end{tabular} & \begin{tabular}[c]{@{}c@{}}36.31\\ \footnotesize \color[HTML]{9B9B9B}(29.42)\end{tabular} & \begin{tabular}[c]{@{}c@{}}40.27\\ \footnotesize \color[HTML]{9B9B9B}(26.90)\end{tabular} \\ \midrule
\texttt{LG-FedAvg}~\cite{lgfedavg} & \begin{tabular}[c]{@{}c@{}}98.21\\ \footnotesize \color[HTML]{9B9B9B}(1.28)\end{tabular} & \begin{tabular}[c]{@{}c@{}}97.52\\ \footnotesize \color[HTML]{9B9B9B}(2.11)\end{tabular} & \begin{tabular}[c]{@{}c@{}}96.05\\ \footnotesize \color[HTML]{9B9B9B}(5.02)\end{tabular} & \begin{tabular}[c]{@{}c@{}}89.03\\ \footnotesize \color[HTML]{9B9B9B}(4.53)\end{tabular} & \begin{tabular}[c]{@{}c@{}}70.25\\ \footnotesize \color[HTML]{9B9B9B}(35.66)\end{tabular} & \begin{tabular}[c]{@{}c@{}}78.52\\ \footnotesize \color[HTML]{9B9B9B}(11.22)\end{tabular} \\
\texttt{FedPer}~\cite{FedPer} & \begin{tabular}[c]{@{}c@{}}99.23\\ \footnotesize \color[HTML]{9B9B9B}(0.66)\end{tabular} & \begin{tabular}[c]{@{}c@{}}99.14\\ \footnotesize \color[HTML]{9B9B9B}(0.93)\end{tabular} & \begin{tabular}[c]{@{}c@{}}98.67\\ \footnotesize \color[HTML]{9B9B9B}(2.61)\end{tabular} & \begin{tabular}[c]{@{}c@{}}89.10\\ \footnotesize \color[HTML]{9B9B9B}(5.41)\end{tabular} & \begin{tabular}[c]{@{}c@{}}87.99\\ \footnotesize \color[HTML]{9B9B9B}(5.70)\end{tabular} & \begin{tabular}[c]{@{}c@{}}82.35\\ \footnotesize \color[HTML]{9B9B9B}(9.85)\end{tabular} \\
\texttt{APFL}~\cite{apfl} & \begin{tabular}[c]{@{}c@{}}99.40\\ \footnotesize \color[HTML]{9B9B9B}(0.58)\end{tabular} & \begin{tabular}[c]{@{}c@{}}99.19\\ \footnotesize \color[HTML]{9B9B9B}(0.92)\end{tabular} & \begin{tabular}[c]{@{}c@{}}98.98\\ \footnotesize \color[HTML]{9B9B9B}(2.22)\end{tabular} & \begin{tabular}[c]{@{}c@{}}92.83\\ \footnotesize \color[HTML]{9B9B9B}(3.47)\end{tabular} & \begin{tabular}[c]{@{}c@{}}91.73\\ \footnotesize \color[HTML]{9B9B9B}(4.61)\end{tabular} & \begin{tabular}[c]{@{}c@{}}87.38\\ \footnotesize \color[HTML]{9B9B9B}(9.39)\end{tabular} \\
\texttt{pFedMe}~\cite{pfedme} & \begin{tabular}[c]{@{}c@{}}81.10\\ \footnotesize \color[HTML]{9B9B9B}(8.52)\end{tabular} & \begin{tabular}[c]{@{}c@{}}82.48\\ \footnotesize \color[HTML]{9B9B9B}(7.62)\end{tabular} & \begin{tabular}[c]{@{}c@{}}81.96\\ \footnotesize \color[HTML]{9B9B9B}(12.28)\end{tabular} & \begin{tabular}[c]{@{}c@{}}92.97\\ \footnotesize \color[HTML]{9B9B9B}(3.07)\end{tabular} & \begin{tabular}[c]{@{}c@{}}92.07\\ \footnotesize \color[HTML]{9B9B9B}(5.05)\end{tabular} & \begin{tabular}[c]{@{}c@{}}88.30\\ \footnotesize \color[HTML]{9B9B9B}(8.53)\end{tabular} \\
\texttt{Ditto}~\cite{ditto} & \begin{tabular}[c]{@{}c@{}}97.07\\ \footnotesize \color[HTML]{9B9B9B}(1.38)\end{tabular} & \begin{tabular}[c]{@{}c@{}}97.13\\ \footnotesize \color[HTML]{9B9B9B}(2.06)\end{tabular} & \begin{tabular}[c]{@{}c@{}}97.20\\ \footnotesize \color[HTML]{9B9B9B}(3.72)\end{tabular} & \begin{tabular}[c]{@{}c@{}}85.53\\ \footnotesize \color[HTML]{9B9B9B}(6.22)\end{tabular} & \begin{tabular}[c]{@{}c@{}}83.01\\ \footnotesize \color[HTML]{9B9B9B}(5.62)\end{tabular} & \begin{tabular}[c]{@{}c@{}}84.45\\ \footnotesize \color[HTML]{9B9B9B}(10.67)\end{tabular} \\
\texttt{FedRep}~\cite{fedrep} & \begin{tabular}[c]{@{}c@{}}99.11\\ \footnotesize \color[HTML]{9B9B9B}(0.63)\end{tabular} & \begin{tabular}[c]{@{}c@{}}99.04\\ \footnotesize \color[HTML]{9B9B9B}(1.02)\end{tabular} & \begin{tabular}[c]{@{}c@{}}97.94\\ \footnotesize \color[HTML]{9B9B9B}(3.37)\end{tabular} & \begin{tabular}[c]{@{}c@{}}82.00\\ \footnotesize \color[HTML]{9B9B9B}(5.41)\end{tabular} & \begin{tabular}[c]{@{}c@{}}81.27\\ \footnotesize \color[HTML]{9B9B9B}(7.90)\end{tabular} & \begin{tabular}[c]{@{}c@{}}80.66\\ \footnotesize \color[HTML]{9B9B9B}(11.00)\end{tabular} \\ \midrule
\rowcolor[HTML]{FFF5E6} 
\texttt{SuPerFed-MM} & \begin{tabular}[c]{@{}c@{}}\underline{99.45}\\ \footnotesize\color[HTML]{9B9B9B}(0.46)\end{tabular} & \begin{tabular}[c]{@{}c@{}}\textbf{99.38}\\ \footnotesize \color[HTML]{9B9B9B}(0.93)\end{tabular} & \begin{tabular}[c]{@{}c@{}}\textbf{99.24}\\ \footnotesize \color[HTML]{9B9B9B}(2.12)\end{tabular} & \begin{tabular}[c]{@{}c@{}}\textbf{94.05}\\ \footnotesize \color[HTML]{9B9B9B}(3.18)\end{tabular} & \begin{tabular}[c]{@{}c@{}}\textbf{93.25}\\ \footnotesize \color[HTML]{9B9B9B}(3.80)\end{tabular} & \begin{tabular}[c]{@{}c@{}}\textbf{90.81}\\ \footnotesize\color[HTML]{9B9B9B}(9.35)\end{tabular} \\
\rowcolor[HTML]{FFF5E6} 
\texttt{SuPerFed-LM} & \begin{tabular}[c]{@{}c@{}}\textbf{99.48}\\ \footnotesize\color[HTML]{9B9B9B}(0.54)\end{tabular} & \begin{tabular}[c]{@{}c@{}}\underline{99.31}\\ \footnotesize \color[HTML]{9B9B9B}(1.09)\end{tabular} & \begin{tabular}[c]{@{}c@{}}\underline{98.83}\\ \footnotesize \color[HTML]{9B9B9B}(3.02)\end{tabular} & \begin{tabular}[c]{@{}c@{}}\underline{93.88}\\ \footnotesize \color[HTML]{9B9B9B}(3.55)\end{tabular} & \begin{tabular}[c]{@{}c@{}}\underline{93.20}\\ \footnotesize \color[HTML]{9B9B9B}(4.19)\end{tabular} & \begin{tabular}[c]{@{}c@{}}\underline{89.63}\\ \footnotesize \color[HTML]{9B9B9B}(11.11)\end{tabular} \\ \bottomrule
\end{tabular}%
}
\end{table}
    \begin{table}[H]
\centering
\caption[Experimental results on the \textit{Dirichlet distribution-based non-IID} setting of Chapter~\ref{ch:superfed}]
{Experimental results on the \textit{Dirichlet distribution-based non-IID} setting (CIFAR-100 and TinyImageNet datasets) compared with other FL and PFL methods.
The Top-5 accuracy is reported with a standard deviation.}
\label{tab:2_table4}
\resizebox{0.8\linewidth}{!}{%
\begin{tabular}{!{}lccc|ccc!{}}
\toprule
\textbf{Dataset} & \multicolumn{3}{c|}{\textbf{CIFAR-100}} & \multicolumn{3}{c}{\textbf{TinyImageNet}} \\
\textbf{Method} & \multicolumn{3}{c|}{(Acc. 5)} & \multicolumn{3}{c}{(Acc. 5)} \\ \cmidrule(l){2-7}
\multicolumn{1}{c}{\# clients} & 100 & 100 & 100 & 200 & 200 & 200 \\
\multicolumn{1}{c}{concentration ($\alpha$)} & 1 & 10 & 100 & 1 & 10 & 100 \\ \midrule
\texttt{FedAvg}~\cite{fedavg} & \begin{tabular}[c]{@{}c@{}}58.12\\ \footnotesize \color[HTML]{9B9B9B}(7.06)\end{tabular} & \begin{tabular}[c]{@{}c@{}}\underline{59.04}\\ \footnotesize \color[HTML]{9B9B9B}(7.19)\end{tabular} & \begin{tabular}[c]{@{}c@{}}58.49\\ \footnotesize\color[HTML]{9B9B9B}(5.27)\end{tabular} & \begin{tabular}[c]{@{}c@{}}46.61\\ \footnotesize\color[HTML]{9B9B9B}(5.64)\end{tabular} & \begin{tabular}[c]{@{}c@{}}48.90\\ \footnotesize\color[HTML]{9B9B9B}(5.50)\end{tabular} & \begin{tabular}[c]{@{}c@{}}48.90\\ \footnotesize\color[HTML]{9B9B9B}(5.40)\end{tabular} \\
\texttt{FedProx}~\cite{fedprox} & \begin{tabular}[c]{@{}c@{}}57.71\\ \footnotesize \color[HTML]{9B9B9B}(6.79)\end{tabular} & \begin{tabular}[c]{@{}c@{}}58.24\\ \footnotesize \color[HTML]{9B9B9B}(5.94)\end{tabular} & \begin{tabular}[c]{@{}c@{}}58.75\\ \footnotesize \color[HTML]{9B9B9B}(5.56)\end{tabular} & \begin{tabular}[c]{@{}c@{}}\underline{47.37}\\ \footnotesize \color[HTML]{9B9B9B}(5.94)\end{tabular} & \begin{tabular}[c]{@{}c@{}}47.73\\ \footnotesize \color[HTML]{9B9B9B}(5.94)\end{tabular} & \begin{tabular}[c]{@{}c@{}}48.97\\ \footnotesize \color[HTML]{9B9B9B}(5.02)\end{tabular} \\
\texttt{SCAFFOLD}~\cite{ka+19} & \begin{tabular}[c]{@{}c@{}}51.16\\ \footnotesize \color[HTML]{9B9B9B}(6.79)\end{tabular} & \begin{tabular}[c]{@{}c@{}}51.40\\ \footnotesize \color[HTML]{9B9B9B}(5.22)\end{tabular} & \begin{tabular}[c]{@{}c@{}}52.90\\ \footnotesize \color[HTML]{9B9B9B}(4.89)\end{tabular} & \begin{tabular}[c]{@{}c@{}}46.54\\ \footnotesize \color[HTML]{9B9B9B}(5.49)\end{tabular} & \begin{tabular}[c]{@{}c@{}}48.77\\ \footnotesize \color[HTML]{9B9B9B}(5.49)\end{tabular} & \begin{tabular}[c]{@{}c@{}}48.27\\ \footnotesize \color[HTML]{9B9B9B}(5.32)\end{tabular} \\ \midrule
\texttt{LG-FedAvg}~\cite{lgfedavg} & \begin{tabular}[c]{@{}c@{}}28.88\\ \footnotesize \color[HTML]{9B9B9B}(5.64)\end{tabular} & \begin{tabular}[c]{@{}c@{}}21.25\\ \footnotesize \color[HTML]{9B9B9B}(4.64)\end{tabular} & \begin{tabular}[c]{@{}c@{}}20.05\\ \footnotesize \color[HTML]{9B9B9B}(4.61)\end{tabular} & \begin{tabular}[c]{@{}c@{}}14.70\\ \footnotesize \color[HTML]{9B9B9B}(3.84)\end{tabular} & \begin{tabular}[c]{@{}c@{}}9.86\\ \footnotesize \color[HTML]{9B9B9B}(3.13)\end{tabular} & \begin{tabular}[c]{@{}c@{}}9.25\\ \footnotesize \color[HTML]{9B9B9B}(2.89)\end{tabular} \\
\texttt{FedPer}~\cite{FedPer} & \begin{tabular}[c]{@{}c@{}}46.78\\ \footnotesize \color[HTML]{9B9B9B}(7.63)\end{tabular} & \begin{tabular}[c]{@{}c@{}}35.73\\ \footnotesize \color[HTML]{9B9B9B}(6.80)\end{tabular} & \begin{tabular}[c]{@{}c@{}}35.52\\ \footnotesize \color[HTML]{9B9B9B}(6.58)\end{tabular} & \begin{tabular}[c]{@{}c@{}}21.90\\ \footnotesize \color[HTML]{9B9B9B}(4.71)\end{tabular} & \begin{tabular}[c]{@{}c@{}}11.10\\ \footnotesize \color[HTML]{9B9B9B}(3.19)\end{tabular} & \begin{tabular}[c]{@{}c@{}}9.63\\ \footnotesize \color[HTML]{9B9B9B}(3.12)\end{tabular} \\
\texttt{APFL}~\cite{apfl} & \begin{tabular}[c]{@{}c@{}}\underline{61.13}\\ \footnotesize \color[HTML]{9B9B9B}(6.86)\end{tabular} & \begin{tabular}[c]{@{}c@{}}56.90\\ \footnotesize \color[HTML]{9B9B9B}(7.05)\end{tabular} & \begin{tabular}[c]{@{}c@{}}55.43\\ \footnotesize \color[HTML]{9B9B9B}(5.45)\end{tabular} & \begin{tabular}[c]{@{}c@{}}41.98\\ \footnotesize\color[HTML]{9B9B9B}(5.94)\end{tabular} & \begin{tabular}[c]{@{}c@{}}34.74\\ \footnotesize\color[HTML]{9B9B9B}(5.14)\end{tabular} & \begin{tabular}[c]{@{}c@{}}34.23\\ \footnotesize\color[HTML]{9B9B9B}(5.07)\end{tabular} \\
\texttt{pFedMe}~\cite{pfedme} & \begin{tabular}[c]{@{}c@{}}19.00\\ \footnotesize \color[HTML]{9B9B9B}(5.37)\end{tabular} & \begin{tabular}[c]{@{}c@{}}17.94\\ \footnotesize \color[HTML]{9B9B9B}(4.72)\end{tabular} & \begin{tabular}[c]{@{}c@{}}18.28\\ \footnotesize \color[HTML]{9B9B9B}(3.41)\end{tabular} & \begin{tabular}[c]{@{}c@{}}6.05\\ \footnotesize \color[HTML]{9B9B9B}(2.84)\end{tabular} & \begin{tabular}[c]{@{}c@{}}8.01\\ \footnotesize \color[HTML]{9B9B9B}(2.92)\end{tabular} & \begin{tabular}[c]{@{}c@{}}7.69\\ \footnotesize \color[HTML]{9B9B9B}(2.41)\end{tabular} \\
\texttt{Ditto}~\cite{ditto} & \begin{tabular}[c]{@{}c@{}}60.04\\ \footnotesize \color[HTML]{9B9B9B}(6.82)\end{tabular} & \begin{tabular}[c]{@{}c@{}}58.55\\ \footnotesize \color[HTML]{9B9B9B}(7.12)\end{tabular} & \begin{tabular}[c]{@{}c@{}}58.73\\ \footnotesize \color[HTML]{9B9B9B}(5.39)\end{tabular} & \begin{tabular}[c]{@{}c@{}}46.36\\ \footnotesize \color[HTML]{9B9B9B}(5.44)\end{tabular} & \begin{tabular}[c]{@{}c@{}}43.84\\ \footnotesize \color[HTML]{9B9B9B}(5.44)\end{tabular} & \begin{tabular}[c]{@{}c@{}}43.11\\ \footnotesize \color[HTML]{9B9B9B}(5.35)\end{tabular} \\
\texttt{FedRep}~\cite{fedrep} & \begin{tabular}[c]{@{}c@{}}38.49\\ \footnotesize \color[HTML]{9B9B9B}(6.65)\end{tabular} & \begin{tabular}[c]{@{}c@{}}26.61\\ \footnotesize \color[HTML]{9B9B9B}(5.20)\end{tabular} & \begin{tabular}[c]{@{}c@{}}24.50\\ \footnotesize \color[HTML]{9B9B9B}(4.21)\end{tabular} & \begin{tabular}[c]{@{}c@{}}18.67\\ \footnotesize \color[HTML]{9B9B9B}(4.66)\end{tabular} & \begin{tabular}[c]{@{}c@{}}9.23\\ \footnotesize \color[HTML]{9B9B9B}(2.84)\end{tabular} & \begin{tabular}[c]{@{}c@{}}8.09\\ \footnotesize \color[HTML]{9B9B9B}(2.83)\end{tabular} \\ \midrule
\rowcolor[HTML]{FFF5E6} 
\texttt{SuPerFed-MM} & \begin{tabular}[c]{@{}c@{}}60.14\\ \footnotesize\color[HTML]{9B9B9B}(6.24)\end{tabular} & \begin{tabular}[c]{@{}c@{}}58.32\\ \footnotesize\color[HTML]{9B9B9B}(6.25)\end{tabular} & \begin{tabular}[c]{@{}c@{}}\textbf{59.08}\\ \footnotesize\color[HTML]{9B9B9B}(5.12)\end{tabular} & \begin{tabular}[c]{@{}c@{}}\textbf{50.07}\\ \footnotesize\color[HTML]{9B9B9B}(5.73)\end{tabular} & \begin{tabular}[c]{@{}c@{}}\textbf{49.86}\\ \footnotesize\color[HTML]{9B9B9B}(5.03)\end{tabular} & \begin{tabular}[c]{@{}c@{}}\textbf{49.73}\\ \footnotesize\color[HTML]{9B9B9B}(4.84)\end{tabular} \\
\rowcolor[HTML]{FFF5E6} 
\texttt{SuPerFed-LM} & \begin{tabular}[c]{@{}c@{}}\textbf{62.50}\\ \footnotesize\color[HTML]{9B9B9B}(6.34)\end{tabular} & \begin{tabular}[c]{@{}c@{}}\textbf{61.64}\\ \footnotesize\color[HTML]{9B9B9B}(6.23)\end{tabular} & \begin{tabular}[c]{@{}c@{}}\underline{59.05}\\ \footnotesize\color[HTML]{9B9B9B}(5.59)\end{tabular} & \begin{tabular}[c]{@{}c@{}}47.28\\ \footnotesize\color[HTML]{9B9B9B}(5.19)\end{tabular} & \begin{tabular}[c]{@{}c@{}}\underline{48.98}\\ \footnotesize\color[HTML]{9B9B9B}(4.79)\end{tabular} & \begin{tabular}[c]{@{}c@{}}\underline{49.29}\\ \footnotesize\color[HTML]{9B9B9B}(4.82)\end{tabular} \\ \bottomrule
\end{tabular}%
}
\end{table}
    \begin{table}[H]
\centering
\caption[Experimental results on the \textit{realistic non-IID} setting of Chapter~\ref{ch:superfed}]
{Experimental results on the \textit{realistic non-IID} setting (FEMNIST and Shakespeare datasets) compared with other FL and PFL methods. 
The Top-1 and Top-5 accuracy is reported with a standard deviation.}
\label{tab:2_table5}
\resizebox{0.6\textwidth}{!}{%
\begin{tabular}{!{}lcc|cc!{}}
\toprule
{\textbf{Dataset}} & \multicolumn{2}{c|}{\textbf{FEMNIST}} & \multicolumn{2}{c}{\textbf{Shakespeare}} \\
{\textbf{Method}} & \multicolumn{2}{c|}{(Acc.)} & \multicolumn{2}{c}{(Acc.)} \\ \cmidrule(l){2-5}
\multicolumn{1}{c}{\# clients} & \multicolumn{1}{c}{730} & \multicolumn{1}{c|}{730} & \multicolumn{1}{c}{660} & \multicolumn{1}{c}{660} \\
\multicolumn{1}{c}{Metric} & \multicolumn{1}{c}{Top-1} & \multicolumn{1}{c|}{Top-5} & \multicolumn{1}{c}{Top-1} & \multicolumn{1}{c}{Top-5} \\ \midrule
\texttt{FedAvg}~\cite{fedavg} & \begin{tabular}[c]{@{}c@{}}80.12\\ \footnotesize \color[HTML]{9B9B9B}(12.01)\end{tabular} & \begin{tabular}[c]{@{}c@{}}98.74\\ \footnotesize \color[HTML]{9B9B9B}(2.97)\end{tabular} & \begin{tabular}[c]{@{}c@{}}50.90\\ \footnotesize \color[HTML]{9B9B9B}(7.85)\end{tabular} & \begin{tabular}[c]{@{}c@{}}80.15\\ \footnotesize \color[HTML]{9B9B9B}(7.87)\end{tabular} \\
\texttt{FedProx}~\cite{fedprox} & \begin{tabular}[c]{@{}c@{}}80.23\\ \footnotesize \color[HTML]{9B9B9B}(11.88)\end{tabular} & \begin{tabular}[c]{@{}c@{}}98.73\\ \footnotesize \color[HTML]{9B9B9B}(2.94)\end{tabular} & \begin{tabular}[c]{@{}c@{}}51.33\\ \footnotesize \color[HTML]{9B9B9B}(7.54)\end{tabular} & \begin{tabular}[c]{@{}c@{}}80.31\\ \footnotesize \color[HTML]{9B9B9B}(6.95)\end{tabular} \\
\texttt{SCAFFOLD}~\cite{ka+19} & \begin{tabular}[c]{@{}c@{}}80.03\\ \footnotesize \color[HTML]{9B9B9B}(11.78)\end{tabular} & \begin{tabular}[c]{@{}c@{}}\underline{98.85}\\ \footnotesize \color[HTML]{9B9B9B}(2.77)\end{tabular} & \begin{tabular}[c]{@{}c@{}}50.76\\ \footnotesize \color[HTML]{9B9B9B}(8.01)\end{tabular} & \begin{tabular}[c]{@{}c@{}}80.43\\ \footnotesize \color[HTML]{9B9B9B}(7.09)\end{tabular} \\ \midrule
\texttt{LG-FedAvg}~\cite{lgfedavg} & \begin{tabular}[c]{@{}c@{}}50.84\\ \footnotesize \color[HTML]{9B9B9B}(20.97)\end{tabular} & \begin{tabular}[c]{@{}c@{}}75.11\\ \footnotesize \color[HTML]{9B9B9B}(21.49)\end{tabular} & \begin{tabular}[c]{@{}c@{}}33.88\\ \footnotesize \color[HTML]{9B9B9B}(10.28)\end{tabular} & \begin{tabular}[c]{@{}c@{}}62.84\\ \footnotesize \color[HTML]{9B9B9B}(13.16)\end{tabular} \\
\texttt{FedPer}~\cite{FedPer} & \begin{tabular}[c]{@{}c@{}}73.79\\ \footnotesize \color[HTML]{9B9B9B}(14.10)\end{tabular} & \begin{tabular}[c]{@{}c@{}}86.39\\ \footnotesize \color[HTML]{9B9B9B}(14.70)\end{tabular} & \begin{tabular}[c]{@{}c@{}}45.82\\ \footnotesize \color[HTML]{9B9B9B}(8.10)\end{tabular} & \begin{tabular}[c]{@{}c@{}}75.68\\ \footnotesize \color[HTML]{9B9B9B}(9.25)\end{tabular} \\
\texttt{APFL}~\cite{apfl} & \begin{tabular}[c]{@{}c@{}}\underline{84.85}\\ \footnotesize \color[HTML]{9B9B9B}(8.83)\end{tabular} & \begin{tabular}[c]{@{}c@{}}98.83\\ \footnotesize \color[HTML]{9B9B9B}(2.73)\end{tabular} & \begin{tabular}[c]{@{}c@{}}\underline{54.08}\\ \footnotesize \color[HTML]{9B9B9B}(8.31)\end{tabular} & \begin{tabular}[c]{@{}c@{}}83.32\\ \footnotesize \color[HTML]{9B9B9B}(6.22)\end{tabular} \\
\texttt{pFedMe}~\cite{pfedme} & \begin{tabular}[c]{@{}c@{}}5.98\\ \footnotesize \color[HTML]{9B9B9B}(4.55)\end{tabular} & \begin{tabular}[c]{@{}c@{}}24.64\\ \footnotesize \color[HTML]{9B9B9B}(9.43)\end{tabular} & \begin{tabular}[c]{@{}c@{}}32.29\\ \footnotesize \color[HTML]{9B9B9B}(6.64)\end{tabular} & \begin{tabular}[c]{@{}c@{}}63.12\\ \footnotesize \color[HTML]{9B9B9B}(8.00)\end{tabular} \\
\texttt{Ditto}~\cite{ditto} & \begin{tabular}[c]{@{}c@{}}64.61\\ \footnotesize \color[HTML]{9B9B9B}(31.49)\end{tabular} & \begin{tabular}[c]{@{}c@{}}81.14\\ \footnotesize \color[HTML]{9B9B9B}(28.56)\end{tabular} & \begin{tabular}[c]{@{}c@{}}49.04\\ \footnotesize \color[HTML]{9B9B9B}(10.22)\end{tabular} & \begin{tabular}[c]{@{}c@{}}78.14\\ \footnotesize \color[HTML]{9B9B9B}(12.61)\end{tabular} \\
\texttt{FedRep}~\cite{fedrep} & \begin{tabular}[c]{@{}c@{}}59.27\\ \footnotesize \color[HTML]{9B9B9B}(15.72)\end{tabular} & \begin{tabular}[c]{@{}c@{}}70.42\\ \footnotesize \color[HTML]{9B9B9B}(15.82)\end{tabular} & \begin{tabular}[c]{@{}c@{}}38.15\\ \footnotesize \color[HTML]{9B9B9B}(9.54)\end{tabular} & \begin{tabular}[c]{@{}c@{}}68.65\\ \footnotesize \color[HTML]{9B9B9B}(12.50)\end{tabular} \\ \midrule
\rowcolor[HTML]{FFF5E6} 
\texttt{SuPerFed-MM} & \begin{tabular}[c]{@{}c@{}}\textbf{85.20}\\ \footnotesize\color[HTML]{9B9B9B}(8.40)\end{tabular} & \begin{tabular}[c]{@{}c@{}}\textbf{99.16}\\ \footnotesize\color[HTML]{9B9B9B}(2.13)\end{tabular} & \begin{tabular}[c]{@{}c@{}}\textbf{54.52}\\ \footnotesize\color[HTML]{9B9B9B}(7.54)\end{tabular} & \begin{tabular}[c]{@{}c@{}}\textbf{84.27}\\ \footnotesize\color[HTML]{9B9B9B}(6.00)\end{tabular} \\
\rowcolor[HTML]{FFF5E6} 
\texttt{SuPerFed-LM} & \begin{tabular}[c]{@{}c@{}}83.36\\ \footnotesize\color[HTML]{9B9B9B}(9.61)\end{tabular} & \begin{tabular}[c]{@{}c@{}}98.81\\ \footnotesize\color[HTML]{9B9B9B}(2.58)\end{tabular} & \begin{tabular}[c]{@{}c@{}}\textbf{54.52}\\ \footnotesize\color[HTML]{9B9B9B}(7.54)\end{tabular} & \begin{tabular}[c]{@{}c@{}}\underline{83.97}\\ \footnotesize\color[HTML]{9B9B9B}(5.72)\end{tabular} \\ \bottomrule
\end{tabular}%
}
\end{table}
    
\newpage
\subsection{Benefits from Mode Connectivity}
    \paragraph{Robustness and Calibration} 
    In the practical federated system, it is possible that each client suffers from noisy labels. 
    We anticipate that \texttt{SuPerFed} will benefit from the strength of ensemble models from induced \textit{mode connectivity} since our method can be viewed as an explicit ensemble of two models, $\boldsymbol{\theta}^{f}$ and $\boldsymbol{\theta}^{l}$. 
    One of the strengths is robustness to the label noise, as each of the model components of the ensemble method can view diverse data distributions~\cite{ensemblenoise,ensemblenoise2}. 
    In \texttt{SuPerFed}, a federated model and a local model are induced to be orthogonal to each other in the parameter space 
    (i.e., having low cosine similarity between parameters). 
    Thus, we assume each of them should inevitably learn from a different view of data distribution, and naturally  

    \paragraph{Simulation of Label Noise}
    Since we only have clean-labeled datasets (MNIST and CIFAR-10), we manually obfuscate the labels of training samples following two widely-used schemes: 
    i) pairwise flipping~\cite{pair} and i) symmetric flipping~\cite{symm}. 
    Both are based on the defining label transition matrix $\mathbf{T}$, 
    where each element of $\mathbf{T}_{ij}$ indicates the probability of transitioning from an original label $y=i$ to an obfuscated label $\Tilde{y}=j$, 
    i.e., $\mathbf{T}_{ij}=p(\Tilde{y}=j|y=i)$. 
    
    \paragraph{Simulation i): pairwise flipping}
    The pairwise flipping emulates a situation when labelers make mistakes only within similar classes.
    In this case, the transition matrix $\mathbf{T}$ for a pairwise flip given the noise ratio $\epsilon$ is defined as follows~\cite{pair}.
    \begin{equation}
    \begin{gathered}
        \mathbf{T}_{\text{pair}} = \left[\begin{array}{ccccc}
        1-\epsilon & \epsilon & 0 & \cdots & 0 \\
        0 & 1-\epsilon & \epsilon & \cdots & 0 \\
        \vdots & \vdots & \ddots & \ddots & \vdots \\
        0 & 0 & \cdots & 1-\epsilon & \epsilon \\
        \epsilon & 0 & \cdots & 0 & 1-\epsilon
        \end{array}\right]
    \end{gathered}
    \end{equation}
    We select the noise ratio from {0.1, 0.4} for the pair flipping label noise.
    
    \paragraph{Simulation i): symmetric flipping}
    The symmetric flipping assumes a probability of mislabeling of a clean label as other labels are uniformly distributed. 
    For $N$ classes, the transition matrix for symmetric flipping is defined as follows~\cite{symm}.
    \begin{equation}
    \begin{gathered}
        \mathbf{T}_{\text{sym}}=\left[\begin{array}{ccccc}
        1-\epsilon & \frac{\epsilon}{N-1} & \cdots & \frac{\epsilon}{N-1} & \frac{\epsilon}{N-1} \\
        \frac{\epsilon}{N-1} & 1-\epsilon & \frac{\epsilon}{N-1} & \cdots & \frac{\epsilon}{N-1} \\
        \vdots & \vdots & \ddots & \vdots & \vdots \\
        \frac{\epsilon}{N-1} & \cdots & \frac{\epsilon}{N-1} & 1-\epsilon & \frac{\epsilon}{N-1} \\
        \frac{\epsilon}{N-1} & \frac{\epsilon}{N-1} & \cdots & \frac{\epsilon}{N-1} & 1-\epsilon
        \end{array}\right]
    \end{gathered}
    \end{equation}
    We select the noise ratio from {0.2, 0.6}  for the symmetric flipping label noise.
    
    Using these noise schemes, we obfuscate each client's labels of the training set while keeping the labels of the test set in the same.
    We used MNIST and CIFAR-10 datasets with \textit{Dirichlet distribution-based non-IID} setting in which the concentration parameter $\alpha=10.0$. 
    Since the robustness of the label noise can be quantified through evaluating ECE~\cite{ece} and MCE~\cite{mce}, we introduced these two metrics additionally. 
    Both ECE and MCE measure the consistency between the prediction accuracy and the prediction confidence (i.e., calibration), thus lower values are preferred.
    From all the extensive experiments, our method shows stable performance in terms of calibration (i.e., low ECE and MCE), and therefore overall performance is not degraded much.
    The results are presented in Table~\ref{tab:2_table6}.
    \begin{table}[tp]
\centering
\caption[Experimental results on the label noise simulation setting of Chapter~\ref{ch:superfed}]{Experimental results on the label noise simulation with \textit{Dirichlet distirubiotn-based non-IID} setting with $\alpha=10$ on MNIST and CIFAR-10 compared with other FL and PFL methods. 
Per each cell, expected calibration error (ECE), maximum calibration error (MCE), and the best averaged Top-1 accuracy (in parentheses) are enumerated from top to bottom. Note that the lower ECE and MCE, the better the model calibration is.}
\label{tab:2_table6}
\resizebox{\textwidth}{!}{%
\begin{tabular}{l|cccc|cccc}
\hline
\multicolumn{1}{l|}{\textbf{Dataset}} & \multicolumn{4}{c|}{\textbf{MNIST}} & \multicolumn{4}{c}{\textbf{CIFAR-10}} \\ \hline
\multicolumn{1}{c|}{Noise type} & \multicolumn{2}{c|}{pair} & \multicolumn{2}{c|}{symmetric} & \multicolumn{2}{c|}{pair} & \multicolumn{2}{c}{symmetric} \\ \hline
\multicolumn{1}{c|}{Noise ratio} & 0.1 & \multicolumn{1}{c|}{0.4} & 0.2 & 0.6 & 0.1 & \multicolumn{1}{c|}{0.4} & 0.2 & 0.6 \\ \hline
\texttt{FedAvg}~\cite{fedavg} & \begin{tabular}[c]{@{}c@{}}$0.17\pm0.03$\\ $0.58\pm0.08$\\ $(82.40\pm3.31)$\end{tabular} & \multicolumn{1}{c|}{\begin{tabular}[c]{@{}c@{}}$0.38\pm0.03$\\ $\mathbf{0.67\pm0.04}$\\ $(41.01\pm4.64)$\end{tabular}} & \begin{tabular}[c]{@{}c@{}}$0.29\pm0.04$\\ $0.66\pm0.07$\\ $(66.94\pm4.27)$\end{tabular} & \begin{tabular}[c]{@{}c@{}}$0.42\pm0.04$\\ $0.75\pm0.05$\\ $(49.52\pm5.42)$\end{tabular} & \begin{tabular}[c]{@{}c@{}}$0.46\pm0.04$\\ $0.80\pm0.06$\\ $(45.08\pm5.61)$\end{tabular} & \multicolumn{1}{c|}{\begin{tabular}[c]{@{}c@{}}$0.57\pm0.04$\\ $0.87\pm0.04$\\ $(20.90\pm4.66)$\end{tabular}} & \begin{tabular}[c]{@{}c@{}}$0.52\pm0.05$\\ $0.81\pm0.04$\\ $(38.62\pm5.28)$\end{tabular} & \begin{tabular}[c]{@{}c@{}}$0.59\pm0.05$\\ $0.84\pm0.04$\\ $(30.18\pm5.53)$\end{tabular} \\ \hline
\texttt{FedProx}~\cite{fedprox} & \begin{tabular}[c]{@{}c@{}}$0.17\pm0.03$\\ $0.58\pm0.07$\\ $(82.05\pm3.98)$\end{tabular} & \multicolumn{1}{c|}{\begin{tabular}[c]{@{}c@{}}$0.38\pm0.03$\\ $0.78\pm0.05$\\ $(41.39\pm4.61)$\end{tabular}} & \begin{tabular}[c]{@{}c@{}}$0.29\pm0.03$\\ $0.66\pm0.07$\\ $(67.15\pm4.60)$\end{tabular} & \begin{tabular}[c]{@{}c@{}}$0.42\pm0.05$\\ $0.74\pm0.05$\\ $(49.98\pm5.57)$\end{tabular} & \begin{tabular}[c]{@{}c@{}}$0.47\pm0.05$\\ $0.80\pm0.05$\\ $(44.31\pm6.20)$\end{tabular} & \multicolumn{1}{c|}{\begin{tabular}[c]{@{}c@{}}$0.70\pm0.05$\\ $0.87\pm0.04$\\ $(21.58\pm4.62)$\end{tabular}} & \begin{tabular}[c]{@{}c@{}}$0.53\pm0.05$\\ $0.81\pm0.05$\\ $(36.91\pm5.68)$\end{tabular} & \begin{tabular}[c]{@{}c@{}}$0.59\pm0.05$\\ $0.84\pm0.05$\\ $(29.50\pm6.11)$\end{tabular} \\ \hline
\texttt{SCAFFOLD}~\cite{ka+19} & \begin{tabular}[c]{@{}c@{}}$0.16\pm0.03$\\ $0.58\pm0.07$\\ $(60.86\pm4.09)$\end{tabular} & \multicolumn{1}{c|}{\begin{tabular}[c]{@{}c@{}}$0.45\pm0.04$\\ $0.77\pm0.04$\\ $(44.92\pm5.07)$\end{tabular}} & \begin{tabular}[c]{@{}c@{}}$0.29\pm0.04$\\ $\mathbf{0.59\pm0.08}$\\ $(70.51\pm4.25)$\end{tabular} & \begin{tabular}[c]{@{}c@{}}$0.44\pm0.04$\\ $0.73\pm0.05$\\ $(51.46\pm5.12)$\end{tabular} & \begin{tabular}[c]{@{}c@{}}$0.46\pm0.05$\\ $0.76\pm0.05$\\ $(47.54\pm5.64)$\end{tabular} & \multicolumn{1}{c|}{\begin{tabular}[c]{@{}c@{}}$0.65\pm0.04$\\ $0.86\pm0.03$\\ $(22.72\pm4.01)$\end{tabular}} & \begin{tabular}[c]{@{}c@{}}$0.53\pm0.04$\\ $0.79\pm0.04$\\ $(38.85\pm5.45)$\end{tabular} & \begin{tabular}[c]{@{}c@{}}$0.61\pm0.04$\\ $0.83\pm0.04$\\ $(30.18\pm4.63)$\end{tabular} \\ \hline
\texttt{LG-FedAvg}~\cite{lgfedavg} & \begin{tabular}[c]{@{}c@{}}$0.23\pm0.04$\\ $0.66\pm0.08$\\ $(73.65\pm5.32)$\end{tabular} & \multicolumn{1}{c|}{\begin{tabular}[c]{@{}c@{}}$0.50\pm0.05$\\ $0.81\pm0.04$\\ $(37.69\pm5.41)$\end{tabular}} & \begin{tabular}[c]{@{}c@{}}$0.34\pm0.04$\\ $0.71\pm0.07$\\ $(61.54\pm4.96)$\end{tabular} & \begin{tabular}[c]{@{}c@{}}$0.45\pm0.05$\\ $0.75\pm0.06$\\ $(47.79\pm5.02)$\end{tabular} & \begin{tabular}[c]{@{}c@{}}$0.59\pm0.06$\\ $0.83\pm0.05$\\ $(30.22\pm6.49)$\end{tabular} & \multicolumn{1}{c|}{\begin{tabular}[c]{@{}c@{}}$0.69\pm0.05$\\ $0.89\pm0.03$\\ $(17.44\pm4.66)$\end{tabular}} & \begin{tabular}[c]{@{}c@{}}$0.63\pm0.05$\\ $0.85\pm0.04$\\ $(25.68\pm5.18)$\end{tabular} & \begin{tabular}[c]{@{}c@{}}$0.66\pm0.05$\\ $0.87\pm0.03$\\ $(22.04\pm5.56)$\end{tabular} \\ \hline
\texttt{FedPer}~\cite{FedPer} & \begin{tabular}[c]{@{}c@{}}$0.17\pm0.03$\\ $0.57\pm0.08$\\ $(82.43\pm4.18)$\end{tabular} & \multicolumn{1}{c|}{\begin{tabular}[c]{@{}c@{}}$0.40\pm0.04$\\ $0.78\pm0.05$\\ $(40.75\pm5.49)$\end{tabular}} & \begin{tabular}[c]{@{}c@{}}$0.28\pm0.04$\\ $0.66\pm0.08$\\ $(68.44\pm5.63)$\end{tabular} & \begin{tabular}[c]{@{}c@{}}$0.40\pm0.04$\\ $0.73\pm0.07$\\ $(52.41\pm5.56)$\end{tabular} & \begin{tabular}[c]{@{}c@{}}$0.54\pm0.05$\\ $0.81\pm0.06$\\ $(39.81\pm6.02)$\end{tabular} & \multicolumn{1}{c|}{\begin{tabular}[c]{@{}c@{}}$0.70\pm0.05$\\ $0.87\pm0.04$\\ $(20.37\pm5.43)$\end{tabular}} & \begin{tabular}[c]{@{}c@{}}$0.60\pm0.06$\\ $0.82\pm0.05$\\ $(32.82\pm5.90)$\end{tabular} & \begin{tabular}[c]{@{}c@{}}$0.65\pm0.05$\\ $0.85\pm0.04$\\ $(26.82\pm5.38)$\end{tabular} \\ \hline
\texttt{APFL}~\cite{apfl} & \begin{tabular}[c]{@{}c@{}}$0.18\pm0.03$\\ $0.60\pm0.07$\\ $(80.18\pm4.57)$\end{tabular} & \multicolumn{1}{c|}{\begin{tabular}[c]{@{}c@{}}$0.42\pm.0.05$\\ $0.78\pm0.06$\\ $(40.43\pm6.06)$\end{tabular}} & \begin{tabular}[c]{@{}c@{}}$0.28\pm0.05$\\ $0.67\pm0.06$\\ $(67.83\pm5.53)$\end{tabular} & \begin{tabular}[c]{@{}c@{}}$0.40\pm0.05$\\ $0.79\pm0.07$\\ $(52.15\pm5.63)$\end{tabular} & \begin{tabular}[c]{@{}c@{}}$0.45\pm0.06$\\ $0.78\pm0.06$\\ $(45.27\pm6.21)$\end{tabular} & \multicolumn{1}{c|}{\begin{tabular}[c]{@{}c@{}}$0.60\pm0.06$\\ $0.86\pm0.04$\\ $(23.42\pm5.49)$\end{tabular}} & \begin{tabular}[c]{@{}c@{}}$0.51\pm0.06$\\ $0.81\pm0.05$\\ $(37.85\pm5.84)$\end{tabular} & \begin{tabular}[c]{@{}c@{}}$0.56\pm0.06$\\ $0.83\pm0.05$\\ $(31.46\pm5.73)$\end{tabular} \\ \hline
\texttt{pFedMe}~\cite{pfedme} & \begin{tabular}[c]{@{}c@{}}$0.23\pm0.04$\\ $0.66\pm0.07$\\ $(72.05\pm0.05)$\end{tabular} & \multicolumn{1}{c|}{\begin{tabular}[c]{@{}c@{}}$0.46\pm0.04$\\ $0.80\pm.0.05$\\ $(37.89\pm5.77)$\end{tabular}} & \begin{tabular}[c]{@{}c@{}}$0.33\pm0.05$\\ $0.72\pm0.06$\\ $(59.80\pm5.94)$\end{tabular} & \begin{tabular}[c]{@{}c@{}}$0.44\pm0.04$\\ $0.76\pm0.05$\\ $(45.79\pm5.26)$\end{tabular} & \begin{tabular}[c]{@{}c@{}}$0.58\pm0.06$\\ $0.83\pm0.52$\\ $(29.62\pm6.37)$\end{tabular} & \multicolumn{1}{c|}{\begin{tabular}[c]{@{}c@{}}$0.66\pm0.04$\\ $0.88\pm0.03$\\ $(17.94\pm3.97)$\end{tabular}} & \begin{tabular}[c]{@{}c@{}}$0.60\pm0.05$\\ $0.85\pm0.04$\\ $(26.01\pm0.05)$\end{tabular} & \begin{tabular}[c]{@{}c@{}}$0.64\pm0.05$\\ $0.87\pm0.03$\\ $(21.36\pm4.42)$\end{tabular} \\ \hline
\texttt{Ditto}~\cite{ditto} & \begin{tabular}[c]{@{}c@{}}$0.22\pm0.04$\\ $0.66\pm0.07$\\ $(72.39\pm5.25)$\end{tabular} & \multicolumn{1}{c|}{\begin{tabular}[c]{@{}c@{}}$0.42\pm0.05$\\ $0.79\pm0.05$\\ $(38.20\pm6.13)$\end{tabular}} & \begin{tabular}[c]{@{}c@{}}$0.31\pm0.04$\\ $0.69\pm0.07$\\ $(60.62\pm5.67)$\end{tabular} & \begin{tabular}[c]{@{}c@{}}$0.41\pm0.04$\\ $0.77\pm0.05$\\ $(45.11\pm4.95)$\end{tabular} & \begin{tabular}[c]{@{}c@{}}$0.54\pm0.05$\\ $0.84\pm0.05$\\ $(29.41\pm5.15)$\end{tabular} & \multicolumn{1}{c|}{\begin{tabular}[c]{@{}c@{}}$0.60\pm0.06$\\ $0.88\pm0.04$\\ $(18.17\pm4.43)$\end{tabular}} & \begin{tabular}[c]{@{}c@{}}$0.56\pm0.07$\\ $0.85\pm0.04$\\ $(26.39\pm5.83)$\end{tabular} & \begin{tabular}[c]{@{}c@{}}$0.58\pm0.06$\\ $0.87\pm0.04$\\ $(21.72\pm4.90)$\end{tabular} \\ \hline
\texttt{FedRep}~\cite{fedrep} & \begin{tabular}[c]{@{}c@{}}$0.20\pm0.04$\\ $0.62\pm0.08$\\ $(77.95\pm4.65)$\end{tabular} & \multicolumn{1}{c|}{\begin{tabular}[c]{@{}c@{}}$0.53\pm0.05$\\ $0.81\pm0.05$\\ $(35.88\pm5.15)$\end{tabular}} & \begin{tabular}[c]{@{}c@{}}$0.30\pm0.05$\\ $0.68\pm0.08$\\ $(66.58\pm5.60)$\end{tabular} & \begin{tabular}[c]{@{}c@{}}$0.43\pm0.05$\\ $0.74\pm0.06$\\ $(51.30\pm5.27)$\end{tabular} & \begin{tabular}[c]{@{}c@{}}$0.62\pm0.05$\\ $0.82\pm0.05$\\ $(33.10\pm5.23)$\end{tabular} & \multicolumn{1}{c|}{\begin{tabular}[c]{@{}c@{}}$0.76\pm0.04$\\ $0.87\pm0.03$\\ $(17.80\pm4.43)$\end{tabular}} & \begin{tabular}[c]{@{}c@{}}$0.66\pm0.05$\\ $0.83\pm0.05$\\ $(29.22\pm5.67)$\end{tabular} & \begin{tabular}[c]{@{}c@{}}$0.71\pm0.05$\\ $0.85\pm0.04$\\ $(22.94\pm5.20)$\end{tabular} \\ \hline
\rowcolor[HTML]{FFF5E6} 
\texttt{SuPerFed-MM} & \begin{tabular}[c]{@{}c@{}}$0.16\pm0.03$\\ $0.53\pm0.07$\\ $(83.67\pm3.51)$\end{tabular} & \multicolumn{1}{c|}{\begin{tabular}[c]{@{}c@{}}$\mathbf{0.28\pm0.04}$\\$0.69\pm0.06$\\ $(46.41\pm5.14)$\end{tabular}} & \begin{tabular}[c]{@{}c@{}}$\mathbf{0.27\pm0.03}$\\ $0.64\pm0.07$\\ $(\mathbf{71.99\pm5.01})$\end{tabular} & \begin{tabular}[c]{@{}c@{}}$\mathbf{0.30\pm0.04}$\\ $\mathbf{0.69\pm0.08}$\\ $(\mathbf{56.07\pm4.32})$\end{tabular} & \begin{tabular}[c]{@{}c@{}}$\mathbf{0.28\pm0.06}$\\ $0.69\pm0.07$\\ $(48.79\pm5.73)$\end{tabular} & \multicolumn{1}{c|}{\begin{tabular}[c]{@{}c@{}}$\mathbf{0.27\pm0.05}$\\ $\mathbf{0.63\pm0.09}$\\ $(26.64\pm4.34)$\end{tabular}} & \begin{tabular}[c]{@{}c@{}}$\mathbf{0.31\pm0.06}$\\ $0.71\pm0.10$\\ $42.66\pm5.61$\end{tabular} & \begin{tabular}[c]{@{}c@{}}$\mathbf{0.28\pm0.06}$\\ $\mathbf{0.68\pm0.10}$\\ $(\mathbf{35.74\pm5.21})$\end{tabular} \\ \hline
\rowcolor[HTML]{FFF5E6} 
\texttt{SuPerFed-LM} & \begin{tabular}[c]{@{}c@{}}$\mathbf{0.14\pm0.03}$\\ $\mathbf{0.49\pm0.08}$\\ $(\mathbf{84.78\pm3.63})$\end{tabular} & \multicolumn{1}{c|}{\begin{tabular}[c]{@{}c@{}}$0.35\pm0.04$\\ $0.77\pm0.04$\\ $(\mathbf{46.82\pm5.39})$\end{tabular}} & \begin{tabular}[c]{@{}c@{}}$0.27\pm0.04$\\ $0.66\pm0.07$\\ $(69.23\pm5.25)$\end{tabular} & \begin{tabular}[c]{@{}c@{}}$0.32\pm0.04$\\ $0.72\pm0.06$\\ $(54.69\pm4.20)$\end{tabular} & \begin{tabular}[c]{@{}c@{}}$0.29\pm0.04$\\ $\mathbf{0.68\pm0.06}$\\ $(\mathbf{51.44\pm5.67})$\end{tabular} & \multicolumn{1}{c|}{\begin{tabular}[c]{@{}c@{}}$0.40\pm0.04$\\ $0.80\pm0.04$\\ $(\mathbf{28.51\pm4.66})$\end{tabular}} & \begin{tabular}[c]{@{}c@{}}$0.36\pm0.05$\\ $\mathbf{0.70\pm0.07}$\\ $(\mathbf{42.81\pm5.50})$\end{tabular} & \begin{tabular}[c]{@{}c@{}}$0.37\pm0.06$\\ $0.76\pm0.05$\\ $(34.10\pm5.18)$\end{tabular} \\ \hline
\end{tabular}%
}
\end{table}
    
\newpage
\subsection{Ablation Studies}
    \paragraph{Start round of a local model training for personalization} 
    We conducted experiments on selecting $L$, which determines the start of mixing, in other words, sampling $\lambda\sim\mathcal{U}(0,1)$. 
    As our proposed method can choose a mixing scheme from either \textit{model-mixing} or \textit{layer-mixing}, 
    we conduct separate experiments per scheme. 
    By adjusting $L$ by 0.1 increments from 0 to 0.9 of $T$, we evaluated the Top-1 accuracy of a mixed model. 
    We used the CIFAR-10 dataset with $T=200$, $\nu=2,\text{ and } \mu=0.01$, and corresponding results are summarized in Table~\ref{tab:2_table7}. 
    \begin{table}[h]
\centering
\caption[Results on the effect of setting the personalization training round in \texttt{SuPerFed}]{Grid search results on selecting $L$, the starting round of training a local model. 
The best averaged Top-1 accuracy (top) and loss (bottom) among models that can be realized by $\lambda\in[0,1]$ is reported with a standard deviation.}
\label{tab:2_table7}
\resizebox{0.8\textwidth}{!}{%
\begin{tabular}{c|ccccc}
\hline
$L/T$ & 0.0 & 0.2 & 0.4 & 0.6 & 0.8 \\ \hline
\texttt{SuPerFed-MM} & \begin{tabular}[c]{@{}c@{}}$89.87\pm4.73$\\ $1.36\pm0.58$\end{tabular} & \begin{tabular}[c]{@{}c@{}}$92.67\pm4.04$\\ $\mathbf{0.72\pm0.40}$\end{tabular} & \begin{tabular}[c]{@{}c@{}}$\mathbf{92.69\pm3.86}$\\ $0.78\pm0.31$\end{tabular} & \begin{tabular}[c]{@{}c@{}}$92.50\pm4.14$\\ $0.74\pm0.88$\end{tabular} & \begin{tabular}[c]{@{}c@{}}$91.29\pm6.28$\\ $2.53\pm0.72$\end{tabular} \\ \hline
\texttt{SuPerFed-LM} & \begin{tabular}[c]{@{}c@{}}$91.55\pm4.26$\\ $1.43\pm0.81$\end{tabular} & \begin{tabular}[c]{@{}c@{}}$92.41\pm3.93$\\ $1.18\pm0.60$\end{tabular} & \begin{tabular}[c]{@{}c@{}}$\mathbf{92.69\pm3.66}$\\ $0.94\pm0.45$\end{tabular} & \begin{tabular}[c]{@{}c@{}}$92.60\pm3.86$\\ $\mathbf{0.63\pm0.33}$\end{tabular} & \begin{tabular}[c]{@{}c@{}}$90.59\pm9.93$\\ $6.28\pm1.76$\end{tabular} \\ \hline
\end{tabular}%
}
\vspace{-2mm}
\end{table}
    
    From the result, one can notice that there are some differences according to the mixing scheme. 
    In terms of the personalization performance, setting $L/T=0.4$ shows the best performance with a small standard deviation. 
    This implies that allowing some rounds to a federated model to focus on learning global knowledge (by setting non-zero $L$) is a valid strategy. 
    While $L$ is a tunable hyperparameter, it can be adjusted for the purpose of FL.
    To derive a good global model, choosing a large $L$ close to $T$ may be a valid choice, while in terms of personalization, it may be close to 0.
    
    \paragraph{Hyperparameters for regularization terms} 
    Our proposed method has two hyperparameters, $\mu$ and $\nu$. Each hyperparameter is supposed to affect in two-folds: 
    i) penalizes a local update of a federated model from going in the wrong direction, away from the direction of a global model update,
    and i) creates a connected low-loss subspace between the federated model and a local model in each client by rendering parameters of the two models orthogonal to each other.
    We conducted experiments with $\nu=\{0.0, 1.0, 2.0, 5.0\}$ and $\mu=\{0.0, 0.01, 0.1, 1.0\}$ (a total of 16 combinations) under the \textit{model-mixing} and \textit{layer-mixing} schemes, with $T=100$ and $L=\lfloor0.4 T\rfloor=40$, which yielded the best personalization performance on the CIFAR-10 dataset under \textit{pathological non-IID} setting. Experimental results on these combinations are displayed in Figure~\ref{fig:fig1_3}.
    
    There are noticeable observations supporting the efficacy of both of two regularization terms. 
    First, a learning strategy to find a connected low loss subspace between local and federated models yields high personalization performance. 
    In most cases with non-zero $\nu$ and $\mu$, a combination of federated and local models (i.e., models realized by $\lambda\in(0,1)$) outperforms two single models from both endpoints (i.e., a federated model ($\lambda=0$) and a local model ($\lambda=1$). 
    Second, the proximity regularization restricts learning too much when $\mu$ is large, but conistently helps a federated model not to diverge when it is properly tuned. This observation is also supported by previous results~\cite{fedprox, pfedme}.
    Third, a moderate degree of orthogonality regularization (adjusted by $\nu$) boosts the overall personalization performance.
    These empirically certifies that both regularization terms are necessary to achieve decent personalization performances.    

    \begin{sidewaysfigure}[p]  
    \includegraphics[width=\textwidth,keepaspectratio]{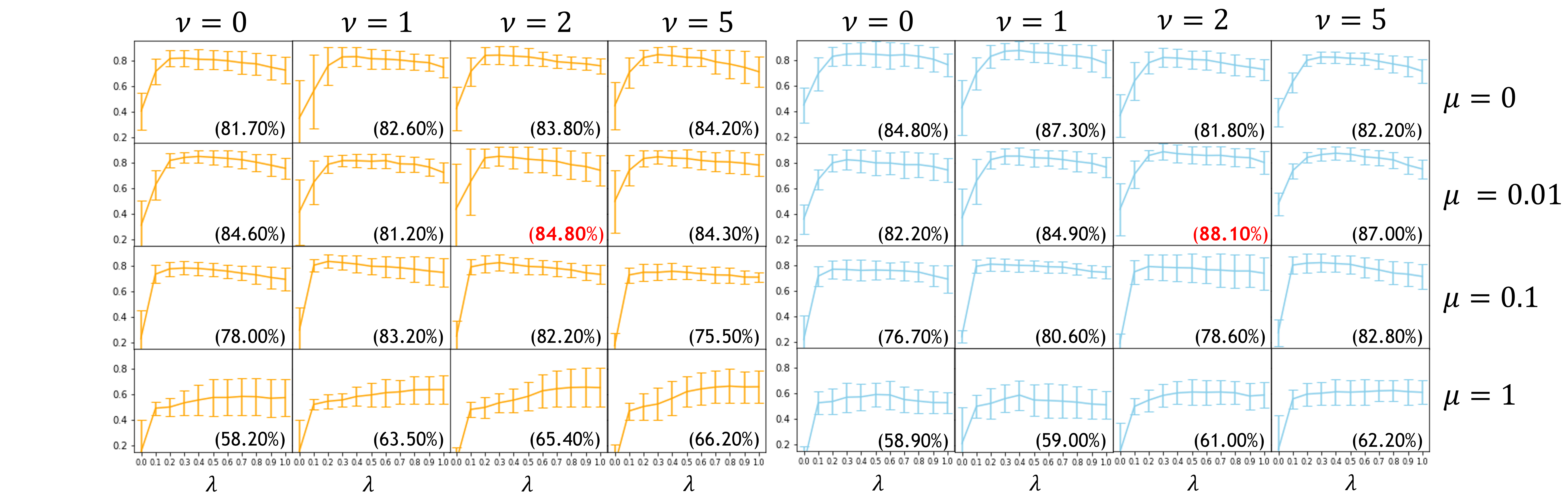}
    \caption[The effects of hyperparameters $\mu$ and $\nu$ in \texttt{SuPerFed}]
    {Left group of plots colored in orange is the performance of \texttt{SuPerFed-MM}, right group of plots colored in sky-blue is the performance of \texttt{SuPerFed-LM}. 
    Each subplot's vertical axis represents Top-1 accuracy, and the horizontal axis represents the range of possible $\lambda$ values from 0 to 1, with an interval of 0.1. 
    Numbers on subplots indicate the performance of the best averaged Top-1 accuracy evaluated by local test set of each client. 
    The error bar indicates a standard deviation of each local model realized by different values of $\lambda\in[0,1]$. 
    Note that each endpoint means a federated model ($\boldsymbol{\theta}^{f}$ when $\lambda=0$) or a local model ($\boldsymbol{\theta}^{l}$ when $\lambda=1$).}
    \label{fig:fig1_3}
    \end{sidewaysfigure}

\newpage
\section{Analysis}
\subsection{Loss Landscape of a Global Model} 
    \begin{sidewaysfigure}[p]  
    \includegraphics[width=\textwidth,keepaspectratio]{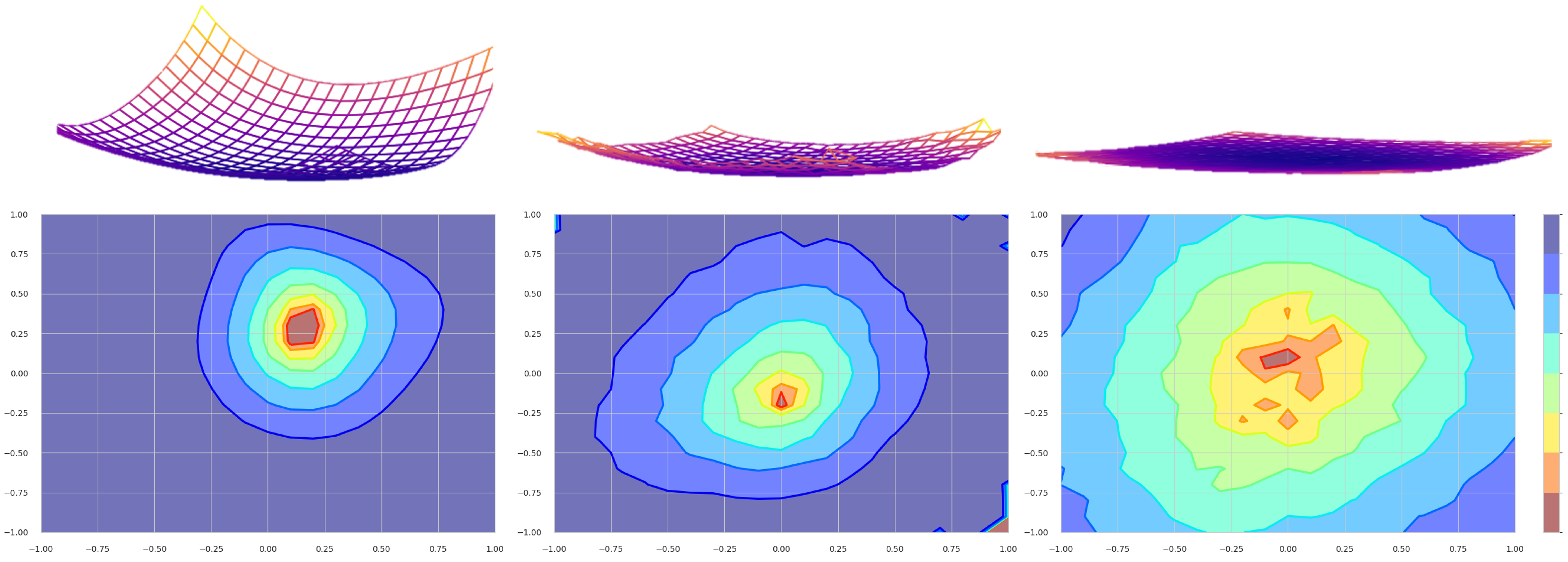}
    \caption[The loss landscapes of a global model trained by \texttt{SuPerFed}]
    {The loss landscapes of a deep network (TwoCNN) trained on CIFAR-10. 
    The top row indicates the curvature of loss landscapes and the bottom row indicates contours of a derived loss landscapes 
    (A) \textbf{First column:} loss landscape obtained from \texttt{FedAvg}. 
    (B) \textbf{Second column:} loss landscape obtained from \texttt{SuPerFed-MM}. 
    (C) \textbf{Third column:} loss landscape obtained from \texttt{SuPerFed-LM}.}
    \label{fig:fig1_4}
    \end{sidewaysfigure}
    
    The loss landscape of global models obtained from the proposed methods \texttt{SuPerFed-MM} and \texttt{SuPerFed-LM} are visualized in Figure~\ref{fig:fig1_4} along with one from \texttt{FedAvg}, following~\cite{garipov+18}. 
    As the proposed regularization term adjusted by $\nu$ induces a connected low-loss subspace, it is consequently observed that the wider minima is acquired in the parameter space of a global model from both \texttt{SuPerFed-MM} and \texttt{SuPerFed-LM} compared to that of \texttt{FedAvg}. 
    As the global model learns to be connected with multiple local models in the parameter space in each FL training round, flat minima of the global model are naturally induced, i.e., the \textit{mode connectivity} is observed. 
    One thing to note here is that the minima is far wider in layer-mixing scheme (i.e., \texttt{SuPerFed-LM}) than that of model-mixing scheme (i.e., \texttt{SuPerFed-MM}). It is presumably due to the fact that the number of distinct combinations of a federated model and a local model is much larger in the layer-mixing scheme than in the model-mixing scheme. 
    In other words, the resulting global model should be connected to multiple local models in more diverse ways. 
    Meanwhile, since the global model has flat minima, we can expect it is more generalized to unseen clients, even if their data distribution is heterogeneous from the existing clients' data distributions. 
    Since the global model contains many good candidates of low-loss solutions thanks to the induced \textit{mode connectivity}, 
    the odds of finding a suitable model are higher than that of the global model trained using other FL algorithms.

\newpage
\subsection{Discussion on the Convex Combination-based PFL Methods} 
    \begin{figure}[h]
    \centering
    \includegraphics[width=\textwidth]{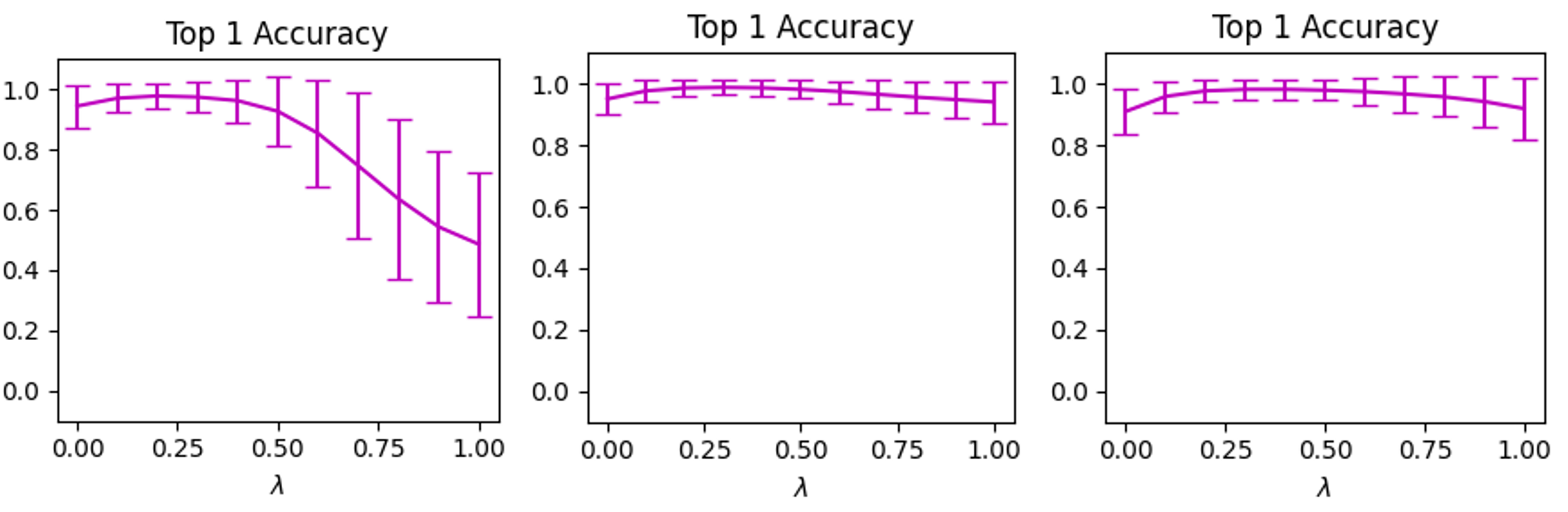}
    \caption[Comparison of \texttt{SuPerFed} with other convex combination-based personalization methods]{
        Comparison of the personalization performance between \texttt{APFL} (left), \texttt{SuPerFed-MM} (middle), and \texttt{SuPerFed-LM} (right) by varying $\lambda\in[0,1]$,
        trained using MNIST with TwoNN under \textit{pathological non-IID} setting with $K=500$ and $T=500$
    }
    \label{fig:fig1_5}
    \end{figure}
    
    In previous model mixture-based PFL methods, there exist similar approaches that interpolate a local model and a federated model in the form of convex combination: 
    i) \texttt{APFL}~\cite{apfl}, ii) \texttt{mapper}~\cite{mansour+20} and iii) \texttt{L2GD}~\cite{l2sgd}.
    
    \noindent In i) \texttt{APFL}, the federated model and the local model are updated \textit{sequentially} with a fixed $\lambda$, 
    which is not true in \texttt{SuPerFed} as it jointly updates both federated and local models at once. 
    While authors of ~\cite{apfl} proposed the method to dynamically tune $\lambda$ through another step of gradient descent, 
    \texttt{SuPerFed} \textbf{does not} resort to a single optimal $\lambda$. 
    Instead, it aims to find a good set of parameters $\boldsymbol{\theta}^{l}$ and $\boldsymbol{\theta}^{f}$ that can be mixed to yield good performance \textbf{regardless of} $\lambda$, thereby connected low-loss subspace is induced in the weight space between two models.
    
    \noindent In ii) \texttt{mapper}~\cite{mansour+20}, on the other hand, it iteratively selects \textit{only one} client (which is an unrealistic assumption) and requests to find the best local model $\boldsymbol{\theta}^{l}$ and $\lambda$ based on the transmitted global model $\boldsymbol{\theta}^{g}$. 
    The federated model is then updated with the the local model and the optimal $\lambda$ is tuned again. 
    In \texttt{SuPerFed}, it is not required to find a fixed value of $\lambda$ and updates of both local and federated models can be conducted in \textbf{a single run of back-propagation}. In other words, no separate training on each model is required as in other methods. 
    
    \noindent Lastly, in iii) \texttt{L2GD} proposed by~\cite{l2sgd}, the interpolation of the global model and the local model is intermittently occurred in the \textit{server} as other methods. 
    However, it requires \textit{all} clients to participate every round, which is a fairly unrealistic assumption as FL setting.
    
    Therefore, we only compared with a scalable method applicable for realistic FL settings, \texttt{APFL}. 
    For the comparison, we observed the change of performances by varying $\lambda\in[0,1]$ in both \texttt{SuPerFed} (\texttt{SuPerFed-MM} \& \texttt{SuPerFed-LM}) and \texttt{APFL}.
    In Figure~\ref{fig:fig1_5}, we can find that \texttt{SuPerFed-MM} and \texttt{SuPerFed-LM} consistently perform well in terms of personalization performance
    \textit{regardless of} $\lambda$, 
    while \texttt{APFL} performs best only when $\lambda$ is close to 0.25. 
    This empirical evidence supports the superiortiy of \texttt{SuPerFed}, realized by the explicit introduction of the \textit{mode connectivity}. 
    As two endpoints (i.e., $\lambda=0$ indicates a federated model and $\lambda=1$ indicates a local model) are explicitly ensembled to form a connected low-loss subspace, both are trained accordingly in a mutually beneficial manner.

\newpage
\section{Conclusion}
    In this chapter, we propose a simple and scalable personalized federated learning method \texttt{SuPerFed} under the scheme of the model mixture-based personalization method. 
    Its goal is to improve personalization performance while entailing unexpected benefits, such as robustness to label noise and better calibration. 
    In each participating client, \texttt{SuPerFed} aims to form a connected low-loss subspace between a local and a federated model in the parameter space,
    which is inspired by the intriguing phenomenon between multiple deep networks, i.e., \textit{mode connectivity}. 
    As a consequence, two models are connected in a mutually cooperative way, so that each client has a personalized model with good performance and calibration that is robust to possible label noise in practice. 
    Through extensive experiments, we empirically demonstrate that \texttt{SuPerFed}, with its two variants in the mixing scheme, outperforms existing model mixture-based PFL methods as well as single model-based basic FL methods.

\newpage 
\chapter{Perspective on Mixing Coefficients, $\boldsymbol{w}$}
\label{ch:aaggff}
\numberwithin{equation}{chapter}
\numberwithin{figure}{chapter}
\numberwithin{table}{chapter}
\numberwithin{algorithm}{chapter}
\renewcommand{\theequation}{3.\arabic{equation}}
\renewcommand{\thefigure}{3.\arabic{figure}}
\renewcommand{\thetable}{3.\arabic{table}}
\renewcommand{\thealgorithm}{3.\arabic{algorithm}}
\renewcommand{\thedefinition}{3.\arabic{definition}} 
\renewcommand{\thetheorem}{3.\arabic{theorem}} 
\renewcommand{\thelemma}{3.\arabic{lemma}} 
\renewcommand{\thecorollary}{3.\arabic{corollary}}
\renewcommand{\theremark}{3.\arabic{remark}} 

\section*{Pursuing Overall Welfare in Federated Learning via Sequential Decision Making} 
\addcontentsline{toc}{section}{\protect\numberline{}\textbf{Pursuing Overall Welfare in Federated Learning via Sequential Decision Making}}
    In this section, we introduce a new perspective in terms of the \textbf{mixing coefficient}, denoted as $\boldsymbol{w}$ in eq.~\eqref{eq:fl_obj}.
    Most existing approaches stick to a static mixing coefficient proportional to the local sample size.
    This reduces the federated learning scheme to the empirical risk minimization (ERM) principle in expectation~\cite{fieldguide}, but this sometimes leads to unattractive situations where the performance distributions across clients are not uniform, also known as a violation of the \textit{client-level fairness}~\cite{clientlevelfairness}.
    This is undoubtedly an undesirable situation caused by the statistical heterogeneity across heterogeneous clients.
    
    Instead of adhering to using the static mixing coefficient, the federated system can adopt an \textit{adaptive mixing scheme} for inducing uniform performance distributions across clients, by deciding the mixing coefficient adaptively.
    The decision-maker is usually the central server, who updates the mixing coefficient in response to \textit{feedbacks} transmitted from clients, e.g., local losses, local gradients.
    
    In this context, many fair FL algorithms have been proposed thus far, however, 
    most of them overlooked the plight of the central server.  
    While clients can use some local samples to update their local model parameters, 
    the central server can only use signals whose size is at most equal to the number of clients to update its adaptive decision, which is far smaller in size.
    In other words, existing approaches do not take into account the central server's sample deficiency problem, so a federated system equipped with existing algorithms may lead to an unsatisfactory degree of client-level fairness.
    In addition, current methods do not take into account past decisions when updating a new decision, making the quality of the decision questionable. 
    
    We start with the following research question:
    \begin{center} 
    \textit{How can we improve the scheme of deciding \textbf{mixing coefficients}\\ 
    so that it is truly adaptive even under the sample-deficient condition?}
    \end{center}
    
\newpage
\section{Introduction}
    FL has been posed as an effective strategy to acquire a global model without centralizing data, therefore with no compromise in privacy \cite{fedavg}.
    It is commonly assumed that the central server coordinates the whole FL procedure by repeatedly \textit{aggregating} local updates from participating $K$ clients during $T$ rounds.
    
    Since each client updates the copy of a global model with its own data, variability across clients' data distributions causes many problems \cite{prob_fl, challenge_fl}.
    The \textit{client-level fairness} \cite{clientlevelfairness} is one of the main problems affected by such a statistical heterogeneity \cite{prob_fl, challenge_fl, afl, qffl}. 
    Although the performance of a global model is high in average, some clients may be more benefited than others, resulting in violation of the client-level fairness. 
    In this situation, there inevitably exists a group of clients who cannot utilize the trained global model due to its poor performance.
    This is a critical problem in practice since the underperformed groups may lose motivation to participate in the federated system.
    To remedy this problem, previous works \cite{afl, qffl, term, fedmgda, propfair} proposed to modify the static aggregation scheme into an \textit{adaptive aggregation} strategy, according to given local signals (e.g., losses or gradients). 
    In detail, the server \textit{re-weights local updates} by assigning larger \textit{mixing coefficients} to higher local losses.
    
    When updating the mixing coefficients, however, \textit{only a few bits are provided to the server}, compared to the update of a model parameter.
    For example, suppose that there exist $K$ clients in the federated system, each of which has $N$ local samples. 
    When all clients participate in each round, $KN$ samples are used effectively for updating a new global model $\boldsymbol{\theta}$.
    On the contrary, only $K$ bits (e.g., local losses: $F_1(\boldsymbol{\theta}), ..., F_K(\boldsymbol{\theta})$) are provided to the server for an update of mixing coefficients.
    This is aggravated in cases where $K$ is too large, thus client sampling is inevitably required. 
    In this case, the server is provided with far less than $K$ signals, which hinders faithful update of the mixing coefficients.
    
    For sequentially updating a status in this \textit{sample-deficient} situation, the online convex optimization (OCO) framework is undoubtedly the best solution.
    Interestingly, we discovered that most existing adaptive aggregation strategies can be readily unified into the OCO framework. 
    Starting from this unification result, we propose an improved design for a fair FL algorithm in the view of sequential decision making.
    Since there exist OCO algorithms specialized for the setting where the decision space is a simplex (i.e., same as the domain of the mixing coefficient), these may be adopted to FL setting with some modifications for practical constraints. 
    
    In practice, FL is subdivided into two settings: \textit{cross-silo} setting and \textit{cross-device} setting \cite{prob_fl}.
    For $K$ clients and $T$ training rounds, each setting requires a different dependency on $K$ and $T$.
    In the cross-silo setting, the number of clients (e.g., institutions) is small and usually less than the number of rounds (i.e., $K < T$).
    e.g., $K=20$ institutions with $T=200$ rounds~\cite{real_silo}.
    On the other hand, in the cross-device setting, the number of clients (e.g., mobile devices) is larger than the number of rounds ($K > T$). 
    e.g., $K=1.5\times10^6$ with $T=3,000$ rounds~\cite{real_device}.
    In designing an FL algorithm, these conditions should be reflected for the sake of practicality.

\paragraph{Contributions}
    We propose \texttt{AAggFF}, a sequential decision making framework for the central server in FL 
    tailored for inducing client-level fairness in the system.
    The contributions of our work are summarized as follows.
    \begin{enumerate}
      \item[$\bullet$] We unify existing fairness-aware adaptive aggregation methods into an OCO framework 
      and propose better online decision making designs for pursuing client-level fairness by the central server. 
      \item[$\bullet$] We propose \texttt{AAggFF}, which is designed to enhance the client-level fairness, 
      and further specialize our method into two practical settings:
      \texttt{AAggFF-S} for cross-silo FL and \texttt{AAggFF-D} for cross-device FL. 
      \item[$\bullet$] We provide regret analyses on the behavior of two algorithms, 
      \texttt{AAggFF-S} and \texttt{AAggFF-D}, presenting sublinear regrets. 
      \item[$\bullet$] We evaluate \texttt{AAggFF} on extensive benchmark datasets for realistic FL scenarios with other baselines.
      \texttt{AAggFF} not only boosts the worst-performing clients but also maintains overall performance. 
    \end{enumerate}

\newpage
\section{Related Works}
\subsection{Client-Level Fairness in Federated Learning} 
    The statistical heterogeneity across clients often causes non-uniform performances of a single global model on participating clients, which is also known as the violation of client-level fairness.
    Fairness-aware FL algorithms aim to eliminate such inequality to achieve uniform performance across all clients.
    There are mainly two approaches to address the problem \cite{clientlevelfairness}: 
    a single model approach, and a personalization approach\nocite{superfed}.
    This paper mainly focuses on the former, which is usually realized by modifying the FL objective,
such as a minimax fairness objective~\cite{afl} (which is also solved by multi-objective optimization \cite{fedmgda} and is also modified to save a communication cost \cite{drfa}), 
    alpha-fairness \cite{alphafair} objective \cite{qffl}, 
    suppressing outliers (i.e., clients having high losses) by tilted objective \cite{term},
    and adopting the concept of proportional fairness to reach Nash equilibrium in performance distributions \cite{propfair}.
    While the objective can be directly aligned with existing notions of fairness, 
    it is not always a standard for the design of a fair FL algorithm.
    Notably, most of works share a common underlying principle: \textit{assigning more weights to a local update having larger losses}. 

    \begin{definition} (Client-Level Fairness; Definition 1 of~\cite{qffl}, Section 4.2 of~\cite{clientlevelfairness})
    We informally define the notion of client-level fairness in FL as the status where a trained global model yields uniformly good performance across all participating clients.
    Note that uniformity can be measured by the spread of performances.
    \end{definition}

\newpage
\subsection{Online Decision Making}
    The OCO framework is designed for making \textit{sequential decisions} with the best utilities, having solid theoretical backgrounds.
    It aims to minimize the cumulative mistakes of a decision maker (e.g., central sever), 
    given a response of the environment (e.g., losses from clients) for finite rounds $t\in[T]$.
    The cumulative mistakes of the learner are usually denoted as the cumulative regret (see (\ref{eq:regret})), 
    and the learner can achieve sublinear regret in finite rounds using well-designed OCO algorithms \cite{oco1, oco2, oco3}.
    In designing an OCO algorithm, two main frameworks are mainly considered: 
    online mirror descent (OMD) \cite{md, omd, bregmanmd} 
    and follow-the-regularized-leader (FTRL) \cite{ftrl1, ftrl2, ftrl3, ftrl4}.
    One of popular instantiations of both frameworks is the Online Portfolio Selection (OPS) algorithm, of which decision space is restricted to a probability simplex.
    The universal portfolio algorithm is the first that yields an optimal theoretical regret, $\mathcal{O}(K \log{T})$) despite its heavy computation ($\mathcal{O}(K^4 T^{14})$) \cite{up}, 
    the Online Gradient Descent \cite{ogd} and the Exponentiated Gradient (EG) \cite{eg} show slightly worse regrets (both are $\mathcal{O}(\sqrt{T})$), but can be executed in linear runtime in practice ($\mathcal{O}(K)$).
    Plus, the online Newton step (ONS) \cite{ons1, ons2} presents logarithmic regret with quadratic runtime in $K$.
    Since these OPS algorithms are proven to perform well when the decision is a probability vector, we adopt them for finding adaptive mixing coefficients to achieve performance fairness in FL.
    To the best of our knowledge, we are the first to consider fair FL algorithms under the OCO framework.

\newpage
\section{Backgrounds}
    In traditional FL scheme, not all clients can be equally benefited from a trained global model. 
    Therefore, the need to achieve the \textit{client-level fairness} in a federated learning system has been emphasized, which can be realized by modifying the static aggregation scheme for updating the global model to an adaptive one, in response to the local signals of the participating clients.
    
    We first reveal that existing fairness-aware aggregation strategies can be unified into an online convex optimization framework, in other words, a central server's \textit{sequential decision making} process.
    Then, we propose \texttt{AAggFF}, from simple and intuitive improvements for suboptimal designs within existing methods to enhance the decision making capability. 
    
    Considering practical requirements, we further subdivide our method tailored for the \textit{cross-device} and the \textit{cross-silo} settings, respectively. 
    Theoretical analyses guarantee sublinear regret upper bounds for both settings: $\mathcal{O}(\sqrt{T \log{K}})$ for the cross-device setting, and $\mathcal{O}(K \log{T})$ for the cross-silo setting, with $K$ clients and $T$ federation rounds. 
    We provide notation table in Table~\ref{tab:3_table1}.
    \begin{table}[!ht]
\centering
\caption{Notations in Chapter~\ref{ch:aaggff}}
\label{tab:3_table1}
\resizebox{0.9\textwidth}{!}{%
\begin{tabular}{@{}ll@{}}
\toprule
Notation                       & Description                                                                                         \\ \midrule
$T$                            & Total number of communication rounds                                                                \\
$B$                            & Local batch size                                                                                    \\
$E$                            & Number of local epochs                                                                              \\
$K$                            & Total number of participating clients, indexed by $i\in[K]$                                         \\
$C$                            & Fraction of clients selected at each round                                                          \\
$\zeta$                         & Step size for updating adaptive mixing coefficients                                                                               \\ \midrule
$\boldsymbol{p}$ &
  \begin{tabular}[c]{@{}l@{}}Decision variable, or an adaptive mixing coefficient vector,\\ of which entry $p_i$ corresponds to a mixing coefficient of client $i$.\end{tabular} \\
$\boldsymbol{p}^\star$         & Optimal decision in hindsight                                                                       \\
$\Delta_{K-1}$                 & Probability simplex in $\mathbb{R}^K$                                                               \\ \midrule
$\ell^{(t)}$                   & Decision loss function at round $t$                                                                 \\
$\boldsymbol{r}^{(t)}$         & Response vector at round $t$                                                                        \\
$\boldsymbol{g}^{(t)}$         & Gradient of a decision loss function w.r.t. decision variable, $\boldsymbol{p}$ at round $t$        \\
$\rho^{(t)}$                   & Transformation function for changing a local loss value into a response at round $t$                \\
$L_\infty$                     & Lipschitz constant of a decision loss on a response w.r.t. $\Vert \cdot \Vert_\infty$               \\ \midrule
$\tilde{\ell}^{(t)}$           & Linearized loss, defined as $\langle\boldsymbol{p},\boldsymbol{g}^{(t)}\rangle$ at round $t$        \\
$\breve{\boldsymbol{r}}^{(t)}$ & Doubly-Robust (DR) estimator of a response vector at round $t$                                      \\
$\breve{\boldsymbol{g}}^{(t)}$ & Gradient estimate of a DR estimator at round $t$                                                    \\
$\tilde{\boldsymbol{g}}^{(t)}$ & Linearly approximated gradient of a true gradient of decision loss at round $t$                     \\
\begin{tabular}[l]{@{}l@{}}$\breve{L}_\infty$ \\ \end{tabular}             & \begin{tabular}[l]{@{}l@{}}Lipschitz constant of a decision loss on a DR estimator\\ of a partially observed response w.r.t. $\Vert \cdot \Vert_\infty$\end{tabular}   \\ \bottomrule
\end{tabular}%
}
\end{table}

\newpage
\subsection{Mixing Coefficients in Federated Learning}
    For $K$ clients, the FL objective (eq.~\eqref{eq:fl_obj}) aims to minimize the composite objectives, where client $i$'s local objective is $F_i\left({\boldsymbol{\theta}}\right)$, 
    weighted by a corresponding \textit{mixing coefficient} $w_i \geq 0$ ($\sum_{i=1}^K w_i=1$),
    which is usually set to be a \textit{static} value proportional to the sample size $n_i$: 
    e.g., $w_i= \frac{n_i}{n}, n=\sum_{j=1}^K n_j$.
    Denote $\Vert\cdot\Vert_p$ as an $L_p$-norm
    and $\Delta_{K-1}$ as a probability simplex where $\Delta_{K-1}=\left\{\boldsymbol{q} \in \mathbb{R}^K: q_i \geq 0,\Vert \boldsymbol{q} \Vert_1 = 1\right\}$. 
    Note that the mixing coefficient is a member of $\Delta_{K-1}$.
    
    In vanilla FL, the role of the server to solve (\ref{eq:fl_obj}) is to naively add up local updates into a new global model by weighting each update with the \textit{static} mixing coefficient proportional to $n_i$.
    As the fixed scheme often violates the client-level fairness, 
    the server should use \textit{adaptive} mixing coefficients to pursue overall welfare across clients.
    This can be modeled as an optimization w.r.t. $\boldsymbol{p}\in\Delta_{K-1}$.

\newpage
\subsection{Online Convex Optimization as a Unified Language}
\label{sec:oco_lang}
    \begin{table*}[!ht]
\centering
\caption[Summary of unification results of existing fair FL Methods]{Summary of unification results of existing fair FL Methods into an OCO framework (eq.~\eqref{eq:omd_objective}), viz. \textit{Remark}~\ref{remark:unification}}.
\label{tab:3_table2}
\resizebox{\textwidth}{!}{%
\begin{tabular}{l|l|llll}
\toprule
\textbf{Method} & 
\textbf{Original Objective} (w.r.t. $\boldsymbol{\theta}$) & 
\multicolumn{1}{l}{\begin{tabular}[l]{@{}l@{}}\textbf{Response}\\ $({r}_i^{(t)})$ \end{tabular}} &
\multicolumn{1}{l}{\begin{tabular}[l]{@{}l@{}}\textbf{Last Decision}\\ $(p_i^{(t)})$ \end{tabular}} &
\multicolumn{1}{l}{\begin{tabular}[l]{@{}l@{}}\textbf{Step Size}\\ $\left(\zeta\right)$ \end{tabular}} &
\multicolumn{1}{l}{\begin{tabular}[l]{@{}l@{}}\textbf{New Decision}\\ $(p_i^{(t+1)})$ \end{tabular}} \\ \midrule
\begin{tabular}[l]{@{}l@{}}\texttt{FedAvg} \cite{fedavg}\end{tabular} & 
    $\min_{\boldsymbol{\theta}\in\mathbb{R}^d}
    \sum_{i=1}^K \frac{n_i}{n} F_i\left({\boldsymbol{\theta}}\right)$ & 
    $0$ &
    $n_i/n$ & 
    $1$ & 
    $\propto n_i$ \\
\begin{tabular}[l]{@{}l@{}}\texttt{q-FedAvg} \cite{qffl}\\ (\texttt{AFL} \cite{afl}) \end{tabular} & 
    \begin{tabular}[l]{@{}l@{}} $\min_{\boldsymbol{\theta}\in\mathbb{R}^d}  \sum_{i=1}^K \frac{n_i/n}{q+1} F_i^{q+1}\left(\boldsymbol{\theta}\right)$\\(\texttt{AFL} if $q\rightarrow \infty$) \end{tabular} & 
    $q\log {F}_i(\boldsymbol{\theta}^{(t)})$ & 
    $n_i/n$ & 
    $1$ & 
    $\propto n_i {F}_i^q\left(\boldsymbol{\theta}^{(t)}\right)$ \\
\begin{tabular}[l]{@{}l@{}}\texttt{TERM} \cite{term}\end{tabular} & 
    $\min_{\boldsymbol{\theta}\in\mathbb{R}^d}
    \frac{1}{\lambda} \log (\sum_{i=1}^K \frac{n_i}{n} \exp({\lambda F_i({\boldsymbol{\theta}})}))$ & 
    $F_i({\boldsymbol{\theta}}^{(t)})$ & 
    $n_i/n$ & 
    $\frac{1}{\lambda}$ & 
    $\propto n_i \exp\left(\lambda {F}_i\left(\boldsymbol{\theta}^{(t)}\right)\right)$ \\
\begin{tabular}[l]{@{}l@{}}\texttt{PropFair} \cite{propfair}\end{tabular} & 
    $\min_{\boldsymbol{\theta}\in\mathbb{R}^d} -\sum_{i=1}^K \frac{n_i}{n} \log(M-F_i(\boldsymbol{\theta}))$ & 
    $-\log (M - F_i(\boldsymbol{\theta}^{(t)}))$ & 
    $n_i/n$ & 
    $1$ & 
    $\propto \frac{n_i}{M - F_i\left(\boldsymbol{\theta}^{(t)}\right)}$ \\ \bottomrule
\end{tabular}%
}
\end{table*}

    To mitigate the performance inequalities across clients, 
    adaptive mixing coefficients can be estimated in response to local signals 
    (e.g., local losses of a global model).
    Intriguingly, the adaptive aggregation strategies in existing fair FL methods \cite{fedavg, afl, qffl, term, propfair} can be readily \textit{unified into one framework}, borrowing the language of OCO.
    \begin{remark}
    \label{remark:unification}  
    Suppose we want to solve a minimization problem defined in (\ref{eq:omd_objective}).
    For all $t\in[T]$, it aims to minimize a \textit{decision loss} $\ell^{(t)}\left(\boldsymbol{p}\right) = -\left\langle \boldsymbol{p}, \boldsymbol{r}^{(t)} \right\rangle$ 
    (where $\left\langle \cdot, \cdot \right\rangle$ is an inner product)
    defined by a \textit{response} $\boldsymbol{r}^{(t)}\in\mathbb{R}^K$ 
    and a \textit{decision}  $\boldsymbol{p}\in\Delta_{K-1}$, 
    with a \textit{regularizer} $R\left(\boldsymbol{p}\right)$ having a constant \textit{step size} $\zeta\in\mathbb{R}_{\geq0}$.
    \begin{equation}
    \begin{aligned}
    \label{eq:omd_objective}
        \boldsymbol{p}^{(t+1)}
        =   
        \argmin_{\boldsymbol{p} \in \Delta_{K-1}} \ell^{(t)}\left(\boldsymbol{p}\right)
        + \zeta R\left(\boldsymbol{p}\right)
    \end{aligned}    
    \end{equation} 
    \end{remark}
    
    As long as the regularizer $R\left(\boldsymbol{p}\right)$ in the Remark~\ref{remark:unification} is fixed as the negative entropy, i.e., $R\left(\boldsymbol{p}\right)=\sum_{i=1}^K p_i \log p_i$, this subsumes aggregation strategies proposed in \texttt{FedAvg} \cite{fedavg}, \texttt{AFL} \cite{afl}, \texttt{q-FFL} \cite{qffl}, \texttt{TERM} \cite{term}, and \texttt{PropFair} \cite{propfair}.
    It has an update as follows.
    \begin{equation}
    \begin{aligned}
    \label{eq:eg_update}
        p^{(t+1)}_i \propto {p^{(t)}_i \exp\left( r^{(t)}_i / \zeta\right)} 
    \end{aligned}    
    \end{equation}
    This is widely known as EG~\cite{eg}, a special realization of OMD~\cite{md, omd, bregmanmd}.
    We summarized how existing methods can be unified under this OCO framework in Table~\ref{tab:2_table1}, and derivations of mixing coefficients (i.e.,, $p_i^{(t+1)}$) of each method are in section~\ref{app:unification}.
    
    To sum up, we can interpret the aggregation mechanism in FL is secretly a result of \textit{the server’s sequential decision making} behind the scene.
    Since the sequential learning scheme is \textit{well-behaved in a sample-deficient setting}, 
    adopting OCO is surely a suitable tactic for the server in that \textit{only a few bits are provided to update the mixing coefficients} in each FL round, e.g, the number of local responses collected from the clients is at most $K$.
    However, existing methods have not been devised with sequential decision making in mind.
    Therefore, one can easily find suboptimal designs inherent in existing methods from an OCO perspective.

\newpage
\subsection{Sequential Probability Assignment}
    To address the client-level fairness, the server should make an adaptive mixing coefficient vector, 
    $\boldsymbol{p}^{(t)}\in\Delta_{K-1}$, for each round $t\in[T]$.
    In other words, the server needs to assign appropriate probabilities sequentially to local updates in every FL communication round.
    
    Notably, this fairly resembles OPS, which seeks to maximize an investor's cumulative profits on a set of $K$ assets during $T$ periods, 
    by assigning his/her wealth $\boldsymbol{p}\in\Delta_{K-1}$ to each asset every time.
    In the OPS, the investor observes a price of all assets, $\boldsymbol{r}^{(t)}\in\mathbb{R}^K$ for each time $t\in[T]$ 
    and accumulates corresponding wealth according to the portfolio $\boldsymbol{p}^{(t)}\in\Delta_{K-1}$. 
    After $T$ periods, achieved cumulative profits is represented as $\prod_{t=1}^T \left(1 + \left\langle\boldsymbol{p}^{(t)},\boldsymbol{r}^{(t)}\right\rangle\right)$, 
    or in the form of logarithmic growth ratio, 
    $\sum_{t=1}^T \log\left(1+\left\langle\boldsymbol{p}^{(t)},\boldsymbol{r}^{(t)}\right\rangle\right)$.
    In other words, one can view that OPS algorithms adopt negative logarithmic growth as a decision loss.
    
    \begin{definition} (Negative Logarithmic Growth as a Decision Loss) 
    For all $t\in[T]$, define a decision loss $\ell^{(t)}:\Delta_{K-1} \times \mathbb{R}^K \rightarrow \mathbb{R}$ as follows.
    \begin{equation}
    \begin{gathered}
    \label{eq:decision_loss}
        \ell^{(t)}(\boldsymbol{p}) = -\log(1+\langle\boldsymbol{p}, \boldsymbol{r}^{(t)}\rangle),
    \end{gathered}
    \end{equation}    
    where $\boldsymbol{p}$ is a decision vector in $\Delta_{K-1}$ and $\boldsymbol{r}^{(t)}$ is a response vector given at time $t$.
    \end{definition}
    
    Again, the OPS algorithm can serve as a metaphor for the central server's fairness-aware online decision making in FL. 
    For example, one can regard a response (i.e., local losses) of $K$ clients at a specific round $t$ the same as returns of assets on the day $t$. 
    Similarly, by considering cumulative losses (i.e., cumulative wealth) achieved until $t$, 
    the server can determine the next mixing coefficients (i.e., portfolio ratios) in $\Delta_{K-1}$. 
    In the same context, the negative logarithmic growth can also be adopted as the decision loss.
    Accordingly, we can adopt well-established OPS strategies for achieving client-level fairness in FL.
    By adopting the negative logarithmic growth, we can find a loose connection between the well-known fairness notion, i.e., max-min fairness.
    \begin{remark} When the decision loss is set to the negative logarithmic growth, we can regard the global objective can be viewed as:
    \begin{equation}
    \begin{gathered}
        \min_{\boldsymbol{\theta}}\max_{\boldsymbol{p}} \log\left(1+\sum_{i=1}^K p_i F_i(\boldsymbol{\theta})\right),
    \end{gathered}
    \end{equation}
    by ignoring the regularizer and further assuming that each entry of the response $r_i$ is a monotonic increasing transformation of $F_i(\boldsymbol{\theta})$.
    Since the logarithm is also a monotonically increasing function, this objective share the same optimum as min-max fairness objective~\cite{afl}, i.e., $\min_{\boldsymbol{\theta}}\max_{\boldsymbol{p}} \left(\sum_{i=1}^K p_i F_i(\boldsymbol{\theta})\right)$.
    Different from the min-max objective, this reduced form can be regarded as a \textit{rectified} version of the min-max objective in that it is bounded below.
    \end{remark}
    
    Including OPS, a de facto standard objective for OCO is to minimize the regret defined in (\ref{eq:regret}), 
    with regard to the best decision in hindsight, $\boldsymbol{p}^\star\triangleq\argmin_{\boldsymbol{p}\in\Delta_{K-1}}\sum_{t=1}^T\ell^{(t)}(\boldsymbol{p})$,
    given all decisions $\{ \boldsymbol{p}^{(1)}, ..., \boldsymbol{p}^{(T)} \}$. \cite{oco1, oco2, oco3}
    \begin{equation}
    \begin{gathered}
    \label{eq:regret}
        \text{Regret}^{(T)}(\boldsymbol{p}^{\star}) 
        = 
        \sum\nolimits_{t=1}^T 
        \ell^{(t)}(\boldsymbol{p}^{(t)}) 
        - 
        \sum\nolimits_{t=1}^T \ell^{(t)}\left(\boldsymbol{p}^{\star}\right)
    \end{gathered}
    \end{equation}    
    In finite time $T$, an online decision making strategy should guarantee that the regret grows sublinearly.
    Therefore, when OPS strategies are modified for the fair FL, 
    we should check if the strategy can guarantee vanishing regret upper bound in $T$.
    Besides, we should also consider \textit{the dependency on $K$} due to practical constraints of the federated system.

\newpage
\section{Proposed Methods}
\subsection{Improved Design for Better Decision Making}
    From the Remark~\ref{remark:unification} and Table~\ref{tab:3_table2}, one can easily notice suboptimal designs of existing methods in terms of OCO, as follows.
    \begin{enumerate}[i)] 
    \label{list:existing_problems}
        \item \label{item:problem1} Existing methods are \textit{stateless} in making a new decision, $p^{(t+1)}_i$. 
        The previous decision is ignored as a fixed value ($p^{(t)}_i=n_i/n$) in the subsequent decision making.
        This naive reliance on static coefficients still runs the risk of violating client-level fairness.  
        \item \label{item:problem2} The decision maker sticks to a \textit{fixed} and \textit{arbitrary} step size $\eta$, or a \textit{fixed} regularizer $R(\boldsymbol{p})$ across $t\in[T]$, which can significantly affect the performance of OCO algorithms and should be manually selected. 
        \item \label{item:problem3} The decision loss is neither \textit{Lipschitz continuous} nor \textit{strictly convex}, which is related to achieving a sublinear regret.
    \end{enumerate}

    As a remedy for handling \ref{item:problem1} and \ref{item:problem2}, 
    the OMD objective for the server (i.e., eq.~\eqref{eq:omd_objective}) can be replaced as follows. 
    \begin{equation}
    \begin{aligned}
    \label{eq:ftrl_objective}
        \boldsymbol{p}^{(t+1)} 
        = \argmin_{\boldsymbol{p} \in \Delta_{K-1}} \sum\nolimits_{\tau=1}^{t} \ell^{(\tau)}\left(\boldsymbol{p}\right) 
        + R^{(t+1)}\left(\boldsymbol{p}\right) \\
    \end{aligned}
    \end{equation}
    This is also known as FTRL objective \cite{ftrl1, ftrl2, ftrl3, ftrl4}, 
    which is inherently a \textit{stateful} sequential decision making algorithm 
    that adapts to histories of decision losses, $\sum_{\tau=1}^{t} \ell^{(\tau)}\left(\boldsymbol{p}\right)$,
    where $\ell^{(t)}:\Delta_{K-1} \times \mathbb{R}^K \rightarrow \mathbb{R}$, 
    with the \textit{time-varying} regularizer $R^{(t)}:\Delta_{K-1} \rightarrow \mathbb{R}$. 
    Note that the time-varying regularizer is sometimes represented as, $\eta^{(t+1)}R(\boldsymbol{p})$, 
    a fixed regularizer $R\left(\boldsymbol{p}\right)$ multiplied by a \textit{time-varying} step size, $\eta^{(t+1)}\in\mathbb{R}_{\geq0}$, which can later be automatically determined from the regret analysis
    (see e.g., Remark~\ref{remark:closed_form}).
    
    Additionally, when equipped with the negative logarithmic growth as a decision loss (i.e., eq.~\eqref{eq:decision_loss}), 
    the problem \ref{item:problem3} can be addressed due to its strict convexity and Lipscthiz continuity. 
    (See Lemma \ref{lemma:lipschitz})
    Note that when the loss function is convex, we can run the FTRL with a linearized loss 
    (i.e., $\tilde\ell^{(t)}\left(\boldsymbol{p}\right) 
    =\left\langle \boldsymbol{p}, \boldsymbol{g}^{(t)} \right\rangle$ 
    where $\boldsymbol{g}^{(t)}=\nabla\ell^{(t)}\left(\boldsymbol{p}^{(t)}\right)$). 
    This is useful in that a closed-form update can be obtained thanks to the properties of the Fenchel conjugate (see Remark \ref{remark:closed_form}).

\newpage
\subsection{\texttt{AAggFF}: Adaptive Aggregation for Fair Federated Learning}
    Based on the improved objective design derived from the FTRL, we now introduce our methods, \texttt{AAggFF},
    an acronym of \textbf{\underline{A}}daptive \textbf{\underline{Agg}}regation for \textbf{\underline{F}}air \textbf{\underline{F}}ederated Learning).
    Mirroring the practical requirements of FL, we further subdivide into two algorithms: 
    \texttt{AAggFF-S} for the cross-silo setting and \texttt{AAggFF-D} for the cross-device setting.

\subsubsection{\texttt{AAggFF-S}: Algorithm for the Cross-Silo Federated Learning}
\label{subsubsec:aaggff_s}
    \begin{algorithm}[h]
   \caption{\texttt{AAggFF-S}}
   \label{alg:aaggff-s}
    \begin{algorithmic}
       \STATE {\bfseries Input:} number of clients $K$, total rounds $T$, transformation $\rho$, minimum and maximum of a response $[C_1, C_2]$.
       \STATE {\bfseries Initialize:} mixing coefficients $\boldsymbol{p}^{(1)}=\frac{1}{K}\boldsymbol{1}_K$, global model $\boldsymbol{\theta}^{(1)}\in\mathbb{R}^d$
       \STATE {\bfseries Procedure:} 
       \FOR{$t=0$ {\bfseries to} $T-1$}
        \FOR{each client $i=1,...,K$ {\bfseries in parallel}} 
            \STATE $F_i\left(\boldsymbol{\theta}^{(t)}\right), \boldsymbol{\theta}^{(t)}-\boldsymbol{\theta}^{(t+1)}_i \leftarrow \texttt{AAggFFClientUpdate}\left(\boldsymbol{\theta}^{(t)}\right)$
        \ENDFOR
        \STATE Return $\boldsymbol{r}^{(t)}$ according to eq.~\eqref{eq:resp_vec} and $C_1, C_2$.
        \STATE Suffer decision loss $\ell^{(t)}(\boldsymbol{p}^{(t)})$ according to eq.~\eqref{eq:decision_loss}.
        \STATE Return a gradient $\boldsymbol{g}^{(t)}=\nabla\ell^{(t)}\left(\boldsymbol{p}^{(t)}\right)$.
        \STATE Return a mixing coefficient $\boldsymbol{p}^{(t+1)}$ according to eq.~\eqref{eq:ons}.
        \STATE Update a global model $\boldsymbol{\theta}^{(t+1)} = \boldsymbol{\theta}^{(t)}-\sum\limits_{i=1}^K p^{(t+1)}_i \left(\boldsymbol{\theta}^{(t)}-\boldsymbol{\theta}^{(t+1)}_i\right)$.
       \ENDFOR
       \STATE{\bfseries Return:} $\boldsymbol{\theta}^{(T)}$
    \end{algorithmic}
\end{algorithm}
    In the cross-silo setting, it is typically assumed that \textit{all} $K$ clients participate in $T$ rounds,
    since there are a moderately small number of clients in the federated system.
    Therefore, the server's stateful decision making is beneficial for enhancing overall welfare across federation rounds.
    This is also favorable since existing OPS algorithms can be readily adopted.
    
\paragraph{Online Newton Step \cite{ons1, ons2}} 
    The ONS algorithm updates a new decision as follows ($\alpha$ and $\beta$ are constants to be determined).
    \begin{equation}
    \begin{gathered}
    \label{eq:ons}
        \boldsymbol{p}^{(t+1)}
        =
        \argmin_{\boldsymbol{p} \in \Delta_{K-1}} \sum\nolimits_{\tau=1}^{t} 
        \tilde\ell^{(\tau)}\left(\boldsymbol{p}\right)
        +
        \frac{\alpha}{2} \Vert \boldsymbol{p} \Vert_2^2 \\
        +
        \frac{\beta}{2} \sum\nolimits_{\tau=1}^{t} 
        (\langle \boldsymbol{g}^{(\tau)}, \boldsymbol{p} - \boldsymbol{p}^{(\tau)}\rangle)^2
    \end{gathered}
    \end{equation}
    The ONS can be reduced to the FTRL objective introduced in (\ref{eq:ftrl_objective}).
    It can be retrieved when we use a linearized loss, 
    $\tilde\ell^{(t)}\left(\boldsymbol{p}\right)
    =
    \left\langle \boldsymbol{p}, \boldsymbol{g}^{(t)} \right\rangle$,
    and the time-varying proximal regularizer, defined as 
    $R^{(t+1)}\left(\boldsymbol{p}\right)=\frac{\alpha}{2}\Vert\boldsymbol{p}\Vert^2_2 + 
    \frac{\beta}{2}\sum_{\tau=1}^{t} \left( \left\langle \boldsymbol{g}^{(\tau)}, \boldsymbol{p} - \boldsymbol{p}^{(\tau)} \right\rangle \right)^2$.
    
    We choose ONS in that its regret is optimal in $T$, which is also a dominating constant for the cross-silo FL setting:
    $\mathcal{O}(L_\infty K\log{T})$ regret upper bound, 
    where $L_\infty$ is the Lipschitz constant of decision loss w.r.t. $\Vert\cdot\Vert_\infty$.
    That is, $L_\infty$ should be \textit{finite} for a vanishing regret (see Theorem~\ref{thm:crosssilo}).
    
\paragraph{Necessity of Bounded Response}
\label{sec:resp_trans}
    Note that the Lipchitz continuity of the negative logarithmic growth as a decision loss is determined as follows.
    \begin{lemma}
    \label{lemma:lipschitz}
    For all $t\in[T]$, suppose each entry of a response vector $\boldsymbol{r}^{(t)}\in\mathbb{R}^K$ is bounded 
    as $r_i^{(t)}\in[C_1,C_2]$ for some constants $C_1$ and $C_2$ satisfying $0<C_1<C_2$.
    Then, the decision loss $\ell^{(t)}$ defined in (\ref{eq:decision_loss}) is $\frac{C_2}{1+C_1}$-Lipschitz continuous 
    in $\Delta_{K-1}$ w.r.t. $\Vert \cdot \Vert_\infty$.
    \end{lemma}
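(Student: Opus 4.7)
The plan is to establish the claimed Lipschitz property by deriving a uniform bound on $\Vert\nabla\ell^{(t)}(\boldsymbol{p})\Vert_\infty$ over the simplex $\Delta_{K-1}$, since this is exactly the constant $L_\infty$ that will later enter the regret analysis of \texttt{AAggFF-S}.

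First I would compute the gradient of the decision loss explicitly. Differentiating $\ell^{(t)}(\boldsymbol{p}) = -\log(1 + \langle \boldsymbol{p}, \boldsymbol{r}^{(t)}\rangle)$ coordinatewise yields
\[
\nabla\ell^{(t)}(\boldsymbol{p}) \;=\; -\,\frac{\boldsymbol{r}^{(t)}}{1 + \langle\boldsymbol{p}, \boldsymbol{r}^{(t)}\rangle},
\]
so that
\[
\Vert\nabla\ell^{(t)}(\boldsymbol{p})\Vert_\infty
\;=\; \frac{\max_{i\in[K]} r_i^{(t)}}{1 + \langle\boldsymbol{p}, \boldsymbol{r}^{(t)}\rangle}.
\]

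Next I would bound numerator and denominator separately from the hypothesis $r_i^{(t)}\in[C_1, C_2]$. The numerator is at most $C_2$ by the upper envelope on responses. For the denominator, since $\boldsymbol{p}\in\Delta_{K-1}$ is a convex combination of indices whose response values all satisfy $r_i^{(t)}\geq C_1>0$, one has $\langle\boldsymbol{p}, \boldsymbol{r}^{(t)}\rangle\geq C_1$, hence $1+\langle\boldsymbol{p}, \boldsymbol{r}^{(t)}\rangle \geq 1+C_1 > 0$ (this also guarantees the logarithm is well-defined throughout the simplex). Combining these two estimates gives $\Vert\nabla\ell^{(t)}(\boldsymbol{p})\Vert_\infty \leq \frac{C_2}{1+C_1}$ uniformly on $\Delta_{K-1}$, which is the claimed Lipschitz constant with respect to the infinity norm.

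The main point to articulate carefully is that the strict positivity $C_1 > 0$ is indispensable: if responses were allowed to approach zero, the denominator would fail to be bounded away from $1$ at vertices of the simplex concentrating on coordinates with vanishing $r_i^{(t)}$, and no finite bound on $\Vert\nabla\ell^{(t)}\Vert_\infty$ could be uniformly valid. This is precisely why raw local losses are first pushed through a strictly positive, bounded monotone transformation $\rho^{(t)}$ before being used as responses, ensuring that the hypothesis $0<C_1\leq r_i^{(t)}\leq C_2$ is always available when the lemma is invoked.
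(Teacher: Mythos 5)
Your gradient computation is correct, and the quantity you actually bound, $\Vert\nabla\ell^{(t)}(\boldsymbol{p})\Vert_\infty=\max_i r_i^{(t)}/(1+\langle\boldsymbol{p},\boldsymbol{r}^{(t)}\rangle)\leq\frac{C_2}{1+C_1}$, is precisely what the regret proofs later consume (they only ever invoke $\Vert\boldsymbol{g}^{(t)}\Vert_\infty\leq L_\infty$). However, the final inference --- that a uniform bound on $\Vert\nabla\ell^{(t)}\Vert_\infty$ yields Lipschitz continuity w.r.t.\ $\Vert\cdot\Vert_\infty$ with that same constant --- is not valid. The mean-value/H\"older step pairs the gradient with the \emph{dual} norm: $|\ell^{(t)}(\boldsymbol{p})-\ell^{(t)}(\boldsymbol{q})|\leq\Vert\nabla\ell^{(t)}(\xi)\Vert_1\,\Vert\boldsymbol{p}-\boldsymbol{q}\Vert_\infty$, so a genuine $\Vert\cdot\Vert_\infty$-Lipschitz modulus requires controlling $\Vert\nabla\ell^{(t)}\Vert_1\leq\frac{KC_2}{1+C_1}$, a factor of $K$ larger; your bound instead certifies $\frac{C_2}{1+C_1}$-Lipschitz continuity w.r.t.\ $\Vert\cdot\Vert_1$. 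The gap is not cosmetic: with $K$ large, half the entries of $\boldsymbol{r}^{(t)}$ equal to $C_2$ and half to $C_1$, $\boldsymbol{p}$ uniform on the first half and $\boldsymbol{q}$ uniform on the second, one gets $|\ell^{(t)}(\boldsymbol{p})-\ell^{(t)}(\boldsymbol{q})|=\log\frac{1+C_2}{1+C_1}$, a fixed constant, while $\Vert\boldsymbol{p}-\boldsymbol{q}\Vert_\infty=2/K\to0$. So no $K$-free constant can make the statement literally true as a Lipschitz claim in $\Vert\cdot\Vert_\infty$; what is provable (and what the analysis needs) is the gradient bound you derived, or an oscillation bound on $\ell^{(t)}$ over the simplex.

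The paper's own argument takes the latter route: it uses convexity to write $|\ell^{(t)}(\boldsymbol{p})-\ell^{(t)}(\boldsymbol{q})|\leq|\langle\boldsymbol{p}-\boldsymbol{q},\boldsymbol{r}^{(t)}\rangle|/(1+\langle\boldsymbol{p},\boldsymbol{r}^{(t)}\rangle)$ and then bounds $|\langle\boldsymbol{p}-\boldsymbol{q},\boldsymbol{r}^{(t)}\rangle|\leq\max\left(\langle\boldsymbol{p},\boldsymbol{r}^{(t)}\rangle,\langle\boldsymbol{q},\boldsymbol{r}^{(t)}\rangle\right)\leq C_2$, exploiting nonnegativity of both inner products on the simplex; this gives the diameter-type bound $|\ell^{(t)}(\boldsymbol{p})-\ell^{(t)}(\boldsymbol{q})|\leq\frac{C_2}{1+C_1}$ rather than a per-unit-distance modulus, so the two arguments are genuinely different and neither delivers the lemma exactly as worded. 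Separately, your closing remark that $C_1>0$ is indispensable for finiteness is incorrect: with $r_i^{(t)}\geq0$ the denominator $1+\langle\boldsymbol{p},\boldsymbol{r}^{(t)}\rangle$ is always at least $1$, so $C_1=0$ still gives the finite bound $C_2$ (and the paper indeed sets $C_1=0$ in its experiments); positivity of $C_1$ only sharpens the constant.
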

    
    From now on, all proofs are deferred to section~\ref{sec:proofs}. 
    According to the Lemma~\ref{lemma:lipschitz}, the Lipschitz constant of the decision loss, $L_\infty$,
    is dependent upon \textit{the range of a response vector's element}.
    While from the unification result in Table~\ref{tab:3_table2}, 
    one can easily notice that the response is constructed from local losses collected in round $t$, 
    $F_i(\boldsymbol{\theta}^{(t)})\in\mathbb{R}_{\geq0}, i\in[K]$.
    
    This is a scalar value calculated from a local training set of each client, 
    using the current model $\boldsymbol{\theta}^{(t)}$ \textit{before its local update}. 
    Since the local loss function is typically unbounded above (e.g., cross-entropy), 
    it should be transformed into bounded values to satisfy the Lipschitz continuity. 
    In existing fair FL methods, however, all responses are not bounded above, 
    thus we cannot guarantee the Lipschitz continuity.  
    
    To ensure a bounded response, we propose to use a transformation denoted as $\rho^{(t)}(\cdot)$, 
    inspired by the cumulative distribution function (CDF) as follows.
    \begin{definition} (CDF-driven Response Transformation)
    \label{def:resp_cdf}
        We define $r_i^{(t)} \equiv \rho^{(t)} \left( {F_i(\boldsymbol{\theta}^{(t)})} \right)$,
        each element of the response vector is defined from the corresponding entry of a local loss
        by an element-wise mapping $\rho^{(t)}:\mathbb{R}_{\geq0}\rightarrow[C_1,C_2]$,
        given a pre-defined \texttt{CDF} as: 
        \begin{equation}
        \begin{aligned}
        \label{eq:resp_vec}
            \rho^{(t)} \left( {F_i\left(\boldsymbol{\theta}^{(t)}\right)} \right)
            \triangleq C_1 + (C_2 - C_1)\texttt{CDF} \left( \frac{F_i\left(\boldsymbol{\theta}^{(t)}\right)}{\bar{\mathrm{F}}^{(t)}} \right),
        \end{aligned}
        \end{equation}
        where $\bar{\mathrm{F}}^{(t)} = \frac{1}{\lvert S^{(t)} \rvert}\sum_{i\in S^{(t)}} F_i\left(\boldsymbol{\theta}^{(t)}\right)$, 
        and $S^{(t)}$ is an index set of available clients in $t$.
    \end{definition}
    
    Note again that the larger mixing coefficient should be assigned for the larger local loss.
    In such a perspective, using the CDF for transforming a loss value is an acceptable approach 
    in that the CDF value is a good indicator for estimating ``\textit{how large a specific local loss is}'', 
    relative to local losses from other clients.
    To instill the comparative nature, local losses are divided by the average of observed losses in time $t$ 
    before applying the transformation.
    As a result, all local losses are centered on 1 in expectation.
    See section~\ref{subsec:exp_resp_transformation} and section~\ref{app:range} for detailed discussions.
    
    In summary, the whole procedure of \texttt{AAggFF-S} is illustrated as a pseudocode in Algorithm~\ref{alg:aaggff-s}.
    
\newpage
\subsubsection{\texttt{AAggFF-D}: Algorithm for the Cross-Device Federated Learning}
\label{subsubsec:aaggff_d}
    \begin{algorithm}[h]
   \caption{\texttt{AAggFF-D}}
   \label{alg:aaggff-d}
\begin{algorithmic}
   \STATE {\bfseries Input:} number of clients $K$, client sampling ratio $C\in(0,1)$, total rounds $T$, transformation $\rho$, range of a response $[C_1, C_2]$.
   \STATE {\bfseries Initialize:} mixing coefficients $\boldsymbol{p}^{(0)}=\frac{1}{K}\boldsymbol{1}_K$, global model $\boldsymbol{\theta}^{(0)}\in\mathbb{R}^d$
   \STATE {\bfseries Procedure:} \FOR{$t=0$ {\bfseries to} $T-1$}
    \STATE $S^{(t)} \leftarrow \text{Wait until } \min(1, \lfloor C \cdot K \rfloor) \text{ clients are active in a network}$.
    \FOR{each client $i \in S^{(t)}$ {\bfseries in parallel}} 
        \STATE $F_i\left(\boldsymbol{\theta}^{(t)}\right), \boldsymbol{\theta}^{(t)}-\boldsymbol{\theta}^{(t+1)}_i \leftarrow \texttt{AAggFFClientUpdate}\left(\boldsymbol{\theta}^{(t)}\right)$
    \ENDFOR
    \STATE Return $\breve{\boldsymbol{r}}^{(t)}$ according to eq.~\eqref{eq:resp_vec}, eq.~\eqref{eq:dr_response}, and  $C_1, C_2$.
    \STATE Suffer decision loss $\ell^{(t)}(\boldsymbol{p}^{(t)})$ according to eq.~\eqref{eq:decision_loss}.
    \STATE Return a gradient estimate $\breve{\boldsymbol{g}}^{(t)}$ according to eq.~\eqref{eq:linearized_gradient}.
    \STATE Return mixing coefficients $\boldsymbol{p}^{(t+1)}$ according to eq.~\eqref{eq:closed_form}).
    \STATE Acquire selected coefficients $\tilde{\boldsymbol{p}}^{(t+1)} = \frac{p_i^{(t+1)}}{ \sum_{j \in S^{(t)}} p_j^{(t+1)} }, i \in S^{(t)}$. 
    \STATE Update a global model $\boldsymbol{\theta}^{(t+1)} = \boldsymbol{\theta}^{(t)}-\sum\limits_{i \in S^{(t)}} \tilde{p}^{(t+1)}_i \left(\boldsymbol{\theta}^{(t)}-\boldsymbol{\theta}^{(t+1)}_i\right)$.
   \ENDFOR
   \STATE{\bfseries Return:} $\boldsymbol{\theta}^{(T)}$
\end{algorithmic}
\end{algorithm}
    Unlike the cross-silo setting, we cannot be na\"{i}vely edopt existing OCO algorithms for finding adaptive mixing coefficients in the cross-device setting.
    It is attributed to \textit{the large number of participating clients} in this special setting.
    Since the number of participating clients ($K$) is massive (e.g., Android users are over 3 billion \cite{android}), 
    the dependence on $K$ in terms of regret bound and algorithm runtime is as significant as a total communication round $T$. 
    
\paragraph{Linear Runtime OCO Algorithm}
    The ONS has regret proportional to $K$ and runs in $\mathcal{O}\left(K^2+K^3\right)$\footnote{A generalized projection required for the Online Newton Step can be solved in $\tilde{\mathcal{O}}\left(K^3\right)$ \cite{ons1, ons2}.} per round,
    which is acceptable in practice for the cross-silo setting thanks to the moderate size of $K$, 
    but is \textit{nearly impossible} to be adopted for the cross-device FL setting due to large $K$, 
    even though the logarithmic regret is guaranteed in $T$.
    Instead, we can exploit the variant of EG adapted to FTRL \cite{oco3}, 
    which can be run in $\mathcal{O}\left(K\right)$ time per round. 
    \begin{equation}
    \begin{gathered}
    \label{eq:lin_ftrl}
        \boldsymbol{p}^{(t+1)}
        =
        \argmin_{\boldsymbol{p} \in \Delta_{K-1}} \sum\nolimits_{\tau=1}^{t} 
        \tilde\ell^{(\tau)}\left(\boldsymbol{p}\right)
        +
        \eta^{(t+1)}R(\boldsymbol{p}),
    \end{gathered}
    \end{equation}
    where $\eta^{(t)}$ is non-decreasing step size across $t\in[T]$, 
    and $R(\boldsymbol{p})=\sum\nolimits_{i=1}^K p_i \log p_i$ is a negative entropy regularizer.
    Still, the regret bound gets worse than that of ONS, as $\mathcal{O}\left(L_\infty \sqrt{T\log{K}}\right)$
    (see Theorem \ref{thm:crossdevice_full}).
    
\paragraph{Partially Observed Response} 
    The large number of clients coerces the federated system to introduce the \textit{client sampling scheme} in each round.
    Therefore, the decision maker (i.e., the central server) cannot always observe all entries of a response vector per round.
    This is problematic in terms of OCO, since OCO algorithms assume that they can acquire intact response vector for every round $t\in[T]$.
    Instead, when the client sampling is introduced, the learner can only observe entries of sampled client indices in the round $t$, denoted as $S^{(t)}$.
    
    To make a new decision using a \textit{partially observed} response vector, the effect of unobserved entries should be appropriately estimated.
    We solve this problem by adopting a doubly robust (DR) estimator \cite{doublyrobust2, doublyrobust3} for the expectation of the response vector.
    The rationale behind the adoption of the DR estimator is the fact that the unobserved entries are \textit{missing data}.
    
    For handling the missingness problem, the DR estimator combines inverse probability weighting (IPW \cite{exp3}) estimator and imputation mechanism,
    where the former is to adjust the weight of observed entries by the inverse of its observation probability (i.e., client sampling probability), 
    and the latter is to fill unobserved entries with appropriate values specific to a given task.
    
    Similar to the IPW estimator, the DR estimator is an unbiased estimator when the true observation probability is known. 
    Since we sample clients uniformly at random without replacement, \textit{the observation probability is known} 
    (i.e., $C\in(0,1)$) to the algorithm.
    
    \begin{lemma}
    \label{lemma:unbiased_resp}
    Denote $C=P\left(i \in S^{(t)}\right)$ as a client sampling probability in a cross-device FL setting for every round $t\in[T]$. 
    The DR estimator $\breve{\boldsymbol{r}}^{(t)}$, of which element is defined in (\ref{eq:dr_response}) is an unbiased estimator of given partially observed response vector $\boldsymbol{r}^{(t)}$.
    i.e., $\mathbb{E}\left[\breve{\boldsymbol{r}}^{(t)}\right] = \boldsymbol{r}^{(t)}$.
    \begin{equation}
    \begin{gathered}
    \label{eq:dr_response}
        \breve{r}^{(t)}_i = \left(1 - \frac{\mathbb{I}(i \in S^{(t)})}{C}\right)\mathrm{\bar{r}}^{(t)} + \frac{\mathbb{I}\left(i \in S^{(t)}\right)}{C}{r}^{(t)}_i,
    \end{gathered}
    \end{equation}
    where $\bar{\mathrm{r}}^{(t)}=\frac{1}{\left\vert S^{(t)}\right\vert} \sum_{i \in S^{(t)}} r_i^{(t)}$.
    \end{lemma}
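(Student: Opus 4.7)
The plan is to compute $\mathbb{E}[\breve{r}_i^{(t)}]$ directly via linearity of expectation. First, I would rewrite the DR estimator in the equivalent imputation-plus-correction form
\begin{equation*}
\breve{r}_i^{(t)} \;=\; \bar{r}^{(t)} \;+\; \frac{\mathbb{I}(i \in S^{(t)})}{C}\bigl(r_i^{(t)} - \bar{r}^{(t)}\bigr),
\end{equation*}
which makes explicit that $\bar{r}^{(t)}$ plays the role of a baseline imputation and the second term is an inverse-propensity-weighted residual. The key identity I would invoke is $\mathbb{E}[\mathbb{I}(i \in S^{(t)})] = P(i \in S^{(t)}) = C$, which holds because the server samples a uniform random subset of clients so that every client has the same known marginal inclusion probability $C$.

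The crux of the argument is that, since $C$ is a design constant of the sampling mechanism and therefore correctly specified, the DR estimator enjoys the textbook unbiasedness property: any measurable imputation combined with an IPW correction is unbiased for its target. Concretely, taking expectation yields $\mathbb{E}[\bar{r}^{(t)}] + \tfrac{1}{C}\mathbb{E}[\mathbb{I}(i \in S^{(t)})(r_i^{(t)} - \bar{r}^{(t)})]$, and the goal is to show that the $\bar{r}^{(t)}$ contributions in the imputation and in the weighted correction cancel, leaving precisely $r_i^{(t)}$.

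The main obstacle is that $\bar{r}^{(t)}$, being by definition the empirical mean over $S^{(t)}$, is \emph{not} independent of $\mathbb{I}(i \in S^{(t)})$. I would dispatch this by conditioning on the event $\{i \in S^{(t)}\}$: on this event the weight $\mathbb{I}(i \in S^{(t)})/C$ equals $1/C$ so the full residual $r_i^{(t)} - \bar{r}^{(t)}$ is included, while on the complementary event the weight vanishes and only the imputation $\bar{r}^{(t)}$ survives. Summing the two cases weighted by $C$ and $1-C$ respectively, the coefficient of $r_i^{(t)}$ evaluates to $1$ and the imputation contributions cancel, giving $\mathbb{E}[\breve{r}_i^{(t)}] = r_i^{(t)}$. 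Equivalently, one can appeal directly to the classical DR-estimator unbiasedness lemma, whose sole requirement is that the propensity be correctly specified -- exactly our situation, since $C$ is a known design parameter of the client sampling scheme.
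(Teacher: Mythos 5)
Your route is, at its core, the same as the paper's: expand $\breve{r}^{(t)}_i$ by linearity of expectation and use $\mathbb{E}[\mathbb{I}(i\in S^{(t)})]=P(i\in S^{(t)})=C$. The paper's proof is three lines: it factors $\bar{\mathrm{r}}^{(t)}$ out of the expectation as though it were a deterministic constant, replaces $\mathbb{E}[\mathbb{I}(i\in S^{(t)})]/C$ by $1$, and is done. It never engages with the dependence between $\bar{\mathrm{r}}^{(t)}$ and the inclusion indicator that you single out as the main obstacle.

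You are right that this dependence is the real issue, but your proposed dispatch does not close it. Carrying out your case split on $\{i\in S^{(t)}\}$ gives
\begin{equation*}
\mathbb{E}\bigl[\breve{r}^{(t)}_i\bigr]
= r^{(t)}_i+(1-C)\Bigl(\mathbb{E}\bigl[\bar{\mathrm{r}}^{(t)}\mid i\notin S^{(t)}\bigr]-\mathbb{E}\bigl[\bar{\mathrm{r}}^{(t)}\mid i\in S^{(t)}\bigr]\Bigr),
\end{equation*}
and the two conditional means do not coincide in general: when $i\in S^{(t)}$ the empirical average $\bar{\mathrm{r}}^{(t)}$ contains $r^{(t)}_i$ as one of its summands, and when $i\notin S^{(t)}$ it does not. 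So ``the imputation contributions cancel'' is precisely the step that fails. The classical DR unbiasedness lemma you appeal to requires the imputation to be fixed or independent of the inclusion indicator; here the imputation is itself built from the realized sample $S^{(t)}$, so that hypothesis is violated. The clean identity $\mathbb{E}[\breve{r}^{(t)}_i]=r^{(t)}_i$ holds only under the convention that $\bar{\mathrm{r}}^{(t)}$ is held fixed when the expectation over the client sampling is taken --- which is exactly the unstated assumption the paper's algebra makes when it pulls $\bar{\mathrm{r}}^{(t)}$ outside the expectation. To make your argument rigorous you must either condition on $\bar{\mathrm{r}}^{(t)}$ (stating the lemma as conditional unbiasedness), or accept and bound the $O(1-C)$ residual above; as written, the proposal asserts a cancellation that the conditioning computation does not deliver.
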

    
    Still, it is required to guarantee that the gradient vector from the DR estimator is also an unbiased estimator of a true gradient of a decision loss.
    Unfortunately, the gradient of a decision loss is \textit{not linear} in the response vector due to its fractional form: 
    ${\boldsymbol{g}}^{(t)}
    =\nabla \ell^{(t)}\left(\boldsymbol{p}^{(t)}\right)
    =-\frac{\boldsymbol{r}^{(t)}}{1 +\left\langle \boldsymbol{p}^{(t)}, \boldsymbol{r}^{(t)} \right\rangle}$.
    
    Therefore, we instead use linearly approximated gradient \textit{w.r.t. a response vector} as follows.
    \begin{lemma}
    \label{lemma:linearized_grad}
        Denote the gradient of a decision loss in terms of a response vector as $\boldsymbol{g}\equiv\mathrm{\mathbf{h}}(\boldsymbol{r}) = \left[h_1(\boldsymbol{r}),...,h_K(\boldsymbol{r})\right]^\top = -\frac{\boldsymbol{r}}{1 +\left\langle \boldsymbol{p}, \boldsymbol{r} \right\rangle}$. 
        It can be linearized for the response vector into ${\tilde{\boldsymbol{g}}}\equiv\tilde{\mathrm{\mathbf{h}}}(\boldsymbol{r})$, 
        given a reference $\boldsymbol{r}_0$ as follows. 
        (Note that the superscript ${(t)}$ is omitted for a brevity of notation)
        \begin{equation}
        \begin{gathered}
        \label{eq:linearized_gradient}
            {\boldsymbol{g}}
            \approx
            {\tilde{\boldsymbol{g}}}
            \equiv
            \tilde{\mathrm{\mathbf{h}}}(\boldsymbol{r})
            =
            -\frac{\boldsymbol{r}}{1 +\left\langle
            \boldsymbol{p},
            \boldsymbol{r}_0
            \right\rangle}
            +
            \frac{\boldsymbol{r}_0 \boldsymbol{p}^\top (\boldsymbol{r} - \boldsymbol{r}_0)}{(1 +\left\langle
            \boldsymbol{p},
            \boldsymbol{r}_0
            \right\rangle)^2}
        \end{gathered}
        \end{equation}
        Further denote $\breve{\boldsymbol{g}}$ as a gradient estimate from (\ref{eq:linearized_gradient}) 
        using the DR estimator of a response vector according to (\ref{eq:dr_response}),
        at an arbitrary reference $\boldsymbol{r}_0$. 
        Then, $\breve{\boldsymbol{g}}$ is an unbiased estimator of the linearized gradient of a decision loss at $\boldsymbol{r}_0$, 
        which is close to the true gradient: $\mathbb{E}\left[\breve{\boldsymbol{g}}\right] = \tilde{\boldsymbol{g}}\approx\boldsymbol{g}$.
    \end{lemma}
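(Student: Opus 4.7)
}

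The plan is to split the claim into two independent pieces: (i) deriving the affine approximation $\tilde{\mathbf{h}}(\boldsymbol{r})$ as the first-order Taylor expansion of $\mathbf{h}$ around a reference $\boldsymbol{r}_0$, and (ii) invoking linearity of expectation together with Lemma~\ref{lemma:unbiased_resp} to conclude that $\mathbb{E}[\breve{\boldsymbol{g}}]=\tilde{\boldsymbol{g}}$. Since the target formula in eq.~\eqref{eq:linearized_gradient} is already written in closed form, the work is essentially verification of a Jacobian computation plus a one-line linearity argument.

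For part (i), I would fix $\boldsymbol{p}\in\Delta_{K-1}$ and treat $\mathbf{h}(\boldsymbol{r})=-\boldsymbol{r}/s(\boldsymbol{r})$ with $s(\boldsymbol{r})=1+\langle\boldsymbol{p},\boldsymbol{r}\rangle$ as a differentiable map from a neighborhood of $\boldsymbol{r}_0$ (where $s(\boldsymbol{r}_0)\neq 0$, which holds since each $r_i\in[C_1,C_2]$ with $C_1>0$ by Lemma~\ref{lemma:lipschitz}). A component-wise computation gives
\begin{equation}
\frac{\partial h_i}{\partial r_j}(\boldsymbol{r}_0) \;=\; -\frac{\delta_{ij}}{s(\boldsymbol{r}_0)} \;+\; \frac{(r_0)_i\, p_j}{s(\boldsymbol{r}_0)^2},
\end{equation}
so the Jacobian is $J(\boldsymbol{r}_0)=-s(\boldsymbol{r}_0)^{-1}I + s(\boldsymbol{r}_0)^{-2}\boldsymbol{r}_0\boldsymbol{p}^\top$. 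Plugging into the first-order Taylor expansion $\tilde{\mathbf{h}}(\boldsymbol{r})=\mathbf{h}(\boldsymbol{r}_0)+J(\boldsymbol{r}_0)(\boldsymbol{r}-\boldsymbol{r}_0)$ and combining the $-\boldsymbol{r}_0/s(\boldsymbol{r}_0)$ and $-(\boldsymbol{r}-\boldsymbol{r}_0)/s(\boldsymbol{r}_0)$ terms into a single $-\boldsymbol{r}/s(\boldsymbol{r}_0)$ reproduces exactly the right-hand side of eq.~\eqref{eq:linearized_gradient}. This also makes explicit that $\tilde{\mathbf{h}}(\boldsymbol{r})$ is \emph{affine} in $\boldsymbol{r}$ (with coefficients depending only on the deterministic reference $\boldsymbol{r}_0$ and the current decision $\boldsymbol{p}^{(t)}$), which is the only property needed for part (ii).

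For part (ii), write $\breve{\boldsymbol{g}}^{(t)}=\tilde{\mathbf{h}}(\breve{\boldsymbol{r}}^{(t)})$ with the reference $\boldsymbol{r}_0$ held fixed and $\boldsymbol{p}^{(t)}$ measurable with respect to the history up to round $t$ (so independent of the round-$t$ sampling randomness that generates $\breve{\boldsymbol{r}}^{(t)}$). Because $\tilde{\mathbf{h}}$ is affine in its argument, linearity of expectation yields
\begin{equation}
\mathbb{E}\bigl[\breve{\boldsymbol{g}}^{(t)}\bigr] \;=\; \tilde{\mathbf{h}}\bigl(\mathbb{E}[\breve{\boldsymbol{r}}^{(t)}]\bigr) \;=\; \tilde{\mathbf{h}}(\boldsymbol{r}^{(t)}) \;=\; \tilde{\boldsymbol{g}}^{(t)},
\end{equation}
where the middle equality uses unbiasedness from Lemma~\ref{lemma:unbiased_resp}. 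The approximation $\tilde{\boldsymbol{g}}^{(t)}\approx\boldsymbol{g}^{(t)}$ is then an immediate consequence of the Taylor construction, with higher-order error controlled by $\|\boldsymbol{r}^{(t)}-\boldsymbol{r}_0\|$ and by $s(\boldsymbol{r}_0)^{-1}$, both of which are bounded under the response-range assumption.

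The main obstacle I anticipate is not the algebra itself but rather being precise about the conditioning: one must ensure the expectation is taken only over the round-$t$ client-sampling randomness while $\boldsymbol{p}^{(t)}$ and $\boldsymbol{r}_0$ are treated as constants, since otherwise the ``affine in $\boldsymbol{r}$'' argument breaks (the true $\mathbf{h}$ is nonlinear and would introduce bias through Jensen). A secondary subtlety is that eq.~\eqref{eq:linearized_gradient} as displayed contains the matrix-vector product $\boldsymbol{r}_0\boldsymbol{p}^\top(\boldsymbol{r}-\boldsymbol{r}_0)$, which I would unpack as the rank-one operator applied to the deviation to make the affinity manifest. Everything else is routine verification.
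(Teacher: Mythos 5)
Your proposal is correct and follows essentially the same route as the paper's proof: an explicit Jacobian computation $\mathbf{J}_{\mathbf{h}}(\boldsymbol{r}_0)=-s(\boldsymbol{r}_0)^{-1}\boldsymbol{I}_K+s(\boldsymbol{r}_0)^{-2}\boldsymbol{r}_0\boldsymbol{p}^\top$, a first-order Taylor expansion whose terms collapse to the displayed formula, and then linearity of the affine map $\tilde{\mathbf{h}}$ combined with the unbiasedness of the DR estimator from Lemma~\ref{lemma:unbiased_resp} to get $\mathbb{E}[\breve{\boldsymbol{g}}]=\tilde{\boldsymbol{g}}$. Your added remarks on conditioning (treating $\boldsymbol{p}^{(t)}$ and $\boldsymbol{r}_0$ as fixed relative to the round-$t$ sampling randomness) make explicit a point the paper leaves implicit, but do not change the argument.
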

    
    As suggested in (\ref{eq:dr_response}), we similarly set the reference as an average of observed responses at round $t$, i.e., $\boldsymbol{r}_0^{(t)} = \mathrm{\bar{r}}^{(t)}\boldsymbol{1}_K$.
    It is a valid choice in that dominating unobserved entries are imputed by the average of observed responses as in (\ref{eq:dr_response}).
    
    To sum up, we can update a new decision using this unbiased and linearly approximated gradient estimator even if only a partially observed response vector is provided (i.e., mixing coefficients of unsampled clients can also be updated).
    Note that the linearized gradient calculated from the DR estimator, $\breve{\boldsymbol{g}}$, has finite norm w.r.t. $\Vert \cdot \Vert_\infty$ (see Lemma~\ref{lemma:lipschitz_lin_grad}).
    
\paragraph{Closed-Form Update} 
    Especially for the cross-device setting, we can obtain a closed-form update of the objective (\ref{eq:lin_ftrl}), 
    which is due to the property of Fenchel conjugate.
    
    \begin{remark}
    \label{remark:closed_form} 
    The objective of \texttt{AAggFF-D} stated in (\ref{eq:lin_ftrl}) has a closed-form update formula as follows. \cite{oco3}
    \begin{equation}
    \begin{aligned}
    \label{eq:closed_form}
        {p}^{(t+1)}_i 
        \propto
        { \exp\left(-\frac{\sqrt{\log{K}}\sum_{\tau=1}^{t} \breve{g}^{(\tau)}_i}{\breve{L}_\infty\sqrt{t+1}}\right) }
    \end{aligned}
    \end{equation}
    \end{remark}
    It is equivalent to setting the time-varying step size as $\eta^{(t)}=\frac{ \breve{L}_\infty\sqrt{t} }{ \sqrt{\log{K}} }$. 
    Note that $\breve{g}^{(t)}_i$ is an entry of gradient from DR estimator defined in Lemma~\ref{lemma:linearized_grad} 
    and $\breve{L}_\infty$ is a corresponding Lipschitz constant satisfying $\Vert \breve{\boldsymbol{g}} \Vert_\infty \leq \breve{L}_\infty$
    stated in Lemma~\ref{lemma:lipschitz_lin_grad}. 
    See section~\ref{app:deriv_closed} for the derivation.

    In summary, the whole procedure of \texttt{AAggFF-D} is illustrated in Algorithm~\ref{alg:aaggff-d}.
    \begin{algorithm}[h]
   \caption{\texttt{AAggFFClientUpdate}}
   \label{alg:clientupate}
\begin{algorithmic}
   \STATE {\bfseries Input:} number of local epochs $E$, local batch size $B$, local learning rate $\eta$, global model $\boldsymbol{\theta}$
   \STATE {\bfseries Procedure:}
   \STATE Evaluate the received global model on training set to yield feedback $F_i\left(\boldsymbol{\theta}\right)$.
   \STATE Set a local model $\boldsymbol{\theta}^{(0)}\leftarrow\boldsymbol{\theta}$.
   \FOR{$e=0$ {\bfseries to} $E-1$}
    \STATE $\mathcal{B}_e$ $\leftarrow$ Split the client training dataset into batches of size $B$.
    \FOR{mini-batch $b$ in $\mathcal{B}_e$}
     \STATE Update the model $\boldsymbol{\theta}^{(e)}\leftarrow\boldsymbol{\theta}^{(e)} - \frac{\eta}{B} \sum\limits_{k=1}^B \nabla_{\boldsymbol{\theta}} \mathcal{L}\left(b;\boldsymbol{\theta}^{(e)}\right)$.
    \ENDFOR
    \STATE Set $\boldsymbol{\theta}^{(e+1)}\leftarrow\boldsymbol{\theta}^{(e)}$
   \ENDFOR
   \STATE{\bfseries Return:} $F_i\left(\boldsymbol{\theta}\right)$, $\boldsymbol{\theta} - \boldsymbol{\theta}^{(E)}$.
\end{algorithmic}
\end{algorithm}

\newpage
\section{Analyses}
\subsection{Cumulative Distribution Function for Response Transformation}
\label{app:cdfs}
\paragraph{Choice of Distributions}
    We used the CDF to transform unbounded responses from clients (i.e., local losses of clients) into bounded values.
    Among diverse options, we used one of the following 6 CDFs in this work.
    (Note that  $\operatorname{erf}(x)=\frac{2}{\sqrt{\pi}}\int_{0}^x e^{-y^2} \mathrm{d}y$ is the Gauss error function)
    
    \begin{enumerate}
        \item Weibull \cite{weibull}: $\texttt{CDF}(x)=1 - e^{-(x/\alpha)^\beta}$; we set $\alpha=1$ (scale) and $\beta=2$ (shape).
        \item Frechet \cite{frechet}: $\texttt{CDF}(x)=e^{-(\frac{x}{\alpha})^{-\beta}}$; we set $\alpha=1$ (scale) and $\beta=1$ (shape).
        \item Gumbel \cite{gumbel}: $\texttt{CDF}(x)=e^{-e^{-(x - \alpha) / \beta}}$; we set $\alpha=1$ (scale) and $\beta=1$ (shape).
        \item Exponential: $\texttt{CDF}(x)=1 - e^{-\alpha x}$; we set $\alpha=1$ (scale).
        \item Logistic: $\texttt{CDF}(x)=\frac{1}{1+e^{(-(x-\alpha)/\beta)}}$; we set $\alpha=1$ (scale) and $\beta=1$ (shape).
        \item Normal \cite{gaussian}: $\texttt{CDF}(x)=\frac{1}{2}\left[1+\operatorname{erf}\left(\frac{x - \alpha}{\beta\sqrt{2}}\right)\right]$; we set $\alpha=1$ (scale) and $\beta=1$ (shape).
    \end{enumerate}

    Commonly, the scale parameter of all distributions is set to 1, since in (\ref{eq:resp_vec}) we centered inputs to 1 in expectation.
    Although we fixed the parameters of each CDF, they can be statistically estimated in practice, such as using maximum spacing estimation \cite{mse}. 
    
    For imposing larger mixing coefficients for larger losses, the transformation should (i) preserve the relative difference between responses, as well as (ii) not too sensitive for outliers.
    While other heuristics (e.g., clipping values, subtracting from arbitrary large constant \cite{propfair}) for the transformation are also viable options for (i),
    additional efforts are still required to address (ii).
    
    On the contrary, CDFs can address both conditions with ease.
    As CDFs are increasing functions, (i) can be easily satisfied.
    For (ii), it can be intrinsically addressed by the nature of CDF itself.
    Let us start with a simple example.
    
    Suppose we have $K=3$ local losses: $F_1(\boldsymbol{\theta}) = 0.01, F_2(\boldsymbol{\theta}) = 0.10, F_3(\boldsymbol{\theta}) = 0.02$.
    Since the average is $\bar{\mathrm{F}}=\frac{0.01+0.10+0.02}{3}\approx0.043$, we have inputs of CDF as follows: $F_1(\boldsymbol{\theta})/\bar{\mathrm{F}} = 0.23, F_2(\boldsymbol{\theta})/\bar{\mathrm{F}} = 2.31, F_3(\boldsymbol{\theta})/\bar{\mathrm{F}} = 0.46$.
    These centered inputs are finally transformed into bounded values as in Table~\ref{tab:cdf_example}.
    \begin{table}[h]
\footnotesize
\centering
\caption{Effects of CDF transformations in \texttt{AAggFF}}
\label{tab:cdf_example}
\begin{tabular}{@{}lc@{}}

\toprule                                                                                           & Transformed Responses \\ \midrule
\begin{tabular}[c]{@{}l@{}}Weibull CDF \\ $\texttt{CDF}(x)=1-e^{(-x^2)}$\end{tabular}              & 0.05 / \underline{1.00} / 0.19 \\
\begin{tabular}[c]{@{}l@{}}Frechet CDF \\ $\texttt{CDF}(x)=e^{(-1/x)}$\end{tabular}                & 0.01 / \underline{0.65} / 0.11 \\
\begin{tabular}[c]{@{}l@{}}Gumbel CDF \\ $\texttt{CDF}(x)=e^{\left(-e^{(-(x - 1))}\right)}$\end{tabular}      & 0.12 / \underline{0.76} / 0.18 \\
\begin{tabular}[c]{@{}l@{}}Exponential CDF \\ $\texttt{CDF}(x)=1 - e^{(-x)}$\end{tabular}          & 0.21 / \underline{0.90} / 0.37 \\
\begin{tabular}[c]{@{}l@{}}Logistic CDF \\ $\texttt{CDF}(x)=\frac{1}{1+e^{(-(x-1))}}$\end{tabular} & 0.32 / \underline{0.79} / 0.37 \\
\begin{tabular}[c]{@{}l@{}}Normal CDF \\ $\texttt{CDF}(x)=\frac{1}{2}\left[1+\operatorname{erf}\left(\frac{x - 1}{\sqrt{2}}\right)\right]$\end{tabular} &
  0.22 / \underline{0.90} / 0.29 \\ \bottomrule
\end{tabular}
\end{table}

    While all losses become bounded values in $[0,1]$, the maximum local loss (i.e., $F_2(\boldsymbol{\theta}) = 0.10$) is transformed into different values by each CDF (see underlined figures in the `Transformed Responses' column of Table~\ref{tab:cdf_example}).
    When using the Weibull CDF, the maximum local loss is translated into $1.00$, which means that there may be no value greater than 0.10 (i.e., $0.10$ is the largest one in $100\%$ probability) given current local losses.
    Meanwhile, when using the Frechet CDF, the maximum local loss is translated into $0.65$, which means that there still is a 35\% chance that some other local losses are greater than $0.10$ when provided with other losses similar to 0.01 and 0.02.
    This implies that each CDF \textit{treats a maximum value differently}.
    When a transformation is easily inclined to the maximum value, thereby returning 1 (i.e., maximum of CDF), it may yield a degenerate decision, e.g., $\boldsymbol{p}\approx[0, 1, 0]^\top$. 

    Fortunately, most of the listed CDFs are designed for \textit{modeling maximum values}.
    For example, the three distributions, Gumbel, Frechet, and Weibull, are grouped as the Extreme Value Distribution (EVD) \cite{evd}.
    As its name suggests, it models the behavior of extreme events, and it is well known that any density modeling a minimum or a maximum of independent and identically distributed (IID) samples follows the shape of one of these three distributions (by the Extreme Value Theorem \cite{evt}).
    In other words, EVDs can reasonably measure \textit{how a certain value is close to a maximum}.
    
    Thus, they can estimate whether a certain value is relatively large or small. 
    Otherwise, the Exponential distribution is a special case of Weibull distribution, and the logistic distribution is also related to the Gumbel distribution.
    Last but not least, although it is not a family of EVD, the Normal distribution is also considered due to the central limit theorem, since the local loss is the sum of errors from IID local samples.
    We expect the CDF transformation can appropriately measure a relative magnitude of local losses,
    and it should be helpful for decision making.

\paragraph{Effects of Response Transformation}
\label{subsec:exp_resp_transformation}
    We illustrated that the response should be bounded (i.e., Lipschitz continuous) in section \ref{sec:resp_trans}, to have non-vacuous regret upper bound.
    To acquire bounded response, we compare the cumulative values of a global objective in (\ref{eq:fl_obj}), 
    i.e., $\sum_{t=1}^T \sum_{i=1}^K p^{(t)}_i F_i (\boldsymbol{\theta}^{(t)})$ for the cross-silo setting, 
    and $\sum_{t=1}^T \sum_{i\in S^{(t)}} p^{(t)}_i F_i (\boldsymbol{\theta}^{(t)})$ for the cross-device setting.

    \begin{figure}[t]
        \centering
        \includegraphics[scale=0.6]{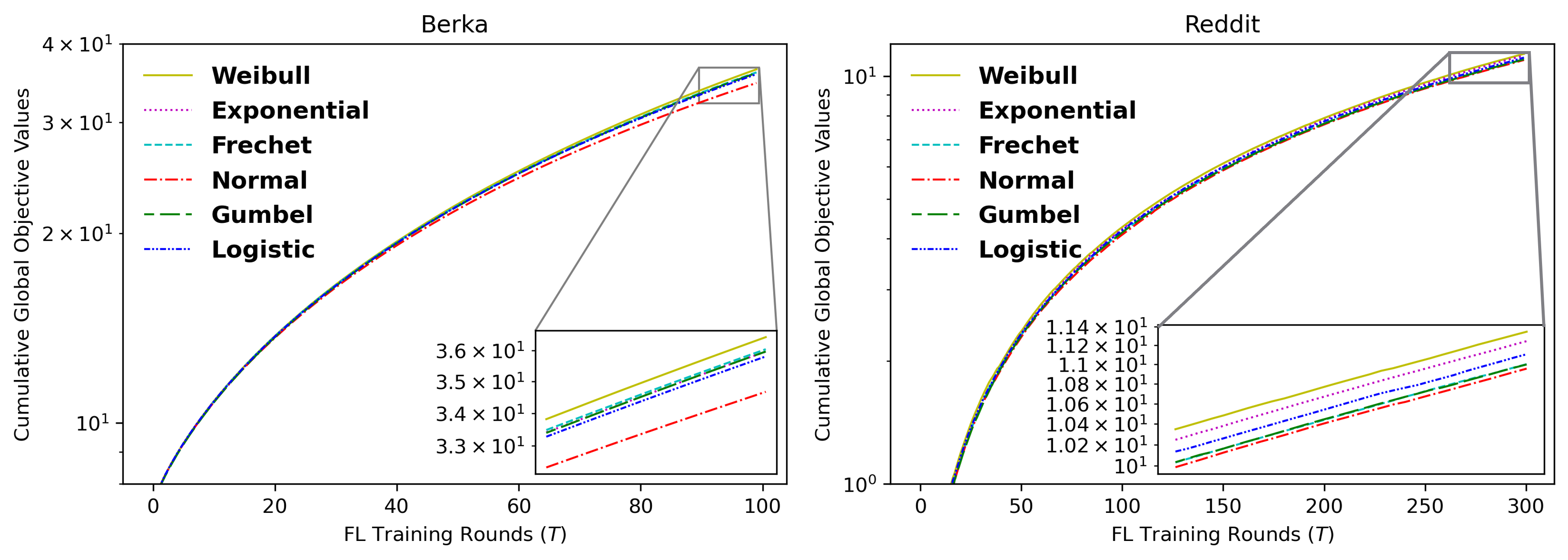}
        \caption[Effects of different CDF transformations on the performance of \texttt{AAggFF}]
        {Cumulative values of a global objective according to different CDFs (smaller is better): 
        (Left) Berka dataset (cross-silo setting; $K=7, T=100$). 
        (Right) Reddit dataset (cross-device setting; $K=817, T=300, C=0.00612$)}
        \label{fig:enter-label}
    \end{figure}
    
    In the cross-silo experiment with the Berka dataset, the Normal CDF shows the smallest cumulative values, 
    while in the cross-device experiment with the Reddit dataset, the Weibull CDF yields the smallest value.
    For the Berka dataset, the Normal CDF yields an average performance (AUROC) of 79.37 with the worst performance of 43.75, 
    but the Weibull CDF shows an average performance of 73.02 with the worst performance of 25.00.
    The same tendency is also observed in the Reddit dataset. 
    The Normal CDF presents an average performance (Acc. 1) of 14.05 and the worst performance of 4.26, 
    while the Weibull CDF shows an average of 12.62 with the worst of 3.35.
    From these observations, we can conclude that an appropriate choice of CDF is necessary for better sequential decision making, and suitable transformation helps minimize a global objective of FL. 
    Note also that these behaviors are also directly related to the global convergence of the algorithm w.r.t. $\boldsymbol{\theta}$.

\newpage
\subsection{Choice of a Response Range}
\label{app:range}
    In regard to determining the range of a response vector, i.e., $[C_1, C_2]$, 
    we can refer to the Lipschitz continuity in Lemma~\ref{lemma:lipschitz} and Lemma~\ref{lemma:lipschitz_lin_grad}.
    For the cross-silo setting, we can set arbitrary constants so that $L_\infty=\mathcal{O}(1)$ according to Lemma~\ref{lemma:lipschitz}.
    Thus, for all experiments of \texttt{AAggFF-S}, we set $C_1=0, C_2=\frac{1}{K}$.
    
    For the cross-device setting, the Lipschitz constant is changed into $\breve{L}_\infty$, since it is influenced by the client sampling probability ($C$).
    In detail, $C$ is located in the denominator of the Lipschitz constant, $\breve{L}_\infty$, which inflates the resulting gradient value as $C$ is a constant close to 0. 
    (e.g., when 10 among 100,000 clients are participating in each round, ${1}/{C}=10^{4}$)
    This is problematic and even causes an overflow problem empirically in updating a new decision.
    Thus, we propose a simple remedy --- setting $C_1$ and $C_2$ to be \textit{a multiple of $C$}, 
    so that the $C$ in the denominator is to be canceled out, according to Lemma~\ref{lemma:lipschitz_lin_grad}.
    For instance, when $C_1=0, C_2=C$, resulting Lipschitz constant simply becomes $\Vert \breve{\boldsymbol{g}}^{(t)} \Vert_\infty \leq \breve{L}_\infty = \frac{C}{1+0} + \frac{2(C-0)}{C(1+0)}=C+2\approx2$,
    which is a constant far smaller than $T$ and $K$, typically assumed in the practical cross-device FL setting.
    Therefore, for all experiments of \texttt{AAggFF-D}, we set $C_1=0, C_2=C$.

\newpage
\subsection{Regret Analysis}
\label{sec:analyses}
    In this section, we provide theoretical guarantees of our methods, \texttt{AAggFF-S} and \texttt{AAggFF-D} in terms of sequential decision making.
    The common objective for OCO algorithms is to \textit{minimize the regret} across a sequence of decision losses in eq.~\eqref{eq:regret}.
    We provide sublinear regret upper bounds in terms of $T$ as follows.

    \begin{theorem}
    \label{thm:crosssilo} (Regret Upper Bound for \texttt{AAggFF-S} (i.e., ONS \cite{ons1, ons2}))
        With the notation in eq.~\eqref{eq:ons},
        suppose for every $\boldsymbol{p}\in\Delta_{K-1}$, 
        and for every $t\in[T]$, 
        let the decisions $\{ \boldsymbol{p}^{(t)} : t\in[T] \}$ be derived by \texttt{AAggFF-S} for $K$ clients during $T$ rounds in Algorithm~\ref{alg:aaggff-s}.
        Then, the regret defined in eq.~\eqref{eq:regret} is bounded above as follows, where $\alpha$ and $\beta$ are determined as $\alpha=4KL_\infty, \beta=\frac{1}{4L_\infty}$.
        \begin{align*}
        \begin{gathered}
            \normalfont\text{Regret}^{(T)}\left(\boldsymbol{p}^\star\right) \leq 2 L_\infty K \left( 1 + \log \left( 1 + \frac{T}{16 K} \right) \right).
        \end{gathered}
        \end{align*}
    \end{theorem}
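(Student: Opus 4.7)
The plan is to follow the classical regret analysis for the Online Newton Step due to Hazan, Agarwal, and Kale, which exploits exp-concavity of the loss. The three core ingredients are: (i) a quadratic lower bound for $\ell^{(t)}$ induced by exp-concavity, (ii) a telescoping identity for the ONS iterates measured in the Mahalanobis norm induced by $A_t = \alpha I + \beta \sum_{\tau \le t} \boldsymbol{g}^{(\tau)}(\boldsymbol{g}^{(\tau)})^\top$, and (iii) a log-determinant bound on $\sum_t (\boldsymbol{g}^{(t)})^\top A_t^{-1}\boldsymbol{g}^{(t)}$.

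First, I would verify that $\ell^{(t)}(\boldsymbol{p})=-\log(1+\langle\boldsymbol{p},\boldsymbol{r}^{(t)}\rangle)$ is $1$-exp-concave on $\Delta_{K-1}$: a direct calculation yields $\nabla^2\ell^{(t)}(\boldsymbol{p}) = \boldsymbol{r}^{(t)}(\boldsymbol{r}^{(t)})^\top/(1+\langle\boldsymbol{p},\boldsymbol{r}^{(t)}\rangle)^2 = \nabla\ell^{(t)}(\boldsymbol{p})\,\nabla\ell^{(t)}(\boldsymbol{p})^\top$, so $\nabla^2\ell^{(t)} \succeq 1\cdot\nabla\ell^{(t)}(\nabla\ell^{(t)})^\top$. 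Combined with Lemma~\ref{lemma:lipschitz} which gives $\|\nabla\ell^{(t)}(\boldsymbol{p})\|_\infty \le L_\infty$ and the bounded diameter of $\Delta_{K-1}$, the standard exp-concavity lemma (Hazan, Ch.~4) ensures that the choice $\beta=\tfrac{1}{4L_\infty}$ lies in the admissible regime giving
\[
\ell^{(t)}(\boldsymbol{p}^\star) \;\ge\; \ell^{(t)}(\boldsymbol{p}^{(t)}) + \langle \boldsymbol{g}^{(t)},\boldsymbol{p}^\star-\boldsymbol{p}^{(t)}\rangle + \tfrac{\beta}{2}\langle \boldsymbol{g}^{(t)},\boldsymbol{p}^\star-\boldsymbol{p}^{(t)}\rangle^{2}.
\]

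Summing and rearranging bounds $\text{Regret}^{(T)}(\boldsymbol{p}^\star)$ by $\sum_t\langle\boldsymbol{g}^{(t)},\boldsymbol{p}^{(t)}-\boldsymbol{p}^\star\rangle-\tfrac{\beta}{2}\langle\boldsymbol{g}^{(t)},\boldsymbol{p}^{(t)}-\boldsymbol{p}^\star\rangle^{2}$. Identifying the ONS update in eq.~\eqref{eq:ons} as a generalized projection $\boldsymbol{p}^{(t+1)} = \Pi^{A_t}_{\Delta_{K-1}}\bigl(\boldsymbol{p}^{(t)}-\tfrac{1}{\beta}A_t^{-1}\boldsymbol{g}^{(t)}\bigr)$ and applying the Pythagorean inequality in the $A_t$-norm together with the recursion $A_t = A_{t-1} + \beta\boldsymbol{g}^{(t)}(\boldsymbol{g}^{(t)})^\top$ telescopes the squared distances to give
\[
\sum_{t=1}^{T}\Bigl[\langle\boldsymbol{g}^{(t)},\boldsymbol{p}^{(t)}-\boldsymbol{p}^\star\rangle-\tfrac{\beta}{2}\langle\boldsymbol{g}^{(t)},\boldsymbol{p}^{(t)}-\boldsymbol{p}^\star\rangle^{2}\Bigr] \;\le\; \tfrac{\alpha}{2\beta}\|\boldsymbol{p}^\star-\boldsymbol{p}^{(1)}\|_2^{2} + \tfrac{1}{2\beta}\sum_{t=1}^{T}(\boldsymbol{g}^{(t)})^\top A_t^{-1}\boldsymbol{g}^{(t)}.
\]

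The residual sum is controlled by the matrix determinant lemma, $\sum_t(\boldsymbol{g}^{(t)})^\top A_t^{-1}\boldsymbol{g}^{(t)} \le \tfrac{1}{\beta}\log\tfrac{\det A_T}{\det(\alpha I)}$; then AM-GM on $\mathrm{tr}(A_T) \le K\alpha + \beta T\cdot KL_\infty^{2}$ yields $\log\tfrac{\det A_T}{\det(\alpha I)} \le K\log\bigl(1+\tfrac{\beta L_\infty^{2}T}{\alpha}\bigr)$, which, after substituting $\alpha=4KL_\infty$ and $\beta=1/(4L_\infty)$, collapses arithmetically to $K\log\bigl(1+\tfrac{T}{16K}\bigr)$. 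The prefactor $1/(2\beta)=2L_\infty$ produces the leading $2L_\infty K$ term, and the initialization overhead $\tfrac{\alpha}{2\beta}\|\boldsymbol{p}^\star-\boldsymbol{p}^{(1)}\|_2^2$ is absorbed into the constant $1$ inside the parenthesis. The main obstacle is the delicate constant-balancing: the tuple $(\alpha,\beta)=(4KL_\infty,\tfrac{1}{4L_\infty})$ has to be simultaneously feasible for the exp-concavity lemma (forcing an upper cap on $\beta$ in terms of $L_\infty$ and the simplex diameter) and large enough that $\tfrac{\alpha}{2\beta}\|\boldsymbol{p}^\star-\boldsymbol{p}^{(1)}\|_2^2$ does not exceed $2L_\infty K$, so that the final expression matches the stated form exactly rather than picking up spurious $L_\infty^{2}$ or $K^{2}$ factors.
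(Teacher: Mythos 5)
Your route is the classical Hazan--Agarwal--Kale analysis of ONS (exp-concavity quadratic lower bound, generalized projection plus Pythagorean telescoping in the $\boldsymbol{A}_t$-norm, then the log-determinant/elliptical-potential bound with AM--GM on the trace), whereas the paper never touches the projection form: it casts eq.~\eqref{eq:ons} as FTRL with the proximal regularizer $R^{(t)}(\boldsymbol{p})=\frac{\alpha}{2}\Vert\boldsymbol{p}\Vert_2^2+\frac{\beta}{2}\sum_{\tau<t}\Vert\boldsymbol{p}-\boldsymbol{p}^{(\tau)}\Vert^2_{\boldsymbol{A}^{(\tau)}}$, splits the regret via the exact FTRL identity (Lemma~\ref{lemma:regret_bound_default_form}) into a regularizer term, a nonpositive optimality term, and a per-round progress term (Lemma~\ref{lemma:one_step_bound_proximal}), and only then invokes exp-concavity (Lemma~\ref{lemma:exp_concave}) to cancel the $\frac{\beta}{2}\sum_t\Vert\boldsymbol{p}^{(t)}-\boldsymbol{p}^\star\Vert^2_{\boldsymbol{A}^{(t)}}$ piece. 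The two arguments share every essential ingredient --- the feasibility of $\beta=\frac{1}{4L_\infty}$ via H\"older with $\Vert\boldsymbol{g}^{(t)}\Vert_\infty\le L_\infty$ and the $\ell_1$-diameter $2$ of the simplex, and the $K\log(1+T/(16K))$ potential bound --- so both buy the same theorem; the FTRL view generalizes more cleanly to the paper's cross-device variant (Theorem~\ref{thm:crossdevice_full} reuses the identical skeleton with an entropic regularizer), while your projection view is more self-contained if one only cares about ONS.

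The one concrete issue is the $\beta$-bookkeeping in your telescoped bound, which you flag as ``the main obstacle'' but do not resolve, and as written it does not land on the stated constant. Because $\beta$ is already folded into $\boldsymbol{A}_t=\alpha\boldsymbol{I}+\beta\sum_{\tau\le t}\boldsymbol{g}^{(\tau)}(\boldsymbol{g}^{(\tau)})^\top$, the matching Newton step is the \emph{unit-step} update $\boldsymbol{p}^{(t+1)}=\Pi^{\boldsymbol{A}_t}_{\Delta_{K-1}}(\boldsymbol{p}^{(t)}-\boldsymbol{A}_t^{-1}\boldsymbol{g}^{(t)})$, not $\boldsymbol{p}^{(t)}-\frac{1}{\beta}\boldsymbol{A}_t^{-1}\boldsymbol{g}^{(t)}$. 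With the unit step the Pythagorean telescoping gives
\begin{equation*}
\sum_{t=1}^{T}\langle\boldsymbol{g}^{(t)},\boldsymbol{p}^{(t)}-\boldsymbol{p}^\star\rangle
\le \frac{\alpha}{2}\Vert\boldsymbol{p}^{(1)}-\boldsymbol{p}^\star\Vert_2^2
+\frac{\beta}{2}\sum_{t=1}^{T}\langle\boldsymbol{g}^{(t)},\boldsymbol{p}^{(t)}-\boldsymbol{p}^\star\rangle^2
+\frac{1}{2}\sum_{t=1}^{T}(\boldsymbol{g}^{(t)})^\top\boldsymbol{A}_t^{-1}\boldsymbol{g}^{(t)},
\end{equation*}
so the quadratic term cancels exactly against exp-concavity, the initialization term is $\frac{\alpha}{2}\Vert\boldsymbol{p}^{(1)}-\boldsymbol{p}^\star\Vert_2^2\le\frac{\alpha}{2}\cdot 1=2L_\infty K$ (using the uniform initialization), and the potential term becomes $\frac{1}{2}\cdot\frac{1}{\beta}\log\frac{\det\boldsymbol{A}_T}{\det(\alpha\boldsymbol{I})}\le\frac{K}{2\beta}\log(1+\frac{\beta L_\infty^2T}{\alpha})=2L_\infty K\log(1+\frac{T}{16K})$. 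Your stated prefactors $\frac{\alpha}{2\beta}$ and $\frac{1}{2\beta}\cdot\frac{1}{\beta}$ would instead produce spurious $L_\infty^2$ factors of order $8KL_\infty^2$; once you adopt the unit-step convention (or equivalently strip $\beta$ out of $\boldsymbol{A}_t$ and keep the $\frac{1}{\beta}$ step as in Hazan's original parametrization, but not both) the constants collapse to the theorem exactly.
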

    From the Theorem~\ref{thm:crosssilo}, we can enjoy $\mathcal{O}\left(L_\infty K \log{T}\right)$ regret upper bound,
    which is an acceptable result, considering a typical assumption in the cross-silo setting (i.e., $K<T$).

    For the cross-device setting, we first present the full synchronization setting,
    which requires no extra adjustment.
    \begin{theorem}
    \label{thm:crossdevice_full} (Regret Upper Bound for \texttt{AAggFF-D} with Full-Client Participation)
        With the notation in (\ref{eq:lin_ftrl}),
        suppose for every $\boldsymbol{p}\in\Delta_{K-1}$, 
        and for every $t\in[T]$, 
        let the decisions $\{ \boldsymbol{p}^{(t)} : t\in[T] \}$ be derived by \texttt{AAggFF-D} for $K$ clients with client sampling probability $C=1$ during $T$ rounds in Algorithm~\ref{alg:aaggff-d}.
        Then, the regret defined in (\ref{eq:regret}) is bounded above as follows.
        \begin{align*}
        \begin{gathered}
            \normalfont\text{Regret}^{(T)}\left(\boldsymbol{p}^\star\right) \leq 2L_\infty\sqrt{T\log{K}}.
        \end{gathered}
        \end{align*}
    \end{theorem}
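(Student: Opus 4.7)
The plan is to reduce the claim to the textbook Follow-The-Regularized-Leader (FTRL) regret bound for the negative-entropy regularizer on the simplex, and then track constants carefully.

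First, I would exploit the fact that with $C=1$ the client sampling becomes trivial: every index is observed, so the DR estimator in eq.~\eqref{eq:dr_response} collapses to $\breve{\boldsymbol{r}}^{(t)}=\boldsymbol{r}^{(t)}$, and (taking the reference consistently) the gradient estimate reduces to the exact gradient $\breve{\boldsymbol{g}}^{(t)}=\boldsymbol{g}^{(t)}\equiv\nabla\ell^{(t)}(\boldsymbol{p}^{(t)})$. The Lipschitz constant that governs the step-size schedule in Remark~\ref{remark:closed_form} then coincides with the $L_\infty$ of Lemma~\ref{lemma:lipschitz}, and the update becomes a plain Exponentiated-Gradient step with time-varying regularizer scaling $\eta^{(t)}=L_\infty\sqrt{t}/\sqrt{\log K}$.

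Next, convexity of $\ell^{(t)}$ in $\boldsymbol{p}$ yields the linearization
\[
\ell^{(t)}(\boldsymbol{p}^{(t)})-\ell^{(t)}(\boldsymbol{p}^\star)\;\leq\;\bigl\langle \boldsymbol{g}^{(t)},\,\boldsymbol{p}^{(t)}-\boldsymbol{p}^\star\bigr\rangle,
\]
so it suffices to bound the linearized regret. For the FTRL objective in eq.~\eqref{eq:lin_ftrl} with the negative entropy (which is $1$-strongly convex w.r.t.\ $\Vert\cdot\Vert_1$ by Pinsker) and a non-decreasing $\eta^{(t)}$, a standard stability-plus-Bregman-telescoping argument (see e.g.~\cite{oco3}) yields
\[
\sum_{t=1}^T\bigl\langle\boldsymbol{g}^{(t)},\boldsymbol{p}^{(t)}-\boldsymbol{p}^\star\bigr\rangle\;\leq\;\eta^{(T)}\log K\;+\;\sum_{t=1}^T\frac{\Vert\boldsymbol{g}^{(t)}\Vert_\infty^{2}}{2\,\eta^{(t)}}.
\]
Plugging in the Lipschitz bound $\Vert\boldsymbol{g}^{(t)}\Vert_\infty\leq L_\infty$ of Lemma~\ref{lemma:lipschitz} and the prescribed $\eta^{(t)}$, the first term contributes $L_\infty\sqrt{T\log K}$, while the second telescopes via $\sum_{t=1}^T t^{-1/2}\leq 2\sqrt{T}$ to another $L_\infty\sqrt{T\log K}$, summing to the advertised $2L_\infty\sqrt{T\log K}$.

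The main obstacle I anticipate is not conceptual but arithmetic: to land on the clean constant $2$, one must align the step-size sequence produced by the closed-form update of Remark~\ref{remark:closed_form} with the exact convention of the stability lemma (so that the ``regularizer-growth'' term and the ``gradient-stability'' term each contribute \emph{the same} $L_\infty\sqrt{T\log K}$ factor), and invoke concavity of $\sqrt{\cdot}$ rather than bounding the two pieces by separate, looser constants. Crucially, because $C=1$ eliminates the randomness of the DR estimator, no concentration-of-measure step is required and the deterministic FTRL analysis suffices; the more delicate version, with client sampling and doubly robust estimation, is what I would reserve for the general $C\in(0,1)$ case.
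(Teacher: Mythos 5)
Your proposal is correct and follows essentially the same route as the paper: reduce to the linearized regret via convexity (the paper's Corollary~\ref{corollary:lin_loss}), apply the FTRL regret decomposition with the negative-entropy regularizer to obtain the bound $\zeta^{(T)}\log K + \sum_{t=1}^T \Vert\boldsymbol{g}^{(t)}\Vert_\infty^2/(2\zeta^{(t)})$ (the paper's Lemmas~\ref{lemma:regret_bound_default_form} and~\ref{lemma:one_step_bound}), and then plug in $\zeta^{(t)}=L_\infty\sqrt{t}/\sqrt{\log K}$ together with $\sum_{t=1}^T t^{-1/2}\leq 2\sqrt{T}$ to land on $2L_\infty\sqrt{T\log K}$. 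The only cosmetic difference is that the paper derives the two-term bound from first principles via its own telescoping and progress lemmas rather than citing the textbook statement, and no concavity-of-$\sqrt{\cdot}$ step is actually needed.
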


    When equipped with a client sampling, the randomness from the sampling should be considered.
    Due to local losses of selected clients can only be observed, 
    \texttt{AAggFF-D} should be equipped with the unbiased estimator of a response vector 
    (from Lemma~\ref{lemma:unbiased_resp}) 
    and a corresponding linearly approximated gradient vector 
    (from Lemma~\ref{lemma:linearized_grad}).
    Since they are unbiased estimators, the expected regret is also the same.
    
    \begin{corollary}
    \label{cor:crossdevice_partial} (Regret Upper Bound for \texttt{AAggFF-D} with Partial-Client Participation)
    With the client sampling probability $C\in(0,1)$,
    the DR estimator of a partially observed response $\breve{\boldsymbol{r}}^{(t)}$
    and corresponding linearized gradient $\breve{\boldsymbol{g}}^{(t)}$ for all $t\in[T]$,
    the regret defined in (\ref{eq:regret}) is bounded above in expectation as follows.
    \begin{align*}
    \begin{gathered}
        \mathbb{E}\left[\normalfont\text{Regret}^{(T)}(\boldsymbol{p}^{\star})\right] 
        \leq
        \mathcal{O}\left(L_\infty\sqrt{T\log{K}}\right).
    \end{gathered}
    \end{align*}
    \end{corollary}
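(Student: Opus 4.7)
The strategy is to lift the full-participation bound of Theorem~\ref{thm:crossdevice_full} to the sampled setting by viewing \texttt{AAggFF-D} as running FTRL on the \emph{stochastic} linearized loss $\tilde{\ell}^{(t)}(\boldsymbol{p}) = \langle \boldsymbol{p}, \breve{\boldsymbol{g}}^{(t)} \rangle$. First, by convexity of $\ell^{(t)}$ (and the fact that FTRL on the linearized surrogate always upper-bounds the true regret), the instantaneous regret satisfies $\ell^{(t)}(\boldsymbol{p}^{(t)}) - \ell^{(t)}(\boldsymbol{p}^{\star}) \leq \langle \boldsymbol{g}^{(t)}, \boldsymbol{p}^{(t)} - \boldsymbol{p}^{\star}\rangle$, and Lemma~\ref{lemma:linearized_grad} identifies $\boldsymbol{g}^{(t)}$ with $\tilde{\boldsymbol{g}}^{(t)}$ up to the linearization error around $\boldsymbol{r}_0^{(t)} = \bar{\mathrm{r}}^{(t)} \boldsymbol{1}_K$.

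Second, uniform client sampling with known probability $C$ together with Lemma~\ref{lemma:unbiased_resp} and the fact that $\tilde{\mathrm{\mathbf{h}}}$ is \emph{linear} in the response vector imply $\mathbb{E}[\breve{\boldsymbol{g}}^{(t)} \mid \mathcal{F}^{(t-1)}] = \tilde{\boldsymbol{g}}^{(t)}$, where $\mathcal{F}^{(t-1)}$ denotes the $\sigma$-algebra generated by past decisions and samplings. Because $\boldsymbol{p}^{(t)}$ is $\mathcal{F}^{(t-1)}$-measurable and $\boldsymbol{p}^{\star}$ is deterministic, the tower property yields
\[
\mathbb{E}\Big[\sum\nolimits_{t=1}^{T} \langle \tilde{\boldsymbol{g}}^{(t)}, \boldsymbol{p}^{(t)}-\boldsymbol{p}^{\star}\rangle \Big] = \mathbb{E}\Big[\sum\nolimits_{t=1}^{T} \langle \breve{\boldsymbol{g}}^{(t)}, \boldsymbol{p}^{(t)}-\boldsymbol{p}^{\star}\rangle \Big].
\]
Third, since \texttt{AAggFF-D} realizes the closed-form update of Remark~\ref{remark:closed_form} driven by $\breve{\boldsymbol{g}}^{(t)}$ with step size $\eta^{(t)} = \breve{L}_\infty \sqrt{t}/\sqrt{\log K}$, the same FTRL-on-the-simplex argument used to prove Theorem~\ref{thm:crossdevice_full} applies verbatim to the stochastic gradients, with $\breve{L}_\infty$ replacing $L_\infty$ as the $\Vert \cdot \Vert_\infty$ bound on the gradient (Lemma~\ref{lemma:lipschitz_lin_grad}). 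This delivers $\sum_{t=1}^{T} \langle \breve{\boldsymbol{g}}^{(t)}, \boldsymbol{p}^{(t)}-\boldsymbol{p}^{\star}\rangle \leq 2\breve{L}_\infty \sqrt{T \log K}$. Combining the three ingredients gives $\mathbb{E}[\text{Regret}^{(T)}(\boldsymbol{p}^{\star})] \leq 2 \breve{L}_\infty \sqrt{T \log K}$, and the choice $C_1=0, C_2=C$ from Section~\ref{app:range} makes the sampling probability cancel inside $\breve{L}_\infty$, so $\breve{L}_\infty = \mathcal{O}(1) = \mathcal{O}(L_\infty)$ and the claimed $\mathcal{O}(L_\infty \sqrt{T\log K})$ bound follows.

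\textbf{Expected main obstacle.} The delicate step is the passage from the true gradient $\boldsymbol{g}^{(t)}$ to the \emph{linearized} surrogate $\tilde{\boldsymbol{g}}^{(t)}$: Lemma~\ref{lemma:linearized_grad} only asserts approximate equality, yet the unbiasedness that powers the entire argument requires linearity in $\boldsymbol{r}$. Making this rigorous means either controlling the second-order Taylor remainder of $\mathbf{h}(\boldsymbol{r}) = -\boldsymbol{r}/(1+\langle \boldsymbol{p}, \boldsymbol{r}\rangle)$ around $\boldsymbol{r}_0^{(t)}$ in terms of $\Vert \breve{\boldsymbol{r}}^{(t)} - \boldsymbol{r}_0^{(t)} \Vert_\infty$ and then absorbing it into the constant hidden in $\mathcal{O}(\cdot)$, or redefining the online learner's target as the linearized regret and quantifying the gap separately. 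Both routes depend critically on the centering choice $\boldsymbol{r}_0^{(t)} = \bar{\mathrm{r}}^{(t)} \boldsymbol{1}_K$ together with the boundedness of $\breve{\boldsymbol{r}}^{(t)}$ induced by the CDF transformation (Definition~\ref{def:resp_cdf}), without which the remainder term could scale with $1/C$ and destroy the $\sqrt{T\log K}$ rate.
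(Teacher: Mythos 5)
Your proposal follows essentially the same route as the paper's own proof: bound the true regret by the linearized regret via convexity, use the unbiasedness of the DR estimator and the linearly approximated gradient (Lemmas~\ref{lemma:unbiased_resp} and~\ref{lemma:linearized_grad}) to swap $\breve{\boldsymbol{g}}^{(t)}$ for $\tilde{\boldsymbol{g}}^{(t)}\approx\boldsymbol{g}^{(t)}$ in expectation, and then invoke the full-participation FTRL bound of Theorem~\ref{thm:crossdevice_full}, with the choice $C_1=0$, $C_2=C$ cancelling the $1/C$ inflation in $\breve{L}_\infty$ exactly as in Remark~\ref{remark:inflated_lip_const} and Section~\ref{app:range}. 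If anything, your explicit conditioning on the filtration and your flagging of the uncontrolled linearization error in Lemma~\ref{lemma:linearized_grad} (which the paper also leaves as an ``$\approx$'' in its own chain of (in)equalities) is slightly more careful than the published argument.
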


\newpage
\section{Experimental Results}
\label{sec:experiments}
    We design experiments to evaluate empirical performances of our proposed framework \texttt{AAggFF}, 
    composed of sub-methods \texttt{AAggFF-S} and \texttt{AAggFF-D}, each of which is practical FL setting.
    
\subsection{Experimental Setup}
\label{subsec:exp_setup}
    \begin{table}[h]
\centering
\caption[Description of datasets for cross-silo and cross-device FL settings in Chapter~\ref{ch:aaggff}]
{Detailed description of federated benchmarks for cross-silo and cross-device settings}
\label{tab:dataset_desc}
\resizebox{0.6\textwidth}{!}{%
\begin{tabular}{lcc|lcc}
\hline
\multicolumn{3}{l|}{\textbf{Cross-Silo}} & \multicolumn{3}{l}{\textbf{Cross-Device}} \\
\textbf{Dataset}     & $K$     & $T$     & \textbf{Dataset}    & $K$      & $T$      \\ \hline
Berka                & 7       & 100     & CelebA              & 9,343    & 3,000    \\
MQP                  & 11      & 100     & Reddit              & 817      & 300      \\
ISIC             & 6       & 50      & SpeechCommands      & 2,005    & 500      \\ \hline
\end{tabular}%
}
\end{table}
    We conduct experiments on datasets mirroring \textit{realistic scenarios} in federated systems: 
    multiple modalities (vision, text, speech, and tabular form) and natural data partitioning. 
    We briefly summarize FL settings of each dataset in Table~\ref{tab:dataset_desc}.
    For the \textit{cross-silo} setting, 
    we used Berka tabular dataset \cite{berka},
    MQP clinical text dataset \cite{mqp}, 
    and ISIC oncological image dataset \cite{isic} (also a part of FLamby benchmark \cite{flamby}).
    For the \textit{cross-device} setting,
    we used CelebA vision dataset \cite{celeba}, 
    Reddit text dataset (both are parts of LEAF benchmark \cite{leaf})
    and SpeechCommands audio dataset \cite{speechcommands}. 
    
    Instead of manually partitioning data to simulate statistical heterogeneity, 
    we adopt natural client partitions inherent in each dataset.
    Each client dataset is split into an 80\% training set and a 20\% test set in a stratified manner where applicable.
    All experiments are run with 3 different random seeds after tuning hyperparameters.
    See Table~\ref{tab:dataset_desc} for descriptions of federated settings.

\paragraph{Datasets}
    \begin{table}[ht]
\centering
\caption[Statistics of federated benchmarks in Chapter~\ref{ch:aaggff}]{Statistics of federated benchmarks}
\label{tab:dataset_stat}
\begin{tabular}{@{}lcccc@{}}
\toprule
\textbf{Dataset} & \textbf{Clients} & \textbf{Samples} & \textbf{Avg.} & \textbf{Std.} \\ \midrule
Berka          & 7     & 621     & 88.71   & 24.78   \\
MQP            & 11    & 3,048   & 277.09  & 63.25   \\
ISIC       & 6     & 21,310  & 3551.67 & 3976.16 \\
CelebA         & 9,343 & 200,288 & 21.44   & 7.63    \\
Reddit         & 817   & 97,961  & 119.90  & 229.85  \\
SpeechCommands & 2,005 & 84,700  & 42.24   & 36.69   \\ \bottomrule
\end{tabular}
\end{table}
    We used 3 datasets for the cross-silo setting (Berka \cite{berka}, MQP \cite{mqp}, and ISIC \cite{isic, flamby}), 
    and 3 other datasets for the cross-device setting (CelebA, Reddit \cite{leaf}, and SpeechCommands \cite{speechcommands}).
    
    The statistics of all federated benchmarks are summarized in Table~\ref{tab:dataset_stat}.
    Note that \textbf{Avg.} and \textbf{Std.} in the table refer to the average and a standard deviation of a sample size of each client in the FL system
    
    First, we present details of the federated benchmark for the cross-silo setting.
    \begin{enumerate}
        \item[$\bullet$] \textbf{Berka} is a tabular dataset containing bank transaction records collected from Czech bank \cite{berka}.
        Berka accompanies the loan default prediction task (i.e., binary classification) of each bank's customers.
        It is fully anonymized and is originally composed of 8 relational tables: accounts, clients, disposition, loans, permanent orders, transactions, demographics, and credit cards.
        We merged all 8 tables into one dataset by joining the primary keys of each table, and finally have 15 input features.
        From the demographics table, we obtain information on the region: Prague, Central Bohemia, South Bohemia, West Bohemia, North Bohemia, East Bohemia, South Moravia, and North Moravia.
        We split each client according to the region and excluded all samples of North Bohemia since it has only one record of loan default, thus we finally have 7 clients (i.e., banks).
        Finally, we used the area under the receiver operating characteristic (ROC) curve for the evaluation metric.
        
        \item[$\bullet$] \textbf{MQP} is a clinical question pair dataset crawled from medical question answering dataset \cite{healthtap}, 
        and labeled by 11 doctors \cite{mqp}.
        All paired sentences are labeled as either similar or dissimilar, thereby it is suitable for the binary classification task.
        As a pre-processing, we merge two paired sentences into one sentence by adding special tokens: \texttt{[SEP]}, \texttt{[PAD]}, and \texttt{[UNK]}.
        We set the maximum token length to 200, thus merged sentences less than 200 are filled with \texttt{[PAD]} tokens, and otherwise are truncated. Then, merged sentences are tokenized using pre-trained DistilBERT tokenizer \cite{distilbert}.
        We regard each doctor as a separate client and thus have 11 clients.
        Finally, we used the area under the ROC curve for the evaluation metric.
        
        \item[$\bullet$] \textbf{ISIC} is a dermoscopic image dataset for a skin cancer classification, collected from 4 hospitals. \cite{isic, flamby}
        The task contains 8 distinct melanoma classes, thus designed for the multi-class classification task.
        Following \cite{flamby}, as one hospital has three different imaging apparatus, its samples are further divided into 3 clients, thus we finally have 6 clients in total.   
        Finally, we used top-5 accuracy for the evaluation metric.
    \end{enumerate}
    
    Next, we illustrate details of the federated benchmark for the cross-device setting.
    \begin{enumerate}
        \item[$\bullet$] \textbf{CelebA} is a vision dataset containing the facial images of celebrities \cite{celeba}.
        It is curated for federated setting in LEAF benchmark \cite{leaf}, and is targeted for the binary classification task (presence of smile).
        We follow the processing of \cite{leaf}, thereby each client corresponds to each celebrity, having 9,343 total clients in the FL system.
        Finally, we used top-1 accuracy for the evaluation metric following \cite{leaf}.
        
        \item[$\bullet$] \textbf{Reddit} is a text dataset containing the comments of community users of Reddit in December 2017, and a part of LEAF benchmark \cite{leaf}. 
        Following \cite{leaf}, we build a dictionary of vocabularies of size 10,000 from tokenized sentences and set the maximum sequence length to 10.
        The main task is tailored for language modeling, i.e., next token prediction, given word embeddings in each sentence of clients.
        Each client corresponds to one of the community users, thus 817 clients are presented in total.
        Finally, we used the top-1 accuracy for the evaluation metric following \cite{leaf}.
        
        \item[$\bullet$] \textbf{SpeechCommands} is designed for a short-length speech recognition task that includes one second 35 short-length words,
        such as `{Up}', `{Down}', `{Left}', and `{Right}'~\cite{speechcommands}. 
        It is accordingly a multi-class classification task for 35 different classes.
        While it is collected from 2,618 speakers, we rule out all samples of speakers having too few samples when splitting the dataset into training and test sets.
        (e.g., exclude speakers whose total sample counts are less than 3)
        As a result, we have 2,005 clients, and each client has 16,000-length time-domain waveform samples.
        Finally, we used top-5 accuracy for the evaluation metric.
    \end{enumerate}
    
\paragraph{Model Architectures}
    For each dataset, we used task-specific model architectures which are already used in previous works, or widely used in reality, to simulate the practical FL scenario as much as possible.
    
    For the experiment of the cross-silo setting, we used the following models.
    \begin{enumerate}
        \item[$\bullet$] \textit{Logistic Regression} is used for the Berka dataset. 
        We used a simple logistic regression model with a bias term, and the output (i.e., logit vector) is transformed into predicted class probabilities by the softmax function.
    
        \item[$\bullet$] \textit{DistilBERT \cite{distilbert}} is used for the MQP dataset.
        We used a pre-trained DistilBERT model, from BookCorpus and English Wikipedia \cite{distilbert}. 
        We also used the corresponding DistilBERT tokenizer for the pre-processing of raw clinical sentences.
        For a fine-tuning of the pre-trained DistilBERT model, we attach a classifier head next to the last layer of the DistilBERT's encoder, which outputs an embedding of $768$ dimension.
        The classifier is in detail processing the embedding as follows: ($768$-ReLU-Dropout-$2$),
        where each figure is an output dimension of a fully connected layer with a bias term, 
        ReLU is a rectified linear unit activation layer, 
        and Dropout \cite{dropout} is a dropout layer having probability of $0.1$.
        In the experiment, we trained all layers including pre-trained weights.
    
        \item[$\bullet$] \textit{EfficientNet \cite{efficientnet}} is used for the ISIC dataset.
        We also used the pre-trained EfficientNet-B0 model from ImageNet benchmark dataset \cite{imagenet}.
        For fine-tuning, we attach a classifier head after the convolution layers of EfficientNet.
        The classifier is composed of the following components: (AdaptiveAvgPool($7, 7$)-Dropout-$8$),
        where AdaptiveAvgPool($cdot, cdot$) is a 2D adaptive average pooling layer outputs a feature map of size $7 \times 7$ (which are flattened thereafter), 
        Dropout is a dropout layer with a probability of 0.1, 
        and the last linear layer outputs an 8-dimensional vector,
        which is the total number of classes.
    \end{enumerate}
    
    Next, for the cross-device setting, we used the following models.
    \begin{enumerate}
        \item[$\bullet$] \textit{ConvNet} model used in LEAF benchmark \cite{leaf} is used for the CelebA dataset.
        It is composed of four convolution layers, of which components are:
        2D convolution layer without bias term with 32 filters of size $3\times 3$ (stride=1, padding=1),
        group normalization layer (the number of groups is decreased from 32 by a factor of 2: 32, 16, 8, 4),
        2D max pooling layer with $2 \times 2$ filters,
        and a ReLU nonlinear activation layer.
        Plus, a classifier comes after the consecutive convolution layers, which are composed of:
        (AdaptiveAvgPool($5, 5$)-$1$),
        which are a 2D adaptive average pooling layer that outputs a feature map of size $5 \times 5$ (which are flattened thereafter), 
        and a linear layer with a bias term outputs a scalar value since it is a binary classification task.
    
        \item[$\bullet$] \textit{StackedLSTM} model used in LEAF benchmark \cite{leaf} is used for the Reddit dataset.
        It is composed of an embedding layer of which the number of embeddings is 200, and outputs an embedding vector of 256 dimensions.
        It is processed by consecutive 2 LSTM \cite{lstm} layers with the hidden size of 256, and enters the last linear layer with a bias term, which outputs a logit vector of 10,000 dimensions, which corresponds to the vocabulary size.
    
        \item[$\bullet$] \textit{M5 \cite{m5}} model is used for the SpeechCommands dataset.
        It is composed of four 1D convolution layers followed by a 1D batch normalization layer, ReLU nonlinear activation, and a 1D max pooling layer with a filter of size 4.
        All convolution layers EXCEPT the input layer have a filter of size $3$, and the numbers of filters are 64, 128, and 256 (all with stride=1 and padding=1).
        The input convolution layer has 64 filters with a filter of size $80$, and stride of $4$.
        Lastly, one linear layer outputs a logit vector of 35 dimensions. 
    \end{enumerate}
    
\paragraph{Hyperparameters}
    Before the main experiment, we first tuned hyperparameters of all baseline fair FL algorithms from a separate random seed.
    The hyperparameter of each fair algorithm is listed as follows.
    \begin{enumerate}
        \item[$\bullet$] \texttt{AFL} \cite{afl} --- a learning rate of a mixing coefficient $\in$\{0.01, 0.1, 1.0\}
        \item[$\bullet$] \texttt{q-FedAvg} \cite{qffl} --- a magnitude of fairness, $\in$\{0.1, 1.0, 5.0\}
        \item[$\bullet$] \texttt{TERM} \cite{term} --- a tilting constant, $\lambda$, $\in$\{0.1, 1.0, 10.0\}
        \item[$\bullet$] \texttt{FedMGDA} \cite{fedmgda} --- a deviation from static mixing coefficient $\in$\{0.1, 0.5, 1.0\}
        \item[$\bullet$] \texttt{PropFair} \cite{propfair} --- a baseline constant $\in$\{2, 3, 5\}
    \end{enumerate}
    
    Each candidate value is selected according to the original paper, and we fix the number of local epochs, $E=1$  (following the set up in \cite{qffl}), along with the number of local batch size $B=20$ in all experiments.
    For each dataset, a weight decay (L2 penalty) factor ($\psi$; following a theoretical result~\cite{convergence}), a local learning rate ($\eta$), and variables related to a learning rate scheduling (i.e., learning rate decay factor ($\phi$), and a decay step ($s$)) are tuned first with \texttt{FedAvg} \cite{fedavg} as follows.
    \begin{enumerate}
        \item[$\bullet$] \textbf{Berka}: $\psi=10^{-3}, \eta=10^0, \phi=0.99, s=10$
        \item[$\bullet$] \textbf{MQP}: $\psi=10^{-2}, \eta=10^{-\frac{5}{2}}, \phi=0.99, s=15$
        \item[$\bullet$] \textbf{ISIC}: $\psi=10^{-2}, \eta=10^{-4}, \phi=0.95, s=5$
        \item[$\bullet$] \textbf{CelebA}: $\psi=10^{-4}, \eta=10^{-1}, \phi=0.96, s=300$
        \item[$\bullet$] \textbf{Reddit}: $\psi=10^{-6}, \eta=10^{\frac{7}{8}}, \phi=0.95, s=20$
        \item[$\bullet$] \textbf{SpeechCommands}: $\psi=0, \eta=10^{-1}, \phi=0.999, s=10$
    \end{enumerate}
    
    This is intended under the assumption that all fair FL algorithms should at least be effective in the same setting of the FL algorithm with the static aggregation scheme (i.e., \texttt{FedAvg}).
    Note that client-side optimization in all experiments is done by the Stochastic Gradient Descent (SGD) optimizer.
        
\newpage
\subsection{Improvement in the Client-Level Fairness}
\label{subsec:exp_improved_fairness}
    We compare our methods with existing fair FL methods including \texttt{FedAvg} \cite{fedavg}, \texttt{AFL} \cite{afl}, \texttt{q-FedAvg} \cite{qffl}, \texttt{TERM} \cite{term}, \texttt{FedMGDA} \cite{fedmgda}, and \texttt{PropFair} \cite{propfair}.
    Since \texttt{AFL} requires full synchronization of clients every round, it is only compared in the cross-silo setting.
    
    In the \textit{cross-silo} setting, we assume all $K$ clients participate in $T$ federation rounds (i.e., $C=1$), 
    and in the \textit{cross-device} setting, the client participation rate $C\in(0, 1)$ is set to ensure $5$ among $K$ clients are participating in each round.
    We evaluate each dataset using appropriate metrics for tasks as indicated under the dataset name in Table~\ref{tab:result_silo} and~\ref{tab:result_device}, where the average, the best (10\%), the worst (10\%), and Gini coefficient\footnote{The Gini coefficient is inflated by ($\times 10^2$) for readability.} of clients' performance distributions are reported with the standard deviation inside parentheses in gray color below the averaged metric. 
    
    From the results, we verify that \texttt{AAggFF} can lead to enhanced worst-case metric and Gini coefficient in both settings while retaining competitive average performance to other baselines.
    Remarkably, along with the improved worst-case performance, the small Gini coefficient indicates that performances of clients are close to each other,
    which is directly translated into the improved client-level fairness.
    \begin{sidewaystable}
\centering
\caption[Experimental results on the cross-silo setting of Chapter~\ref{ch:aaggff}]{Comparison results of \texttt{AAggFF-S} in the cross-silo setting}
\label{tab:result_silo}
\resizebox{\textwidth}{!}{%
\renewcommand{\arraystretch}{0.9}

\begin{tabular}{lcccccccccccc}
\toprule
\textbf{Dataset} &
  \multicolumn{4}{c}{\textbf{Berka}} &
  \multicolumn{4}{c}{\textbf{MQP}} &
  \multicolumn{4}{c}{\textbf{ISIC}} \\
 &
  \multicolumn{4}{c}{\footnotesize(AUROC)} &
  \multicolumn{4}{c}{\footnotesize(AUROC)} &
  \multicolumn{4}{c}{\footnotesize(Acc. 5)} \\ \cmidrule(l){2-13}
\multirow{-2}{*}{\textbf{Method}} &
  \begin{tabular}[c]{@{}c@{}}Avg.\\ ($\uparrow$)\end{tabular} &
  \begin{tabular}[c]{@{}c@{}}Worst\\ ($\uparrow$)\end{tabular} &
  \begin{tabular}[c]{@{}c@{}}Best\\ ($\uparrow$)\end{tabular} &
  \begin{tabular}[c]{@{}c@{}}Gini\\ ($\downarrow$)\end{tabular} &
  \begin{tabular}[c]{@{}c@{}}Avg.\\ ($\uparrow$)\end{tabular} &
  \begin{tabular}[c]{@{}c@{}}Worst\\ ($\uparrow$)\end{tabular} &
  \begin{tabular}[c]{@{}c@{}}Best\\ ($\uparrow$)\end{tabular} &
  \begin{tabular}[c]{@{}c@{}}Gini\\ ($\downarrow$)\end{tabular} &
  \begin{tabular}[c]{@{}c@{}}Avg.\\ ($\uparrow$)\end{tabular} &
  \begin{tabular}[c]{@{}c@{}}Worst\\ ($\uparrow$)\end{tabular} &
  \begin{tabular}[c]{@{}c@{}}Best\\ ($\uparrow$)\end{tabular} &
  \begin{tabular}[c]{@{}c@{}}Gini\\ ($\downarrow$)\end{tabular} \\ \midrule
\begin{tabular}[c]{@{}l@{}}\texttt{FedAvg} \footnotesize\cite{fedavg}\end{tabular} &
  \begin{tabular}[c]{@{}c@{}}80.09\\ \footnotesize \color[HTML]{9B9B9B} (2.45)\end{tabular} &
  \begin{tabular}[c]{@{}c@{}}48.06\\ \footnotesize \color[HTML]{9B9B9B} (25.15)\end{tabular} &
  \begin{tabular}[c]{@{}c@{}}\textbf{99.03}\\ \footnotesize \color[HTML]{9B9B9B} (1.37)\end{tabular} &
  \multicolumn{1}{c|}{\begin{tabular}[c]{@{}c@{}}10.87\\ \footnotesize \color[HTML]{9B9B9B} (4.11)\end{tabular}} &
  \begin{tabular}[c]{@{}c@{}}56.06\\ \footnotesize \color[HTML]{9B9B9B} (0.06)\end{tabular} &
  \begin{tabular}[c]{@{}c@{}}41.03\\ \footnotesize \color[HTML]{9B9B9B} (4.33)\end{tabular} &
  \begin{tabular}[c]{@{}c@{}}76.31\\ \footnotesize \color[HTML]{9B9B9B} (8.42)\end{tabular} &
  \multicolumn{1}{c|}{\begin{tabular}[c]{@{}c@{}}8.63\\ \footnotesize \color[HTML]{9B9B9B} (0.91)\end{tabular}} &
  \begin{tabular}[c]{@{}c@{}}87.42\\ \footnotesize \color[HTML]{9B9B9B} (2.11)\end{tabular} &
  \begin{tabular}[c]{@{}c@{}}69.92\\ \footnotesize \color[HTML]{9B9B9B} (6.78)\end{tabular} &
  \begin{tabular}[c]{@{}c@{}}92.57\\ \footnotesize \color[HTML]{9B9B9B} (2.56)\end{tabular} &
  \begin{tabular}[c]{@{}c@{}}4.84\\ \footnotesize \color[HTML]{9B9B9B} (1.17)\end{tabular} \\
\begin{tabular}[c]{@{}l@{}}\texttt{AFL} \footnotesize\cite{afl}\end{tabular} &
  \begin{tabular}[c]{@{}c@{}}79.70\\ \footnotesize \color[HTML]{9B9B9B} (4.14)\end{tabular} &
  \begin{tabular}[c]{@{}c@{}}49.02\\ \footnotesize \color[HTML]{9B9B9B} (25.89)\end{tabular} &
  \begin{tabular}[c]{@{}c@{}}\underline{98.55}\\ \footnotesize \color[HTML]{9B9B9B} (2.05)\end{tabular} &
  \multicolumn{1}{c|}{\begin{tabular}[c]{@{}c@{}}10.58\\ \footnotesize \color[HTML]{9B9B9B} (5.03)\end{tabular}} &
  \begin{tabular}[c]{@{}c@{}}56.01\\ \footnotesize \color[HTML]{9B9B9B} (0.30)\end{tabular} &
  \begin{tabular}[c]{@{}c@{}}41.28\\ \footnotesize \color[HTML]{9B9B9B} (3.92)\end{tabular} &
  \begin{tabular}[c]{@{}c@{}}75.54\\ \footnotesize \color[HTML]{9B9B9B} (6.77)\end{tabular} &
  \multicolumn{1}{c|}{\begin{tabular}[c]{@{}c@{}}\underline{8.56}\\ \footnotesize \color[HTML]{9B9B9B} (1.24)\end{tabular}} &
  \begin{tabular}[c]{@{}c@{}}87.39\\ \footnotesize \color[HTML]{9B9B9B} (2.31)\end{tabular} &
  \begin{tabular}[c]{@{}c@{}}68.17\\ \footnotesize \color[HTML]{9B9B9B} (10.09)\end{tabular} &
  \begin{tabular}[c]{@{}c@{}}93.33\\ \footnotesize \color[HTML]{9B9B9B} (2.18)\end{tabular} &
  \begin{tabular}[c]{@{}c@{}}4.80\\ \footnotesize \color[HTML]{9B9B9B} (1.74)\end{tabular} \\
\begin{tabular}[c]{@{}l@{}}\texttt{q-FedAvg} \footnotesize\cite{qffl}\end{tabular} &
  \begin{tabular}[c]{@{}c@{}}79.98\\ \footnotesize \color[HTML]{9B9B9B} (3.89)\end{tabular} &
  \begin{tabular}[c]{@{}c@{}}\underline{49.44}\\ \footnotesize \color[HTML]{9B9B9B} (26.15)\end{tabular} &
  \begin{tabular}[c]{@{}c@{}}98.07\\ \footnotesize \color[HTML]{9B9B9B} (2.73)\end{tabular} &
  \multicolumn{1}{c|}{\begin{tabular}[c]{@{}c@{}}10.62\\ \footnotesize \color[HTML]{9B9B9B} (5.22)\end{tabular}} &
  \begin{tabular}[c]{@{}c@{}}\textbf{56.89}\\ \footnotesize \color[HTML]{9B9B9B} (0.42)\end{tabular} &
  \begin{tabular}[c]{@{}c@{}}40.22\\ \footnotesize \color[HTML]{9B9B9B} (3.06)\end{tabular} &
  \begin{tabular}[c]{@{}c@{}}\textbf{79.38}\\ \footnotesize \color[HTML]{9B9B9B} (9.09)\end{tabular} &
  \multicolumn{1}{c|}{\begin{tabular}[c]{@{}c@{}}8.68\\ \footnotesize \color[HTML]{9B9B9B} (0.57)\end{tabular}} &
  \begin{tabular}[c]{@{}c@{}}41.59\\ \footnotesize \color[HTML]{9B9B9B} (16.22)\end{tabular} &
  \begin{tabular}[c]{@{}c@{}}20.38\\ \footnotesize \color[HTML]{9B9B9B} (23.24)\end{tabular} &
  \begin{tabular}[c]{@{}c@{}}58.08\\ \footnotesize \color[HTML]{9B9B9B} (28.52)\end{tabular} &
  \begin{tabular}[c]{@{}c@{}}22.25\\ \footnotesize \color[HTML]{9B9B9B} (10.02)\end{tabular} \\
\begin{tabular}[c]{@{}l@{}}\texttt{TERM} \footnotesize\cite{term}\end{tabular} &
  \begin{tabular}[c]{@{}c@{}}\underline{80.11}\\ \footnotesize \color[HTML]{9B9B9B} (3.08)\end{tabular} &
  \begin{tabular}[c]{@{}c@{}}48.96\\ \footnotesize \color[HTML]{9B9B9B} (25.79)\end{tabular} &
  \begin{tabular}[c]{@{}c@{}}\textbf{99.03}\\ \footnotesize \color[HTML]{9B9B9B} (1.37)\end{tabular} &
  \multicolumn{1}{c|}{\begin{tabular}[c]{@{}c@{}}10.86\\ \footnotesize \color[HTML]{9B9B9B} (4.73)\end{tabular}} &
  \begin{tabular}[c]{@{}c@{}}56.47\\ \footnotesize \color[HTML]{9B9B9B} (0.19)\end{tabular} &
  \begin{tabular}[c]{@{}c@{}}40.73\\ \footnotesize \color[HTML]{9B9B9B} (4.36)\end{tabular} &
  \begin{tabular}[c]{@{}c@{}}76.80\\ \footnotesize \color[HTML]{9B9B9B} (8.30)\end{tabular} &
  \multicolumn{1}{c|}{\begin{tabular}[c]{@{}c@{}}8.67\\ \footnotesize \color[HTML]{9B9B9B} (1.43)\end{tabular}} &
  \begin{tabular}[c]{@{}c@{}}\underline{87.89}\\ \footnotesize \color[HTML]{9B9B9B} (1.69)\end{tabular} &
  \begin{tabular}[c]{@{}c@{}}\underline{77.32}\\ \footnotesize \color[HTML]{9B9B9B} (5.84)\end{tabular} &
  \begin{tabular}[c]{@{}c@{}}\underline{96.00}\\ \footnotesize \color[HTML]{9B9B9B} (3.27)\end{tabular} &
  \begin{tabular}[c]{@{}c@{}}\underline{3.77}\\ \footnotesize \color[HTML]{9B9B9B} (0.94)\end{tabular} \\
\begin{tabular}[c]{@{}l@{}}\texttt{FedMGDA} \footnotesize\cite{fedmgda}\end{tabular} &
  \begin{tabular}[c]{@{}c@{}}79.24\\ \footnotesize \color[HTML]{9B9B9B} (2.96)\end{tabular} &
  \begin{tabular}[c]{@{}c@{}}46.38\\ \footnotesize \color[HTML]{9B9B9B} (24.11)\end{tabular} &
  \begin{tabular}[c]{@{}c@{}}\textbf{99.03}\\ \footnotesize \color[HTML]{9B9B9B} (1.37)\end{tabular} &
  \multicolumn{1}{c|}{\begin{tabular}[c]{@{}c@{}}11.64\\ \footnotesize \color[HTML]{9B9B9B} (4.84)\end{tabular}} &
  \begin{tabular}[c]{@{}c@{}}53.02\\ \footnotesize \color[HTML]{9B9B9B} (1.67)\end{tabular} &
  \begin{tabular}[c]{@{}c@{}}34.91\\ \footnotesize \color[HTML]{9B9B9B} (2.22)\end{tabular} &
  \begin{tabular}[c]{@{}c@{}}69.65\\ \footnotesize \color[HTML]{9B9B9B} (3.89)\end{tabular} &
  \multicolumn{1}{c|}{\begin{tabular}[c]{@{}c@{}}10.33\\ \footnotesize \color[HTML]{9B9B9B} (0.44)\end{tabular}} &
  \begin{tabular}[c]{@{}c@{}}42.36\\ \footnotesize \color[HTML]{9B9B9B} (14.94)\end{tabular} &
  \begin{tabular}[c]{@{}c@{}}21.44\\ \footnotesize \color[HTML]{9B9B9B} (21.30)\end{tabular} &
  \begin{tabular}[c]{@{}c@{}}59.21\\ \footnotesize \color[HTML]{9B9B9B} (28.52)\end{tabular} &
  \begin{tabular}[c]{@{}c@{}}22.25\\ \footnotesize \color[HTML]{9B9B9B} (10.02)\end{tabular} \\
\begin{tabular}[c]{@{}l@{}}\texttt{PropFair} \footnotesize\cite{propfair}\end{tabular} &
  \begin{tabular}[c]{@{}c@{}}79.61\\ \footnotesize \color[HTML]{9B9B9B} (4.49)\end{tabular} &
  \begin{tabular}[c]{@{}c@{}}\underline{49.44}\\ \footnotesize \color[HTML]{9B9B9B} (26.15)\end{tabular} &
  \begin{tabular}[c]{@{}c@{}}98.07\\ \footnotesize \color[HTML]{9B9B9B} (2.73)\end{tabular} &
  \multicolumn{1}{c|}{\begin{tabular}[c]{@{}c@{}}\underline{10.47}\\ \footnotesize \color[HTML]{9B9B9B} (5.04)\end{tabular}} &
  \begin{tabular}[c]{@{}c@{}}56.60\\ \footnotesize \color[HTML]{9B9B9B} (0.39)\end{tabular} &
  \begin{tabular}[c]{@{}c@{}}\underline{41.71}\\ \footnotesize \color[HTML]{9B9B9B} (3.80)\end{tabular} &
  \begin{tabular}[c]{@{}c@{}}\underline{79.09}\\ \footnotesize \color[HTML]{9B9B9B} (7.40)\end{tabular} &
  \multicolumn{1}{c|}{\begin{tabular}[c]{@{}c@{}}8.74\\ \footnotesize \color[HTML]{9B9B9B} (0.87)\end{tabular}} &
  \begin{tabular}[c]{@{}c@{}}83.88\\ \footnotesize \color[HTML]{9B9B9B} (2.50)\end{tabular} &
  \begin{tabular}[c]{@{}c@{}}58.36\\ \footnotesize \color[HTML]{9B9B9B} (11.63)\end{tabular} &
  \begin{tabular}[c]{@{}c@{}}91.35\\ \footnotesize \color[HTML]{9B9B9B} (2.48)\end{tabular} &
  \begin{tabular}[c]{@{}c@{}}7.91\\ \footnotesize \color[HTML]{9B9B9B} (2.10)\end{tabular} \\
\rowcolor[HTML]{FFF5E6} 
\begin{tabular}[c]{@{}l@{}}\texttt{AAggFF-S}\\ (Proposed)\end{tabular} &
  \begin{tabular}[c]{@{}c@{}}\textbf{80.93}\\ \footnotesize \color[HTML]{9B9B9B} (2.96)\end{tabular} &
  \begin{tabular}[c]{@{}c@{}}\textbf{52.08}\\ \footnotesize \color[HTML]{9B9B9B} (23.59)\end{tabular} &
  \cellcolor[HTML]{FFF5E6}\begin{tabular}[c]{@{}c@{}}\textbf{99.03}\\ \footnotesize \color[HTML]{9B9B9B} (1.37)\end{tabular} &
  \multicolumn{1}{c|}{\cellcolor[HTML]{FFF5E6}\begin{tabular}[c]{@{}c@{}}\textbf{10.16}\\ \footnotesize \color[HTML]{9B9B9B} (3.80)\end{tabular}} &
  \begin{tabular}[c]{@{}c@{}}\underline{56.63}\\ \footnotesize \color[HTML]{9B9B9B} (0.54)\end{tabular} &
  \begin{tabular}[c]{@{}c@{}}\textbf{41.79}\\ \footnotesize \color[HTML]{9B9B9B} (4.43)\end{tabular} &
  \begin{tabular}[c]{@{}c@{}}75.56\\ \footnotesize \color[HTML]{9B9B9B} (6.53)\end{tabular} &
  \multicolumn{1}{c|}{\cellcolor[HTML]{FFF5E6}\begin{tabular}[c]{@{}c@{}}\textbf{8.38}\\ \footnotesize \color[HTML]{9B9B9B} (0.77)\end{tabular}} &
  \begin{tabular}[c]{@{}c@{}}\textbf{89.76}\\ \footnotesize \color[HTML]{9B9B9B} (1.03)\end{tabular} &
  \begin{tabular}[c]{@{}c@{}}\textbf{85.17}\\ \footnotesize \color[HTML]{9B9B9B} (3.87)\end{tabular} &
  \begin{tabular}[c]{@{}c@{}}\textbf{98.22}\\ \footnotesize \color[HTML]{9B9B9B} (1.66)\end{tabular} &
  \begin{tabular}[c]{@{}c@{}}\textbf{2.52}\\ \footnotesize \color[HTML]{9B9B9B} (0.38)\end{tabular} \\ \bottomrule
\end{tabular}%
}
\end{sidewaystable}  
    \begin{sidewaystable}
\centering
\caption[Experimental results on the cross-device setting of Chapter~\ref{ch:aaggff}]{Comparison results of \texttt{AAggFF-D} in the cross-device setting}
\label{tab:result_device}
\resizebox{\textwidth}{!}{%
\renewcommand{\arraystretch}{0.9}
\begin{tabular}{!{}lcccccccccccc!{}}
\toprule
\textbf{Dataset} &
  \multicolumn{4}{c}{\textbf{CelebA}} &
  \multicolumn{4}{c}{\textbf{Reddit}} &
  \multicolumn{4}{c}{\textbf{SpeechCommands}} \\
 &
  \multicolumn{4}{c}{(Acc. 1)} &
  \multicolumn{4}{c}{(Acc. 1)} &
  \multicolumn{4}{c}{(Acc. 5)} \\ \cmidrule(l){2-13} 
\multirow{-2}{*}{\textbf{Method}} &
  \begin{tabular}[c]{@{}c@{}}Avg.\\ ($\uparrow$)\end{tabular} &
  \begin{tabular}[c]{@{}c@{}}Worst\\ 10\% ($\uparrow$)\end{tabular} &
  \begin{tabular}[c]{@{}c@{}}Best\\ 10\%($\uparrow$)\end{tabular} &
  \begin{tabular}[c]{@{}c@{}}Gini\\ ($\downarrow$)\end{tabular} &
  \begin{tabular}[c]{@{}c@{}}Avg.\\ ($\uparrow$)\end{tabular} &
  \begin{tabular}[c]{@{}c@{}}Worst\\ 10\%($\uparrow$)\end{tabular} &
  \begin{tabular}[c]{@{}c@{}}Best\\ 10\%($\uparrow$)\end{tabular} &
  \begin{tabular}[c]{@{}c@{}}Gini\\ ($\downarrow$)\end{tabular} &
  \begin{tabular}[c]{@{}c@{}}Avg.\\ ($\uparrow$)\end{tabular} &
  \begin{tabular}[c]{@{}c@{}}Worst\\ 10\%($\uparrow$)\end{tabular} &
  \begin{tabular}[c]{@{}c@{}}Best\\ 10\%($\uparrow$)\end{tabular} &
  \begin{tabular}[c]{@{}c@{}}Gini\\ ($\downarrow$)\end{tabular} \\ \midrule
\begin{tabular}[c]{@{}l@{}}\texttt{FedAvg} \footnotesize\cite{fedavg}\end{tabular} &
  \begin{tabular}[c]{@{}c@{}}90.79\\ \footnotesize \color[HTML]{9B9B9B} (0.53)\end{tabular} &
  \begin{tabular}[c]{@{}c@{}}\underline{55.76} \\ \footnotesize \color[HTML]{9B9B9B} (0.84)\end{tabular} &
  \begin{tabular}[c]{@{}c@{}}\underline{100.00}\\ \footnotesize \color[HTML]{9B9B9B} (0.00)\end{tabular} &
  \multicolumn{1}{c|}{\begin{tabular}[c]{@{}c@{}}7.86\\ \footnotesize \color[HTML]{9B9B9B} (0.30)\end{tabular}} &
  \begin{tabular}[c]{@{}c@{}}10.76\\ \footnotesize \color[HTML]{9B9B9B} (1.45)\end{tabular} &
  \begin{tabular}[c]{@{}c@{}}2.50\\ \footnotesize \color[HTML]{9B9B9B} (0.21)\end{tabular} &
  \begin{tabular}[c]{@{}c@{}}20.86\\ \footnotesize \color[HTML]{9B9B9B} (3.64)\end{tabular} &
  \multicolumn{1}{c|}{\begin{tabular}[c]{@{}c@{}}25.66\\ \footnotesize \color[HTML]{9B9B9B} (0.49)\end{tabular}} &
  \begin{tabular}[c]{@{}c@{}}\underline{75.51}\\ \footnotesize \color[HTML]{9B9B9B} (1.08)\end{tabular} &
  \begin{tabular}[c]{@{}c@{}}7.93\\ \footnotesize \color[HTML]{9B9B9B} (2.87)\end{tabular} &
  \begin{tabular}[c]{@{}c@{}}\underline{100.00}\\ \footnotesize \color[HTML]{9B9B9B} (0.00)\end{tabular} &
  \begin{tabular}[c]{@{}c@{}}24.58\\ \footnotesize \color[HTML]{9B9B9B} (1.34)\end{tabular} \\
\begin{tabular}[c]{@{}l@{}}\texttt{q-FedAvg} \footnotesize\cite{qffl}\end{tabular} &
  \begin{tabular}[c]{@{}c@{}}\underline{90.88}\\ \footnotesize \color[HTML]{9B9B9B} (0.19)\end{tabular} &
  \begin{tabular}[c]{@{}c@{}}55.73\\ \footnotesize \color[HTML]{9B9B9B} (0.85)\end{tabular} &
  \begin{tabular}[c]{@{}c@{}}\underline{100.00}\\ \footnotesize \color[HTML]{9B9B9B} (0.00)\end{tabular} &
  \multicolumn{1}{c|}{\begin{tabular}[c]{@{}c@{}}\underline{7.82}\\ \footnotesize \color[HTML]{9B9B9B} (0.21)\end{tabular}} &
  \begin{tabular}[c]{@{}c@{}}\underline{12.76}\\ \footnotesize \color[HTML]{9B9B9B} (0.32)\end{tabular} &
  \begin{tabular}[c]{@{}c@{}}\underline{3.38}\\ \footnotesize \color[HTML]{9B9B9B} (0.20)\end{tabular} &
  \begin{tabular}[c]{@{}c@{}}\underline{21.81}\\ \footnotesize \color[HTML]{9B9B9B} (0.19)\end{tabular} &
  \multicolumn{1}{c|}{\begin{tabular}[c]{@{}c@{}}\underline{23.34}\\ \footnotesize \color[HTML]{9B9B9B} (0.34)\end{tabular}} &
  \begin{tabular}[c]{@{}c@{}}73.34\\ \footnotesize \color[HTML]{9B9B9B} (0.47)\end{tabular} &
  \begin{tabular}[c]{@{}c@{}}\underline{11.19}\\ \footnotesize \color[HTML]{9B9B9B} (0.47)\end{tabular} &
  \begin{tabular}[c]{@{}c@{}}\underline{100.00}\\ \footnotesize \color[HTML]{9B9B9B} (0.00)\end{tabular} &
  \begin{tabular}[c]{@{}c@{}}\underline{23.16}\\ \footnotesize \color[HTML]{9B9B9B} (0.13)\end{tabular} \\
\begin{tabular}[c]{@{}l@{}}\texttt{TERM} \footnotesize\cite{term}\end{tabular} &
  \begin{tabular}[c]{@{}c@{}}90.71\\ \footnotesize \color[HTML]{9B9B9B} (0.65)\end{tabular} &
  \begin{tabular}[c]{@{}c@{}}55.66\\ \footnotesize \color[HTML]{9B9B9B} (0.93)\end{tabular} &
  \begin{tabular}[c]{@{}c@{}}\underline{100.00}\\ \footnotesize \color[HTML]{9B9B9B} (0.00)\end{tabular} &
  \multicolumn{1}{c|}{\begin{tabular}[c]{@{}c@{}}7.90\\ \footnotesize \color[HTML]{9B9B9B} (0.38)\end{tabular}} &
  \begin{tabular}[c]{@{}c@{}}12.02\\ \footnotesize \color[HTML]{9B9B9B} (0.16)\end{tabular} &
  \begin{tabular}[c]{@{}c@{}}2.85\\ \footnotesize \color[HTML]{9B9B9B} (0.41)\end{tabular} &
  \begin{tabular}[c]{@{}c@{}}20.74\\ \footnotesize \color[HTML]{9B9B9B} (0.65)\end{tabular} &
  \multicolumn{1}{c|}{\begin{tabular}[c]{@{}c@{}}24.15\\ \footnotesize \color[HTML]{9B9B9B} (1.05)\end{tabular}} &
  \begin{tabular}[c]{@{}c@{}}70.90\\ \footnotesize \color[HTML]{9B9B9B} (2.96)\end{tabular} &
  \begin{tabular}[c]{@{}c@{}}5.98\\ \footnotesize \color[HTML]{9B9B9B} (1.10)\end{tabular} &
  \begin{tabular}[c]{@{}c@{}}\underline{100.00}\\ \footnotesize \color[HTML]{9B9B9B} (0.00)\end{tabular} &
  \begin{tabular}[c]{@{}c@{}}26.37\\ \footnotesize \color[HTML]{9B9B9B} (1.32)\end{tabular} \\
\begin{tabular}[c]{@{}l@{}}\texttt{FedMGDA} \footnotesize\cite{fedmgda}\end{tabular} &
  \begin{tabular}[c]{@{}c@{}}88.33\\ \footnotesize \color[HTML]{9B9B9B} (0.63)\end{tabular} &
  \begin{tabular}[c]{@{}c@{}}48.60\\ \footnotesize \color[HTML]{9B9B9B} (25.85)\end{tabular} &
  \begin{tabular}[c]{@{}c@{}}\underline{100.00}\\ \footnotesize \color[HTML]{9B9B9B} (0.00)\end{tabular} &
  \multicolumn{1}{c|}{\begin{tabular}[c]{@{}c@{}}9.75\\ \footnotesize \color[HTML]{9B9B9B} (0.59)\end{tabular}} &
  \begin{tabular}[c]{@{}c@{}}10.58\\ \footnotesize \color[HTML]{9B9B9B} (0.18)\end{tabular} &
  \begin{tabular}[c]{@{}c@{}}2.35\\ \footnotesize \color[HTML]{9B9B9B} (0.20)\end{tabular} &
  \begin{tabular}[c]{@{}c@{}}19.09\\ \footnotesize \color[HTML]{9B9B9B} (0.62)\end{tabular} &
  \multicolumn{1}{c|}{\begin{tabular}[c]{@{}c@{}}25.20\\ \footnotesize \color[HTML]{9B9B9B} (0.22)\end{tabular}} &
  \begin{tabular}[c]{@{}c@{}}72.45\\ \footnotesize \color[HTML]{9B9B9B} (1.88)\end{tabular} &
  \begin{tabular}[c]{@{}c@{}}9.65\\ \footnotesize \color[HTML]{9B9B9B} (2.90)\end{tabular} &
  \begin{tabular}[c]{@{}c@{}}\underline{100.00}\\ \footnotesize \color[HTML]{9B9B9B} (0.00)\end{tabular} &
  \begin{tabular}[c]{@{}c@{}}23.68\\ \footnotesize \color[HTML]{9B9B9B} (1.27)\end{tabular} \\
\begin{tabular}[c]{@{}l@{}}\texttt{PropFair} \footnotesize\cite{propfair}\end{tabular} &
  \begin{tabular}[c]{@{}c@{}}87.25\\ \footnotesize \color[HTML]{9B9B9B} (5.01)\end{tabular} &
  \begin{tabular}[c]{@{}c@{}}48.11\\ \footnotesize \color[HTML]{9B9B9B} (10.03)\end{tabular} &
  \begin{tabular}[c]{@{}c@{}}\underline{100.00}\\ \footnotesize \color[HTML]{9B9B9B} (0.00)\end{tabular} &
  \multicolumn{1}{c|}{\begin{tabular}[c]{@{}c@{}}10.39\\ \footnotesize \color[HTML]{9B9B9B} (3.43)\end{tabular}} &
  \begin{tabular}[c]{@{}c@{}}11.26\\ \footnotesize \color[HTML]{9B9B9B} (0.71)\end{tabular} &
  \begin{tabular}[c]{@{}c@{}}1.95\\ \footnotesize \color[HTML]{9B9B9B} (0.32)\end{tabular} &
  \begin{tabular}[c]{@{}c@{}}21.33\\ \footnotesize \color[HTML]{9B9B9B} (0.92)\end{tabular} &
  \multicolumn{1}{c|}{\begin{tabular}[c]{@{}c@{}}25.97\\ \footnotesize \color[HTML]{9B9B9B} (1.02)\end{tabular}} &
  \begin{tabular}[c]{@{}c@{}}73.64\\ \footnotesize \color[HTML]{9B9B9B} (3.31)\end{tabular} &
  \begin{tabular}[c]{@{}c@{}}7.30\\ \footnotesize \color[HTML]{9B9B9B} (1.02)\end{tabular} &
  \begin{tabular}[c]{@{}c@{}}\underline{100.00}\\ \footnotesize \color[HTML]{9B9B9B} (0.00)\end{tabular} &
  \begin{tabular}[c]{@{}c@{}}24.97\\ \footnotesize \color[HTML]{9B9B9B} (1.09)\end{tabular} \\
\rowcolor[HTML]{FFF5E6} 
\begin{tabular}[c]{@{}l@{}}\texttt{AAggFF-D}\\ (Proposed)\end{tabular} &
  \begin{tabular}[c]{@{}c@{}}\textbf{91.27}\\ \footnotesize \color[HTML]{9B9B9B} (0.07)\end{tabular} &
  \begin{tabular}[c]{@{}c@{}}\textbf{56.71}\\ \footnotesize \color[HTML]{9B9B9B} (0.08)\end{tabular} &
  \cellcolor[HTML]{FFF5E6}\begin{tabular}[c]{@{}c@{}}\underline{100.00}\\ \footnotesize \color[HTML]{9B9B9B} (0.00)\end{tabular} &
  \multicolumn{1}{c|}{\cellcolor[HTML]{FFF5E6}\begin{tabular}[c]{@{}c@{}}\textbf{7.54}\\ \footnotesize \color[HTML]{9B9B9B} (0.04)\end{tabular}} &
  \begin{tabular}[c]{@{}c@{}}\textbf{12.95}\\ \footnotesize \color[HTML]{9B9B9B} (0.39)\end{tabular} &
  \begin{tabular}[c]{@{}c@{}}\textbf{4.75}\\ \footnotesize \color[HTML]{9B9B9B} (0.76)\end{tabular} &
  \begin{tabular}[c]{@{}c@{}}\textbf{22.81}\\ \footnotesize \color[HTML]{9B9B9B} (1.36)\end{tabular} &
  \multicolumn{1}{c|}{\cellcolor[HTML]{FFF5E6}\begin{tabular}[c]{@{}c@{}}\textbf{22.59}\\ \footnotesize \color[HTML]{9B9B9B} (0.28)\end{tabular}} &
  \begin{tabular}[c]{@{}c@{}}\textbf{76.68}\\ \footnotesize \color[HTML]{9B9B9B} (0.80)\end{tabular} &
  \begin{tabular}[c]{@{}c@{}}\textbf{14.54}\\ \footnotesize \color[HTML]{9B9B9B} (2.58)\end{tabular} &
  \cellcolor[HTML]{FFF5E6}\begin{tabular}[c]{@{}c@{}}\underline{100.00}\\ \footnotesize \color[HTML]{9B9B9B} (0.00)\end{tabular} &
  \begin{tabular}[c]{@{}c@{}}\textbf{21.42}\\ \footnotesize \color[HTML]{9B9B9B} (0.81)\end{tabular} \\ \bottomrule
\end{tabular}%
}
\end{sidewaystable}

\newpage
\subsection{Connections to Accuracy Parity}
    \begin{table}[H]
\centering
    \begin{minipage}{.4\linewidth}
        \centering
        \caption[Accuracy parity gap of fair FL methods in cross-silo FL setting in Chapter~\ref{ch:aaggff}]
        {Accuracy parity gap in the cross-silo setting}
        \label{tab:ag_silo}
        \resizebox{\textwidth}{!}{%
            \begin{tabular}{!{}lccc!{}}
            \toprule
            \textbf{Dataset} &
              \textbf{Berka} &
              \textbf{MQP} &
              \textbf{ISIC} \\ \cmidrule(l){2-4} 
            \textbf{Method} &
              \multicolumn{3}{c}{$\Delta\text{AG } (\downarrow)$} \\ \midrule
            \begin{tabular}[c]{@{}l@{}}\texttt{FedAvg} \cite{fedavg}\end{tabular} &
              \begin{tabular}[c]{@{}c@{}}50.84\\ \footnotesize \color[HTML]{9B9B9B} (23.98)\end{tabular} &
              \begin{tabular}[c]{@{}c@{}}35.30\\ \footnotesize \color[HTML]{9B9B9B} (5.39)\end{tabular} &
              \begin{tabular}[c]{@{}c@{}}22.64\\ \footnotesize \color[HTML]{9B9B9B} (4.50)\end{tabular} \\
            \begin{tabular}[c]{@{}l@{}}\texttt{AFL} \cite{afl}\end{tabular} &
              \begin{tabular}[c]{@{}c@{}}50.98\\ \footnotesize \color[HTML]{9B9B9B} (23.78)\end{tabular} &
              \begin{tabular}[c]{@{}c@{}}34.26\\ \footnotesize \color[HTML]{9B9B9B} (5.16)\end{tabular} &
              \begin{tabular}[c]{@{}c@{}}25.16\\ \footnotesize \color[HTML]{9B9B9B} (8.01)\end{tabular} \\
            \begin{tabular}[c]{@{}l@{}}\texttt{q-FedAvg} \cite{qffl}\end{tabular} &
              \begin{tabular}[c]{@{}c@{}}50.43\\ \footnotesize \color[HTML]{9B9B9B} (22.15)\end{tabular} &
              \begin{tabular}[c]{@{}c@{}}39.16\\ \footnotesize \color[HTML]{9B9B9B} (7.13)\end{tabular} &
              \begin{tabular}[c]{@{}c@{}}37.69\\ \footnotesize \color[HTML]{9B9B9B} (5.52)\end{tabular} \\
            \begin{tabular}[c]{@{}l@{}}\texttt{TERM} \cite{term}\end{tabular} &
              \begin{tabular}[c]{@{}c@{}}49.60\\ \footnotesize \color[HTML]{9B9B9B} (23.74)\end{tabular} &
              \begin{tabular}[c]{@{}c@{}}36.07\\ \footnotesize \color[HTML]{9B9B9B} (6.93)\end{tabular} &
              \begin{tabular}[c]{@{}c@{}}15.19\\ \footnotesize \color[HTML]{9B9B9B} (9.26)\end{tabular} \\
            \begin{tabular}[c]{@{}l@{}}\texttt{FedMGDA} \cite{fedmgda}\end{tabular} &
              \begin{tabular}[c]{@{}c@{}}44.46\\ \footnotesize \color[HTML]{9B9B9B} (17.49)\end{tabular} &
              \begin{tabular}[c]{@{}c@{}}34.74\\ \footnotesize \color[HTML]{9B9B9B} (1.74)\end{tabular} &
              \begin{tabular}[c]{@{}c@{}}37.69\\ \footnotesize \color[HTML]{9B9B9B} (5.52)\end{tabular} \\
            \begin{tabular}[c]{@{}l@{}}\texttt{PropFair} \cite{propfair}\end{tabular} &
              \begin{tabular}[c]{@{}c@{}}49.05\\ \footnotesize \color[HTML]{9B9B9B} (23.78)\end{tabular} &
              \begin{tabular}[c]{@{}c@{}}37.38\\ \footnotesize \color[HTML]{9B9B9B} (4.35)\end{tabular} &
              \begin{tabular}[c]{@{}c@{}}32.99\\ \footnotesize \color[HTML]{9B9B9B} (9.60)\end{tabular} \\
            \rowcolor[HTML]{FFF5E6} 
            \begin{tabular}[c]{@{}l@{}}\texttt{AAggFF-S} \end{tabular} &
              \begin{tabular}[c]{@{}c@{}}\textbf{44.03}\\ \footnotesize \color[HTML]{9B9B9B} (17.55)\end{tabular} &
              \begin{tabular}[c]{@{}c@{}}\textbf{33.77}\\ \footnotesize \color[HTML]{9B9B9B} (3.31)\end{tabular} &
              \begin{tabular}[c]{@{}c@{}}\textbf{13.05}\\ \footnotesize \color[HTML]{9B9B9B} (2.23)\end{tabular} \\ \bottomrule
            \end{tabular}%
        }
    \end{minipage}
    \quad
    \begin{minipage}{0.5\linewidth}
        \centering
        \caption[Accuracy parity gap of fair FL methods in cross-device FL setting in Chapter~\ref{ch:aaggff}]
        {Accuracy parity gap in the cross-device setting}
        \label{tab:ag_device}
        \resizebox{\textwidth}{!}{%
            \begin{tabular}{!{}lccc!{}}
            \toprule
            \textbf{Dataset} &
              \textbf{CelebA} &
              \textbf{Reddit} &
              \textbf{\begin{tabular}[c]{@{}c@{}}Speech\\ Commands\end{tabular}} \\ \cmidrule(l){2-4} 
            \textbf{Method} &
              \multicolumn{3}{c}{$\Delta\text{AG } (\downarrow)$} \\ \midrule
            \begin{tabular}[c]{@{}l@{}}\texttt{FedAvg} \cite{fedavg}\end{tabular} &
              \begin{tabular}[c]{@{}c@{}}44.25\\ \footnotesize \color[HTML]{9B9B9B} (0.84)\end{tabular} &
              \begin{tabular}[c]{@{}c@{}}18.36\\ \footnotesize \color[HTML]{9B9B9B} (3.52)\end{tabular} &
              \begin{tabular}[c]{@{}c@{}}92.07\\ \footnotesize \color[HTML]{9B9B9B} (2.87)\end{tabular} \\
            \begin{tabular}[c]{@{}l@{}}\texttt{q-FedAvg} \cite{qffl}\end{tabular} &
              \begin{tabular}[c]{@{}c@{}}44.27\\ \footnotesize \color[HTML]{9B9B9B} (0.85)\end{tabular} &
              \begin{tabular}[c]{@{}c@{}}18.43\\ \footnotesize \color[HTML]{9B9B9B} (0.09)\end{tabular} &
              \begin{tabular}[c]{@{}c@{}}88.81\\ \footnotesize \color[HTML]{9B9B9B} (0.47)\end{tabular} \\
            \begin{tabular}[c]{@{}l@{}}\texttt{TERM} \cite{term}\end{tabular} &
              \begin{tabular}[c]{@{}c@{}}44.34\\ \footnotesize \color[HTML]{9B9B9B} (0.93)\end{tabular} &
              \begin{tabular}[c]{@{}c@{}}17.89\\ \footnotesize \color[HTML]{9B9B9B} (0.75)\end{tabular} &
              \begin{tabular}[c]{@{}c@{}}94.02\\ \footnotesize \color[HTML]{9B9B9B} (1.10)\end{tabular} \\
            \begin{tabular}[c]{@{}l@{}}\texttt{FedMGDA} \cite{fedmgda}\end{tabular} &
              \begin{tabular}[c]{@{}c@{}}51.40\\ \footnotesize \color[HTML]{9B9B9B} (2.59)\end{tabular} &
              \begin{tabular}[c]{@{}c@{}}\textbf{16.74}\\ \footnotesize \color[HTML]{9B9B9B} (0.43)\end{tabular} &
              \begin{tabular}[c]{@{}c@{}}90.35\\ \footnotesize \color[HTML]{9B9B9B} (2.90)\end{tabular} \\
            \begin{tabular}[c]{@{}l@{}}\texttt{PropFair} \cite{propfair}\end{tabular} &
              \begin{tabular}[c]{@{}c@{}}51.90\\ \footnotesize \color[HTML]{9B9B9B} (10.03)\end{tabular} &
              \begin{tabular}[c]{@{}c@{}}19.39\\ \footnotesize \color[HTML]{9B9B9B} (0.64)\end{tabular} &
              \begin{tabular}[c]{@{}c@{}}92.70\\ \footnotesize \color[HTML]{9B9B9B} (1.02)\end{tabular} \\
            \rowcolor[HTML]{FFF5E6} 
            \begin{tabular}[c]{@{}l@{}}\texttt{AAggFF-D} \end{tabular} &
              \begin{tabular}[c]{@{}c@{}}\textbf{43.29}\\ \footnotesize \color[HTML]{9B9B9B} (0.08)\end{tabular} &
              \begin{tabular}[c]{@{}c@{}}18.07\\ \footnotesize \color[HTML]{9B9B9B} (0.70)\end{tabular} &
              \begin{tabular}[c]{@{}c@{}}\textbf{85.46}\\ \footnotesize \color[HTML]{9B9B9B} (2.58)\end{tabular} \\ \bottomrule
            \end{tabular}%
        }
    \end{minipage}
\end{table}

    As discussed in~\cite{qffl}, the client-level fairness can be loosely connected to existing fairness notion, the \textit{accuracy parity}~\cite{accparity}.
    It is guaranteed if the accuracies in protected groups are equal to each other.
    While the accuracy parity requires \textit{equal} performances among specific groups having protected attributes~\cite{accparity,accparity2}, 
    this is too restrictive to be directly applied to FL settings, since each client cannot always be exactly corresponded to the concept of `a group', and each client's local distribution may not be partitioned by protected attributes in the federated system.

    With a relaxation of the original concept, we adopt the notion of accuracy parity for measuring the degree of the client-level fairness in the federated system, i.e., we simply regard the group as each client.
    As a metric, we adopt the accuracy parity gap ($\Delta\text{AG}$) proposed by~\cite{apgap,apgap2}, which is simply defined as an absolute difference between the performance of the best and the worst performing groups (clients).  
    The results are in Table~\ref{tab:ag_silo} and Table~\ref{tab:ag_device}.
    It can be said that the smaller the $\Delta\text{AG}$, the more degree of the accuracy parity fairness (and therefore the client-level fairness) is achieved. 

    It should be noted that strictly achieving the accuracy parity can sometimes require sacrifice in the average performance. 
    This is aligned with the result of Reddit dataset in Table~\ref{tab:ag_device}, where \texttt{FedMGDA}~\cite{fedmgda} achieved the smallest $\Delta\text{AG}$, while its average performance is only 10.58 in Table~\ref{tab:result_device}. 
    This is far lower than our proposed method’s average performance, 12.95.
    Except this case, \texttt{AAggFF} consistently shows the smallest $\Delta\text{AG}$ than other baseline methods, 
    which is important in the perspective of striking a good balance between overall utility and the client-level fairness.
    
\newpage
\subsection{Plug-and-Play Boosting}
\label{subsec:pnp}
    We additionally check if \texttt{AAggFF} can also boost other FL algorithms than \texttt{FedAvg}, 
    such as \texttt{FedAdam, FedAdagrad, FedYogi} \cite{adaptivefl} and \texttt{FedProx} \cite{fedprox}.
    Since the sequential decision making procedure required in \texttt{AAggFF} is about finding a good mixing coefficient, $\boldsymbol{p}$, this is orthogonal to the minimization of $\boldsymbol{\theta}$.
    Thus, our method can be easily integrated into existing methods with no special modification, in a plug-and-play manner.
    
    For the verification, we test with two more datasets, Heart \cite{heart} and TinyImageNet \cite{tinyimagenet}, each of which is suited for binary and multi-class classification (i.e., 200 classes in total).
    Since the Heart dataset is a part of FLamby benchmark \cite{flamby}, it has pre-defined $K=4$ clients.
    For the TinyImageNet dataset, we simulate statistical heterogeneity for $K=1,000$ clients using Dirichlet distribution with a concentration of $0.01$, following \cite{diri}.
    The results are in Table~\ref{tab:pnp}, where the upper cell represents the performance of a naive FL algorithm, 
    and the lower cell contains a performance of the FL algorithm with \texttt{AAggFF}.
    While the average performance remains comparable, the worst performance is consistently boosted in both cross-silo and cross-device settings.
    This underpins the efficacy and flexibility of \texttt{AAggFF}, which can strengthen the fairness perspective of existing FL algorithms.
    \begin{table}[H]
\centering
\caption[Experimental results on the plug-and-play boosting of \texttt{AAggFF}]{Improved performance of FL algorithms after being equipped with \texttt{AAggFF}}
\label{tab:pnp}
\resizebox{0.7\textwidth}{!}{%
\begin{tabular}{!{}lcccc!{}}
\toprule
\textbf{Dataset} &
  \multicolumn{2}{c}{\begin{tabular}[c]{@{}c@{}}\textbf{Heart}\\ (AUROC)\end{tabular}} &
  \multicolumn{2}{c}{\begin{tabular}[c]{@{}c@{}}\textbf{TinyImageNet}\\ (Acc. 5)\end{tabular}} \\ \cmidrule(l){2-5} 
\textbf{Method} &
  \begin{tabular}[c]{@{}c@{}}Avg.\\ ($\uparrow$)\end{tabular} &
  \begin{tabular}[c]{@{}c@{}}Worst\\ ($\uparrow$)\end{tabular} &
  \begin{tabular}[c]{@{}c@{}}Avg.\\ ($\uparrow$)\end{tabular} &
  \begin{tabular}[c]{@{}c@{}}Worst\\ 10\%($\uparrow$)\end{tabular} \\ \midrule
 &
  84.42 \color[HTML]{9B9B9B}(2.45) &
  \multicolumn{1}{c|}{65.22 \color[HTML]{9B9B9B}(9.78)} &
  85.93 \color[HTML]{9B9B9B}(0.77) &
  50.95 \color[HTML]{9B9B9B}(0.15) \\
\multirow{-2}{*}{\begin{tabular}[c]{@{}l@{}}\texttt{FedAvg} \cite{fedavg}\end{tabular}} &
  \cellcolor[HTML]{FFF5E6}\textbf{85.04}\color[HTML]{9B9B9B}(2.86) &
  \multicolumn{1}{c|}{\cellcolor[HTML]{FFF5E6}\textbf{66.56} \color[HTML]{9B9B9B}(10.81)} &
  \cellcolor[HTML]{FFF5E6}\textbf{86.66} \color[HTML]{9B9B9B}(0.63) &
  \cellcolor[HTML]{FFF5E6}\textbf{51.50} \color[HTML]{9B9B9B}(2.32) \\ \cmidrule(l){2-5} 
 &
  84.48\color[HTML]{9B9B9B}(0.25) &
  \multicolumn{1}{c|}{65.44\color[HTML]{9B9B9B}(9.77)} &
  \textbf{86.49} \color[HTML]{9B9B9B}(0.72) &
  51.64 \color[HTML]{9B9B9B}(2.07) \\
\multirow{-2}{*}{\begin{tabular}[c]{@{}l@{}}\texttt{FedProx} \cite{fedprox}\end{tabular}} &
  \cellcolor[HTML]{FFF5E6}\textbf{85.04}\color[HTML]{9B9B9B}(2.81) &
  \multicolumn{1}{c|}{\cellcolor[HTML]{FFF5E6}\textbf{66.67}\color[HTML]{9B9B9B}(10.71)} &
  \cellcolor[HTML]{FFF5E6}86.11 \color[HTML]{9B9B9B}(0.72) &
  \cellcolor[HTML]{FFF5E6}\textbf{52.29} \color[HTML]{9B9B9B}(2.16) \\ \cmidrule(l){2-5} 
 &
  84.34\color[HTML]{9B9B9B}(2.78) &
  \multicolumn{1}{c|}{65.44\color[HTML]{9B9B9B}(10.12)} &
  87.04 \color[HTML]{9B9B9B}(1.05) &
  53.54 \color[HTML]{9B9B9B}(2.63) \\
\multirow{-2}{*}{\begin{tabular}[c]{@{}l@{}}\texttt{FedAdam} \cite{adaptivefl}\end{tabular}} &
  \cellcolor[HTML]{FFF5E6}\textbf{84.84}\color[HTML]{9B9B9B}(2.85) &
  \multicolumn{1}{c|}{\cellcolor[HTML]{FFF5E6}\textbf{67.00}\color[HTML]{9B9B9B}(10.61)} &
  \cellcolor[HTML]{FFF5E6}\textbf{87.89} \color[HTML]{9B9B9B}(0.90) &
  \cellcolor[HTML]{FFF5E6}\textbf{55.92} \color[HTML]{9B9B9B}(2.25) \\ \cmidrule(l){2-5} 
 &
  84.29\color[HTML]{9B9B9B}(2.62) &
  \multicolumn{1}{c|}{65.67\color[HTML]{9B9B9B}(10.68)} &
  86.70 \color[HTML]{9B9B9B}(1.40) &
  52.81 \color[HTML]{9B9B9B}(3.50) \\
\multirow{-2}{*}{\begin{tabular}[c]{@{}l@{}}\texttt{FedYogi} \cite{adaptivefl}\end{tabular}} &
  \cellcolor[HTML]{FFF5E6}\textbf{84.86}\color[HTML]{9B9B9B}(3.01) &
  \multicolumn{1}{c|}{\cellcolor[HTML]{FFF5E6}\textbf{67.00}\color[HTML]{9B9B9B}(11.09)} &
  \cellcolor[HTML]{FFF5E6}\textbf{87.42} \color[HTML]{9B9B9B}(0.94) &
  \cellcolor[HTML]{FFF5E6}\textbf{54.76} \color[HTML]{9B9B9B}(3.11) \\ \cmidrule(l){2-5} 
 &
  84.61\color[HTML]{9B9B9B}(2.96) &
  \multicolumn{1}{c|}{65.67\color[HTML]{9B9B9B}(10.68)} &
  83.52 \color[HTML]{9B9B9B}(0.63) &
  45.09 \color[HTML]{9B9B9B}(1.79) \\
\multirow{-2}{*}{\begin{tabular}[c]{@{}l@{}}\texttt{FedAdagrad} \cite{adaptivefl}\end{tabular}} &
  \cellcolor[HTML]{FFF5E6}\textbf{85.09}\color[HTML]{9B9B9B}(2.91) &
  \multicolumn{1}{c|}{\cellcolor[HTML]{FFF5E6}\textbf{66.67}\color[HTML]{9B9B9B}(10.37)} &
  \cellcolor[HTML]{FFF5E6}\textbf{84.62} \color[HTML]{9B9B9B}(0.51) &
  \cellcolor[HTML]{FFF5E6}\textbf{47.88} \color[HTML]{9B9B9B}(1.95) \\ \bottomrule
\end{tabular}%
}
\end{table}

\newpage
\section{Conclusion}
    For improving the degree of client-level fairness in FL, 
    we first reveal the connection between existing fair FL methods and the OCO.
    To emphasize the sequential decision-making perspective, we propose improved designs and further specialize them into two practical settings: cross-silo FL \& cross-device FL.
    Our framework not only efficiently enhances a low-performing group of clients compared to existing baselines,
    but also maintains an acceptable average performance with theoretically guaranteed behaviors.
    It should also be noted that \texttt{AAggFF} requires \textit{no extra communication} and \textit{no added local computation}, 
    which are significant constraints for serving FL-based services.
    With this scalability, our method can also improve the fairness of the performance distributions of existing FL algorithms without much modification to their original mechanisms.
    By explicitly bringing the sequential decision-making scheme to the front, 
    we expect our work to open up new designs to promote the practicality and scalability of FL. 

    Our work suggests interesting future directions for better federated systems, which may also be a limitation of the current work.
    First, we can exploit side information (e.g., parameters of local updates) to not preserve all clients' mixing coefficients, 
    and filter out malicious signals for robustness. 
    For example, the former can be realized by adopting other decision-making schemes such as contextual Bayesian optimization \cite{cbo},
    and the latter can be addressed by clustered FL \cite{cfl1, cfl2} for a group-wise estimation of mixing coefficients.
    Both directions are promising and may improve the practicality of federated systems. 
    Furthermore, the FTRL objective can be replaced by the Follow-The-Perturbed-Leader (FTPL) \cite{ftpl}, 
    of which random perturbation in decision-making process can be directly linked to the differential privacy (DP \cite{dp}) guarantee \cite{ftpl2}, which is frequently considered for the cross-silo setting.
    Last but not least, further convergence analysis is required w.r.t. the parameter perspective along with mixing coefficients, e.g., using a bi-level optimization formulation.

\newpage
\section{Derivations \& Deferred Proofs}
\label{sec:proofs}
\subsection{Derivation of Mixing Coefficients from Existing Methods}
\label{app:unification}
    In this section, we provide details of the unification of existing methods in the OCO framework, introduced in Section~\ref{sec:oco_lang}.
    We assume full-client participation for derivation, 
    and we denote $n = \sum_{i=1}^K n_i$ as a total sample size for the brevity of notation.
    
    Suppose any FL algorithms follow the update formula in (\ref{eq:generic_update}), 
    where we define $\boldsymbol{p}^{(t+1)}$ as a \textit{mixing coefficient} vector discussed in Section~\ref{sec:oco_lang}.
    \begin{equation}
    \label{eq:generic_update}
    \begin{gathered}
        \boldsymbol{\theta}^{(t+1)}
        \leftarrow
        \boldsymbol{\theta}^{(t)}
        -
        \left(
        \sum_{i=1}^K
        {p}^{(t+1)}_i
        \left(
        \boldsymbol{\theta}^{(t)}
        -
        \boldsymbol{\theta}^{(t+1)}_i
        \right)
        \right),
    \end{gathered}
    \end{equation}
    where $\boldsymbol{\theta}^{(t)}$ is a global model in a previous round $t$, 
    $\boldsymbol{\theta}^{(t+1)}_i$ is a local update from $i$-th client starting from $\boldsymbol{\theta}^{(t)}$, 
    and $\boldsymbol{\theta}^{(t+1)}$ is a new global model updated by averaging local updates with corresponding mixing coefficient $p^{(t+1)}_i$. 

\paragraph{\texttt{FedAvg}~\cite{fedavg}} 
    The update of a global model from \texttt{FedAvg} is defined as follows.
    \begin{equation}
    \begin{gathered}
        \boldsymbol{\theta}^{(t+1)}
        \leftarrow
        \boldsymbol{\theta}^{(t)}
        -
        \left(
        \sum_{i=1}^K
        \frac{n_i}{n}
        \left(
        \boldsymbol{\theta}^{(t)}
        -
        \boldsymbol{\theta}^{(t+1)}_i
        \right)
        \right),
    \end{gathered}
    \end{equation}
    where $n_i$ is the sample size of client $i$.
    Thus, we can regard ${p}^{(t+1)}_i \propto n_i$ in \texttt{FedAvg}.

\paragraph{\texttt{AFL \& q-FedAvg}~\cite{afl, qffl}} 
    The objective of \texttt{AFL} is a minimax objective defined as follows.
    \begin{equation}
    \begin{gathered}
        \min_{\boldsymbol{\theta}\in\mathbb{R}^d} \max_{\boldsymbol{u}\in\Delta_{K-1}}
        \sum_{i=1}^K u_i F_i\left( \boldsymbol{\theta} \right),
    \end{gathered}
    \end{equation}
    which is later subsumed by \texttt{q-FedAvg} as its special case for the algorithm-specific constant $q$, where $q\rightarrow0$.
    
    The objective of \texttt{q-FedAvg} is therefore defined with a nonnegative constant $q$ as follows.
    \begin{equation}
    \begin{gathered}
    \label{eq:q-ffl}
        \min_{\boldsymbol{\theta}\in\mathbb{R}^d}
        \sum_{i=1}^K\frac{1}{q+1} \frac{n_i}{n} F_i^{q+1} \left(\boldsymbol{\theta} \right)\\
        =
        \min_{\boldsymbol{\theta}\in\mathbb{R}^d}
        \sum_{i=1}^K \frac{n_i}{n} \tilde{F}_i\left(\boldsymbol{\theta}\right),
    \end{gathered}
    \end{equation}
    which is reduced to \texttt{FedAvg} when $q=0$.
    
    The update of a global model from (\ref{eq:q-ffl}) has been proposed in the form of a Newton style update 
    by assuming $L$-Lipschitz continuous gradient of each local objective (i.e., \texttt{q-FedSGD}) \cite{qffl}.
    \begin{equation}
    \begin{gathered}
        \boldsymbol{\theta}^{(t+1)}
        =
        \boldsymbol{\theta}^{(t)}
        -
        \left(
        \sum_{j=1}^K \frac{n_j}{n} \nabla^2 \tilde{F}_j \left(\boldsymbol{\theta}^{(t)}\right)
        \right)^{-1}
        \sum_{i=1}^K \frac{n_i}{n} \nabla\tilde{F}_i \left(\boldsymbol{\theta}^{(t)} \right)\\
        \preceq
        \boldsymbol{\theta}^{(t)}
        -
        \left(
        \sum_{j=1}^K
        \frac{n_j}{n} L_{q,j}\boldsymbol{I}
        \right)^{-1}
        \sum_{i=1}^K
        \frac{n_i}{n} {F}_i^q\left(\boldsymbol{\theta}^{(t)}\right)
        \nabla{F}_i\left(\boldsymbol{\theta}^{(t)}\right),
    \end{gathered}
    \end{equation}
    where $L_{q,i} = q F_i^{q-1}\left(\boldsymbol{\theta}^{(t)}\right) \Vert\nabla F_i\left(\boldsymbol{\theta}^{(t)}\right) \Vert^2 
    + L F_i^q\left(\boldsymbol{\theta}^{(t)}\right)$ is an upper bound of the local Lipschitz gradient of $\tilde{F}_i\left(\boldsymbol{\theta}^{(t)}\right)$ (see Lemma 3 of \cite{qffl}).
    
    This can be extended to \texttt{q-FedAvg} by replacing $\nabla{F}_i\left(\boldsymbol{\theta}^{(t)}\right)$ into $L\left(\boldsymbol{\theta}^{(t)} - \boldsymbol{\theta}^{(t+1)}_i\right)$.
    To sum up, the update formula of a global model from \texttt{q-FedAvg} (including \texttt{AFL} as a special case) is as follows.
    \begin{equation}
    \begin{gathered}
        \boldsymbol{\theta}^{(t+1)}
        \propto
        \boldsymbol{\theta}^{(t)}
        -
        \left(
        \sum_{i=1}^K
        { \frac{ \frac{n_i}{n} L {F}_i^q(\boldsymbol{\theta}^{(t)}) }
        { \sum_{j=1}^K \frac{n_j}{n} L_{q,j} } }
        \left(
        \boldsymbol{\theta}^{(t)}
        -
        \boldsymbol{\theta}^{(t+1)}_i\right)
        \right),
    \end{gathered}
    \end{equation}
    which implies $p_i^{(t+1)} \propto { n_i {F}_i^q\left(\boldsymbol{\theta}^{(t)}\right) }$.

\paragraph{\texttt{TERM}~\cite{term}} 
    The objective of \texttt{TERM} is dependent upon a hyperparameter, 
    a \textit{tilting} constant $\lambda\in\mathbb{R}$.
    \begin{equation}
    \begin{gathered}
        \min_{\boldsymbol{\theta}\in\mathbb{R}^d}
        \frac{1}{\lambda} \log \left( \sum_{i=1}^K \frac{n_i}{n} \exp\left({\lambda {F}_i(\boldsymbol{\theta}^{(t)})}\right) \right)
    \end{gathered}
    \end{equation}
    
    The corresponding update formula is given as follows.
    \begin{equation}
    \begin{gathered}
        \boldsymbol{\theta}^{(t+1)}
        =
        \boldsymbol{\theta}^{(t)}
        -
        \left(
        \sum_{i=1}^K
        \frac{
        \left(n_i / n\right) \exp\left(\lambda {F}_i\left(\boldsymbol{\theta}^{(t)}\right)\right)
        }
        {
        \sum_{j=1}^K 
        \left(n_j / n\right) \exp\left(\lambda {F}_j\left(\boldsymbol{\theta}^{(t)}\right)\right)
        }
        \left(
        \boldsymbol{\theta}^{(t)}
        -
        \boldsymbol{\theta}^{(t+1)}_i\right)
        \right)
    \end{gathered}
    \end{equation}
    
    From the update formula, we can conclude that $p_i^{(t+1)} \propto n_i \exp\left(\lambda {F}_i\left(\boldsymbol{\theta}^{(t)}\right)\right)$.

\paragraph{\texttt{PropFair}~\cite{propfair}} 
    The objective of \texttt{PropFair} is to maximize Nash social welfare 
    by regarding a negative local loss as an achieved utility as follows.
    \begin{equation}
    \begin{gathered}
        \min_{\boldsymbol{\theta}\in\mathbb{R}^d} -\sum_{i=1}^K p_i\log\left(M - F_i\left(\boldsymbol{\theta}\right)\right),
    \end{gathered}
    \end{equation}
    where $M\geq1$ is a problem-specific constant.
    
    The corresponding update formula is given as follows.
    \begin{equation}
    \begin{gathered}
        \boldsymbol{\theta}^{(t+1)}
        \propto
        \boldsymbol{\theta}^{(t)}
        +
        \left(
        \sum_{i=1}^K
        \frac{n_i}{n}\nabla\log\left(M - F_i\left(\boldsymbol{\theta}^{(t)}\right)\right)
        \right)
        =
        \boldsymbol{\theta}^{(t)}
        -
        \left(
        \sum_{i=1}^K
        \frac{n_i}{n}\frac{
        \nabla F_i\left(\boldsymbol{\theta}^{(t)}\right)
        }{
        M-F_i\left(\boldsymbol{\theta}^{(t)}\right)
        }
        \right).
    \end{gathered}
    \end{equation}
    
    Similar to \texttt{q-FedAvg}, by replacing the gradient $\nabla{F}_i\left(\boldsymbol{\theta}^{(t)}\right)$ 
    into $\left(\boldsymbol{\theta}^{(t)} - \boldsymbol{\theta}^{(t+1)}_i\right)$, the update formula finally becomes:
    \begin{equation}
    \begin{gathered}
        \boldsymbol{\theta}^{(t+1)}
        \propto
        \boldsymbol{\theta}^{(t)}
        -
        \left(
        \sum_{i=1}^K
        \frac{ n_i/n }{ M-F_i\left(\boldsymbol{\theta}^{(t)}\right) }
        \left(\boldsymbol{\theta}^{(t)} - \boldsymbol{\theta}^{(t+1)}_i\right)
        \right),
    \end{gathered}
    \end{equation}
    which implies $p_i^{(t+1)} \propto \frac{n_i}{M - F_i\left(\boldsymbol{\theta}^{(t)}\right)}$.

\newpage
\subsection{Technical Lemmas}
\label{app:proofs}
    In this section, we provide technical lemmas and proofs (including deferred ones in the main text) required for proving Theorem~\ref{thm:crosssilo}, Theorem~\ref{thm:crossdevice_full}, and Corollary~\ref{cor:crossdevice_partial}.
    
\subsection{Strict Convexity of Decision Loss}
    \begin{lemma}
    \label{lemma:convexity}
        For all $t\in[T]$, the decision loss $\ell^{(t)}$ defined in (\ref{eq:decision_loss}) satisfies following for $\gamma\in(0,1)$, 
        i.e., the decision loss is a strictly convex function of its first argument.
    \begin{equation}
    \begin{gathered}
        \ell^{(t)}\left(\gamma \boldsymbol{p} + (1-\gamma) \boldsymbol{q}\right)
        <
        \gamma \ell^{(t)}\left(\boldsymbol{p}\right) + (1-\gamma) \ell^{(t)}\left(\boldsymbol{q}\right),
        \forall \boldsymbol{p}, \boldsymbol{q} \in \Delta_{K-1}, \boldsymbol{p} \neq \boldsymbol{q}.
    \end{gathered}
    \end{equation}
    \end{lemma}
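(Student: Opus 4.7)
The plan is to reduce the claim to the well-known strict convexity of the scalar function $f(x) = -\log(1+x)$ on $(-1,\infty)$ composed with an affine map. First I would verify that $f$ is strictly convex on its domain by the elementary computation $f''(x) = (1+x)^{-2} > 0$. Then, since $r_i^{(t)} \in [C_1, C_2]$ with $0 < C_1$ (cf.\ Lemma~\ref{lemma:lipschitz} and its hypothesis), every $\boldsymbol{p} \in \Delta_{K-1}$ satisfies $\langle \boldsymbol{p}, \boldsymbol{r}^{(t)} \rangle \geq C_1 > -1$, so the affine image $\boldsymbol{p} \mapsto 1 + \langle \boldsymbol{p}, \boldsymbol{r}^{(t)} \rangle$ lies safely inside the region where $f$ is strictly convex, and hence $\ell^{(t)}(\boldsymbol{p}) = f(\langle \boldsymbol{p}, \boldsymbol{r}^{(t)} \rangle)$ is at least convex on $\Delta_{K-1}$.

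The key step is the strict Jensen inequality. Setting $a = \langle \boldsymbol{p}, \boldsymbol{r}^{(t)}\rangle$ and $b = \langle \boldsymbol{q}, \boldsymbol{r}^{(t)}\rangle$, linearity of the inner product gives
\begin{equation*}
\ell^{(t)}(\gamma \boldsymbol{p} + (1-\gamma)\boldsymbol{q}) = f(\gamma a + (1-\gamma) b),
\end{equation*}
so whenever $a \neq b$, strict convexity of $f$ yields $f(\gamma a + (1-\gamma) b) < \gamma f(a) + (1-\gamma) f(b) = \gamma \ell^{(t)}(\boldsymbol{p}) + (1-\gamma) \ell^{(t)}(\boldsymbol{q})$, which is exactly the desired inequality.

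The main obstacle is the degenerate case $\boldsymbol{p} \neq \boldsymbol{q}$ yet $\langle \boldsymbol{p} - \boldsymbol{q}, \boldsymbol{r}^{(t)}\rangle = 0$, where $a = b$ and the scalar Jensen inequality collapses to equality; in that case the Hessian $\nabla^2 \ell^{(t)}(\boldsymbol{p}) = (1 + \langle \boldsymbol{p}, \boldsymbol{r}^{(t)}\rangle)^{-2}\, \boldsymbol{r}^{(t)}(\boldsymbol{r}^{(t)})^\top$ is only rank-one positive semidefinite. I would handle this by appealing to the CDF-driven response transformation in Definition~\ref{def:resp_cdf}, which maps the \emph{distinct} normalized local losses through a strictly monotonic CDF, so with probability one the entries of $\boldsymbol{r}^{(t)}$ are pairwise distinct and, in particular, no non-zero vector of the form $\boldsymbol{p} - \boldsymbol{q}$ constrained by $\sum_i (p_i - q_i) = 0$ is orthogonal to $\boldsymbol{r}^{(t)}$ unless it is identically zero. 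This rules out the degenerate direction on $\Delta_{K-1}$ and completes the strict convexity argument. An auxiliary route, if one wants a distribution-free statement, is to interpret the conclusion as strict convexity in the interior of $\Delta_{K-1}$ along every direction not orthogonal to $\boldsymbol{r}^{(t)}$, which is exactly what is needed downstream for the regret analyses of \texttt{AAggFF-S} and \texttt{AAggFF-D}.
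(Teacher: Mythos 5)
Your core argument is the same as the paper's: both proofs rewrite $1+\langle \gamma\boldsymbol{p}+(1-\gamma)\boldsymbol{q},\boldsymbol{r}^{(t)}\rangle$ as the convex combination $\gamma(1+\langle\boldsymbol{p},\boldsymbol{r}^{(t)}\rangle)+(1-\gamma)(1+\langle\boldsymbol{q},\boldsymbol{r}^{(t)}\rangle)$ and invoke strict convexity of $-\log$. You go further than the paper by noticing that this only yields a \emph{strict} inequality when $\langle\boldsymbol{p},\boldsymbol{r}^{(t)}\rangle\neq\langle\boldsymbol{q},\boldsymbol{r}^{(t)}\rangle$; the paper's own proof silently ignores this case, so the lemma as literally stated is not quite right, and your diagnosis of the degenerate direction is a genuine and correct observation. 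The decision loss is a strictly convex scalar function of the one-dimensional projection $\langle\boldsymbol{p},\boldsymbol{r}^{(t)}\rangle$, so it cannot be strictly convex on $\Delta_{K-1}$ for $K\geq 3$ no matter what $\boldsymbol{r}^{(t)}$ is.

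However, your proposed patch does not work. Pairwise distinctness of the entries of $\boldsymbol{r}^{(t)}$ does not prevent a nonzero mean-zero vector from being orthogonal to $\boldsymbol{r}^{(t)}$: the subspace $\{\boldsymbol{v}:\sum_i v_i=0,\ \langle\boldsymbol{v},\boldsymbol{r}^{(t)}\rangle=0\}$ has dimension $K-2$, which is nontrivial for every $K\geq 3$ regardless of how the CDF transformation spreads the responses. For instance, with $K=3$ and $\boldsymbol{r}^{(t)}=(1,2,3)$ the direction $\boldsymbol{v}=(1,-2,1)$ satisfies both constraints, so one can pick $\boldsymbol{p}\neq\boldsymbol{q}$ in the simplex with $\boldsymbol{p}-\boldsymbol{q}\propto\boldsymbol{v}$ and Jensen collapses to equality. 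The correct resolution is your "auxiliary route": the loss is (non-strictly) convex on $\Delta_{K-1}$ and strictly convex only along directions not orthogonal to $\boldsymbol{r}^{(t)}$, and this weaker property is all that the downstream results actually use (Corollary~\ref{corollary:lin_loss} needs only convexity, and the ONS analysis relies on $1$-exp-concavity via Lemma~\ref{lemma:exp_concave} rather than strict convexity). You should drop the distinct-entries argument and state the restricted form directly.
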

    
    \begin{proof}
    From the left-hand side, we have
    \begin{equation}
    \begin{split}
        &\ell^{(t)}\left(\gamma \boldsymbol{p} + (1-\gamma) \boldsymbol{q}\right)\\
        =
        &-\log\left(1+ \langle \gamma \boldsymbol{p} + (1-\gamma) \boldsymbol{q}, \boldsymbol{r}^{(t)} \rangle\right) \\
        =
        &-\log\left(
        1+ \langle \gamma \boldsymbol{p}, \boldsymbol{r}^{(t)} \rangle + \langle (1-\gamma) \boldsymbol{q}, \boldsymbol{r}^{(t)} \rangle 
        \right) \\
        =
        &-\log \left( 
        \gamma (1 + \langle \boldsymbol{p}, \boldsymbol{r}^{(t)} \rangle) 
        + (1-\gamma)(1 + \langle  \boldsymbol{q}, \boldsymbol{r}^{(t)} \rangle) 
        \right).
    \end{split}
    \end{equation}
    
    Since the negative of logarithm is strictly convex, the last term becomes
    \begin{equation}
    \begin{gathered}
        -\log \left( 
        \gamma (1 + \langle \boldsymbol{p}, \boldsymbol{r}^{(t)} \rangle) 
        + (1-\gamma)(1 + \langle  \boldsymbol{q}, \boldsymbol{r}^{(t)} \rangle) 
        \right)
        < \gamma \left( -\log (1 + \langle \boldsymbol{p}, \boldsymbol{r}^{(t)} \rangle) \right)
        + (1-\gamma) \left( -\log(1 + \langle  \boldsymbol{q}, \boldsymbol{r}^{(t)} \rangle) \right),
    \end{gathered}
    \end{equation}
    which satisfies the definition of the strict convexity, thereby concludes the proof.
    \end{proof}

\newpage
\subsection{Lipschitz Continuity of Decision Loss (Lemma~\ref{lemma:lipschitz})}
    From the definition of the Lipschitz continuity w.r.t. $\Vert\cdot\Vert$, 
    we need to check if the decision loss $\ell^{(t)}$ satisfies following inequality for the constant $L_\infty$.
    \begin{equation}
    \begin{gathered}
        \left\vert \ell^{(t)} \left(\boldsymbol{p}\right) - \ell^{(t)} \left(\boldsymbol{q}\right)\right\vert 
        \leq L_\infty \left\Vert \boldsymbol{p} - \boldsymbol{q} \right\Vert_\infty.
    \end{gathered}
    \end{equation}
    
    \begin{proof}
    From Lemma~\ref{lemma:convexity}, we have the following inequality from the convexity of the decision loss.
    \begin{equation}
    \begin{split}
        &\left\vert \ell^{(t)} \left(\boldsymbol{p}\right) - \ell^{(t)} \left(\boldsymbol{q}\right) \right\vert
        \leq
        \left\vert \langle \nabla \ell^{(t)} \left(\boldsymbol{p}\right), \boldsymbol{p} - \boldsymbol{q} \rangle \right\vert\\
        &=
        \left\vert
        - \frac{ \langle \boldsymbol{p}-\boldsymbol{q}, \boldsymbol{r}^{(t)} \rangle }
        { 1 +\langle \boldsymbol{p}, \boldsymbol{r}^{(t)} \rangle }    
        \right\vert\\
        &=
        \frac{1}{
        1 +\langle
        \boldsymbol{p},
        \boldsymbol{r}^{(t)}
        \rangle}
        \left\vert
        \left\langle \boldsymbol{q}, \boldsymbol{r}^{(t)} \right\rangle
        - \left\langle \boldsymbol{p}, \boldsymbol{r}^{(t)} \right\rangle
        \right\vert
    \end{split}
    \end{equation}
    
    Setting the denominator to be the minimum value, $\left\langle \boldsymbol{p}, \boldsymbol{r}^{(t)} \right\rangle$ is $C_1$,
    we have the upper bound as follows.
    \begin{equation}
    \begin{split}
        &\frac{1}{
        1 +\left\langle
        \boldsymbol{p},
        \boldsymbol{r}^{(t)}
        \right\rangle}
        \left\vert
        \left\langle \boldsymbol{q}, \boldsymbol{r}^{(t)} \right\rangle
        - \left\langle \boldsymbol{p}, \boldsymbol{r}^{(t)} \right\rangle
        \right\vert\\
        &\leq
        \frac{1}{
        1 +\left\langle
        \boldsymbol{p},
        \boldsymbol{r}^{(t)}
        \right\rangle}
        \max\left({
        \left\langle \boldsymbol{q}, \boldsymbol{r}^{(t)} \right\rangle, 
        \left\langle \boldsymbol{p}, \boldsymbol{r}^{(t)} \right\rangle
        }\right)\\
        &\leq
        \frac{1}{
        1 + C_1}
        \max\left({ 
        \left\langle \boldsymbol{q}, \boldsymbol{r}^{(t)} \right\rangle,
        C_1
        }\right),
    \end{split}
    \end{equation}
    where the first inequality is from the fact that both $\langle \boldsymbol{p}, \boldsymbol{r}^{(t)} \rangle$ and $\langle \boldsymbol{q}, \boldsymbol{r}^{(t)} \rangle$ are nonnegative,
    and the second inequality is due to the minimized denominator achieving the upper bound. 
    
    Since $\langle \boldsymbol{q}, \boldsymbol{r}^{(t)} \rangle$ can achieve its maximum as $C_2$,
    we can further bound as follows.
    \begin{equation}
    \begin{gathered}
        \frac{1}{
        1 + C_1}
        \max({
        \langle \boldsymbol{q}, \boldsymbol{r}^{(t)} \rangle,
        C_1
        })
        \leq
        \frac{1}{
        1 + C_1}
        \max({
        C_2, 
        C_1
        })
        =
        \frac{C_2}{
        1 + C_1}
    \end{gathered}
    \end{equation}
    Finally, using the fact that $\Vert \boldsymbol{p} - \boldsymbol{q} \Vert_\infty=\max_i \vert p_i - q_i \vert = 1$,
    we can conclude the statement by setting $L_\infty = \frac{C_2}{1 + C_1}$.
    \end{proof}

\newpage
\subsection{Unbiasedness of Doubly Robust Estimator (Lemma~\ref{lemma:unbiased_resp})}
    \begin{proof}
        Denote the client sampling probability $C\in[0,1]$ in time $t$ as $P(i\in S^{(t)})=C$.
        Taking expectation on the doubly robust estimator of partially observed response defined in (\ref{eq:dr_response}), we have
    \begin{equation}
    \begin{split}
        &\mathbb{E} \left[ \breve{r}^{(t)}_i \right]
        = 
        \mathbb{E} \left[ \bigg(1 - \frac{\mathbb{I}(i \in S^{(t)})}{C}\bigg)\mathrm{\bar{r}}^{(t)} \right] 
        + 
        \mathbb{E} \left[ \frac{\mathbb{I}(i \in S^{(t)})}{C}{r}^{(t)}_i \right] \\
        &=
        \bigg(1 - \frac{ \mathbb{E}[ \mathbb{I}(i \in S^{(t)}) ] }{C} \bigg)\mathrm{\bar{r}}^{(t)}  
        + 
        \frac{\mathbb{E} \left[ \mathbb{I}(i \in S^{(t)}) \right] }{ C }{r}^{(t)}_i\\
        &=
        \bigg(1 - \frac{ P(i\in S^{(t)}) }{C} \bigg)\mathrm{\bar{r}}^{(t)}  
        + 
        \frac{P(i\in S^{(t)}) }{ C }{r}^{(t)}_i\\
        &= {r}^{(t)}_i,
    \end{split}
    \end{equation}    
        where $\mathbb{I}(\cdot)$ is an indicator function.
    
        Note that the randomness of the doubly robust estimator comes from the random sampling of client indices $i\in S^{(t)}$ in round $t$,
        thus the expectation is with respect to $i\in S^{(t)}$.
        Thus, we can conclude that $\mathbb{E} \left[ \breve{\boldsymbol{r}}^{(t)} \right] = \boldsymbol{r}^{(t)}$.
        See also \cite{doublyrobust2, doublyrobust3}.
    \end{proof}

\newpage
\subsection{Unbiasedness of Linearly Approximated Gradient (Lemma~\ref{lemma:linearized_grad})}
    \begin{proof}
        The gradient of a decision loss in terms of a response, 
        $\boldsymbol{g}\equiv\mathrm{\mathbf{h}}(\boldsymbol{r}) 
        = 
        [h_1(\boldsymbol{r}),...,h_K(\boldsymbol{r})]^\top 
        = -\frac{\boldsymbol{r}}{1 +\langle \boldsymbol{p}, \boldsymbol{r} \rangle}$ can be linearly approximated at reference 
        $\boldsymbol{r}_0$ as follows.
    \begin{equation}
    \label{app:eq_lin_grad}
    \begin{gathered}
        \tilde{\mathrm{\mathbf{h}}}(\boldsymbol{r})
        =
        {\mathrm{\mathbf{h}}}(\boldsymbol{r}_0)
        +\mathrm{\mathbf{J}}_{\mathrm{\mathbf{h}}}(\boldsymbol{r}_0)(\boldsymbol{r}-\boldsymbol{r}_0)
    \end{gathered}
    \end{equation}
    
        The Jacobian $\mathrm{\mathbf{J}}_{\mathrm{\mathbf{h}}}(\boldsymbol{r})\in\mathbb{R}^{K\times K}$ is defined as follows.
    \begin{equation}
    \label{app:eq_lin_grad_jacobian}
    \begin{split}
        \mathrm{\mathbf{J}}_{\mathrm{\mathbf{h}}}(\boldsymbol{r})
        &=
        \left[
        \frac{\partial\mathrm{\mathbf{h}}}{\partial r_1},
        ...,
        \frac{\partial\mathrm{\mathbf{h}}}{\partial r_K}
        \right] \\
        &=\left[
        \begin{array}{ccc}
        \frac{\partial h_1}{\partial r_1} & \cdots & \frac{\partial h_1}{\partial r_K} \\
        \vdots & \ddots & \vdots \\
        \frac{\partial h_K}{\partial r_1} & \cdots & \frac{\partial h_K}{\partial r_K}
        \end{array}
        \right]\\
        &=
        \left[
        \begin{array}{cccc}
        -\frac{1}{1 +\langle
        \boldsymbol{p},
        \boldsymbol{r}
        \rangle} + \frac{p_1r_1}{(1 +\langle
        \boldsymbol{p},
        \boldsymbol{r}
        \rangle)^2} & 
        \frac{p_2 r_1}{(1 +\langle
        \boldsymbol{p},
        \boldsymbol{r}
        \rangle)^2} & 
        \cdots &
        \frac{p_K r_1}{(1 +\langle
        \boldsymbol{p},
        \boldsymbol{r}
        \rangle)^2} \\
        \frac{p_1r_2}{(1 +\langle
        \boldsymbol{p},
        \boldsymbol{r}
        \rangle)^2} & 
        -\frac{1}{1 +\langle
        \boldsymbol{p},
        \boldsymbol{r}
        \rangle} + \frac{p_2r_2}{(1 +\langle
        \boldsymbol{p},
        \boldsymbol{r}
        \rangle)^2} & 
        \cdots &
        \frac{p_K r_2}{(1 +\langle
        \boldsymbol{p},
        \boldsymbol{r}
        \rangle)^2} \\
        \vdots & 
        \vdots & 
        \ddots & 
        \vdots \\
        \frac{p_1r_K}{(1 +\langle
        \boldsymbol{p},
        \boldsymbol{r}
        \rangle)^2} & 
        \frac{p_2r_K}{(1 +\langle
        \boldsymbol{p},
        \boldsymbol{r}
        \rangle)^2} & 
        \cdots &
        -\frac{1}{1 +\langle
        \boldsymbol{p},
        \boldsymbol{r}
        \rangle} + \frac{p_K r_K}{(1 +\langle
        \boldsymbol{p},
        \boldsymbol{r}
        \rangle)^2} 
        \end{array}
        \right] \\
        &=
        -\frac{1}{1 +\langle
        \boldsymbol{p},
        \boldsymbol{r}
        \rangle} \boldsymbol{I}_K
        +
        \frac{1}{(1 +\langle
        \boldsymbol{p},
        \boldsymbol{r}
        \rangle)^2} \boldsymbol{r}
        \boldsymbol{p}^\top
    \end{split}
    \end{equation}
    
    Plugging (\ref{app:eq_lin_grad_jacobian}) into (\ref{app:eq_lin_grad}) with respect to arbitrary reference $\boldsymbol{r}_0$,
    we have a linearized gradient of a decision loss as follows.
    \begin{equation}
    \begin{gathered}
    \label{eq:lin_grad_formula}
        \tilde{\boldsymbol{g}}
        \triangleq
        \tilde{\mathrm{\mathbf{h}}}(\boldsymbol{r})
        =
        -\frac{\boldsymbol{r}_0}{1 +\langle
        \boldsymbol{p},
        \boldsymbol{r}_0
        \rangle}
        -\frac{\left(\boldsymbol{r} - \boldsymbol{r}_0\right)}
        {1 +\langle
        \boldsymbol{p},
        \boldsymbol{r}_0
        \rangle}
        +
        \frac{\boldsymbol{r}_0 \boldsymbol{p}^\top (\boldsymbol{r} - \boldsymbol{r}_0)}{(1 +\langle
        \boldsymbol{p},
        \boldsymbol{r}_0
        \rangle)^2}\\
        =
        -\frac{\boldsymbol{r}}{1 +\langle
        \boldsymbol{p},
        \boldsymbol{r}_0
        \rangle}
        +
        \frac{\boldsymbol{r}_0 \boldsymbol{p}^\top (\boldsymbol{r} - \boldsymbol{r}_0)}{(1 +\langle
        \boldsymbol{p},
        \boldsymbol{r}_0
        \rangle)^2}.
    \end{gathered}
    \end{equation}

    From the statement of Lemma~\ref{lemma:linearized_grad}, 
    plugging the doubly robust estimator of the partially observed response, $\breve{\boldsymbol{r}}$ from Lemma~\ref{lemma:unbiased_resp} into above, we have gradient estimate $\breve{\boldsymbol{g}}$ as follows.
    \begin{equation}
    \begin{gathered}
        \breve{\boldsymbol{g}}
        =
        \tilde{\mathrm{\mathbf{h}}}(\breve{\boldsymbol{r}})
        =
        -\frac{{\breve{\boldsymbol{r}}}}
        {1 +\langle
        \boldsymbol{p},
        \boldsymbol{r}_0
        \rangle}
        +
        \frac{\boldsymbol{r}_0 \boldsymbol{p}^\top (\breve{\boldsymbol{r}} - \boldsymbol{r}_0)}{(1 +\langle
        \boldsymbol{p},
        \boldsymbol{r}_0
        \rangle)^2}.
    \end{gathered}
    \end{equation}
    
    Taking an expectation, we have
    \begin{equation}
    \begin{gathered}
        \mathbb{E}\left[ \breve{\boldsymbol{g}} \right]
        =
        \mathbb{E}\left[ \tilde{\mathrm{\mathbf{h}}}(\breve{\boldsymbol{r}}) \right]
        =
        -\frac{ \mathbb{E}\left[ \breve{\boldsymbol{r}} \right] }
        {1 +\langle
        \boldsymbol{p},
        \boldsymbol{r}_0
        \rangle}
        +
        \frac{
        \boldsymbol{r}_0 \boldsymbol{p}^\top 
        ( \mathbb{E}\left[ \breve{\boldsymbol{r}} \right] - \boldsymbol{r}_0  ) 
        }
        { (1 +\langle \boldsymbol{p},  \boldsymbol{r}_0 \rangle)^2 }
        =
        -\frac{ \boldsymbol{r} }
        {1 +\langle
        \boldsymbol{p},
        \boldsymbol{r}_0
        \rangle}
        +
        \frac{
        \boldsymbol{r}_0 \boldsymbol{p}^\top 
        ( \boldsymbol{r} - \boldsymbol{r}_0 ) 
        }
        { (1 +\langle \boldsymbol{p},  \boldsymbol{r}_0 \rangle)^2 }\\
        = \tilde{\mathrm{\mathbf{h}}}({\boldsymbol{r}}) 
        = \tilde{\boldsymbol{g}}
        \approx 
        \boldsymbol{g}.
    \end{gathered}
    \end{equation}
    \end{proof}

\newpage
\subsection{Supportive Tools for the Proof of Main Theorems}
\subsubsection{Lipschitz Continuity of Linearly Approximated Gradient from Doubly Robust Estimator} 
    \begin{lemma}
    \label{lemma:lipschitz_lin_grad}
        Denote $\breve{\boldsymbol{g}}^{(t)}$ as the linearized gradient calculated from the doubly robust estimator of a response vector, $\boldsymbol{r}^{(t)}$, with reference $\boldsymbol{r}_0^{(t)} = \bar{\boldsymbol{r}} = \mathrm{\bar{r}}^{(t)}\boldsymbol{1}_K$
            where $\bar{\mathrm{r}}^{(t)}=\frac{1}{\vert S^{(t)} \vert} \sum_{i \in S^{(t)}} r_i^{(t)}$.
        When $S^{(t)}$ is a randomly selected client indices in round $t$ and $C=P\left(i \in S^{(t)}\right)$ is a client sampling probability, 
        then $\left\Vert \breve{\boldsymbol{g}}^{(t)} \right\Vert_\infty \leq \breve{L}_\infty 
        = \frac{C_2}{1+C_1} + \frac{2(C_2-C_1)}{C(1+C_1)}$ for $r_i^{(t)}\in[C_1, C_2], \forall i\in S^{(t)}$.
    \end{lemma}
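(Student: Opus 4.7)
\subsection*{Proof Proposal for Lemma~\ref{lemma:lipschitz_lin_grad}}

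The plan is to start from the explicit formula for the linearized gradient computed from the doubly robust estimator, specialize it to the reference $\boldsymbol{r}_0^{(t)} = \bar{\mathrm{r}}^{(t)} \boldsymbol{1}_K$, and then bound each of the two resulting terms coordinate-wise. Since $\boldsymbol{p}^{(t)} \in \Delta_{K-1}$, the inner product with the constant reference collapses nicely: $\langle \boldsymbol{p}^{(t)}, \boldsymbol{r}_0^{(t)}\rangle = \bar{\mathrm{r}}^{(t)} \|\boldsymbol{p}^{(t)}\|_1 = \bar{\mathrm{r}}^{(t)}$. Plugging this into eq.~\eqref{eq:lin_grad_formula}, the $i$-th coordinate of $\breve{\boldsymbol{g}}^{(t)}$ becomes
\begin{equation*}
\breve{g}_i^{(t)}
= -\frac{\breve{r}_i^{(t)}}{1 + \bar{\mathrm{r}}^{(t)}}
+ \frac{\bar{\mathrm{r}}^{(t)} \langle \boldsymbol{p}^{(t)}, \breve{\boldsymbol{r}}^{(t)} - \bar{\mathrm{r}}^{(t)} \boldsymbol{1}_K\rangle}{(1+\bar{\mathrm{r}}^{(t)})^2}.
\end{equation*}

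The next step is to control the magnitude of the DR response. From the definition in eq.~\eqref{eq:dr_response}, the deviation satisfies $\breve{r}_i^{(t)} - \bar{\mathrm{r}}^{(t)} = \frac{\mathbb{I}(i \in S^{(t)})}{C}\left(r_i^{(t)} - \bar{\mathrm{r}}^{(t)}\right)$, so in both cases $i \in S^{(t)}$ and $i \notin S^{(t)}$ we obtain $|\breve{r}_i^{(t)} - \bar{\mathrm{r}}^{(t)}| \leq (C_2-C_1)/C$. The triangle inequality combined with $\bar{\mathrm{r}}^{(t)} \leq C_2$ then gives the uniform bound $|\breve{r}_i^{(t)}| \leq C_2 + (C_2-C_1)/C$. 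For the inner product term, I apply H\"older with $\|\boldsymbol{p}^{(t)}\|_1 = 1$ to get $|\langle \boldsymbol{p}^{(t)}, \breve{\boldsymbol{r}}^{(t)} - \bar{\mathrm{r}}^{(t)} \boldsymbol{1}_K\rangle| \leq \|\breve{\boldsymbol{r}}^{(t)} - \bar{\mathrm{r}}^{(t)} \boldsymbol{1}_K\|_\infty \leq (C_2-C_1)/C$.

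It remains to combine these bounds with the denominators. Using $\bar{\mathrm{r}}^{(t)} \geq C_1$, the first term is controlled by splitting $\breve{r}_i^{(t)} = \bar{\mathrm{r}}^{(t)} + (\breve{r}_i^{(t)} - \bar{\mathrm{r}}^{(t)})$: the part $\bar{\mathrm{r}}^{(t)}/(1+\bar{\mathrm{r}}^{(t)})$ is monotone increasing in $\bar{\mathrm{r}}^{(t)}$ and hence bounded by $C_2/(1+C_2) \leq C_2/(1+C_1)$, while the deviation part contributes at most $(C_2-C_1)/[C(1+C_1)]$. For the second term, using the elementary inequality $\bar{\mathrm{r}}^{(t)}/(1+\bar{\mathrm{r}}^{(t)}) \leq 1$ shows that the prefactor $\bar{\mathrm{r}}^{(t)}/(1+\bar{\mathrm{r}}^{(t)})^2$ is bounded by $1/(1+C_1)$, producing another $(C_2-C_1)/[C(1+C_1)]$. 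Summing the three contributions yields exactly $\breve{L}_\infty = \tfrac{C_2}{1+C_1} + \tfrac{2(C_2-C_1)}{C(1+C_1)}$, and taking the maximum over $i$ concludes the argument.

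The main technical subtlety lies in making the split of the first term sharp enough to recover the stated constant rather than a looser bound such as $(C_2 + (C_2-C_1)/C)/(1+C_1)$; without separating the ``reference part'' from the ``fluctuation part'' one picks up an extra additive factor. A second point deserving care is that the bound must hold uniformly over the random realization of $S^{(t)}$ (not merely in expectation), which is why the pointwise estimate $|\breve{r}_i^{(t)} - \bar{\mathrm{r}}^{(t)}| \leq (C_2-C_1)/C$ via the indicator representation, rather than any probabilistic bound, is the right starting point.
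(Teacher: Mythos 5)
Your proof is correct and follows essentially the same route as the paper's: expand the linearized DR gradient at the constant reference, use $\langle\boldsymbol{p},\bar{\boldsymbol{r}}\rangle=\bar{\mathrm{r}}$, bound the DR deviation by $(C_2-C_1)/C$ via the indicator representation, and reduce all denominators to $1+C_1$. One small remark: the ``subtlety'' you flag is not actually one, since $\bigl(C_2+(C_2-C_1)/C\bigr)/(1+C_1)$ equals $\tfrac{C_2}{1+C_1}+\tfrac{C_2-C_1}{C(1+C_1)}$ exactly, so the direct bound on $\lvert\breve{r}_i^{(t)}\rvert$ already yields the stated constant without the reference/fluctuation split.
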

    
    \begin{proof}
        Note that we intentionally omit superscript $^{(t)}$ from now on for the brevity of notation.
        The linearized gradient constructed from the doubly robust estimator of a response vector has a form as follows,
         according to (\ref{eq:lin_grad_formula}).
    \begin{equation}
    \begin{gathered}
        \breve{\boldsymbol{g}}
        =
        -\frac{ \breve{\boldsymbol{r}} }
        {1 +\langle
        \boldsymbol{p},
        \bar{\boldsymbol{r}}
        \rangle}
        +
        \frac{
        \bar{\boldsymbol{r}} \boldsymbol{p}^\top 
        ( \breve{\boldsymbol{r}} - \bar{\boldsymbol{r}} ) 
        }
        { (1 +\langle \boldsymbol{p},  \bar{\boldsymbol{r}} \rangle)^2 },
    \end{gathered}
    \end{equation}
        where we used $\bar{\boldsymbol{r}}=\bar{\mathrm{r}}\boldsymbol{1}_K$ as a reference $\boldsymbol{r}_0$,
        therefore $\left\Vert \bar{\boldsymbol{r}} \right\Vert_\infty=\bar{\mathrm{r}}\leq C_2$.
    
    Thus, we have
    {\allowdisplaybreaks
    \begin{align}
    \label{app:lin_dr_grad}
        &\left\Vert \breve{\boldsymbol{g}} \right\Vert_\infty \nonumber\\
        &=
        \left\Vert
        -\frac{ \breve{\boldsymbol{r}} }
        {1 +\langle
        \boldsymbol{p},
        \bar{\boldsymbol{r}}
        \rangle}
        +
        \frac{
        \bar{\boldsymbol{r}} \boldsymbol{p}^\top 
        ( \breve{\boldsymbol{r}} - \bar{\boldsymbol{r}} ) 
        }
        { (1 +\langle \boldsymbol{p},  \bar{\boldsymbol{r}} \rangle)^2 }
        \right\Vert_\infty \nonumber\\
        &\leq
        \left\Vert
        -\frac{ \breve{\boldsymbol{r}} }
        {1 +\langle
        \boldsymbol{p},
        \bar{\boldsymbol{r}}
        \rangle}
        \right\Vert_\infty
        +
        \left\Vert
        \frac{
        \bar{\boldsymbol{r}} \boldsymbol{p}^\top 
        ( \breve{\boldsymbol{r}} - \bar{\boldsymbol{r}} ) 
        }
        { (1 +\langle \boldsymbol{p},  \bar{\boldsymbol{r}} \rangle)^2 }
        \right\Vert_\infty \nonumber\\
        &=
        \frac{ \left\Vert \breve{\boldsymbol{r}} \right\Vert_\infty }
        {1 +\langle
        \boldsymbol{p},
        \bar{\boldsymbol{r}}
        \rangle}
        +
        \frac{ 1 }
        { (1 +\langle \boldsymbol{p},  \bar{\boldsymbol{r}} \rangle)^2 }
        \left\Vert
        \bar{\boldsymbol{r}} \boldsymbol{p}^\top 
        ( \breve{\boldsymbol{r}} - \bar{\boldsymbol{r}} ) 
        \right\Vert_\infty \nonumber\\
        &\leq
        \frac{ \left\Vert \breve{\boldsymbol{r}} \right\Vert_\infty }
        {1 +\langle
        \boldsymbol{p},
        \bar{\boldsymbol{r}}
        \rangle}
        +
        \frac{ 1 }
        { (1 +\langle \boldsymbol{p},  \bar{\boldsymbol{r}} \rangle)^2 }
        \left\Vert
        \bar{\boldsymbol{r}} \boldsymbol{p}^\top
        \right\Vert_\infty
        \left\Vert
        \breve{\boldsymbol{r}} - \bar{\boldsymbol{r}} 
        \right\Vert_\infty \nonumber\\
        &=
        \frac{ \left\Vert \breve{\boldsymbol{r}} \right\Vert_\infty }
        {1 +\langle
        \boldsymbol{p},
        \bar{\boldsymbol{r}}
        \rangle}
        +
        \frac{ 1 }
        { (1 +\langle \boldsymbol{p},  \bar{\boldsymbol{r}} \rangle)^2 }
        \left\Vert
        \bar{\boldsymbol{r}}
        \right\Vert_\infty
        \left\Vert
        \boldsymbol{p}
        \right\Vert_1
        \left\Vert
        \breve{\boldsymbol{r}} - \bar{\boldsymbol{r}} 
        \right\Vert_\infty \nonumber\\
        &=
        \frac{ \left\Vert \breve{\boldsymbol{r}} \right\Vert_\infty }
        {1 +\langle
        \boldsymbol{p},
        \bar{\boldsymbol{r}}
        \rangle}
        +
        \frac{ 1 }
        { (1 +\langle \boldsymbol{p},  \bar{\boldsymbol{r}} \rangle)^2 }
        \left\Vert
        \bar{\boldsymbol{r}}
        \right\Vert_\infty
        \left\Vert
        \breve{\boldsymbol{r}} - \bar{\boldsymbol{r}} 
        \right\Vert_\infty \nonumber
    \end{align}
    }
    , where the first inequality is due to triangle inequality, 
    the second inequality is due to the property that
    $\left\Vert \boldsymbol{Ax} \right\Vert_\infty \leq  \left\Vert \boldsymbol{A} \right\Vert_\infty  \left\Vert \boldsymbol{x} \right\Vert_\infty$ 
    for a matrix $\boldsymbol{A}\in\mathbb{R}^{K \times K}$ and a vector $\boldsymbol{x} \in\mathbb{R}^{K}, 
    \boldsymbol{x}\neq\boldsymbol{0}_K$,
    the very next equality is due to $\vert \boldsymbol{x} \boldsymbol{y}^\top\vert_\infty
    =\max_i \Vert {x}_i \boldsymbol{y}^\top\Vert_1 
    =\max_i \vert {x}_i \vert \Vert \boldsymbol{y} \Vert_1
    =\Vert \boldsymbol{x} \Vert_\infty \Vert \boldsymbol{y} \Vert_1$,
    and the last equality is trivial since $\boldsymbol{p}\in\Delta_{K-1}$.

    Since 
    $\langle \boldsymbol{p}, \bar{\boldsymbol{r}} \rangle 
    = \sum_{i=1}^K \left( p_i \bar{\mathrm{r}} \right) = \bar{\mathrm{r}}$,
    this can be further bounded as follows.
    \begin{equation}
    \begin{split}
        &=
        \frac{ 1 }{1 + \bar{\mathrm{r}}} \left\Vert \breve{\boldsymbol{r}} \right\Vert_\infty
        +
        \frac{ \bar{\mathrm{r}} }{ \left( 1 + \bar{\mathrm{r}} \right)^2 }
        \left\Vert
        \breve{\boldsymbol{r}} - \bar{\boldsymbol{r}} 
        \right\Vert_\infty \\
        &=
        \frac{ 1 + \bar{\mathrm{r}} }{ \left( 1 + \bar{\mathrm{r}} \right)^2 }
        \left\Vert \breve{\boldsymbol{r}} \right\Vert_\infty
        +
        \frac{ \bar{\mathrm{r}} }{ \left( 1 + \bar{\mathrm{r}} \right)^2 }
        \left\Vert
        \breve{\boldsymbol{r}} - \bar{\boldsymbol{r}} 
        \right\Vert_\infty \\
        &\leq
        \frac{ 1 }{ 1 + \bar{\mathrm{r}}}
        \left(
        \left\Vert \breve{\boldsymbol{r}} \right\Vert_\infty
        +
        \left\Vert
        \breve{\boldsymbol{r}} - \bar{\boldsymbol{r}} 
        \right\Vert_\infty
        \right),
    \end{split}
    \end{equation}
    
    Since $\frac{ 1 }{1 + \bar{\mathrm{r}}}\leq\frac{1}{1+C_1}$,
    we can further upper bound as follows.
    \begin{equation}
    \begin{gathered}
    \label{eq:lipschitz_two_terms}
        \frac{ 1 }{ 1 + \bar{\mathrm{r}}}
        \left(
        \left\Vert \breve{\boldsymbol{r}} \right\Vert_\infty
        +
        \left\Vert
        \breve{\boldsymbol{r}} - \bar{\boldsymbol{r}} 
        \right\Vert_\infty
        \right)
        \leq
        \frac{ 1 }{ 1 + C_1 }
        \left(
        \left\Vert \breve{\boldsymbol{r}} \right\Vert_\infty
        +
        \left\Vert
        \breve{\boldsymbol{r}} - \bar{\boldsymbol{r}} 
        \right\Vert_\infty
        \right)
    \end{gathered}
    \end{equation}

    To upper bound each term, let us look into $\breve{\boldsymbol{r}}$ first.
    By the definition in (\ref{eq:dr_response}), we have
    \begin{equation}
    \begin{split}
        &\breve{\boldsymbol{r}}
        =
        \begin{cases}
        \bar{\mathrm{r}} \boldsymbol{1}_K, & i\notin S^{(t)} \\
        \left( 1 - \frac{1}{C} \right) \bar{\mathrm{r}}\boldsymbol{1}_K + \frac{1}{C} \boldsymbol{r}, & i\in S^{(t)}
        \end{cases}.
    \end{split}
    \end{equation}
    
    For each case, $\left\Vert \breve{\boldsymbol{r}} \right\Vert_\infty$ becomes
    \begin{equation}
    \label{app:brev_inf_norm}
    \begin{split}
        &\left\Vert \breve{\boldsymbol{r}} \right\Vert_\infty
        =
        \begin{cases}
        \bar{\mathrm{r}}, & i\notin S^{(t)} \\
        \sup_i \left\vert \frac{1}{C} (r_i - \bar{\mathrm{r}}) + \bar{\mathrm{r}} \right\vert, & i\in S^{(t)}.
        \end{cases}
    \end{split}
    \end{equation}
    
    For the first case, the average is smaller than its maximum, thus $\bar{\mathrm{r}} \leq C_2$.
    For the second case, it can be upper bounded as 
    $\sup_i \left\vert \frac{1}{C} (r_i - \bar{\mathrm{r}}) + \bar{\mathrm{r}} \right\vert
    \leq 
    \frac{1}{C} \sup_i \left\vert r_i - \bar{\mathrm{r}} \right\vert
    + C_2$ by the triangle inequality.
    
    From the trivial fact that the deviation from the average is always smaller than its range,
    \begin{equation}
    \begin{gathered}
        \frac{1}{C} \sup_i \left\vert r_i - \bar{\mathrm{r}} \right\vert
        \leq
        \frac{1}{C} (C_2 - C_1).
    \end{gathered}
    \end{equation}
    
    Combined, we have the following upper bounds.
    \begin{equation}
    \begin{split}
    \label{eq:lipschitz_first}
        \left\Vert \breve{\boldsymbol{r}} \right\Vert_\infty
        \leq
        \begin{cases}
        C_2, & i\notin S^{(t)} \\
        \frac{C_2 - C_1}{C} + C_2, & i\in S^{(t)}
        \end{cases}
    \end{split}
    \end{equation}

    Similarly, for the second term inside in (\ref{eq:lipschitz_two_terms}), we have:
    \begin{equation}
    \label{app:breve_minus_bar_inf_norm}
    \begin{split}
        &\left\Vert \breve{\boldsymbol{r}} - \bar{\boldsymbol{r}} \right\Vert_\infty
        =
        \begin{cases}
        0, & i\notin S^{(t)} \\
        \frac{1}{C} \sup_i \left\vert r_i - \bar{\mathrm{r}} \right\vert, & i\in S^{(t)}.
        \end{cases}
    \end{split}
    \end{equation}
    
    Corresponding upper bounds are:
    \begin{equation}
    \begin{gathered}
    \label{eq:lipschitz_second}
        \left\Vert
        \breve{\boldsymbol{r}} - \bar{\boldsymbol{r}} 
        \right\Vert_\infty
        \leq
        \begin{cases}
        0, & i\notin S^{(t)} \\
        \frac{C_2 - C_1}{C}, & i\in S^{(t)}
        \end{cases}
    \end{gathered}
    \end{equation}
    
    Adding (\ref{eq:lipschitz_first}) and (\ref{eq:lipschitz_second}) to have (\ref{eq:lipschitz_two_terms}), we have:
    \begin{equation}
    \begin{gathered}
        \left\Vert \breve{\boldsymbol{g}}^{(t)} \right\Vert_\infty
        \leq
        \begin{cases}
        \frac{C_2}{1+C_1}, & i\notin S^{(t)} \\
        \frac{C_2}{1+C_1} + \frac{2(C_2-C_1)}{C(1+C_1)}, & i\in S^{(t)}.
        \end{cases}
    \end{gathered}    
    \end{equation}
    
    Finally, it suffices to set $\breve{L}_\infty = \frac{C_2}{1+C_1} + \frac{2(C_2-C_1)}{C(1+C_1)}$ to conclude the proof.
    \end{proof}

\newpage
\subsubsection{Regret from Linearized Loss}
    \begin{corollary}
    \label{corollary:lin_loss}
    From the convexity of a decision loss $\ell^{(t)}$ (Lemma~\ref{lemma:convexity}), 
    the regret defined in (\ref{eq:regret}) satisfies
    \begin{equation}
    \begin{gathered}
        \normalfont\text{Regret}^{(T)}\left(\boldsymbol{p}^{\star}\right) 
        = 
        \sum_{t=1}^T \ell^{(t)}\left(\boldsymbol{p}^{(t)}\right) 
        - 
        \sum_{t=1}^T \ell^{(t)}\left(\boldsymbol{p}^{\star}\right)
        \leq
        \sum_{t=1}^T \tilde\ell^{(t)}\left(\boldsymbol{p}^{(t)}\right) 
        - 
        \sum_{t=1}^T \tilde\ell^{(t)}\left(\boldsymbol{p}^{\star}\right),
    \end{gathered}
    \end{equation}
    where $\tilde\ell^{(t)}$ is a linearized loss defined as $\tilde\ell^{(t)}\left(\boldsymbol{p}\right) 
    =\left\langle \boldsymbol{p}, \boldsymbol{g}^{(t)} \right\rangle$
    and $\boldsymbol{g}^{(t)}=\nabla\ell^{(t)}\left(\boldsymbol{p}^{(t)}\right)$. 
    \end{corollary}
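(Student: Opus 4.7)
The plan is to exploit the convexity of $\ell^{(t)}$ established in Lemma~\ref{lemma:convexity} via the standard first-order characterization of convex functions, which is the canonical ``linearization trick'' in online convex optimization. Since the decision loss is strictly convex and differentiable in $\boldsymbol{p}$, for every round $t\in[T]$ and every $\boldsymbol{q}\in\Delta_{K-1}$ we have the supporting hyperplane inequality $\ell^{(t)}(\boldsymbol{q}) \geq \ell^{(t)}(\boldsymbol{p}^{(t)}) + \langle \nabla\ell^{(t)}(\boldsymbol{p}^{(t)}),\, \boldsymbol{q} - \boldsymbol{p}^{(t)} \rangle$. Instantiating $\boldsymbol{q}=\boldsymbol{p}^{\star}$ and recalling $\boldsymbol{g}^{(t)} = \nabla\ell^{(t)}(\boldsymbol{p}^{(t)})$ will do the heavy lifting.

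First, I would rearrange the supporting hyperplane inequality to isolate the per-round instantaneous regret:
\begin{equation*}
\ell^{(t)}(\boldsymbol{p}^{(t)}) - \ell^{(t)}(\boldsymbol{p}^{\star}) \;\leq\; \langle \boldsymbol{g}^{(t)},\, \boldsymbol{p}^{(t)} - \boldsymbol{p}^{\star} \rangle.
\end{equation*}
Next, recognizing that by the definition $\tilde\ell^{(t)}(\boldsymbol{p}) = \langle \boldsymbol{p}, \boldsymbol{g}^{(t)} \rangle$ the right-hand side equals $\tilde\ell^{(t)}(\boldsymbol{p}^{(t)}) - \tilde\ell^{(t)}(\boldsymbol{p}^{\star})$ by linearity of the inner product. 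This already gives the per-round surrogate bound. Finally, summing across $t=1,\dots,T$ and invoking the definition of cumulative regret in eq.~\eqref{eq:regret} yields the stated inequality.

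There is no serious obstacle: the statement is a direct corollary of Lemma~\ref{lemma:convexity} combined with the definition of $\tilde\ell^{(t)}$, and the step from local to global regret is just telescoping a summation. The only subtlety worth flagging is that the convexity in Lemma~\ref{lemma:convexity} is stated as strict convexity, but the subgradient inequality I rely on only needs convexity (strictness is not used), so no additional hypothesis is required. The corollary is important because it justifies replacing the true loss $\ell^{(t)}$ by the linear surrogate $\tilde\ell^{(t)}$ in the subsequent regret analyses of Theorem~\ref{thm:crosssilo} and Theorem~\ref{thm:crossdevice_full}, where closed-form FTRL updates (as in Remark~\ref{remark:closed_form}) and ONS-style bounds are available precisely because the surrogate losses are linear in $\boldsymbol{p}$.
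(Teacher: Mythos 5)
Your proof is correct and follows exactly the paper's own argument: the first-order convexity inequality $\ell^{(t)}(\boldsymbol{p}^{(t)}) - \ell^{(t)}(\boldsymbol{p}^{\star}) \leq \langle \boldsymbol{g}^{(t)}, \boldsymbol{p}^{(t)} - \boldsymbol{p}^{\star}\rangle$ applied per round, then summed over $t\in[T]$. Your added remark that only convexity (not strictness) is needed is a fair observation but does not change the substance.
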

    
    \begin{proof}
    It is straightforward from the convexity of the decision loss.
    \begin{equation}
    \begin{gathered}
        \ell^{(t)}(\boldsymbol{p}^{(t)})
        -
        \ell^{(t)}(\boldsymbol{p}^{\star})
        \leq
        \left\langle
        \boldsymbol{g}^{(t)},
        \boldsymbol{p}^{(t)}
        -
        \boldsymbol{p}^{\star}
        \right\rangle.
    \end{gathered}
    \end{equation}
    Summing up both sides for $t\in[T]$, we proved the statement.
    \end{proof}

\newpage
\subsubsection{Equality for the Regret}
    \begin{lemma} (Lemma 7.1 of \cite{oco2}; Lemma 5 of \cite{oco3})
    \label{lemma:regret_bound_default_form}
    Let us define 
    $L^{(t)}\left(\boldsymbol{p}\right)\triangleq\sum_{\tau=1}^{t-1} \ell^{(\tau)} (\boldsymbol{p}) + R^{(t)}(\boldsymbol{p})$,
    where
    $\ell:\Delta_{K-1}\times\mathbb{R}^d\rightarrow\mathbb{R}$ is an arbitrary loss function and
    $R^{(t)}:\Delta_{K-1}\rightarrow\mathbb{R}$ is an arbitrary regularizer, non-decreasing across $t\in[T]$. 
    Assume further that $\boldsymbol{p}^{(t)} = \argmin_{\boldsymbol{p}\in\Delta_{K-1}} L^{(t)} \left( \boldsymbol{p} \right)$. 
    Then, for any $\boldsymbol{p}^\star\in\Delta_{K-1}$, we have
    \begin{equation}
    \begin{split}
        &\normalfont\text{Regret}^{(T)}(\boldsymbol{p}^{\star})
        =
        \sum_{t=1}^T 
        \ell^{(t)}\left(\boldsymbol{p}^{(t)}\right) 
        - 
        \sum_{t=1}^T \ell^{(t)}\left(\boldsymbol{p}^{\star}\right) \\
        &=
        R^{(T+1)}\left(\boldsymbol{p}^{\star}\right)
        -R^{(1)}\left(\boldsymbol{p}^{(1)}\right)
        +L^{(T+1)}\left(\boldsymbol{p}^{(T+1)}\right)
        -L^{(T+1)}\left({\boldsymbol{p}^{\star}}\right) \\
        &+
        \sum_{t=1}^T
        \left[
        L^{(t)}\left(\boldsymbol{p}^{(t)}\right)
        -
        L^{(t+1)}\left(\boldsymbol{p}^{(t+1)}\right)
        +
        \ell^{(t)}\left(\boldsymbol{p}^{(t)}\right) \right].
    \end{split}
    \end{equation}
    \end{lemma}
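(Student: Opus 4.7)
The claim is a purely algebraic identity: it does not invoke convexity, Lipschitz continuity, or the specific choice of regularizer --- only the definition of $L^{(t)}$ as cumulative losses plus a regularizer, and (tacitly) the labelling convention that $\boldsymbol{p}^{(t)}$ is the minimizer of $L^{(t)}$ (which is in fact not needed for the identity, only for downstream bounds). The plan is therefore to expand the right-hand side and reduce it to the regret by telescoping. I will not attempt to bound anything; I will just verify an equation.

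\textbf{Step 1: Isolate the telescoping block.} I will first evaluate the summation
\begin{equation*}
\sum_{t=1}^T \bigl[ L^{(t)}(\boldsymbol{p}^{(t)}) - L^{(t+1)}(\boldsymbol{p}^{(t+1)}) \bigr]
= L^{(1)}(\boldsymbol{p}^{(1)}) - L^{(T+1)}(\boldsymbol{p}^{(T+1)}),
\end{equation*}
by cancellation of consecutive terms. Using $L^{(1)}(\boldsymbol{p}^{(1)}) = R^{(1)}(\boldsymbol{p}^{(1)})$ (the cumulative-loss sum is empty at $t=1$), this collapses to $R^{(1)}(\boldsymbol{p}^{(1)}) - L^{(T+1)}(\boldsymbol{p}^{(T+1)})$.

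\textbf{Step 2: Combine with the stand-alone terms.} Plugging this back into the stated right-hand side, the two occurrences of $R^{(1)}(\boldsymbol{p}^{(1)})$ cancel, and so do the two occurrences of $L^{(T+1)}(\boldsymbol{p}^{(T+1)})$. What remains is
\begin{equation*}
R^{(T+1)}(\boldsymbol{p}^\star) - L^{(T+1)}(\boldsymbol{p}^\star) + \sum_{t=1}^T \ell^{(t)}(\boldsymbol{p}^{(t)}).
\end{equation*}
Expanding $L^{(T+1)}(\boldsymbol{p}^\star) = \sum_{\tau=1}^T \ell^{(\tau)}(\boldsymbol{p}^\star) + R^{(T+1)}(\boldsymbol{p}^\star)$ by definition, the two $R^{(T+1)}(\boldsymbol{p}^\star)$ terms cancel, leaving exactly $\sum_{t=1}^T \ell^{(t)}(\boldsymbol{p}^{(t)}) - \sum_{t=1}^T \ell^{(t)}(\boldsymbol{p}^\star) = \text{Regret}^{(T)}(\boldsymbol{p}^\star)$.

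\textbf{Anticipated obstacles.} Honestly, there is no analytic hurdle here; the only thing to be careful about is bookkeeping on the indices, especially the boundary values $L^{(1)}(\boldsymbol{p}^{(1)})$ (which equals $R^{(1)}(\boldsymbol{p}^{(1)})$ since the loss sum runs over the empty set $\tau=1,\ldots,0$) and $L^{(T+1)}(\boldsymbol{p}^\star)$ (where the loss sum runs over $\tau=1,\ldots,T$). The hypothesis that $\boldsymbol{p}^{(t)} \in \argmin L^{(t)}$ and that $R^{(t)}$ is non-decreasing are not used in the identity itself; they only matter when one subsequently drops the non-positive term $L^{(T+1)}(\boldsymbol{p}^{(T+1)}) - L^{(T+1)}(\boldsymbol{p}^\star) \le 0$ to obtain a usable upper bound on the regret. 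I would mention this remark at the end of the proof to orient the reader toward how the identity will be invoked downstream (e.g., in the proof of Theorem~\ref{thm:crossdevice_full}).
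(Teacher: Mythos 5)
Your proof is correct and is essentially the same argument as the paper's: both reduce the identity to the telescoping sum $\sum_{t=1}^T [L^{(t)}(\boldsymbol{p}^{(t)}) - L^{(t+1)}(\boldsymbol{p}^{(t+1)})] = R^{(1)}(\boldsymbol{p}^{(1)}) - L^{(T+1)}(\boldsymbol{p}^{(T+1)})$ together with the expansion $L^{(T+1)}(\boldsymbol{p}^\star) = \sum_{\tau=1}^{T}\ell^{(\tau)}(\boldsymbol{p}^\star) + R^{(T+1)}(\boldsymbol{p}^\star)$, differing only in bookkeeping order. Your side remark that the minimizer and monotonicity hypotheses are not needed for the identity itself is also accurate and consistent with how the paper uses them only downstream.
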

    
    \begin{proof}
    Since $\sum_{t=1}^T \ell^{(t)}\left(\boldsymbol{p}^{(t)}\right)$ appears in both sides, 
    we only need to show that
    \begin{equation}
    \begin{split}
        &-\sum_{t=1}^T \ell^{(t)}(\boldsymbol{p}^\star) \\
        &=
        R^{(T+1)}\left(\boldsymbol{p}^{\star}\right)
        -R^{(1)}\left(\boldsymbol{p}^{(1)}\right) \\
        &+L^{(T+1)}\left(\boldsymbol{p}^{(T+1)}\right)
        -L^{(T+1)}\left({\boldsymbol{p}^{\star}}\right) \\
        &+\sum_{t=1}^T
        \left[
        L^{(t)}\left(\boldsymbol{p}^{(t)}\right)
        -
        L^{(t+1)}\left(\boldsymbol{p}^{(t+1)}\right) \right]
    \end{split}
    \end{equation}
    
    First, note that
    \begin{equation}
    \begin{split}
        &\ell^{(t)}\left(\boldsymbol{p}^{\star}\right) \\ 
        &=
        \sum_{\tau=1}^t \ell^{(\tau)}\left(\boldsymbol{p}^{\star}\right)
        -
        \sum_{\tau=1}^{t-1} \ell^{(\tau)}\left(\boldsymbol{p}^{\star}\right) \\
        &=
        \left( 
        L^{(t+1)}\left({\boldsymbol{p}^{\star}}\right)
        -
        R^{(t+1)}\left({\boldsymbol{p}^{\star}}\right)
        \right)  \\
        &-
        \left(
        L^{(t)}\left({\boldsymbol{p}^{\star}}\right)
        -
        R^{(t)}\left({\boldsymbol{p}^{\star}}\right)
        \right).  
    \end{split}
    \end{equation}
    
    Summing up the right-hand side of the above from $t=1,...,T$, we have 
    \begin{equation}
    \begin{split}
        &\sum_{t=1}^T \ell^{(t)}\left(\boldsymbol{p}^{\star}\right) \\
        &=
        \left( 
        L^{(T+1)}\left({\boldsymbol{p}^{\star}}\right)
        -
        R^{(T+1)}\left({\boldsymbol{p}^{\star}}\right)
        \right) \\
        &-
        \left(
        L^{(1)}\left({\boldsymbol{p}^{\star}}\right)
        -
        R^{(1)}\left({\boldsymbol{p}^{\star}}\right)
        \right) \\
        &=
        L^{(T+1)}\left({\boldsymbol{p}^{\star}}\right)
        -
        R^{(T+1)}\left({\boldsymbol{p}^{\star}}\right),
    \end{split}
    \end{equation}
    by telescoping summation.
    
    Thus,
    \begin{equation}
    \begin{gathered}
    \label{app:first}
        -\sum_{t=1}^T \ell^{(t)}\left(\boldsymbol{p}^{\star}\right)
        =
        R^{(T+1)}(\boldsymbol{p}^{\star})
        -
        L^{(T+1)}(\boldsymbol{p}^{\star}).
    \end{gathered}
    \end{equation}
    
    Similarly, 
    \begin{equation}
    \begin{split}
        &\sum_{t=1}^T 
        \left[
        L^{(t)}\left(\boldsymbol{p}^{(t)}\right)
        -
        L^{(t+1)}\left(\boldsymbol{p}^{(t+1)}\right)
        \right] \\
        &=
        L^{(1)}\left(\boldsymbol{p}^{(1)}\right)
        -
        L^{(T+1)}\left(\boldsymbol{p}^{(T+1)}\right) \\
        &=
        R^{(1)}\left(\boldsymbol{p}^{(1)}\right)
        -
        L^{(T+1)}\left(\boldsymbol{p}^{(T+1)}\right).
    \end{split}
    \end{equation}
    
    Rearranging, 
    \begin{equation}
    \begin{gathered}
    \label{app:second}
        0
        =
        L^{(T+1)}\left(\boldsymbol{p}^{(T+1)}\right)
        -
        R^{(1)}\left(\boldsymbol{p}^{(1)}\right)
        +
        \sum_{t=1}^T \left[
        L^{(t)}\left(\boldsymbol{p}^{(t)}\right)
        -
        L^{(t+1)}\left(\boldsymbol{p}^{(t+1)}\right)
        \right].
    \end{gathered}
    \end{equation}
    
    Adding (\ref{app:first}) and (\ref{app:second}), we have
    \begin{equation}
    \begin{split}
        &- 
        \sum_{t=1}^T \ell^{(t)}\left(\boldsymbol{p}^{\star}\right) \\
        &=
        R^{(T+1)}\left(\boldsymbol{p}^{\star}\right)
        -R^{(1)}\left(\boldsymbol{p}^{(1)}\right) \\
        &+L^{(T+1)}\left(\boldsymbol{p}^{(T+1)}\right)
        -L^{(T+1)}\left({\boldsymbol{p}^{\star}}\right) \\
        &+ \sum_{t=1}^T
        \left[
        L^{(t)}\left(\boldsymbol{p}^{(t)}\right)
        -
        L^{(t+1)}\left(\boldsymbol{p}^{(t+1)}\right)
        \right].
    \end{split}
    \end{equation}
    
    Finally, by adding $\sum_{t=1}^T \ell^{(t)}\left(\boldsymbol{p}^{(t)}\right)$ to both sides, we prove the statement.
    Note that the left hand side of the main statement does not depend on $R^{(t)}$, thus we can replace $R^{(T+1)}\left(\boldsymbol{p}^{\star}\right)$ 
    into $R^{(T)}\left(\boldsymbol{p}^{\star}\right)$. (Remark 7.3 of \cite{oco3})
    \end{proof}

\newpage
\subsubsection{Upper Bound to the Suboptimality Gap}
    \begin{lemma} (Oracle Gap; adapted from Corollary 7.7 of \cite{oco3})
    \label{lemma:cor_orabona}
        Let $f:\mathbb{R}^K \rightarrow \mathbb{R}$ be a $\mu$-strongly convex w.r.t. $\Vert\cdot\Vert$ over its domain.
        Let $\boldsymbol{x}^\star=\argmin_{\boldsymbol{x}}f(\boldsymbol{x})$.
        Then, for all $\boldsymbol{x}\in\operatorname{dom}\partial f(\boldsymbol{x})$, and $\boldsymbol{g}\in\partial f(\boldsymbol{x})$, we have:
    \begin{equation}
    \begin{gathered}
        f(\boldsymbol{x}) - f(\boldsymbol{x}^\star) \leq \frac{1}{2 \mu} \Vert \boldsymbol{g} \Vert_\star^2,
    \end{gathered}
    \end{equation}
        where $\Vert\cdot\Vert_\star$ is a dual norm of $\Vert\cdot\Vert$.
    \end{lemma}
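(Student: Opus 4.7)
The plan is to prove this classical suboptimality-gap inequality by combining the defining inequality of strong convexity with Fenchel--Young / Cauchy--Schwarz plus Young's inequality, matching the standard chain of steps for such oracle gaps in OCO textbooks.

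First, I would invoke the definition of $\mu$-strong convexity of $f$ w.r.t. $\Vert\cdot\Vert$: for any $\boldsymbol{x}\in\operatorname{dom}\partial f$, any subgradient $\boldsymbol{g}\in\partial f(\boldsymbol{x})$, and any $\boldsymbol{y}$ in the domain,
\begin{equation*}
    f(\boldsymbol{y}) \;\geq\; f(\boldsymbol{x}) + \langle \boldsymbol{g}, \boldsymbol{y}-\boldsymbol{x}\rangle + \frac{\mu}{2}\Vert \boldsymbol{y}-\boldsymbol{x}\Vert^2.
\end{equation*}
Specializing to $\boldsymbol{y}=\boldsymbol{x}^\star$ and rearranging yields
\begin{equation*}
    f(\boldsymbol{x}) - f(\boldsymbol{x}^\star) \;\leq\; \langle \boldsymbol{g}, \boldsymbol{x}-\boldsymbol{x}^\star\rangle - \frac{\mu}{2}\Vert \boldsymbol{x}^\star-\boldsymbol{x}\Vert^2.
\end{equation*}

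Next, I would bound the linear term $\langle \boldsymbol{g}, \boldsymbol{x}-\boldsymbol{x}^\star\rangle$ via the generalized Cauchy--Schwarz inequality with the dual norm, $\langle \boldsymbol{g}, \boldsymbol{x}-\boldsymbol{x}^\star\rangle \leq \Vert \boldsymbol{g}\Vert_\star \Vert \boldsymbol{x}-\boldsymbol{x}^\star\Vert$, and then apply Young's inequality in the form $ab \leq \frac{a^2}{2\mu} + \frac{\mu b^2}{2}$ with $a = \Vert \boldsymbol{g}\Vert_\star$ and $b = \Vert \boldsymbol{x}-\boldsymbol{x}^\star\Vert$, which gives
\begin{equation*}
    \langle \boldsymbol{g}, \boldsymbol{x}-\boldsymbol{x}^\star\rangle \;\leq\; \frac{1}{2\mu}\Vert \boldsymbol{g}\Vert_\star^2 + \frac{\mu}{2}\Vert \boldsymbol{x}-\boldsymbol{x}^\star\Vert^2.
\end{equation*}
Substituting this into the previous inequality, the quadratic terms $\frac{\mu}{2}\Vert \boldsymbol{x}-\boldsymbol{x}^\star\Vert^2$ cancel exactly, leaving $f(\boldsymbol{x}) - f(\boldsymbol{x}^\star) \leq \frac{1}{2\mu}\Vert \boldsymbol{g}\Vert_\star^2$, which is the claim.

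There is essentially no obstacle of a technical nature: the whole argument is a two-line manipulation once strong convexity is written out, and the only subtle point is making sure that $\boldsymbol{x}^\star$ lies in $\operatorname{dom} f$ (so the strong-convexity inequality applies to it) and that the Young's inequality is tuned with the exact constant $\mu$ so the residual quadratic term cancels. An alternative, slightly slicker route would be through the Fenchel conjugate: $\mu$-strong convexity of $f$ implies $(1/\mu)$-smoothness of $f^\star$, and combining $f(\boldsymbol{x})+f^\star(\boldsymbol{g})=\langle \boldsymbol{g},\boldsymbol{x}\rangle$ (for $\boldsymbol{g}\in\partial f(\boldsymbol{x})$) with the smoothness inequality applied at $\boldsymbol{0}\in\partial f(\boldsymbol{x}^\star)$ yields the same bound; I would mention this as a remark but use the direct route above as the main proof because it avoids invoking conjugate-duality machinery that is not otherwise used in the chapter.
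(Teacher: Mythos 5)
Your proof is correct. The paper itself states this lemma without proof, citing Corollary~7.7 of the referenced OCO monograph, so there is nothing to compare against; your direct argument --- strong convexity evaluated at $\boldsymbol{y}=\boldsymbol{x}^\star$, the generalized Cauchy--Schwarz inequality with the dual norm, and Young's inequality tuned so that the $\frac{\mu}{2}\Vert\boldsymbol{x}-\boldsymbol{x}^\star\Vert^2$ terms cancel --- is the standard derivation and is complete as written. The conjugate-duality route you mention as a remark is essentially how the cited source obtains the result, but the elementary argument you give is self-contained and entirely adequate here.
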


\newpage
\subsubsection{Progress Bound}
    \begin{lemma} 
    \label{lemma:one_step_bound_proximal} (Progress Bound of FTRL with Proximal Regularizer)
    With a slight abuse of notation, 
    assume $L^{(t)}$ is closed, subdifferentiable and $\mu^{(t)}$-strongly convex w.r.t. 
    $\Vert\cdot\Vert$ in $\Delta_{K-1}$.
    First assume that $\boldsymbol{p}^{(t+1)} = \argmin_{\boldsymbol{p}\in\Delta_{K-1}}L^{(t+1)}\left(\boldsymbol{p}\right)$.
    Assume further that the regularizer satisfies
    $\boldsymbol{p}^{(t)} 
    = \argmin_{\boldsymbol{p}\in\Delta_{K-1}}
    \left(R^{(t+1)}\left(\boldsymbol{p}\right) - R^{(t)}\left(\boldsymbol{p}\right)\right)$,
    and $\boldsymbol{g}^{(t)}\in\partial L^{(t+1)}(\boldsymbol{p}^{(t)})$.
    Then, we have the following inequality:
    \begin{equation}
    \begin{gathered}
        L^{(t)}\left(\boldsymbol{p}^{(t)}\right) 
        -
        L^{(t+1)}\left(\boldsymbol{p}^{(t+1)}\right)
        +
        \ell^{(t)}\left(\boldsymbol{p}^{(t)}\right)
        \leq
        \frac{ \left\Vert \boldsymbol{g}^{(t)} \right\Vert_\star^2 }
        { 2\mu^{(t+1)} }
        +
        \left(
        R^{(t)}\left(\boldsymbol{p}^{(t)}\right)
        -
        R^{(t+1)}\left(\boldsymbol{p}^{(t)}\right)
        \right),
    \end{gathered}
    \end{equation}
    where $\Vert\cdot\Vert_\star$ is a dual norm of $\Vert\cdot\Vert$.
    \end{lemma}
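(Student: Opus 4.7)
The plan is to reduce the inequality to a single application of the subgradient bound for strongly convex functions (Lemma~\ref{lemma:cor_orabona}) by first rewriting the left-hand side using the recursive definition of $L^{(t)}$. The key observation is that by construction of $L^{(t)}$, one has the telescoping identity
\begin{equation*}
L^{(t+1)}(\boldsymbol{p}) \;=\; L^{(t)}(\boldsymbol{p}) + \ell^{(t)}(\boldsymbol{p}) + R^{(t+1)}(\boldsymbol{p}) - R^{(t)}(\boldsymbol{p}),
\end{equation*}
which, evaluated at $\boldsymbol{p}=\boldsymbol{p}^{(t)}$, rearranges into
\begin{equation*}
L^{(t)}(\boldsymbol{p}^{(t)}) + \ell^{(t)}(\boldsymbol{p}^{(t)}) \;=\; L^{(t+1)}(\boldsymbol{p}^{(t)}) + \bigl(R^{(t)}(\boldsymbol{p}^{(t)}) - R^{(t+1)}(\boldsymbol{p}^{(t)})\bigr).
\end{equation*}

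Subtracting $L^{(t+1)}(\boldsymbol{p}^{(t+1)})$ from both sides, the claimed progress bound reduces to showing that
\begin{equation*}
L^{(t+1)}(\boldsymbol{p}^{(t)}) - L^{(t+1)}(\boldsymbol{p}^{(t+1)}) \;\leq\; \frac{\bigl\Vert \boldsymbol{g}^{(t)} \bigr\Vert_\star^2}{2\mu^{(t+1)}}.
\end{equation*}
Since $\boldsymbol{p}^{(t+1)}$ minimizes $L^{(t+1)}$ over $\Delta_{K-1}$ and $L^{(t+1)}$ is $\mu^{(t+1)}$-strongly convex w.r.t.\ $\Vert\cdot\Vert$, this inequality is precisely the oracle gap furnished by Lemma~\ref{lemma:cor_orabona}, applied to any subgradient $\boldsymbol{g}^{(t)} \in \partial L^{(t+1)}(\boldsymbol{p}^{(t)})$. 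Putting both pieces together yields the desired bound.

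The main subtlety---and the step I would scrutinize most carefully---is justifying the application of Lemma~\ref{lemma:cor_orabona} in the constrained setting $\Delta_{K-1}$, since that lemma is stated for unconstrained minimization. The standard workaround is to absorb the constraint into the objective via the indicator function $\iota_{\Delta_{K-1}}$, so that $\boldsymbol{p}^{(t+1)}$ becomes an unconstrained minimizer of $L^{(t+1)} + \iota_{\Delta_{K-1}}$ (which remains $\mu^{(t+1)}$-strongly convex), and the admissible subgradients at $\boldsymbol{p}^{(t)}$ now include the normal-cone contribution $N_{\Delta_{K-1}}(\boldsymbol{p}^{(t)})$. This is exactly where the proximality assumption $\boldsymbol{p}^{(t)} \in \argmin (R^{(t+1)} - R^{(t)})$ earns its keep in downstream usage: it lets one choose a particular $\boldsymbol{g}^{(t)}$ whose dual norm is controlled by $\nabla \ell^{(t)}(\boldsymbol{p}^{(t)})$ alone (since the optimality of $\boldsymbol{p}^{(t)}$ for $L^{(t)}$ and the proximality of the added regularizer term together force the extra contributions to lie in the normal cone and hence cancel). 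For the lemma as stated, however, the bound holds for \emph{any} subgradient $\boldsymbol{g}^{(t)}\in\partial L^{(t+1)}(\boldsymbol{p}^{(t)})$, so no further refinement is needed at this stage---the two-line identity plus Lemma~\ref{lemma:cor_orabona} closes the proof.
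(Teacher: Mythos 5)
Your proof is correct and follows essentially the same route as the paper's: the same telescoping identity $L^{(t+1)} = L^{(t)} + \ell^{(t)} + R^{(t+1)} - R^{(t)}$ evaluated at $\boldsymbol{p}^{(t)}$, followed by the oracle-gap bound of Lemma~\ref{lemma:cor_orabona} applied to the minimizer $\boldsymbol{p}^{(t+1)}$ of $L^{(t+1)}$ with the subgradient $\boldsymbol{g}^{(t)}\in\partial L^{(t+1)}(\boldsymbol{p}^{(t)})$. Your added remark about the constrained domain is a reasonable point of care, but it is already accommodated by the paper's statement of Lemma~\ref{lemma:cor_orabona} over the domain of $\partial f$, so no change to the argument is needed.
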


    \begin{proof}
    \begin{equation}
    \begin{split}
        &L^{(t)}\left(\boldsymbol{p}^{(t)}\right)
        -
        L^{(t+1)}\left(\boldsymbol{p}^{(t+1)}\right)
        +
        \ell^{(t)}\left(\boldsymbol{p}^{(t)}\right)\\
        &=
        \left(
        L^{(t)}\left(\boldsymbol{p}^{(t)}\right)
        +
        \ell^{(t)}\left(\boldsymbol{p}^{(t)}\right)
        +
        R^{(t+1)}\left(\boldsymbol{p}^{(t)}\right)
        -
        R^{(t)}\left(\boldsymbol{p}^{(t)}\right)
        \right) \\
        &-
        L^{(t+1)}\left(\boldsymbol{p}^{(t+1)}\right)
        -
        R^{(t+1)}\left(\boldsymbol{p}^{(t)}\right)
        +
        R^{(t)}\left(\boldsymbol{p}^{(t)}\right) \\
        &=
        L^{(t+1)}\left(\boldsymbol{p}^{(t)}\right)
        -
        L^{(t+1)}\left(\boldsymbol{p}^{(t+1)}\right)
        -
        R^{(t+1)}\left(\boldsymbol{p}^{(t)}\right)
        +
        R^{(t)}\left(\boldsymbol{p}^{(t)}\right) \\
        &\leq
        \frac{ \left\Vert \boldsymbol{g}^{(t)} \right\Vert_\star^2 }
        { 2\mu^{(t+1)} }
        -
        R^{(t+1)}\left(\boldsymbol{p}^{(t)}\right)
        +
        R^{(t)}\left(\boldsymbol{p}^{(t)}\right),
    \end{split}
    \end{equation}
    where the first inequality is due the assumption that $\boldsymbol{p}^{(t+1)} = \argmin_{\boldsymbol{p}\in\Delta_{K-1}}L^{(t+1)}\left(\boldsymbol{p}\right)$, 
    $\boldsymbol{g}^{(t)}\in\partial L^{(t+1)}(\boldsymbol{p}^{(t)})$, and the result from Lemma~\ref{lemma:cor_orabona}.
    See also Lemma 7.23 of \cite{oco3}.
    \end{proof}

    \begin{lemma} 
    \label{lemma:one_step_bound} (Progress Bound of FTRL with Non-Decreasing Regularizer)
    With a slight abuse of notation, 
    assume $L^{(t)}$ to be closed and sub-differentiable in $\Delta_K$, 
    and $\left(L^{(t)} + \ell^{(t)}\right)$ to be closed, differentiable and $\nu^{(t)}$-strongly convex w.r.t. 
    $\Vert\cdot\Vert_1$ in $\Delta_{K-1}$. 
    Further define with an abuse of notation again that $\boldsymbol{g}^{(t)}=\nabla \ell^{(t)}(\boldsymbol{p}^{(t)})\in\partial\left(L^{(t)}+\ell^{(t)}\right)\left(\boldsymbol{p}^{(t)}\right)$,
    and define further that $\boldsymbol{p}^{(t)} = \argmin_{\boldsymbol{p}\in\Delta_{K-1}} L^{(t)}\left(\boldsymbol{p}\right)$. 
    
    Then, we have the following inequality:
    \begin{equation}
    \begin{gathered}
        L^{(t)}\left(\boldsymbol{p}^{(t)}\right) 
        -
        L^{(t+1)}\left(\boldsymbol{p}^{(t+1)}\right)
        +
        \ell^{(t)}\left(\boldsymbol{p}^{(t)}\right)
        \leq
        \frac{ \left\Vert \boldsymbol{g}^{(t)} \right\Vert_\infty^2 }
        { 2\nu^{(t)} }
        +
        \left(
        R^{(t)}\left(\boldsymbol{p}^{(t+1)}\right)
        -
        R^{(t+1)}\left(\boldsymbol{p}^{(t+1)}\right)
        \right).
    \end{gathered}
    \end{equation}
    \end{lemma}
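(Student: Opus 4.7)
The plan is to mimic the structure of the proof of Lemma~\ref{lemma:one_step_bound_proximal}, but to route the strong-convexity argument through the function $F^{(t)} \triangleq L^{(t)} + \ell^{(t)}$ itself (which is what the hypotheses give us here) rather than through $L^{(t+1)}$ as was done in the proximal case. The starting point is the recursion that follows directly from the definition $L^{(t)}(\boldsymbol{p}) = \sum_{\tau=1}^{t-1}\ell^{(\tau)}(\boldsymbol{p}) + R^{(t)}(\boldsymbol{p})$ stated in Lemma~\ref{lemma:regret_bound_default_form}, namely
\[
L^{(t+1)}(\boldsymbol{p}) = L^{(t)}(\boldsymbol{p}) + \ell^{(t)}(\boldsymbol{p}) + R^{(t+1)}(\boldsymbol{p}) - R^{(t)}(\boldsymbol{p}).
\]
Substituting this at $\boldsymbol{p} = \boldsymbol{p}^{(t+1)}$ and collecting terms yields the exact identity
\[
L^{(t)}(\boldsymbol{p}^{(t)}) - L^{(t+1)}(\boldsymbol{p}^{(t+1)}) + \ell^{(t)}(\boldsymbol{p}^{(t)})
= \bigl[F^{(t)}(\boldsymbol{p}^{(t)}) - F^{(t)}(\boldsymbol{p}^{(t+1)})\bigr] + \bigl[R^{(t)}(\boldsymbol{p}^{(t+1)}) - R^{(t+1)}(\boldsymbol{p}^{(t+1)})\bigr].
\]
The second bracket already matches the regularizer slack appearing on the right-hand side of the target inequality, so everything reduces to upper-bounding the first bracket by $\|\boldsymbol{g}^{(t)}\|_\infty^2 / (2\nu^{(t)})$.

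For this, I would introduce the auxiliary point $\boldsymbol{p}^{\diamond} = \argmin_{\boldsymbol{p} \in \Delta_{K-1}} F^{(t)}(\boldsymbol{p})$. Since $F^{(t)}$ is $\nu^{(t)}$-strongly convex with respect to $\|\cdot\|_1$ and $\boldsymbol{g}^{(t)} \in \partial F^{(t)}(\boldsymbol{p}^{(t)})$, Lemma~\ref{lemma:cor_orabona} (applied with the primal norm $\|\cdot\|_1$, whose dual is $\|\cdot\|_\infty$) gives
\[
F^{(t)}(\boldsymbol{p}^{(t)}) - F^{(t)}(\boldsymbol{p}^{\diamond}) \leq \frac{\|\boldsymbol{g}^{(t)}\|_\infty^2}{2\nu^{(t)}}.
\]
Then the trivial inequality $F^{(t)}(\boldsymbol{p}^{\diamond}) \leq F^{(t)}(\boldsymbol{p}^{(t+1)})$ (from the definition of $\boldsymbol{p}^\diamond$) chains with the above to give $F^{(t)}(\boldsymbol{p}^{(t)}) - F^{(t)}(\boldsymbol{p}^{(t+1)}) \leq \|\boldsymbol{g}^{(t)}\|_\infty^2 / (2\nu^{(t)})$, and plugging this into the decomposition finishes the proof.

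The main obstacle I expect is justifying the subgradient identification $\boldsymbol{g}^{(t)} = \nabla \ell^{(t)}(\boldsymbol{p}^{(t)}) \in \partial F^{(t)}(\boldsymbol{p}^{(t)})$ rigorously on the constrained domain $\Delta_{K-1}$. The hypothesis that $\boldsymbol{p}^{(t)}$ minimizes $L^{(t)}$ on the simplex gives an optimality condition of the form $\boldsymbol{0} \in \partial L^{(t)}(\boldsymbol{p}^{(t)}) + N_{\Delta_{K-1}}(\boldsymbol{p}^{(t)})$, from which the sum rule for subdifferentials (valid because $\ell^{(t)}$ is differentiable) lets us transfer $\nabla\ell^{(t)}(\boldsymbol{p}^{(t)})$ into a subgradient of the composite $F^{(t)}$ at $\boldsymbol{p}^{(t)}$ relative to the simplex. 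A secondary technical point is confirming that Lemma~\ref{lemma:cor_orabona} applies in its constrained version so that the oracle-gap bound holds with $\boldsymbol{p}^\diamond$ lying in $\Delta_{K-1}$; both are standard but warrant care since the domain is a compact polytope rather than all of $\mathbb{R}^K$.
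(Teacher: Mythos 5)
Your proof is correct and follows essentially the same route as the paper's: the identical decomposition via the recursion $L^{(t+1)} = L^{(t)} + \ell^{(t)} + R^{(t+1)} - R^{(t)}$, the same auxiliary minimizer of $L^{(t)}+\ell^{(t)}$ (your $\boldsymbol{p}^\diamond$ is the paper's $\boldsymbol{p}^{(t)}_*$), and the same application of Lemma~\ref{lemma:cor_orabona} with the $\ell_1/\ell_\infty$ dual pairing. The subgradient-identification concern you flag is simply taken as a hypothesis in the lemma statement, so no extra work is needed there.
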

    
\begin{proof}
    Let us first assume that $\boldsymbol{p}^{(t)}_* = \argmin_{\boldsymbol{p}\in\Delta_{K-1}}\left(L^{(t)}\left(\boldsymbol{p}\right) + \ell^{(t)}\left(\boldsymbol{p}\right)\right)$.
    Observe that 
    \begin{equation}
    \begin{gathered}
        L^{(t+1)}\left(\boldsymbol{p}^{(t+1)}\right)
        =
        L^{(t)}\left(\boldsymbol{p}^{(t+1)}\right)
        +
        \ell^{(t)}\left(\boldsymbol{p}^{(t+1)}\right)
        -
        R^{(t)}\left(\boldsymbol{p}^{(t+1)}\right)
        +
        R^{(t+1)}\left(\boldsymbol{p}^{(t+1)}\right),
    \end{gathered}
    \end{equation}
    we have
    \begin{equation}
    \begin{split}
        &L^{(t)}\left(\boldsymbol{p}^{(t)}\right)
        -
        L^{(t+1)}\left(\boldsymbol{p}^{(t+1)}\right)
        +
        \ell^{(t)}\left(\boldsymbol{p}^{(t)}\right)\\
        &=
        L^{(t)}\left(\boldsymbol{p}^{(t)}\right)
        -
        \left(
        L^{(t)}\left(\boldsymbol{p}^{(t+1)}\right)
        +
        \ell^{(t)}\left(\boldsymbol{p}^{(t+1)}\right)
        -
        R^{(t)}\left(\boldsymbol{p}^{(t+1)}\right)
        +
        R^{(t+1)}\left(\boldsymbol{p}^{(t+1)}\right)
        \right)
        +
        \ell^{(t)}\left(\boldsymbol{p}^{(t)}\right)\\
        &=
        \left(
        L^{(t)}\left(\boldsymbol{p}^{(t)}\right)
        +
        \ell^{(t)}\left(\boldsymbol{p}^{(t)}\right)
        \right)
        -
        \left(
        L^{(t)}\left(\boldsymbol{p}^{(t+1)}\right)
        +
        \ell^{(t)}\left(\boldsymbol{p}^{(t+1)}\right)
        \right)
        +
        \left(
        R^{(t)}\left(\boldsymbol{p}^{(t+1)}\right)
        -
        R^{(t+1)}\left(\boldsymbol{p}^{(t+1)}\right)
        \right)\\
        &\leq
        \left(
        L^{(t)}\left(\boldsymbol{p}^{(t)}\right)
        +
        \ell^{(t)}\left(\boldsymbol{p}^{(t)}\right)
        \right)
        -
        \left(
         L^{(t)}\left(\boldsymbol{p}^{(t)}_*\right)
        +
        \ell^{(t)}\left(\boldsymbol{p}^{(t)}_*\right)
        \right)
        +
        \left(
        R^{(t)}\left(\boldsymbol{p}^{(t+1)}\right)
        -
        R^{(t+1)}\left(\boldsymbol{p}^{(t+1)}\right)
        \right) \\
        &\leq
        \frac{ \left\Vert \boldsymbol{g}^{(t)} \right\Vert_\infty^2 }{ 2\nu^{(t)} }
        +
        \left(
        R^{(t)}\left(\boldsymbol{p}^{(t+1)}\right)
        -
        R^{(t+1)}\left(\boldsymbol{p}^{(t+1)}\right)
        \right),
    \end{split}
    \end{equation}
    where the first inequality is due the assumption that $\boldsymbol{p}^{(t)}_* = \argmin_{\boldsymbol{p}\in\Delta_{K-1}}\left(L^{(t)}+\ell^{(t)}\right)\left(\boldsymbol{p}\right)$,
    $\boldsymbol{g}^{(t)}\in\partial\left(L^{(t)}+\ell^{(t)}\right)$.
    Lastly, the second inequality is the direct result from Lemma~\ref{lemma:cor_orabona}.
    See also Lemma 7.8 of \cite{oco3}.
    \end{proof}

\newpage
\subsubsection{Exp-concavity}
\begin{definition}
    A function $f:X\rightarrow\mathbb{R}$ is $\gamma$-exp-concave if $\exp\left(-\gamma f\left(\boldsymbol{x}\right)\right)$ is concave for $\boldsymbol{x}\in X$.
\end{definition}

\begin{remark}
    The decision loss defined in (\ref{eq:decision_loss}) is $1$-exp-concave.
\end{remark}

\begin{lemma}
\label{lemma:exp_concave}
    For an $\gamma$-exp-concave function $f:X\rightarrow\mathbb{R}$, let the domain $X$ is bounded, 
    and choose $\delta\leq\frac{\gamma}{2}$ such that $\left\vert 
    \delta 
    \left\langle \nabla f(\boldsymbol{x}), \boldsymbol{y} - \boldsymbol{x}\right\rangle  
    \right\vert \leq \frac{1}{2}$ and for all $\boldsymbol{x}, \boldsymbol{y}\in X$, the following statement holds.
\begin{equation}
\begin{gathered}
    f(\boldsymbol{y}) \geq f(\boldsymbol{x}) 
    + \left\langle \nabla f(\boldsymbol{x}), \boldsymbol{y} - \boldsymbol{x}\right\rangle
    + \frac{\delta}{2} \left( \left\langle \nabla f(\boldsymbol{y}), \boldsymbol{y} - \boldsymbol{x} \right\rangle \right)^2
\end{gathered}
\end{equation}
\end{lemma}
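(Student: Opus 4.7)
The plan is to establish the inequality via the standard exp-concavity route, then bridge the quadratic remainder to involve $\nabla f(\boldsymbol{y})$ as stated. First I would observe that since $2\delta \leq \gamma$ and $X$ is bounded, the function $h \triangleq \exp(-2\delta f)$ is concave on $X$ (because $\exp(-\gamma f)$ is concave and non-negative, and $t\mapsto t^{2\delta/\gamma}$ is concave and non-decreasing on $[0,\infty)$). Applying first-order concavity of $h$ at $\boldsymbol{x}$ in direction $\boldsymbol{y}-\boldsymbol{x}$ gives
\begin{equation*}
\exp(-2\delta f(\boldsymbol{y})) \;\leq\; \exp(-2\delta f(\boldsymbol{x}))\bigl[1 - 2\delta\langle \nabla f(\boldsymbol{x}),\boldsymbol{y}-\boldsymbol{x}\rangle\bigr].
\end{equation*}
Taking logarithms, dividing by $-2\delta$, and invoking the scalar bound $-\log(1-z) \geq z + z^2/4$ valid on $|z|\leq 1$ — which applies since the hypothesis $|\delta\langle \nabla f(\boldsymbol{x}),\boldsymbol{y}-\boldsymbol{x}\rangle|\leq \tfrac12$ places $z = 2\delta\langle \nabla f(\boldsymbol{x}),\boldsymbol{y}-\boldsymbol{x}\rangle$ in $[-1,1]$ — produces
\begin{equation*}
f(\boldsymbol{y}) \;\geq\; f(\boldsymbol{x}) + \langle \nabla f(\boldsymbol{x}),\boldsymbol{y}-\boldsymbol{x}\rangle + \tfrac{\delta}{2}\langle \nabla f(\boldsymbol{x}),\boldsymbol{y}-\boldsymbol{x}\rangle^{2}.
\end{equation*}

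The second step is to upgrade the quadratic remainder from $\langle \nabla f(\boldsymbol{x}),\boldsymbol{y}-\boldsymbol{x}\rangle^{2}$ to the stated $\langle \nabla f(\boldsymbol{y}),\boldsymbol{y}-\boldsymbol{x}\rangle^{2}$. My plan is to pass to the one-dimensional reduction $\phi(t) \triangleq f(\boldsymbol{x} + t(\boldsymbol{y}-\boldsymbol{x}))$ on $[0,1]$, under which $\phi'(0) = \langle \nabla f(\boldsymbol{x}),\boldsymbol{y}-\boldsymbol{x}\rangle$ and $\phi'(1) = \langle \nabla f(\boldsymbol{y}),\boldsymbol{y}-\boldsymbol{x}\rangle$. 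Exp-concavity transfers to $\phi$ as the differential inequality $\phi''(t) \geq \gamma(\phi'(t))^{2}$, and since $\gamma$-exp-concavity forces convexity of $f$ (the identity $\nabla^{2}h = e^{-\gamma f}[\gamma^{2}\nabla f\nabla f^{\top} - \gamma\nabla^{2} f]\preceq 0$ yields $\nabla^{2} f \succeq \gamma\nabla f\nabla f^{\top}\succeq 0$), $\phi'$ is non-decreasing. Combining the identity $\phi(1) - \phi(0) = \phi'(1) - \int_0^1 t\,\phi''(t)\,dt$ with the bound on $\phi''$, I would try to localize the quadratic growth at $t=1$ and read off the $\phi'(1)^{2}$ term directly.

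The hard part — and the place I expect the real obstacle — is precisely this transfer from $\phi'(0)^{2}$ to $\phi'(1)^{2}$. Because $\phi'$ is monotone non-decreasing, one has $\phi'(1) \geq \phi'(0)$ and hence $\phi'(1)^{2}$ can strictly exceed $\phi'(0)^{2}$ in sign-consistent regimes; the step-size hypothesis $|\delta\phi'(0)|\leq \tfrac12$ controls only $\phi'(0)$. The plan is to use the ODE inequality $\phi'' \geq 2\delta(\phi')^{2}$ (which follows from $2\delta \leq \gamma$) together with the Grönwall-type bound it induces to argue that $\phi'(1)$ and $\phi'(0)$ must remain comparable on the unit interval, absorbing the gap into the slack $\tfrac14 z^{2}$ left by the scalar $-\log(1-z)$ estimate. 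Should this bridging fail to close cleanly, a contingency is to re-anchor the initial linearization at $\boldsymbol{y}$ rather than $\boldsymbol{x}$, obtain a companion bound involving $\nabla f(\boldsymbol{y})$ directly, and combine the two one-sided estimates, treating the domain-boundedness assumption as the place where any residual constants must be finite.
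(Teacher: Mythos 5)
Your first step is exactly the paper's proof: concavity of $\exp(-2\delta f)$ (your composition argument via $t \mapsto t^{2\delta/\gamma}$ is a cleaner justification than the paper's bare assertion), first-order concavity at the anchor point, taking logarithms, and the scalar bound $-\log(1-z)\geq z+z^{2}/4$ for $\vert z\vert\leq 1$. That correctly yields
\begin{equation*}
    f(\boldsymbol{y}) \geq f(\boldsymbol{x})
    + \left\langle \nabla f(\boldsymbol{x}), \boldsymbol{y}-\boldsymbol{x}\right\rangle
    + \frac{\delta}{2}\left(\left\langle \nabla f(\boldsymbol{x}), \boldsymbol{y}-\boldsymbol{x}\right\rangle\right)^{2},
\end{equation*}
with the gradient at the \emph{anchor} $\boldsymbol{x}$ in both the linear and the quadratic term. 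This is also all that the paper's own proof establishes: the paper anchors the concavity inequality at $\boldsymbol{y}$ and the roles of the two points get relabeled, but the gradient appearing in the quadratic remainder is still the one at the linearization point, never the one at the other endpoint.

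Your second step---upgrading the remainder to $\left\langle\nabla f(\boldsymbol{y}),\boldsymbol{y}-\boldsymbol{x}\right\rangle^{2}$---is chasing what is evidently a typo in the lemma statement rather than a real mathematical target. The standard form of this lemma (Lemma 4.3 of Hazan's OCO monograph; Lemma 7.26 of \cite{oco3}, which this section follows) places $\nabla f(\boldsymbol{x})$ in the quadratic term, and that is the only version the paper actually uses: in the proof of Theorem~\ref{thm:crosssilo} the lemma is invoked to produce the term $\frac{\beta}{2}\Vert\boldsymbol{p}^{(t)}-\boldsymbol{p}^{\star}\Vert^{2}_{\boldsymbol{A}^{(t)}}$ with $\boldsymbol{A}^{(t)}=\boldsymbol{g}^{(t)}{\boldsymbol{g}^{(t)}}^{\top}$ and $\boldsymbol{g}^{(t)}=\nabla\ell^{(t)}(\boldsymbol{p}^{(t)})$, i.e.\ the gradient at the decision point, which plays the role of the anchor. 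So the bridging machinery you sketch (the differential inequality $\phi''\geq\gamma(\phi')^{2}$, the Gr\"{o}nwall argument, absorbing the gap into the $z^{2}/4$ slack) is unnecessary, and your own diagnosis of why it would not close is sound: exp-concavity forces convexity, so $\phi'$ is non-decreasing along the segment, $\phi'(1)^{2}$ can strictly dominate $\phi'(0)^{2}$, and the leftover slack from the scalar logarithm estimate does not absorb that gap in general. The complete proof is your first displayed inequality together with the observation that the quadratic term in the statement should read $\nabla f(\boldsymbol{x})$.
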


\begin{proof}
    For all such that $\delta\leq\frac{\gamma}{2}$, a function $g(\boldsymbol{x})=\exp\left(-2\delta f(\boldsymbol{x})\right)$ is concave.
    From the concavity of $g$, we have:
\begin{equation}
\begin{gathered}
    g(\boldsymbol{x}) \leq g(\boldsymbol{y}) + \left\langle \nabla g(\boldsymbol{y}), \boldsymbol{x} - \boldsymbol{y}\right\rangle.
\end{gathered}
\end{equation}
    By taking logarithm on both sides, we have:
\begin{equation}
\begin{gathered}
    f(\boldsymbol{x}) \geq f(\boldsymbol{y}) - \frac{1}{2\delta} 
    \log \left(
    1 - 2\delta \left\langle \nabla f(\boldsymbol{y}), \boldsymbol{x} - \boldsymbol{y} \right\rangle
    \right).
\end{gathered}
\end{equation}
    From the assumption, we have 
    $\left\vert 
    \delta 
    \left\langle \nabla f(\boldsymbol{x}), \boldsymbol{y} - \boldsymbol{x}\right\rangle  
    \right\vert \leq \frac{1}{2}$,
    and using the fact that $\log(1+x)\leq x - \frac{x^2}{4}$ for $\vert x \vert \leq 1$,
    we can conclude the proof.
\end{proof}

\begin{remark} (Remark 7.27 from \cite{oco3})
For a positive definite matrix $\boldsymbol{B}\in\mathbb{R}^{K \times K}$, $\Vert \boldsymbol{p} \Vert_{\boldsymbol{B}}$ is a norm induced by $\boldsymbol{B}$, defined as 
$\Vert \boldsymbol{p} \Vert_{\boldsymbol{B}} \triangleq \sqrt{\boldsymbol{p}^\top \boldsymbol{B} \boldsymbol{p}}$
for $\boldsymbol{p}\in\mathbb{R}^K$.
A function $f(\boldsymbol{p})=\frac{1}{2}\boldsymbol{p}^\top \boldsymbol{B} \boldsymbol{p}$ is therefore $1$-strongly convex w.r.t. $\Vert \cdot \Vert_{\boldsymbol{B}}$. 
Note that the dual norm of $\Vert \cdot \Vert_{\boldsymbol{B}}$ is $\Vert \cdot \Vert_{\boldsymbol{B}^{-1}}$.
\end{remark}

\newpage
\subsection{Regret Bound of \texttt{AAggFF-S}: Proof of Theorem~\ref{thm:crosssilo}}
    \begin{remark}
        The regularizer of \texttt{AAggFF-S} (i.e., ONS) is proximal since it has a quadratic form.
    \end{remark}
    
    \begin{proof}
    Since Lemma~\ref{lemma:regret_bound_default_form} holds for arbitrary loss function,
    let us set ${L}^{(t)}\left(\boldsymbol{p}\right)\triangleq\sum_{\tau=1}^{t-1} \tilde\ell^{(\tau)} (\boldsymbol{p}) + \frac{\alpha}{2} \Vert \boldsymbol{p} \Vert_2^2 
    + \frac{\beta}{2} \sum_{\tau=1}^{t-1} 
    \left(\left\langle \boldsymbol{g}^{(\tau)}, \boldsymbol{p} - \boldsymbol{p}^{(\tau)} \right\rangle\right)^2$ 
    as in (\ref{eq:ons}) with a slight abuse of notation.
    Note that we set $R^{(t)}\left(\boldsymbol{p}\right) = \frac{\alpha}{2} \Vert \boldsymbol{p} \Vert_2^2 + 
    \frac{\beta}{2} \sum_{\tau=1}^{t-1} 
    \left(\left\langle \boldsymbol{g}^{(\tau)}, \boldsymbol{p} - \boldsymbol{p}^{(\tau)} \right\rangle\right)^2$, 
    which is often called a proximal regularizer.
    
    From the regret, we have:
    \begin{equation}
    \begin{split}
        &\normalfont\text{Regret}^{(T)}\left(\boldsymbol{p}^{\star}\right) \\
        &= 
        \sum_{t=1}^T \ell^{(t)}\left(\boldsymbol{p}^{(t)}\right) 
        - 
        \sum_{t=1}^T \ell^{(t)}\left(\boldsymbol{p}^{\star}\right) \\
        &\leq
        \sum_{t=1}^T \tilde\ell^{(t)}\left(\boldsymbol{p}^{(t)}\right) 
        - 
        \sum_{t=1}^T \tilde\ell^{(t)}\left(\boldsymbol{p}^{\star}\right) \\
        &=
        \underbrace{
        R^{(T+1)}\left(\boldsymbol{p}^{\star}\right)
        -R^{(1)}\left(\boldsymbol{p}^{(1)}\right)}_\text{(i)}
        +
        \underbrace{
        {L}^{(T+1)}\left(\boldsymbol{p}^{(T+1)}\right)
        -{L}^{(T+1)}\left({\boldsymbol{p}^{\star}}\right)}_\text{(ii)} \\
        &+
        \underbrace{
        \sum_{t=1}^T
        \left[
        {L}^{(t)}\left(\boldsymbol{p}^{(t)}\right)
        -
        {L}^{(t+1)}\left(\boldsymbol{p}^{(t+1)}\right)
        +
        \tilde\ell^{(t)}\left(\boldsymbol{p}^{(t)}\right) \right]}_\text{(iii)}
    \end{split}
    \end{equation}
    
    Let us first denote $\boldsymbol{A}^{(t)}\triangleq \boldsymbol{g}^{(t)} {\boldsymbol{g}^{(t)}}^\top$ 
    for $\boldsymbol{g}^{(t)}=\nabla \ell^{(t)}(\boldsymbol{p}^{(t)})$ as in the main text,
    and further denote that $\boldsymbol{B}^{(t)}\triangleq \alpha \boldsymbol{I}_K + \beta\sum_{\tau=1}^t \boldsymbol{A}^{(\tau)}$.
    Then, we can rewrite the regularizer of \texttt{AAggFF-D} as follows.
    \begin{equation}
    \begin{gathered}
    \label{eq:rewriting_of_reg}
        R^{(t)}\left(\boldsymbol{p}\right)
        = \frac{\alpha}{2} \Vert \boldsymbol{p} \Vert_2^2 + 
        \frac{\beta}{2} \sum_{\tau=1}^{t-1} \left(\left\langle \boldsymbol{g}^{(\tau)}, \boldsymbol{p} - \boldsymbol{p}^{(\tau)}\right\rangle\right)^2
        = \frac{\alpha}{2} \Vert \boldsymbol{p} \Vert_2^2 + 
        \frac{\beta}{2} \sum_{\tau=1}^{t-1} 
        \left\Vert \boldsymbol{p}^{(\tau)} - \boldsymbol{p} \right\Vert^2_{\boldsymbol{A}^{(\tau)}}
    \end{gathered}
    \end{equation}
    
    That is, $R^{(t)}\left(\boldsymbol{p}\right)$, as well as $L^{(t)}\left(\boldsymbol{p}\right)$ is $\beta$-strongly convex function w.r.t. $\left\Vert\cdot\right\Vert_{\boldsymbol{B}^{(t-1)}}, t\in[T]$.

    For (i), since the regularizer $R^{(t)}\left(\boldsymbol{p}\right)$ is nonnegative for all $t\in[T]$,
    it can be upper bounded as $R^{(T+1)}\left(\boldsymbol{p}^{\star}\right)$.
    Using (\ref{eq:rewriting_of_reg}), we have:
    \begin{equation}
    \begin{gathered}
        R^{(T+1)}\left(\boldsymbol{p}^{\star}\right)
        =
        \frac{\alpha}{2} \Vert \boldsymbol{p} \Vert_2^2 + 
        \frac{\beta}{2} \sum_{t=1}^{T} 
        \left\Vert \boldsymbol{p}^{(t)} - \boldsymbol{p}^\star \right\Vert^2_{\boldsymbol{A}^{(t)}}.
    \end{gathered}
    \end{equation}
    
    For (ii), we use the assumption in Lemma~\ref{lemma:one_step_bound}, 
    where $\boldsymbol{p}^{(t)} = \argmin_{\boldsymbol{p}\in\Delta_{K-1}} L^{(t)}\left(\boldsymbol{p}\right)$.
    From the assumption, since $\boldsymbol{p}^{(T+1)}$ is the minimizer of $L^{(T+1)}$, (ii) becomes negative.
    Thus, we can exclude it in further upper bounding.
    
    For (iii), we can directly use the result of Lemma~\ref{lemma:one_step_bound_proximal}.
    {\allowdisplaybreaks
    \begin{align}
        &\sum_{t=1}^T
        \left[
        {L}^{(t)}\left(\boldsymbol{p}^{(t)}\right)
        -
        {L}^{(t+1)}\left(\boldsymbol{p}^{(t+1)}\right)
        +
        \tilde\ell^{(t)}\left(\boldsymbol{p}^{(t)}\right) \right] \\
        &\leq
        \frac{1}{2\beta} 
        \sum_{t=1}^T \left\Vert \boldsymbol{g}^{(t)} \right\Vert_{{\boldsymbol{B}^{(t)}}^{-1}}^2
        +
        \sum_{t=1}^T \left(
        R^{(t)}\left(\boldsymbol{p}^{(t)}\right)
        -
        R^{(t+1)}\left(\boldsymbol{p}^{(t)}\right)
        \right) \\
        &=
        \frac{1}{2\beta} 
        \sum_{t=1}^T \left\Vert \boldsymbol{g}^{(t)} \right\Vert_{{\boldsymbol{B}^{(t)}}^{-1}}^2.
    \end{align}
    }
    
    Combining all, we have regret upper bound as follows.
    \begin{equation}
    \begin{split}
        &\normalfont\text{Regret}^{(T)}\left(\boldsymbol{p}^{\star}\right)
        = 
        \sum_{t=1}^T \ell^{(t)}\left(\boldsymbol{p}^{(t)}\right) 
        - 
        \sum_{t=1}^T \ell^{(t)}\left(\boldsymbol{p}^{\star}\right) \\
        &\leq
        \sum_{t=1}^T \tilde\ell^{(t)}\left(\boldsymbol{p}^{(t)}\right) 
        - 
        \sum_{t=1}^T \tilde\ell^{(t)}\left(\boldsymbol{p}^{\star}\right) \\
        &\leq
        \frac{\alpha}{2} \Vert \boldsymbol{p} \Vert_2^2 + 
        \frac{\beta}{2} \sum_{t=1}^{T} 
        \left\Vert \boldsymbol{p}^{(t)} - \boldsymbol{p}^\star \right\Vert^2_{\boldsymbol{A}^{(t)}}
        +
        \frac{1}{2\beta} 
        \sum_{t=1}^T \left\Vert \boldsymbol{g}^{(t)} \right\Vert_{{\boldsymbol{B}^{(t)}}^{-1}}^2.
    \end{split}
    \end{equation}
    
    Lastly, from the result of Lemma~\ref{lemma:exp_concave}, we have 
    \begin{equation}
    \begin{split}
    \label{ons_immediate_result}
        &\normalfont\text{Regret}^{(T)}\left(\boldsymbol{p}^{\star}\right)
        = 
        \sum_{t=1}^T \ell^{(t)}\left(\boldsymbol{p}^{(t)}\right) 
        - 
        \sum_{t=1}^T \ell^{(t)}\left(\boldsymbol{p}^{\star}\right) \\
        &\leq
        \sum_{t=1}^T \tilde\ell^{(t)}\left(\boldsymbol{p}^{(t)}\right) 
        - 
        \sum_{t=1}^T \tilde\ell^{(t)}\left(\boldsymbol{p}^{\star}\right) 
        -
        \frac{\beta}{2} \sum_{t=1}^{T} 
        \left\Vert \boldsymbol{p}^{(t)} - \boldsymbol{p}^\star \right\Vert^2_{\boldsymbol{A}^{(t)}} \\
        &\leq
        \frac{\alpha}{2} \Vert \boldsymbol{p} \Vert_2^2 
        +
        \frac{1}{2\beta} 
        \sum_{t=1}^T \left\Vert \boldsymbol{g}^{(t)} \right\Vert_{{\boldsymbol{B}^{(t)}}^{-1}}^2,
    \end{split}
    \end{equation}
    where we need $\left\vert\beta\left\langle \boldsymbol{g}^{(t)}, \boldsymbol{p} - \boldsymbol{p}^{(t)}\right\rangle\right\vert\leq\frac{1}{2}$ due to the assumption.
    
    To meet the assumption, we have
    \begin{equation}
    \begin{gathered}
        \left\vert\beta\left\langle \boldsymbol{g}^{(t)}, \boldsymbol{p} - \boldsymbol{p}^{(t)}\right\rangle\right\vert
        \leq
        \beta \Vert \boldsymbol{p} - \boldsymbol{p}^{(t)} \Vert_1 \Vert \boldsymbol{g}^{(t)} \Vert_\infty
        \leq
        2 \beta L_\infty, 
    \end{gathered}
    \end{equation}
    where the first inequality is due to H\"{o}lder's inequality and the second inequality is due to Lemma~\ref{lemma:lipschitz} and the fact that a diameter of the simplex is 2.
    Thus, we can set $\beta=\frac{1}{4 L_\infty}$ to satisfy the assumption.

    For the first term, 
    using the fact that $\Vert \cdot \Vert_2 \leq \Vert \cdot \Vert_1$, we have
    \begin{equation}
    \begin{gathered}
        \frac{\alpha}{2} \Vert \boldsymbol{p} \Vert_2^2 
        \leq
        \frac{\alpha}{2} \Vert \boldsymbol{p} \Vert_1^2 
        \leq
        \frac{\alpha}{2},
    \end{gathered}
    \end{equation}
    where the last equality is due to $\boldsymbol{p}\in\Delta_{K-1}$.
    
    For the second term, 
    denote $\lambda_1, ..., \lambda_K$ as the eigenvalues of $\boldsymbol{B}^{(T)} - \alpha \boldsymbol{I}_K$, 
    then we have:
    \begin{equation}
    \begin{gathered}
        \sum_{t=1}^T \left\Vert \boldsymbol{g}^{(t)} \right\Vert_{{\boldsymbol{B}^{(t)}}^{-1}}^2
        \leq
        \sum_{i=1}^K \log \left( 1 + \frac{\lambda_i}{\alpha} \right),
    \end{gathered}
    \end{equation}   
    which is the direct result of Lemma 11.11 and Theorem 11.7 of \cite{logdetineq}.
    
    This can be further bounded by AM-GM inequality as follows.
    \begin{equation}
    \begin{gathered}
        \sum_{i=1}^K \log \left( 1 + \frac{\lambda_i}{\alpha} \right)
        \leq
        K \log \left( 1 + \frac{1}{K\alpha}\sum_{i=1}^K{\lambda_i} \right).
    \end{gathered}
    \end{equation}    
    
    Since we have
    \begin{equation}
    \begin{gathered}
        \sum_{i=1}^K \lambda_i 
        = \operatorname{trace}\left( \boldsymbol{B}^{(T)} - \alpha \boldsymbol{I}_K \right)
        = \operatorname{trace}\left( \beta\sum_{t=1}^T \boldsymbol{g}^{(t)} {\boldsymbol{g}^{(t)}}^\top \right)
        =
        \beta\sum_{t=1}^T \Vert \boldsymbol{g}^{(t)} \Vert_2^2 \\
        \leq 
        \beta K T \Vert \boldsymbol{g}^{(t)} \Vert_\infty^2 
        \leq 
        \beta K T L_\infty^2
        = \frac{K T L_\infty}{4},
    \end{gathered}
    \end{equation}
    where $L_\infty$ is a Lipschitz constant w.r.t. $\Vert \cdot \Vert_\infty$ from Lemma~\ref{lemma:lipschitz},
    thus the inequality is due to $\Vert \boldsymbol{p} \Vert_2 \leq \sqrt{K} \Vert \boldsymbol{p} \Vert_\infty, \forall \boldsymbol{p}\in\Delta_{K-1}$.
    
    Followingly, we can upper-bound the second term as
    \begin{equation}
    \begin{gathered}
        \sum_{t=1}^T \left\Vert \boldsymbol{g}^{(t)} \right\Vert_{{\boldsymbol{B}^{(t)}}^{-1}}^2
        \leq
        K \log \left( 1 + \frac{TL_\infty}{4\alpha} \right).
    \end{gathered}
    \end{equation}        
    
    Putting them all together, we have:
    \begin{equation}
    \begin{split}
        &\normalfont\text{Regret}^{(T)}\left(\boldsymbol{p}^{\star}\right)
        = 
        \sum_{t=1}^T \ell^{(t)}\left(\boldsymbol{p}^{(t)}\right) 
        - 
        \sum_{t=1}^T \ell^{(t)}\left(\boldsymbol{p}^{\star}\right) \\
        &\leq
        \sum_{t=1}^T \tilde\ell^{(t)}\left(\boldsymbol{p}^{(t)}\right) 
        - 
        \sum_{t=1}^T \tilde\ell^{(t)}\left(\boldsymbol{p}^{\star}\right) 
        -
        \frac{1}{2} \sum_{t=1}^{T} 
        \left\Vert \boldsymbol{p}^{(t)} - \boldsymbol{p}^\star \right\Vert^2_{\boldsymbol{A}^{(t)}} \\
        &\leq
        \frac{\alpha}{2} \Vert \boldsymbol{p} \Vert_2^2 
        +
        \frac{1}{2\beta} 
        \sum_{t=1}^T \left\Vert \boldsymbol{g}^{(t)} \right\Vert_{{\boldsymbol{B}^{(t)}}^{-1}}^2 \\
        &
        \leq
        \frac{\alpha}{2} + \frac{K}{2\beta} \log \left( 1 + \frac{TL_\infty}{4\alpha} \right)\\
        &=
        \frac{\alpha}{2} + 2 L_\infty K \log \left( 1 + \frac{TL_\infty}{4\alpha} \right).
    \end{split}
    \end{equation}     
    If we further set $\alpha = 4 L_\infty K$,
    we finally have    
    \begin{equation}
    \begin{gathered}
        \normalfont\text{Regret}^{(T)}\left(\boldsymbol{p}^{\star}\right)
        \leq
        2 L_\infty K \left( 
        1 + \log \left( 1 + \frac{T}{16K} \right) 
        \right).
    \end{gathered}
    \end{equation}
    \end{proof}

\newpage
\subsection{Regret Bound of \texttt{AAggFF-D} with Full Client Participation: Proof of Theorem~\ref{thm:crossdevice_full}}
\label{sec:device_proof}
\begin{proof}
    Again, since Lemma~\ref{lemma:regret_bound_default_form} holds for arbitrary loss function,
    let us set ${L}^{(t)}\left(\boldsymbol{p}\right)\triangleq\sum_{\tau=1}^{t-1} \tilde\ell^{(\tau)} (\boldsymbol{p}) + \zeta^{(t+1)}\sum_{i=1}^K p_i \log p_i$ with a slight abuse of notation.
    Note that we set $R^{(t)}\left(\boldsymbol{p}\right) = \zeta^{(t)}\sum_{i=1}^K p_i \log p_i$ is a negative entropy regularizer with non-decreasing time-varying step size $\zeta^{(t)}$,
    thus ${L}^{(t)}\left(\boldsymbol{p}\right)$ is $\zeta^{(t)}$-strongly convex w.r.t. $\Vert\cdot\Vert_1$. (Proposition 5.1 from \cite{bregmanmd})
    Then, we have an upper bound of the regret of \texttt{AAggFF-D} (with full-client participation setting) as follows.
    {\allowdisplaybreaks
    \begin{align}
        &\normalfont\text{Regret}^{(T)}\left(\boldsymbol{p}^{\star}\right) \nonumber \\
        &= 
        \sum_{t=1}^T \ell^{(t)}\left(\boldsymbol{p}^{(t)}\right) 
        - 
        \sum_{t=1}^T \ell^{(t)}\left(\boldsymbol{p}^{\star}\right) \nonumber \\
        &\leq
        \sum_{t=1}^T \tilde\ell^{(t)}\left(\boldsymbol{p}^{(t)}\right) 
        - 
        \sum_{t=1}^T \tilde\ell^{(t)}\left(\boldsymbol{p}^{\star}\right) \nonumber \\
        &=
        \underbrace{
        R^{(T+1)}\left(\boldsymbol{p}^{\star}\right)
        -R^{(1)}\left(\boldsymbol{p}^{(1)}\right)}_\text{(i)}
        +
        \underbrace{
        {L}^{(T+1)}\left(\boldsymbol{p}^{(T+1)}\right)
        -{L}^{(T+1)}\left({\boldsymbol{p}^{\star}}\right)}_\text{(ii)} \\
        &+
        \underbrace{
        \sum_{t=1}^T
        \left[
        {L}^{(t)}\left(\boldsymbol{p}^{(t)}\right)
        -
        {L}^{(t+1)}\left(\boldsymbol{p}^{(t+1)}\right)
        +
        \tilde\ell^{(t)}\left(\boldsymbol{p}^{(t)}\right) \right]}_\text{(iii)} \nonumber,
    \end{align}
    }
    where the inequality is due to Corollary~\ref{corollary:lin_loss}.

    For (i), recall from Lemma~\ref{lemma:regret_bound_default_form} that the regret does not depend on the regularizer, 
    we can bound it after changing from $R^{(T+1)}\left(\boldsymbol{p}^{\star}\right)$ 
    to $R^{(T)}\left(\boldsymbol{p}^{\star}\right)$.
    \begin{equation}
    \begin{gathered}
        R^{(T)}\left(\boldsymbol{p}^{\star}\right)
        -
        R^{(1)}\left(\boldsymbol{p}\right)
        \leq
        \zeta^{(T)}
        \sum_{i=1}^K
        {p_i^\star \log p_i^\star}
        +
        \zeta^{(1)}\log{K}
        \leq
        \zeta^{(T)}
        \sum_{i=1}^K
        {p_i^\star \log p_i^\star}
        +
        \zeta^{(T)}\log{K}
        \leq
        \zeta^{(T)}\log{K}.
    \end{gathered}
    \end{equation}

    For (ii), we use the assumption in Lemma~\ref{lemma:one_step_bound}, 
    where $\boldsymbol{p}^{(t)} = \argmin_{\boldsymbol{p}\in\Delta_{K-1}} L^{(t)}\left(\boldsymbol{p}\right)$.
    From the assumption, since $\boldsymbol{p}^{(T+1)}$ is the minimizer of $L^{(T+1)}$, (ii) becomes negative.
    Thus, we can exclude it from the upper bound.

    For (iii), we directly use the result of Lemma~\ref{lemma:one_step_bound} as follows.
    \begin{equation}
    \begin{gathered}
        \sum_{t=1}^T
        \left[
        {L}^{(t)}\left(\boldsymbol{p}^{(t)}\right)
        -
        {L}^{(t+1)}\left(\boldsymbol{p}^{(t+1)}\right)
        +
        \tilde\ell^{(t)}\left(\boldsymbol{p}^{(t)}\right) \right]
        \leq
        \sum_{t=1}^T
        \frac{ \left\Vert \boldsymbol{g}^{(t)} \right\Vert_\infty^2 }
        { 2\zeta^{(t)} }
        \leq
        \sum_{t=1}^T
        \frac{ L_\infty^2 }
        { 2\zeta^{(t)} },
    \end{gathered}
    \end{equation}
    where the additional terms are removed due to the non-decreasing property of regularizer thanks to the assumption of $\zeta^{(t)}$, and the last inequality is due to Lemma~\ref{lemma:lipschitz}.
    Note that $\boldsymbol{g}^{(t)}=\nabla \ell^{(t)}(\boldsymbol{p}^{(t)})$.

    Combining all, we have regret upper bound as follows.
    \begin{equation}
    \begin{gathered}
        \normalfont\text{Regret}^{(T)}\left(\boldsymbol{p}^{\star}\right)
        = 
        \sum_{t=1}^T \ell^{(t)}\left(\boldsymbol{p}^{(t)}\right) 
        - 
        \sum_{t=1}^T \ell^{(t)}\left(\boldsymbol{p}^{\star}\right)
        \leq
        \sum_{t=1}^T \tilde\ell^{(t)}\left(\boldsymbol{p}^{(t)}\right) 
        - 
        \sum_{t=1}^T \tilde\ell^{(t)}\left(\boldsymbol{p}^{\star}\right)
        \leq
        \zeta^{(T)}\log{K}
        +
        \sum_{t=1}^T
        \frac{ L_\infty^2 }
        { 2\zeta^{(t)} }.
    \end{gathered}
    \end{equation}
    
    Finally, by setting $\zeta^{(t)}=\frac{L_\infty\sqrt{t}}{\sqrt{\log{K}}}$, we have
    \begin{equation}
    \begin{gathered}
        \leq
        L_\infty\sqrt{T\log{K}}+\frac{L_\infty\sqrt{\log{K}}}{2}\sum_{t=1}^T\frac{1}{\sqrt{t}}
        \leq
        2L_\infty\sqrt{T\log{K}},
    \end{gathered}
    \end{equation}
    where the inequality is due to $\sum_{t=1}^{T} \frac{1}{\sqrt{t}} \leq 
    \int_{0}^{T} \frac{\mathrm{d}x}{\sqrt{x}} = 2\sqrt{T}$.
    See also equation (7.3) of \cite{oco3}.
\end{proof}

\newpage
\subsection{Regret Bound of \texttt{AAggFF-D} with Partial Client Participation: Proof of Corollary~\ref{cor:crossdevice_partial}}
\begin{proof}
    Denote $\breve{\ell}^{(t)}$ as a linearized loss constructed from $\breve{\boldsymbol{r}}^{(t)}$ and $\breve{\boldsymbol{g}}^{(t)}$.
    i.e., 
    \begin{equation}
    \begin{gathered}
        \breve{\ell}^{(t)}(\boldsymbol{p}) = \left\langle \boldsymbol{p}, \breve{\boldsymbol{g}}^{(t)} \right\rangle
        = 
        \left\langle
        \boldsymbol{p},
        \frac{ \breve{\boldsymbol{r}}^{(t)} }
        {1 +\left\langle \boldsymbol{p}^{(t)}, \bar{\boldsymbol{r}}\boldsymbol{1}_K \right\rangle}
        +
        \frac{\bar{\boldsymbol{r}}\boldsymbol{1}_K {\boldsymbol{p}^{(t)}}^\top (\breve{\boldsymbol{r}}^{(t)} - \bar{\boldsymbol{r}}\boldsymbol{1}_K)}
        {(1 +\left\langle \boldsymbol{p}^{(t)}, \bar{\boldsymbol{r}}\boldsymbol{1}_K \right\rangle)^2}
        \right\rangle
    \end{gathered}
    \end{equation}
    
    The expected regret is
    {\allowdisplaybreaks
    \begin{align}
        &\mathbb{E}\left[ \normalfont\text{Regret}^{(T)}\left(\boldsymbol{p}^{\star}\right) \right]
        = 
        \mathbb{E}\left[ 
        \sum_{t=1}^T 
        \left( \ell^{(t)}\left(\boldsymbol{p}^{(t)}\right) 
        - 
        \ell^{(t)}\left(\boldsymbol{p}^{\star}\right) \right) 
        \right] \nonumber \\
        &\leq
        \mathbb{E}\left[ 
        \sum_{t=1}^T 
        \left( \breve\ell^{(t)}\left(\boldsymbol{p}^{(t)}\right) 
        - 
        \breve\ell^{(t)}\left(\boldsymbol{p}^{\star}\right) \right) 
        \right]
        =
        \mathbb{E}\left[ 
        \sum_{t=1}^T 
        \left\langle \breve{\boldsymbol{g}}^{(t)},
        \boldsymbol{p}^{(t)}
        - 
        \boldsymbol{p}^{\star} \right\rangle 
        \right] \nonumber \\
        &=
        \mathbb{E}\left[ 
        \sum_{t=1}^T 
        \mathbb{E}_{i\in S^{(t)}}\left[ 
        \left\langle \breve{\boldsymbol{g}}^{(t)},
        \boldsymbol{p}^{(t)}
        - 
        \boldsymbol{p}^{\star} \right\rangle 
        \right]
        \right] (\because \text{Law of Total Expectation}) \\
        &\approx
        \mathbb{E}\left[ 
        \sum_{t=1}^T  
        \left\langle {\boldsymbol{g}}^{(t)},
        \boldsymbol{p}^{(t)}
        - 
        \boldsymbol{p}^{\star} \right\rangle 
        \right] (\because \text{Lemma~\ref{lemma:linearized_grad}}) \nonumber \\
        &=
        \sum_{t=1}^T 
        \left( \tilde\ell^{(t)}\left(\boldsymbol{p}^{(t)}\right) 
        - 
        \tilde\ell^{(t)}\left(\boldsymbol{p}^{\star}\right) \right) \nonumber
        \leq
        \mathcal{O}\left(L_\infty K \sqrt{T\log{K}} \right)
    \end{align}
    }
    \end{proof}

    \begin{remark}
    \label{remark:inflated_lip_const}
        Even though we can enjoy the same regret upper bound \textit{in expectation} from Corollary~\ref{cor:crossdevice_partial},
        it should be noted that the raw regret (i.e., regret without expectation) from $\breve{\boldsymbol{g}}^{(t)}$ may inflate the regret upper bound from
        $\mathcal{O}\left(L_\infty K \sqrt{T\log{K}} \right)$ to $\mathcal{O}\left(\breve{L}_\infty K \sqrt{T\log{K}} \right)$,
        where $\breve{L}_\infty$ is a Lipschitz constant from Lemma~\ref{lemma:lipschitz_lin_grad}, upper bounding $\left\Vert \breve{\boldsymbol{g}}^{(t)} \right\Vert_\infty \leq \breve{L}_\infty$.
        It is because $\breve{L}_\infty$ is dominated by ${1}/{C}\approx\mathcal{O}\left(K\right)$, 
        which can be a \textit{huge number} when if $C$ is a tiny constant.
        Although this inflation hinders proper update of \texttt{AAggFF-D} empirically,
        this can be easily eliminated in \texttt{AAggFF-D} through an appropriate choice of a range ($C_1$ and $C_2$) of the response vector,
        which ensures practicality of \texttt{AAggFF-D}. See Appendix~\ref{app:range} for a detail.
    \end{remark}

\newpage
\subsection{Derivation of Closed-From Update of \texttt{AAggFF-D}}
\label{app:deriv_closed}
    The objective of \texttt{AAggFF-D} in (\ref{eq:lin_ftrl}) can be written in the following form.
    \begin{equation}
    \begin{split}
        &\min_{\boldsymbol{p}\in\Delta_{K-1}} \sum_{\tau=1}^{t} \tilde\ell^{(\tau)}(\boldsymbol{p}) + \zeta^{(t+1)}\sum_{i=1}^K p_i \log p_i
        =
        \min_{\boldsymbol{p}\in\Delta_{K-1}} \left\langle \boldsymbol{p}, \sum_{\tau=1}^{t-1} \breve{\boldsymbol{g}}^{(\tau)} \right\rangle + \zeta^{(t+1)}\sum_{i=1}^K p_i \log p_i \\
        &=
        \min_{\boldsymbol{p}\in\Delta_{K-1}} \left\langle \sum_{\tau=1}^{t} \breve{\boldsymbol{g}}^{(\tau)}, \boldsymbol{p} \right\rangle + R^{(t+1)}(\boldsymbol{p})
        =
        \max_{\boldsymbol{p}\in\Delta_{K-1}} \left\langle -\sum_{\tau=1}^{t} \breve{\boldsymbol{g}}^{(\tau)}, \boldsymbol{p} \right\rangle - R^{(t+1)}(\boldsymbol{p}).
    \end{split}
    \end{equation}
    
    It exactly corresponds to the form of the Fenchel conjugate $R^{(t+1)}_*$, which is defined as follows.
    \begin{equation}
    \begin{gathered}
    \label{eq:fenchel_conjugate}
        R^{(t+1)}_*(\boldsymbol{p})
        =
        \max_{\boldsymbol{p}\in\Delta_{K-1}} \left\langle -\sum_{\tau=1}^{t} \breve{\boldsymbol{g}}^{(\tau)}, \boldsymbol{p} \right\rangle - R^{(t+1)}(\boldsymbol{p}).
    \end{gathered}
    \end{equation}
    
    Thus, we can enjoy the property of Fenchel conjugate, which is
    \begin{equation}
    \begin{gathered}
        \boldsymbol{p}^{(t+1)} = \nabla R^{(t+1)}_* \left(-\sum_{\tau=1}^{t} \breve{\boldsymbol{g}}^{(\tau)}\right)
    \end{gathered}
    \end{equation}
    
    Since we can derive an explicit log-sum-exp form by solving (\ref{eq:fenchel_conjugate}) as follows,
    \begin{equation}
    \begin{gathered}
        R^{(t+1)}_*(\boldsymbol{q}) = \log{\left( \sum_{i=1}^K \exp\left( q_i \right) \right)},
    \end{gathered}
    \end{equation}
    we have the closed-form solution for the new decision update.
    \begin{equation}
    \begin{gathered}
        {p}^{(t+1)}_i = \frac
        {\exp \left( -\sum_{\tau=1}^{t} \breve{g}^{(\tau)}_i / \zeta^{(t+1)} \right)}
        {\sum_{j=1}^K \exp \left( -\sum_{\tau=1}^{t} \breve{g}^{(\tau)}_j / \zeta^{(t+1)} \right)}.
    \end{gathered}
    \end{equation}
    
    Note that $\zeta^{(t+1)}$ is already determined in Theorem~\ref{thm:crossdevice_full} and Corollary~\ref{cor:crossdevice_partial} as $\frac{\breve{L}_\infty \sqrt{t + 1}}{\sqrt{\log{K}}}$,
    with the reflection of modified Lipschitz constant from $L_\infty$ to $\breve{L}_\infty$ (see Remark~\ref{remark:inflated_lip_const}).
    See also \cite{eg} and Chapter 6.6 of \cite{oco3}.

\newpage 
\chapter{Perspective on Local Data Distribution, $\mathcal{D}$}
\label{ch:fedevg}
\numberwithin{equation}{chapter}
\numberwithin{figure}{chapter}
\numberwithin{table}{chapter}
\numberwithin{algorithm}{chapter}
\renewcommand{\theequation}{4.\arabic{equation}}
\renewcommand{\thefigure}{4.\arabic{figure}}
\renewcommand{\thetable}{4.\arabic{table}}
\renewcommand{\thealgorithm}{4.\arabic{algorithm}}

\section*{Federated Synthetic Data Generation through Energy-based Composition}
\addcontentsline{toc}{section}{\protect\numberline{}\textbf{Federated Synthetic Data Generation through Energy-based Composition}}
    In this section, we dive into the aspect of \textbf{local distribution}, denoted as $\mathcal{D}$ in eq.~\eqref{eq:fl_obj}.
    It is usually prohibited to access or handle the local distribution \textit{directly} in the federated system, due to sensitive information about local data.
    Nevertheless, it would undoubtedly be beneficial if the federated system could at least be equipped with a proxy of the local distributions, since it can emulate the oracle distribution from which the centralized dataset originates.
    Thereby, the statistical heterogeneity problem can be mitigated through data sharing, data augmentation, or training an unbiased model on the server using the proxy data.
    
    Existing methods have mostly focused on emulating local distributions by training a generative model in a collaborative manner.
    After training the generative model, the central server can utilize it to synthesize plausible datasets for downstream tasks.
    This typically requires the exchange of entire model parameters to obtain a generative model.
    However, the computational load on the clients and the large communication costs of the federated system hinder its practicality and scalability.
    In addition, the synthesized samples are of inferior quality, visually implausible, and sometimes unlabeled.
    
    This leads to the following research question:
    \begin{center} 
    \textit{How can we imitate \textbf{local distributions} in the federated system with efficiency?}
    \end{center}

\newpage
\section{Introduction}
    FL~\cite{fedavg} is a de facto standard method to train a statistical model from decentralized data. 
    It is typically assumed that the central server (e.g., a service provider) orchestrates the whole process of FL, by repeatedly communicating with participating clients with their data.
    When the server broadcasts copies of a global model to clients,
    each client commits a locally updated model instead of its raw data.
    It is then aggregated into a single global model in the server, 
    and this procedure is repeated until the convergence of the global model.
    However, the convergence is often hindered significantly, 
    since the server cannot directly control the statistical heterogeneity naturally derived from disparate local distributions~\cite{fedavg,noniid,noniidconvergence}.     
    Furthermore, the model exchanging scheme of FL inevitably imposes high communication costs within the system, given that the model is considerably larger in size than that of the raw local data.
    Hence, numerous works have been proposed to mitigate this problem, 
    including a modified aggregation scheme~\cite{agg1,agg2,agg3}, personalization~\cite{pfl2,pfl3,pfl4}, 
    use of adaptive mixing coefficients~\cite{mix1,mix2,mix3}, 
    and other approaches including variants of an optimization objective~\cite{opt1,opt2}.

    Though effective, these methods sometimes offer marginal gains compared to data-centric approaches, 
    such as exploiting shared public data~\cite{noniid,pub1,pub2} or synthetic data~\cite{gan1,gan2,gan3,vae1,vae2} 
    (generated from local distributions of clients) provided by the central server.
    While the former is not always viable due to strict regulations of the distinctiveness of the data domain, 
    the latter is a feasible option thanks to the improved techniques in generative modeling.
    For example, most previous studies have proposed training generative models in a collaborative manner, 
    such as generative adversarial networks (GANs~\cite{gan}), or variational auto-encoders (VAEs~\cite{vae}), 
    by exchanging \textit{entire model parameters} as in traditional FL~\cite{fedavg}.
    
    However, these methods have room for improvement since they either 
    i) require communication of whole model parameters, 
    ii) an additional training procedure should be considered for local discriminative modeling, or
    iii) work only under the mild degree of statistical heterogeneity.
    Especially for ii), it may be necessary for the server or the clients to undertake additional training and communication of a discriminative model.
    Therefore, the design of a distributed synthetic data generation method in the federated system has been remained as a challenging and under-explored research direction.
    
    To achieve fundamental improvement in these suboptimal designs, we begin by formulating the federated synthetic data generation as an iterative and collaborative sampling procedure without the direct access of local data distribution. 
    We find our scheme can be realized \textbf{without securing a separate generative model}, 
    through embarrassingly simple modification to the training scheme of a discriminative model in each client, which only adds little computation overhead.
    The trick is closely related to the energy-based models (EBMs~\cite{ebm}), 
    which are flexible in modeling a probability density with \textit{any} deep networks,
    as well as support an easy sample generation using Markov chain Monte Carlo (MCMC) methods.
    
    From this novel perspective, we propose \texttt{FedEvg}, referred to \textit{Federated Energy-based Averaging}.
    Instead of exchanging a model parameter/gradient with clients as in the conventional FL algorithms, \texttt{FedEvg} starts from synthetic data (with random labels) randomly initialized in the server, 
    which is then gradually refined in collaboration with participating clients.
    Therefore, \texttt{FedEvg} is efficient in communication, and does not require direct information from the clients.
    Mirroring current attention of this synthetic data-driven approach in the \textit{cross-silo FL setting}~\cite{syntheticflreview}, we empirically demonstrate the efficacy of our proposed method by conducting experiments suited for FL settings with moderate number of participating clients.

\newpage
\section{Backgrounds}
\subsection{Problem Statement}
\label{subsec:setup}
    Suppose we have unknown centralized data distribution in the server, $p(\mathcal{D})$.
    Then, we can solve the main objective of FL in eq.~\eqref{eq:fl_obj} as follows.
    \begin{equation}
    \begin{gathered}
        \min_{\boldsymbol{\theta}\in\Theta\subseteq\mathbb{R}^d}
        F\left(\boldsymbol{\theta}\right)
        \triangleq
        \mathbb{E}_{\xi \sim p(\mathcal{D})}
        \left[ 
        \mathcal{L}\left(\xi;\boldsymbol{\theta}\right)
        \right].
    \end{gathered}
    \end{equation}
    In other words, the server can obtain a centralized model parameterized by $\boldsymbol{\theta}$ using the centralized data set, $\mathcal{D}$.
    The synthetic data should therefore reflect all local distributions faithfully in order to be useful in the federated system.
    
    Suppose there exist $K$ participating clients in the system indexed by $i\in[K]$.
    We use $a\in[A]$ as an abbreviation of an enumeration notation, $a=1,...,A$.
    Each client owns its local dataset $\mathcal{D}_i=\{(\boldsymbol{x}_j, y_j)\}_{j=1}^{n_i}$,
    where $n_i$ is the number of samples in the $i$-th client, 
    $\boldsymbol{x}\in\mathbb{R}^D$ refers to an input, 
    and $y\in\{1,...,L\}$ is a corresponding class label.
    Denote $\mathcal{D}=\cup_{i=1}^K \mathcal{D}_i$ as the centralized dataset,
    which is not accessible in a typical FL setting. 
    We further denote that $p^\star(\mathcal{D})$ as an unknown true density,
    and $\tilde{\mathcal{D}}$ as a synthetic dataset containing synthetic input and label, $\tilde{\boldsymbol{x}}$ and $\tilde{y}$.
    
    To synthesize a plausible surrogate dataset satisfying $p(\tilde{\mathcal{D}}) \approx p^\star(\mathcal{D})$, 
    one can train an explicit generative model to generate a plausible surrogate of the original dataset.
    The underlying philosophy of such a method is to emulate samples from a distribution close to the true density $p^\star(\mathcal{D})$.
    In order to estimate the density faithfully, we start with the simple assumption:
    \textit{the true density can be approximated by the mixture of local distributions}.
    Given a finite number of parameterized probability density functions $p(\cdot;\boldsymbol{\theta}_i), i\in[K]$ of participating clients,
    let us assume that the true density $p(\mathcal{D})$ can be approximated by \textit{the mixture of each client's probability density functions} as follows.
    \begin{equation}
    \label{eq:main_formula}
    \begin{gathered}
        p^\star(\mathcal{D}) 
        \approx
        \sum_{i=1}^K w_i p(\mathcal{D};\boldsymbol{\theta}_i),
    \end{gathered}
    \end{equation}
    where $\boldsymbol{\theta}_i\in\mathbb{R}^d$ is the parameter of the $i$-th local model,
    and $w_i \geq 0$ and $\sum_{i=1}^K w_i = 1$.    
    Then, the natural follow-up questions should be:
    \begin{itemize}
        \item \textbf{Question 1:} How can we model and estimate each of the parameterized density, $p(\cdot;\boldsymbol{\theta}_i)$?
        \item \textbf{Question 2:} How can we sample from the mixture of them, $\sum_{i=1}^K w_i p(\cdot;\boldsymbol{\theta}_i)$?
    \end{itemize}

\newpage
\subsection{{Answer 1:} Discovery of Hidden Energy-Based Models}
\label{sec:ebm_discovery}
    \paragraph{Energy Based Models (EBMs)}
    An EBM models a probability density as:
    \begin{equation}
    \label{eq:ebm}
    \begin{gathered}
        p(\boldsymbol{x};\boldsymbol{\theta})
        =
        \frac{ \exp\left(-E(\boldsymbol{x};\boldsymbol{\theta})\right) }{ Z(\boldsymbol{\theta}) },
    \end{gathered}
    \end{equation}
    where $E(\cdot;\boldsymbol{\theta}):\mathbb{R}^D\rightarrow\mathbb{R}$ is a parametrized \textit{energy function}
    and $Z(\boldsymbol{\theta})=\int{\exp\left(-E(\boldsymbol{x};\boldsymbol{\theta})\right)}\mathrm{d}\boldsymbol{x}$ is the normalizing constant, which is typically intractable and not directly modeled.
    
    The parameter of EBMs can be estimated by maximizing the log-likelihood, 
    with the gradient-based optimization using the following estimator~\cite{cd,pcd}.
    The derivative of the log-likelihood of a single input $\boldsymbol{x}$ w.r.t. the parameter $\boldsymbol{\theta}$ is:
    \begin{equation}
    \label{eq:mle_ebm}
    \begin{gathered}
        \nabla_{\boldsymbol{\theta}} \log p(\boldsymbol{x};\boldsymbol{\theta})
        =
        \mathbb{E}_{\boldsymbol{x}'\sim p(\boldsymbol{x}';\boldsymbol{\theta})}\left[ \nabla_{\boldsymbol{\theta}} E(\boldsymbol{x}';\boldsymbol{\theta}) \right]
        -
        \nabla_{\boldsymbol{\theta}} E(\boldsymbol{x};\boldsymbol{\theta}).
    \end{gathered}
    \end{equation}
    To use this estimator, we should obtain samples from the EBM, i.e., $\boldsymbol{x}'\sim p(\boldsymbol{x}';\boldsymbol{\theta})$.
    Since it is a non-trivial task, we should resort to MCMC~\cite{cd} to generate approximate samples.
    We defer the introduction of a representative sampling method widely used thus far in the next answer.

    \paragraph{Derivation of the Gradient of EBM's Log-Likelihood}
    Taking a derivative of eq.~\eqref{eq:ebm} w.r.t. $\boldsymbol{\theta}$, we have:
    \begin{align*}
    \begin{gathered}
        \nabla_{\boldsymbol{\theta}} \log p(\boldsymbol{x};\boldsymbol{\theta})
        =
        -\nabla_{\boldsymbol{\theta}} E(\boldsymbol{x};\boldsymbol{\theta})
        -\nabla_{\boldsymbol{\theta}} \log{Z(\boldsymbol{\theta})}.
    \end{gathered}
    \end{align*}
    For the latter term, $\nabla_{\boldsymbol{\theta}} \log{Z(\boldsymbol{\theta})}$,
    we have:
    \begin{align*}
    \begin{split}
        &\nabla_{\boldsymbol{\theta}} \log{Z(\boldsymbol{\theta})}
        = \nabla_{\boldsymbol{\theta}} \log{
        \int{\exp\left(-E(\boldsymbol{x};\boldsymbol{\theta})\right)}\mathrm{d}\boldsymbol{x}
        } \\
        &= \left( \int{\exp\left(-E(\boldsymbol{x};\boldsymbol{\theta})\right)}\mathrm{d}\boldsymbol{x} \right)^{-1}
        \nabla_{\boldsymbol{\theta}} \int{\exp\left(-E(\boldsymbol{x};\boldsymbol{\theta})\right)}\mathrm{d}\boldsymbol{x} \\
        &= \left( \int{\exp\left(-E(\boldsymbol{x};\boldsymbol{\theta})\right)}\mathrm{d}\boldsymbol{x} \right)^{-1}
        \int{\nabla_{\boldsymbol{\theta}}\exp\left(-E(\boldsymbol{x};\boldsymbol{\theta})\right)}\mathrm{d}\boldsymbol{x} \\
        &= \left( \int{\exp\left(-E(\boldsymbol{x};\boldsymbol{\theta})\right)}\mathrm{d}\boldsymbol{x} \right)^{-1}
        \int{
            \exp\left(-E(\boldsymbol{x};\boldsymbol{\theta})\right)
            (-\nabla_{\boldsymbol{\theta}} 
            E(\boldsymbol{x};\boldsymbol{\theta}))
        }\mathrm{d}\boldsymbol{x} \\
        &= \int{
        \left( \int{\exp\left(-E(\boldsymbol{x};\boldsymbol{\theta})\right)}\mathrm{d}\boldsymbol{x} \right)^{-1}
        \exp\left(-E(\boldsymbol{x};\boldsymbol{\theta})\right)
        (-\nabla_{\boldsymbol{\theta}} E(\boldsymbol{x};\boldsymbol{\theta}))
        }\mathrm{d}\boldsymbol{x} \\
        &= \int{ p(\boldsymbol{x};\boldsymbol{\theta})(
            -\nabla_{\boldsymbol{\theta}} E(\boldsymbol{x};\boldsymbol{\theta})
        ) }\mathrm{d}\boldsymbol{x} \\
        &= -\mathbb{E}_{\boldsymbol{x}'\sim p(\boldsymbol{x}';\boldsymbol{\theta})}\left[ \nabla_{\boldsymbol{\theta}} E(\boldsymbol{x}';\boldsymbol{\theta}) \right].
    \end{split}
    \end{align*}
    Plugging this result into the derivative, we have eq.~\eqref{eq:mle_ebm}.
    
    \paragraph{Energy-based Implementation of the Mixture Distribution}
    Now, we regard the parameterized probability densities as EBMs by defining an energy function $E(\cdot;\boldsymbol{\theta}_i)$, which can be \textit{any} statistical model including a deep network, 
    as long as it outputs a scalar energy value.
    Thus, by explicitly adopting eq.~\eqref{eq:ebm}, we can state eq.~\eqref{eq:main_formula} again as follows.  
    \begin{equation}
    \label{eq:moe_formulation}
    \begin{gathered}
        p(\boldsymbol{x}) 
        \approx
        \sum_{i=1}^K \frac{ w_i \exp\left(-E(\boldsymbol{x};\boldsymbol{\theta}_i)\right) }{ Z(\boldsymbol{\theta}_i) },
    \end{gathered}
    \end{equation}
    where $Z(\boldsymbol{\theta}_i)=\int{\exp\left(-E(\boldsymbol{x};\boldsymbol{\theta}_i)\right)}\mathrm{d}\boldsymbol{x}$ is a normalizing constant for each $i\in[K]$.
    
    In FL, however, the common interest is usually learning a \textit{discriminative model} for clients.
    In other owrds, each client trains a classifier to accurately estimate the conditional density, $p(y|\boldsymbol{x};\boldsymbol{\theta}_i)$. 
    Intriguingly, one \textit{can retrieve an EBM from the discriminative model}, following the formulation of a pioneering study~\cite{your}, as follows.

    \paragraph{Hidden EBMs behind Discriminative Models}
    Assume that each client participating in the FL procedure aims to train a classifier for discriminating samples from $L$ classes.
    Suppose the discriminative classifier (e.g., a deep network) $f:\mathbb{R}^D\rightarrow\mathbb{R}^L$ is parameterized by $\boldsymbol{\theta}$.
    We usually estimate categorical distribution using the softmax transformation as follows.
    \begin{equation}
    \begin{gathered}
        p(y|\boldsymbol{x};\boldsymbol{\theta})
        =
        \frac{\exp\left(f(\boldsymbol{x};\boldsymbol{\theta})[y]\right)}
        {\sum_{y'=1}^L \exp\left(f(\boldsymbol{x};\boldsymbol{\theta})[y']\right)},
    \end{gathered}
    \end{equation}
    where $f(\boldsymbol{x};\boldsymbol{\theta})[y]$ refers to the $y$-th element of the $L$-dimensional logit vector, $f(\boldsymbol{x};\boldsymbol{\theta})\in\mathbb{R}^L$.
    
    From this modeling scheme, we can simply approximate the marginal density $p(\boldsymbol{x};\boldsymbol{\theta})$ and the joint density $p(\boldsymbol{x},y;\boldsymbol{\theta})$ using Bayes' theorem: $p(y|\boldsymbol{x};\boldsymbol{\theta})=\frac{p(\boldsymbol{x}, y;\boldsymbol{\theta})}{p(\boldsymbol{x};\boldsymbol{\theta})}$.
    \begin{equation}
    \begin{gathered}
        p(\boldsymbol{x},y;\boldsymbol{\theta})
        \propto
        \exp\left(f(\boldsymbol{x};\boldsymbol{\theta})[y]\right),
        \qquad
        p(\boldsymbol{x};\boldsymbol{\theta})
        \propto
        \sum\nolimits_{y=1}^L \exp\left(f(\boldsymbol{x};\boldsymbol{\theta})[y]\right),
    \end{gathered}
    \end{equation}
    and the corresponding energy functions are as follows:
    \begin{equation}
    \label{eq:marginal_and_joint_energy}
    \begin{gathered}
        E(\boldsymbol{x},y;\boldsymbol{\theta}) 
        = 
        -f({\boldsymbol{x}};\boldsymbol{\theta})[{y}],
        \quad
        E(\boldsymbol{x};\boldsymbol{\theta}) 
        = 
        -\log\left( 
        \sum\nolimits_{y=1}^L \exp\left(
        f({\boldsymbol{x}};\boldsymbol{\theta})[{y}] 
        \right) 
        \right).
    \end{gathered}
    \end{equation}
    
    Thanks to the hidden connection to EBMs, 
    we can easily retrieve (unnomralized) joint \& marginal density from the classifier as long as an {{input}-{label}} pair is provided.
    When training a classifier, the class probability $p(y|\boldsymbol{x};\boldsymbol{\theta})$ is estimated by minimizing cross-entropy loss, 
    and the marginal (or joint) density $p(\boldsymbol{x};\boldsymbol{\theta})$ (or $p(\boldsymbol{x},y;\boldsymbol{\theta})$) can also be estimated \textit{at once} by minimizing eq.~\eqref{eq:mle_ebm} simultaneously.
    In a word, each client can \textit{jointly obtain a discriminative as well as a generative model} from a single classifier~\cite{your}. 

\newpage
\subsection{Answer 2: Composition of Energy-Based Models}
\label{sec:gen_from_mixture}
    \paragraph{Stochastic Gradient Langevin Dynamics (SGLD)}
    To generate a sample following the density $p(\boldsymbol{x};\boldsymbol{\theta})$ modeled by an EBM, one can resort to SGLD as follows~\cite{sgld}.
    \begin{equation}
    \label{eq:sgld}
    \begin{gathered}
        \boldsymbol{x}^{(t+1)}
        =
        \boldsymbol{x}^{(t)} 
        -
        \gamma^{(t)} \nabla_{\boldsymbol{x}^{(t)}} E(\boldsymbol{x}^{(t)};\boldsymbol{\theta}) 
        + 
        \sigma \boldsymbol{\epsilon}, 
        \quad 
        \boldsymbol{\epsilon} \sim \mathcal{N}(\boldsymbol{0}_D, \boldsymbol{I}_D),
    \end{gathered}
    \end{equation}
    where $\gamma^{(t)}$ is a step-size, $\sigma$ is a noise scale, 
    and $\boldsymbol{x}^{(0)} \sim p_\text{init}(\boldsymbol{x})$, which is an initial proposal distribution.
    The sample $\boldsymbol{x}^{(t)}$ generated by SGLD is known to be converged to a sample from $p(\boldsymbol{x};\boldsymbol{\theta})$, if $t\rightarrow\infty, \gamma^{(t)}\rightarrow 0$~\cite{sgld,sgldconv1,sgldconv2}.
    
    With sufficient number of steps $T<\infty$ until convergence, 
    the resulting sample $\boldsymbol{x}^{(T)}$ can be considered as being sampled from the model distribution, $p(\cdot;\boldsymbol{\theta})$.
    Empirically, it is well known that even short-run non-convergent SGLD can generate plausible samples~\cite{cd1,your,ebmcomp1,ebmcomp2,ebmcomp3}.
    Thus, we can set reasonable iteration number $T$ considering a trade-off between the computation overhead and the sample quality.
    Note that the proposal distribution can be chosen as as real data samples (i.e., contrastive divergence; CD~\cite{cd}), 
    or as pure noises (noisy CD; e.g., uniform distribution, or standard normal distribution)~\cite{cd1},
    or as samples generated in the past~\cite{cd2,your} (i.e., persistent contrastive divergence; PCD~\cite{pcd})

    \paragraph{Sampling from Mixture Distribution}
    Now, we learn how to sample from the mixture distribution using SGLD.
    However, we still don't know the explicit energy function of \textit{the mixture distribution} in eq.~\eqref{eq:moe_formulation}.
    Here, we add one more assumption to induce a tractable energy function.

    Suppose $Z(\boldsymbol{\theta}_1)=...=Z(\boldsymbol{\theta}_K)=Z$ following~\cite{ebmcomp2,ebmcomp3}, 
    then, eq.~\eqref{eq:moe_formulation} becomes:
    \begin{equation}
    \label{eq:samenormconst}
    \begin{gathered}
        \frac{1}{Z}
        \sum_{i=1}^K { w_i \exp\left(-E(\boldsymbol{x};\boldsymbol{\theta}_i)\right) }
        =
        \frac{\exp\left( -E(\boldsymbol{x};\boldsymbol{\theta}_1,...,\boldsymbol{\theta}_K) \right)}{Z}.
    \end{gathered}
    \end{equation}
    
    Consequently, we acquire an energy function of the mixture distribution in a simple tractable form, 
    denoted as $E(\boldsymbol{x};\boldsymbol{\theta}_1,...,\boldsymbol{\theta}_K)$, as follows:
    \begin{equation}
    \label{eq:energy_mixture}
    \begin{gathered}
        E(\boldsymbol{x};\boldsymbol{\theta}_1,...,\boldsymbol{\theta}_K) 
        = 
        -\log\left(\sum_{i=1}^K w_i \exp\left( - E(\boldsymbol{x};\boldsymbol{\theta}_i)
        \right)\right).
    \end{gathered}
    \end{equation}
    Now, we have answered all questions raised in section~\ref{subsec:setup}.
    Atop them, we introduce our proposed method in the following section.

\newpage
\section{Proposed Method}
\subsection{\texttt{FedEvg}: Federated Energy-based Averaging}
    In this section, we introduce our proposed method, \texttt{FedEvg}.
    Our method first randomly initializes synthetic data (i.e., random input-label pairs) in the central server,
    and collaboratively refines these synthetic data with participating clients, 
    to ensure that these samples are realizations of the mixture distributions in eq.~\eqref{eq:moe_formulation}. 
    We provide notation table in Table~\ref{tab:fedevg_notation}.
    \begin{table}[H]
\centering
\caption{Notations in Chapter~\ref{ch:fedevg}}
\label{tab:fedevg_notation}
\resizebox{0.7\textwidth}{!}{%
\begin{tabular}{@{}ll@{}}
\toprule
Notation & Description \\ \midrule
$T$ & Total number of communication rounds \\
$B$ & Local batch size \\
$E$ & Number of local iterations \\
$K$ & Total number of participating clients, indexed by $i\in[K]$ \\
$C$ & Fraction of clients selected at each round \\
$\eta$ & Learning rate for the local update \\ \midrule
$D$ & Input dimension \\
$L$ & Number of classes \\
$M$ & Number of synthetic data \\
$R$ & Number of client-side SGLD steps \\ \midrule
$\beta^{(t)}$ & Server-side SGLD step size \\
$\delta$ & Server-side SGLD noise scale \\
$\gamma^{(t)}$ & Client-side SGLD step size \\
$\sigma$ & Client-side SGLD noise scale \\ \bottomrule
\end{tabular}%
}
\end{table}

\subsection{Server-side Optimization}
\label{subsec:server-side}
    \begin{algorithm}[H]
   \caption{\texttt{FedEvg}}
   \label{alg:fedevg}
\begin{algorithmic}[1]
   \STATE {\bfseries Input:} number of clients $K$, client sampling ratio $C\in(0, 1)$, size of synthetic data $M$, total rounds $T$,
   server-side SGLD step size schedule $\{\beta^{(t)}\}_{t=0}^{T-1}$, server-side SGLD noise scale $\delta$.
   \STATE {\bfseries Procedure:}
   \STATE Server randomly initializes synthetic data $\tilde{\mathcal{D}}^{(0)} (\text{s.t. }\lvert \tilde{\mathcal{D}}^{(0)} \rvert=M)$ according to eq.~\eqref{eq:rand_init}.
   \FOR{$t=0$ {\bfseries to} $T-1$}
    \STATE Server randomly selects a subset of clients $S^{(t)}$ of size $\max(1, \lfloor C \cdot K \rfloor)$.
    \FOR{each client $i\in S^{(t)}$ \textbf{in parallel}}
     \STATE $\left\{ 
         \exp\left( \operatorname{sgn}(-E(\tilde{\boldsymbol{x}}_j, \tilde{y}_j;\boldsymbol{\theta}_i)) \right), 
         \nabla_{\tilde{\boldsymbol{x}}_j} E\left( \tilde{\boldsymbol{x}}_j, \tilde{y}_j;\boldsymbol{\theta}_i \right) 
     \right\}_{j=1}^M
     \leftarrow 
     \texttt{FedEvgClientUpdate}(\tilde{\mathcal{D}}^{(t)})$ 
    \ENDFOR
    \STATE $\tilde{\mathcal{D}}^{(t+1)}\leftarrow$ Update $\tilde{\mathcal{D}}^{(t)}$ with $\beta^{(t)}$ and $\delta$ according to eq.~\eqref{eq:server_sgld}.
   \ENDFOR
   \STATE{\bfseries Return:} $\tilde{\mathcal{D}}^{(T)}$.
\end{algorithmic}
\end{algorithm}
    We start by randomly initializing input-label pairs in the server, $\tilde{\mathcal{D}}^{(0)}
    =
    \left\{
        \left( \tilde{\boldsymbol{x}}^{(0)}_j, \tilde{y}^{(0)}_j \right)
    \right\}_{j=1}^M$.
    For example, 
    \begin{equation}
    \label{eq:rand_init}
    \begin{gathered}
        \tilde{\boldsymbol{x}}^{(0)} \sim \mathcal{N}(\boldsymbol{0}_D, \boldsymbol{I}_D), 
        \quad 
        \tilde{y}^{(0)} \sim \operatorname{Cat}\left(\frac{1}{L}\boldsymbol{1}_L\right), 
    \end{gathered}
    \end{equation}
    where $\boldsymbol{0}_D$ is a $D$-dimensional zero-vector,
    $\boldsymbol{I}_D$ is an identity matrix of size $D \times D$,
    and $\boldsymbol{1}_L$ is a $L$-dimensional one-vector.
    This simple initialization scheme is beneficial since the server does not need to prepare a distinct labeling mechanism,
    since we simply assign random labels to randomly initialized inputs in the beginning.
    To alleviate the statistical heterogeneity in the FL procedure, we set the number of samples from each class to be equal, i.e., $\lfloor M / L \rfloor$ samples per class. 
    
    After the random initialization, the server begins to refine these samples in collaboration with the clients over $t\in[T]$ rounds,
    to approximate sample generation from the mixture distribution in eq.~\eqref{eq:main_formula}.
    To realize this, i) the server performs a single step of SGLD in each round ii) after aggregating the signals from participating clients.
    As long the server can calculate to the gradient of an energy function of mixture distributions, 
    $\nabla_{\tilde{\boldsymbol{x}}} E(\tilde{\boldsymbol{x}};\boldsymbol{\theta}_1,...,\boldsymbol{\theta}_K)$,
    the server-side synthetic data can be refined using the SGLD (eq.~\eqref{eq:sgld}).

    Since we know the explicit form of the energy function of the mixture distribution, viz. eq.~\eqref{eq:energy_mixture},
    we can derive the gradient w.r.t. the server-side synthetic sample $\tilde{\boldsymbol{x}}$ as follows.
    \begin{equation}
    \label{eq:moe_grad}
    \begin{gathered}
        \nabla_{\tilde{\boldsymbol{x}}} E(\tilde{\boldsymbol{x}}; \boldsymbol{\theta}_1,...,\boldsymbol{\theta}_K) 
        =
        \frac{
        \sum_{i=1}^K w_i 
        {\exp(-E(\tilde{\boldsymbol{x}}; \boldsymbol{\theta}_i))} 
        {\nabla_{\tilde{\boldsymbol{x}}} E(\tilde{\boldsymbol{x}}; \boldsymbol{\theta}_i)}
        }{
        \sum_{j=1}^K w_j 
        {\exp(-E(\tilde{\boldsymbol{x}}; \boldsymbol{\theta}_j))}
        }.
    \end{gathered}
    \end{equation}

    Intuitively, eq.~\eqref{eq:moe_grad} can be decomposed into two components:
    i) an exponentiated \textit{energy value}, $\exp(-E(\tilde{\boldsymbol{x}}; \boldsymbol{\theta}_i))$ 
    and ii) an \textit{energy gradient}, $\nabla_{\tilde{\boldsymbol{x}}} E(\tilde{\boldsymbol{x}}; \boldsymbol{\theta}_i)$.
    For $w_i$, we simply set it to $n_i/n$, which is a rate of sample size of the $i$-th client (where $n=\sum_{i=1}^K n_i$.
    This decomposition scheme is undoubtedly favorable to the federated setting, 
    since each signal can be constructed in parallel from participating clients.
    Provided that each client transmits these \textit{energy signals}, the server-side synthetic data will be close to the sample from the mixture of local distributions even without explicitly accessing each of them.
    The server-side SGLD update is given as follows:
    \begin{equation}
    \label{eq:server_sgld}
    \begin{gathered}
        \tilde{\boldsymbol{x}}^{(t+1)}
        =
        \tilde{\boldsymbol{x}}^{(t)} 
        -
        \beta^{(t)} \nabla_{\tilde{\boldsymbol{x}}^{(t)}} E(\tilde{\boldsymbol{x}}^{(t)};\boldsymbol{\theta}_1,...,\boldsymbol{\theta}_K) 
        + 
        \delta \boldsymbol{\epsilon}, 
        \quad 
        \boldsymbol{\epsilon} \sim \mathcal{N}(\boldsymbol{0}_D, \boldsymbol{I}_D),
    \end{gathered}
    \end{equation}
    where $\{\beta^{(t)}\}_{t=0}^{T-1}$ is a server-side SGLD step size schedule and $\delta$ is a server-side SGLD noise scale.
    
    One can find that this procedure bears resemblance to the federated averaging algorithm (FedAvg~\cite{fedavg}),
    where the server takes a step of a global model update in each round by aggregating locally updated parameters from participating clients.
    However, it should be noted that our method does not require the communication of \textit{model parameters} from clients.
    In addition, the \textit{energy value} is a scalar, and the energy gradient is $D$-dimensional vector of the same as the number of inputs of synthetic data $\tilde{\boldsymbol{x}}_j,j\in[M]$.
    Both are usually smaller in size compared to $d$-dimensional model parameters.
    Therefore, communicating with these signals is undoubtedly more cost-effective in terms of communication than exchanging an entire model parameter, as is typically done in traditional FL settings.

\newpage
\subsection{Client-side Optimization}
\label{subsec:client-side}
    \begin{algorithm}[H]
   \caption{\texttt{FedEvgClientUpdate}}
   \label{alg:fedvgclientupdate}
\begin{algorithmic}[1]
   \STATE {\bfseries Input:} 
   downloaded synthetic data $\tilde{\mathcal{D}}$, 
   local model $f(\cdot;\boldsymbol{\theta})$, 
   mini-batch size $B$, 
   local iterations $E$, 
   learning rate $\eta$, 
   constant $\lambda$,
   SGLD steps $R$, 
   SGLD step sizes $\{\gamma^{(r)}\}_{r=0}^{R-1}$, 
   SGLD noise $\sigma$.
   \STATE {\bfseries Procedure:}
   \STATE Initialize a local model and set $\mathcal{D}_{\text{init}}\leftarrow\tilde{\mathcal{D}}$.
   \STATE Set $e=0$.
   \WHILE{$e < E$}
    \STATE Sample local batch $\mathcal{B}$ of size $B$ from local training set.
    \STATE Sample synthetic batch $\tilde{\mathcal{B}}$ of size $B$ from $\tilde{\mathcal{D}}$.
    \FOR{mini-batches $(\tilde{\boldsymbol{x}}, \tilde{y})\in\tilde{\mathcal{B}}$ and $(\boldsymbol{x}, y)\in\mathcal{B}$}
     \STATE Set $(\breve{\boldsymbol{x}}^{(0)},\breve{y})
        \leftarrow(\tilde{\boldsymbol{x}}, \tilde{y})$.
     \FOR{$r=0,...,R-1$}
      \STATE $\breve{\boldsymbol{x}}^{(r+1)} \leftarrow \breve{\boldsymbol{x}}^{(r)} + \gamma^{(r)} \nabla_{\breve{\boldsymbol{x}}^{(r)}} f(\breve{\boldsymbol{x}}^{(r)};\boldsymbol{\theta})[\breve{y}] + \sigma \boldsymbol{\epsilon}, 
        \quad 
        \boldsymbol{\epsilon} \sim \mathcal{N}(\boldsymbol{0}_D, \boldsymbol{I}_D)$, viz. eq.~\eqref{eq:marginal_and_joint_energy} and~\eqref{eq:sgld}.
      \ENDFOR
      \STATE $\mathcal{L}_{\text{PCD}}(\boldsymbol{\theta})\leftarrow f(\boldsymbol{x};\boldsymbol{\theta})[y] - f(\breve{\boldsymbol{x}}^{(R)};\boldsymbol{\theta})[\breve{y}]$ (\textit{i.e., minimization proxy of eq.~\eqref{eq:mle_ebm}}).
      \STATE $\mathcal{L}_{\text{CE}}(\boldsymbol{\theta})\leftarrow
      \texttt{CrossEntropy}(f(\boldsymbol{x};\boldsymbol{\theta}),y)
      +
      \lambda\cdot\texttt{CrossEntropy}(f(\tilde{\boldsymbol{x}};\boldsymbol{\theta}),\tilde{y})$. 
      \STATE Set total loss $\mathcal{L}(\boldsymbol{\theta})\leftarrow\mathcal{L}_{\text{PCD}}(\boldsymbol{\theta})+\mathcal{L}_{\text{CE}}(\boldsymbol{\theta})$.
      \STATE Update the parameter $\boldsymbol{\theta} \leftarrow \boldsymbol{\theta} - \eta \nabla_{\boldsymbol{\theta}} \mathcal{L}(\boldsymbol{\theta})$.
      \STATE Replace $\tilde{\boldsymbol{x}}\leftarrow\breve{\boldsymbol{x}}^{(R)}$ for persistent update.
      \STATE $e \leftarrow e+1$
     \ENDFOR
   \ENDWHILE
   \STATE{\bfseries Return:} $\left\{ 
         \exp\left(\operatorname{sgn}(f({\boldsymbol{x}}_j;\boldsymbol{\theta})[{y}_j]) \right), 
         -\nabla_{{\boldsymbol{x}}_j} f\left( {\boldsymbol{x}}_j;\boldsymbol{\theta} \right)[{y}_j] 
     \right\}_{j=1}^M$ for $\left\{(\boldsymbol{x}_j,y_j)\right\}_{j=1}^M$ in $\mathcal{D}_{\text{init}}$.
\end{algorithmic}
\end{algorithm}
    The client trains a local model with its own dataset, where the required signals from the server are to be obtained from the local model by recovering EBMs, $E(\cdot;\boldsymbol{\theta}_i), i\in[K]$.
    Therefore, each client should train a local EBM using its own dataset, which is equivalent to training a local parameterized model $f(\cdot;\boldsymbol{\theta}_i)$ as discussed in section~\ref{sec:ebm_discovery}.
    
    In detail, the client-side optimization process is a joint training of both generative and discriminative perspectives.
    First, the \textit{generative modeling} is equivalent to maximizing the log-likelihood of the local EBM, as in eq.~\eqref{eq:mle_ebm}.
    This also requires MCMC sampling, e.g., SGLD (eq.~\eqref{eq:sgld}) due to the maximum likelihood objective incorporates an estimate from a sample generated from a model distribution, $p(\cdot;\boldsymbol{\theta})$.
    In this local MCMC sampling procedure, the initial sample $\breve{\boldsymbol{x}}^{(0)}$ is assigned from the synthetic data downloaded from the server.
    This is corresponded to the persistent initialization, where the last sample of the past MCMC chain is directly used as an initial sample of a new MCMC chain. 
    (i.e., PCD~\cite{pcd,cd1,cd2})
    Note that the updated samples from the $R$ steps of client-side MCMC, $\breve{\boldsymbol{x}}^{(R)}$ replaces its original state, as in the line 16 of Algorithm~\ref{alg:fedvgclientupdate}. 
    Second, the \textit{discriminative modeling} is simply corresponded to maximizing class-conditional log-likelihoods, typically realized through the minimization of the cross-entropy loss.
    At the end of the client-side optimization, each client calculates energy values and energy gradients w.r.t. initial synthetic data downloaded from the central server and uploads them to the central server.

\paragraph{Note on the Energy Value}
    We empirically found that replacing the exponentiated energy value $\exp\left(f(\tilde{\boldsymbol{x}};\boldsymbol{\theta}_i)[\tilde{y}] \right)$ into exponentiated \textit{signed} energy value, 
    i.e., $\exp\left(\operatorname{sgn}(f(\tilde{\boldsymbol{x}};\boldsymbol{\theta}_i)[\tilde{y}]) \right)$, is beneficial in that it stabilizes the estimation of the energy gradient of the mixture distribution in eq.~\eqref{eq:moe_grad}.
    
    This is mainly due to the fact that the energy value $E(\tilde{\boldsymbol{x}},\tilde{y};\boldsymbol{\theta})=-f(\tilde{\boldsymbol{x}};\boldsymbol{\theta}_i)[\tilde{y}]\in\mathbb{R}$ is unbounded.
    We empirically found that the unbounded energy value can significantly influence the magnitude of the resulting energy gradient of the mixture distribution in eq.~\eqref{eq:moe_grad}.
    As the sign of the logit can only yields one of three values, $\exp\left(\operatorname{sgn}(f(\tilde{\boldsymbol{x}};\boldsymbol{\theta}_i)[\tilde{y}]) \right)\in\{\exp(-1), \exp(0), \exp(1)\}$, it prevents exploding or vanishing of the energy gradient,
    thereby the server can stably estimate the aggregated energy gradient in eq.~\eqref{eq:moe_grad}.

\newpage
\section{Experimental Results}
\subsection{Setup}
\paragraph{Datasets}
    We conduct experiments on four vision classification benchmarks, 
    that are i) widely used in FL studies: MNIST~\cite{mnist}, CIFAR-10~\cite{cifar},
    and ii) reflect practical cross-silo FL settings: DermaMNIST, OrganCMNIST~\cite{medmnist}.
    The number of classes of MNIST and CIFAR-10 is $L=10$, DermaMNIST is $L=7$, and OrganCMNIST is $L=11$.
    The spatial dimension of all inputs of i) is resized to be $32\times32$, and ii) is resized to be $64\times64$.
    
    To simulate the federated setting, we split the pre-defined training set of each dataset into $K$ subset using Dirichlet distribution for $L$ classes following~\cite{dirichlet}.
    From the Dirichlet distribution, we can sample a probability vector that can be used for designating a label distribution of each client.
    The degree of the statistical heterogeneity in the federated system can be adjusted by selecting an appropriate concentration parameter $\alpha$, where $\alpha=0$ represents complete heterogeneity in class labels across clients and $\alpha\rightarrow\infty$ represents homogeneous label distributions.
    In all experiments, we use $\alpha=\left\{0.01, 1.00\right\}$ for dataset partitioning.
    Note that when $\alpha$ is small, it is possible that some clients may have samples from partial classes, e.g., two out of $L$ classes.

\paragraph{Federated Settings}
    We use $K=\left\{ 10, 100 \right\}$ clients across all experiments with $T=500$ FL rounds.
    To secure communication efficiency, we set the client sampling probability as $C=\left\{ 1.0, 0.1 \right\}$, to guarantee that there exist at least 10 participating clients in each communication round.
    Unless otherwise stated, we use ResNet-10~\cite{resnet} as a classifier backbone in all experiments.
    All results are reported with averaged accuracy and a standard deviation across three different random seeds (seeds=1, 2, 3).
    Following~\cite{noniidconvergence}, we applied learning rate decay for all runs to stabilize the convergence behavior of FL algorithms.

\paragraph{Configurations of \texttt{FedEvg}}
    For our proposed method, we use Swish activation function~\cite{swish1,swish2,swish3} instead of rectified linear unit (ReLU) in the model architecture, 
    and apply the spectral normalization~\cite{spectralnorm} only for the last layer of a model. 
    Both of them empirically stabilize the EBM training with minimal additional computation.
    Pertaining to the server-side SGLD, we choose the step size schedule, 
    $\{\beta^{(t)}\}_{t=0}^{T-1}$, as $\beta^{(0)}=10.0$ and $\beta^{(T-1)}=0.01$ for all settings.
    The noise scale is also adjusted with the step size as $\delta=0.01$, and 
    The step size schedule is a decreasing sequence following the cosine scheduling proposed in~\cite{cosine}. 
    For the client-side SGLD, we find that only a single step ($R=1$) is sufficient for all experiments, which significantly boosts the speed of a local training. 
    The noise scale is selected as a fixed value of $\sigma=0.01$ and the step size is also fixed as $\gamma^{(0)}=...=\gamma^{(R-1)}=1$, following~\cite{your}.
    Lastly, the constant $\lambda$ is set to $0.1$, since we want the effect of synthetic data to be less influential than that of real data in local discriminative modeling. 
    Still, we expect the nonnegative $\lambda$ may mitigate the statistical heterogeneity in the local training.

\newpage
\subsection{Quantitative \& Qualitative Utilities of Synthetic Data}
\label{subsec:gen}
\paragraph{Baselines}
    Most of the existing work in federated synthetic data generation uses GANs~\cite{gan}, which consists of two sub-networks, i.e., a discriminator and a generator.
    We select one of the basic baselines, \texttt{FedCGAN}~\cite{gan3}, and let each client train both sub-networks in each round.
    For both benchmarks, we use the best hyperparameters reported in~\cite{gan3}.
    In addition, we select \texttt{FedCVAE} as an additional baseline, 
    which utilizes conditional VAEs~\cite{cvae} for generating synthetic samples in federated settings.
    We also search the best hyperparameters for \texttt{FedCVAE}.
    While there are another lines of works~\cite{dense,vae2} that also aims to achieve one-shot FL through synthetic data, our work is not restricted to a single-round communication setting. 
    Therefore, we exclude these methods from our baselines, since the scope of our work is to synthesise realistic samples under acceptable communication rounds.
    
\paragraph{Synthetic Data Generation}
    The number of synthetic data is designated as $M = L\times\text{spc}$, where $\text{spc}$ refers to `samples per class'.
    We set $\text{spc}=10$ for all experiments.
    Note that the parameter count of ResNet-10 is $\approx5\times10^6$, and total size of synthetic data for single-channel datasets is $M=10\times10\times1\times(32\times32)\approx1\times10^5$ for MNIST
    and $M=11\times10\times1\times(64\times64)\approx4.5\times10^5$ for OrganCMNIST.
    For the RGB-colored datasets, $M=10\times10\times3\times(32\times32)\approx3\times10^5$ for CIFAR-10
    and $M=7\times10\times3\times(64\times64)\approx8.6\times10^5$ for DermaMNIST.
    Therefore, the size of synthetic data $M$ is an order of magnitude smaller than the parameter size in all datasets. 
    
\paragraph{Evaluation of Synthetic Data}
    We evaluate the effectiveness of the generated synthetic data in two aspects:
    i) the efficacy of the synthetic data for mitigating heterogeneity
    and ii) the quality of the synthetic data as a proxy of local distributions.
    For the former, the server trains a classifier at the end of the FL training round using the generated synthetic data in the server.
    The classifier is then broadcast to all clients, and each client reports the performance of the global classifier using own local holdout set.
    We report the results in Table~\ref{tab:disc} and Table~\ref{tab:disc2}.
    For the latter, we compute the Fréchet Inception Distance (FID)~\cite{fid} of the server-side synthetic data using the pre-defined test set of each benchmark.
    Note that smaller FID is preferred, since it is the distance of activations between the synthetic and the real data using the large scale pre-trained deep networks.
    Corresponding results are reported in Table~\ref{tab:gen} and Table~\ref{tab:gen2}.

    From both quantitative and qualitative perspectives, the synthetic data generated by the proposed method (i.e., \texttt{FedEvg}) show superior performances,
    regardless of the number of participating clients (10 or 100) and the degree of statistical heterogeneity ($\alpha=0.01,1.00$).
    \begin{table}[H]
\centering
\caption{Discriminative performance (Accuracy) of the server-side classifier trained on generated synthetic data (higher is better) --- MNIST and CIFAR-10}
\label{tab:disc}
\resizebox{\textwidth}{!}{%
\begin{tabular}{!{}lcccc|cccc!{}}
\toprule
\textbf{Dataset} & \multicolumn{4}{c|}{\textbf{MNIST}} & \multicolumn{4}{c}{\textbf{CIFAR-10}} \\
\textbf{Method} & \multicolumn{4}{c|}{\small(Acc. 1)} & \multicolumn{4}{c}{\small(Acc. 1)} \\ \midrule
\multicolumn{1}{c}{\# clients} & \multicolumn{2}{c}{10} & \multicolumn{2}{c|}{100} & \multicolumn{2}{c}{10} & \multicolumn{2}{c}{100} \\
\multicolumn{1}{c}{concentration ($\alpha$)} & 0.01 & \multicolumn{1}{c}{1.00} & 0.01 & 1.00 & 0.01 & \multicolumn{1}{c}{1.00} & 0.01 & 1.00 \\ \midrule
\texttt{FedCGAN} & \begin{tabular}[c]{@{}c@{}}44.18\\ \footnotesize \color[HTML]{9B9B9B}(16.09)\end{tabular} & \multicolumn{1}{c}{\begin{tabular}[c]{@{}c@{}}60.81\\ \footnotesize \color[HTML]{9B9B9B}(9.12)\end{tabular}} & \begin{tabular}[c]{@{}c@{}}56.39\\ \footnotesize \color[HTML]{9B9B9B}(12.68)\end{tabular} & \begin{tabular}[c]{@{}c@{}}68.06\\ \footnotesize \color[HTML]{9B9B9B}(11.78)\end{tabular} & \begin{tabular}[c]{@{}c@{}}9.96\\ \footnotesize \color[HTML]{9B9B9B}(1.42)\end{tabular} & \multicolumn{1}{c}{\begin{tabular}[c]{@{}c@{}}15.95\\ \footnotesize \color[HTML]{9B9B9B}(4.06)\end{tabular}} & \begin{tabular}[c]{@{}c@{}}14.88\\ \footnotesize \color[HTML]{9B9B9B}(1.78)\end{tabular} & \begin{tabular}[c]{@{}c@{}}21.31\\ \footnotesize \color[HTML]{9B9B9B}(2.42)\end{tabular} \\
\texttt{FedCVAE} & \begin{tabular}[c]{@{}c@{}}\textbf{66.93}\\ \footnotesize \color[HTML]{9B9B9B}(4.92)\end{tabular} & \multicolumn{1}{c}{\begin{tabular}[c]{@{}c@{}}64.73\\ \footnotesize \color[HTML]{9B9B9B}(2.53)\end{tabular}} & \begin{tabular}[c]{@{}c@{}}63.13\\ \footnotesize \color[HTML]{9B9B9B}(3.91)\end{tabular} & \begin{tabular}[c]{@{}c@{}}63.80\\ \footnotesize \color[HTML]{9B9B9B}(4.30)\end{tabular} & \begin{tabular}[c]{@{}c@{}}13.98\\ \footnotesize \color[HTML]{9B9B9B}(5.46)\end{tabular} & \multicolumn{1}{c}{\begin{tabular}[c]{@{}c@{}}7.66\\ \footnotesize \color[HTML]{9B9B9B}(2.01)\end{tabular}} & \begin{tabular}[c]{@{}c@{}}12.86\\ \footnotesize \color[HTML]{9B9B9B}(1.81)\end{tabular} & \begin{tabular}[c]{@{}c@{}}12.04\\ \footnotesize \color[HTML]{9B9B9B}(0.57)\end{tabular} \\
\rowcolor[HTML]{FFF5E6} 
\texttt{FedEvg} & \begin{tabular}[c]{@{}c@{}}62.95\\ \footnotesize \color[HTML]{9B9B9B}(5.43)\end{tabular} & \multicolumn{1}{c}{\cellcolor[HTML]{FFF5E6}\begin{tabular}[c]{@{}c@{}}\textbf{70.94}\\ \footnotesize \color[HTML]{9B9B9B}(3.20)\end{tabular}} & \cellcolor[HTML]{FFF5E6}\begin{tabular}[c]{@{}c@{}}\textbf{72.25}\\ \footnotesize \color[HTML]{9B9B9B}(3.56)\end{tabular} & \begin{tabular}[c]{@{}c@{}}\textbf{86.49}\\ \footnotesize \color[HTML]{9B9B9B}(2.38)\end{tabular} & \begin{tabular}[c]{@{}c@{}}\textbf{17.25}\\ \footnotesize \color[HTML]{9B9B9B}(2.21)\end{tabular} & \multicolumn{1}{c}{\cellcolor[HTML]{FFF5E6}\begin{tabular}[c]{@{}c@{}}\textbf{29.08}\\ \footnotesize \color[HTML]{9B9B9B}(3.44)\end{tabular}} & \cellcolor[HTML]{FFF5E6}\begin{tabular}[c]{@{}c@{}}\textbf{42.68}\\ \footnotesize \color[HTML]{9B9B9B}(1.82)\end{tabular} & \begin{tabular}[c]{@{}c@{}}\textbf{58.57}\\ \footnotesize \color[HTML]{9B9B9B}(2.05)\end{tabular} \\ \bottomrule
\end{tabular}%
}
\end{table}
    \begin{table}[H]
\centering
\caption{Discriminative performance (Accuracy) of the server-side classifier trained on generated synthetic data (higher is better) --- DermaMNIST and OrganCMNIST}
\label{tab:disc2}
\resizebox{\textwidth}{!}{%
\begin{tabular}{!{}lcccccccc!{}}
\toprule
\textbf{Dataset} & \multicolumn{4}{c}{\textbf{DermaMNIST}} & \multicolumn{4}{c}{\textbf{OrganCMNIST}} \\
\textbf{Method} & \multicolumn{4}{c|}{\small (Acc. 1)} & \multicolumn{4}{c}{\small (Acc. 1)} \\ \midrule
\multicolumn{1}{c}{\# clients} & \multicolumn{2}{c}{10} & \multicolumn{2}{c|}{100} & \multicolumn{2}{c}{10} & \multicolumn{2}{c}{100} \\
\multicolumn{1}{c}{concentration ($\alpha$)} & 0.01 & 1.00 & 0.01 & \multicolumn{1}{c|}{1.00} & 0.01 & 1.00 & 0.01 & 1.00 \\ \midrule
\texttt{FedCGAN} & \begin{tabular}[c]{@{}c@{}}3.74\\ \footnotesize \color[HTML]{9B9B9B}(20.42)\end{tabular} & \begin{tabular}[c]{@{}c@{}}10.15\\ \footnotesize \color[HTML]{9B9B9B}(6.73)\end{tabular} & \begin{tabular}[c]{@{}c@{}}11.39\\ \footnotesize \color[HTML]{9B9B9B}(19.20)\end{tabular} & \multicolumn{1}{c|}{\begin{tabular}[c]{@{}c@{}}21.95\\ \footnotesize \color[HTML]{9B9B9B}(15.39)\end{tabular}} & \begin{tabular}[c]{@{}c@{}}2.46\\ \footnotesize \color[HTML]{9B9B9B}(5.45)\end{tabular} & \begin{tabular}[c]{@{}c@{}}19.24\\ \footnotesize \color[HTML]{9B9B9B}(4.75)\end{tabular} & \begin{tabular}[c]{@{}c@{}}4.38\\ \footnotesize \color[HTML]{9B9B9B}(15.02)\end{tabular} & \begin{tabular}[c]{@{}c@{}}26.11\\ \footnotesize \color[HTML]{9B9B9B}(11.50)\end{tabular} \\
\texttt{FedCVAE} & \begin{tabular}[c]{@{}c@{}}23.56\\ \footnotesize \color[HTML]{9B9B9B}(32.48)\end{tabular} & \begin{tabular}[c]{@{}c@{}}16.18\\ \footnotesize \color[HTML]{9B9B9B}(12.34)\end{tabular} & \begin{tabular}[c]{@{}c@{}}15.84\\ \footnotesize \color[HTML]{9B9B9B}(22.92)\end{tabular} & \multicolumn{1}{c|}{\begin{tabular}[c]{@{}c@{}}22.18\\ \footnotesize \color[HTML]{9B9B9B}(13.04)\end{tabular}} & \begin{tabular}[c]{@{}c@{}}8.55\\ \footnotesize \color[HTML]{9B9B9B}(16.86)\end{tabular} & \begin{tabular}[c]{@{}c@{}}14.32\\ \footnotesize \color[HTML]{9B9B9B}(9.03)\end{tabular} & \begin{tabular}[c]{@{}c@{}}11.12\\ \footnotesize \color[HTML]{9B9B9B}(22.14)\end{tabular} & \begin{tabular}[c]{@{}c@{}}15.04\\ \footnotesize \color[HTML]{9B9B9B}(10.98)\end{tabular} \\
\rowcolor[HTML]{FFF5E6} 
\texttt{FedEvg} & \begin{tabular}[c]{@{}c@{}}\textbf{32.40}\\ \footnotesize \color[HTML]{9B9B9B}(23.58)\end{tabular} & \begin{tabular}[c]{@{}c@{}}\textbf{51.31}\\ \footnotesize \color[HTML]{9B9B9B}(10.12)\end{tabular} & \begin{tabular}[c]{@{}c@{}}\textbf{38.89}\\ \footnotesize \color[HTML]{9B9B9B}(31.04)\end{tabular} & \multicolumn{1}{c|}{\cellcolor[HTML]{FFF5E6}\begin{tabular}[c]{@{}c@{}}\textbf{63.23}\\ \footnotesize \color[HTML]{9B9B9B}(17.54)\end{tabular}} & \begin{tabular}[c]{@{}c@{}}\textbf{37.16}\\ \footnotesize \color[HTML]{9B9B9B}(18.51)\end{tabular} & \begin{tabular}[c]{@{}c@{}}\textbf{44.87}\\ \footnotesize \color[HTML]{9B9B9B}(10.31)\end{tabular} & \begin{tabular}[c]{@{}c@{}}\textbf{54.58}\\ \footnotesize \color[HTML]{9B9B9B}(26.78)\end{tabular} & \begin{tabular}[c]{@{}c@{}}\textbf{77.59}\\ \footnotesize \color[HTML]{9B9B9B}(12.26)\end{tabular} \\ \bottomrule
\end{tabular}%
}
\end{table}
    \begin{table}[H]
\centering
\caption{Generative performance (FID) of the server-side synthetic data (lower is better) --- MNIST and CIFAR-10}
\label{tab:gen}
\resizebox{\textwidth}{!}{%
\begin{tabular}{!{}lcccc|cccc!{}}
\toprule
\textbf{Dataset} & \multicolumn{4}{c|}{\textbf{MNIST}} & \multicolumn{4}{c}{\textbf{CIFAR-10}} \\
\textbf{Method} & \multicolumn{4}{c|}{\small(FID)} & \multicolumn{4}{c}{\small(FID)} \\ \midrule
\multicolumn{1}{c}{\# clients} & \multicolumn{2}{c}{10} & \multicolumn{2}{c|}{100} & \multicolumn{2}{c}{10} & \multicolumn{2}{c}{100} \\
\multicolumn{1}{c}{concentration ($\alpha$)} & 0.01 & \multicolumn{1}{c}{1.00} & 0.01 & 1.00 & 0.01 & \multicolumn{1}{c}{1.00} & 0.01 & 1.00 \\ \midrule
\texttt{FedCGAN} & \begin{tabular}[c]{@{}c@{}}331.48\\ \footnotesize \color[HTML]{9B9B9B}(0.23)\end{tabular} & \multicolumn{1}{c}{\begin{tabular}[c]{@{}c@{}}300.67\\ \footnotesize \color[HTML]{9B9B9B}(0.59)\end{tabular}} & \begin{tabular}[c]{@{}c@{}}353.86\\ \footnotesize \color[HTML]{9B9B9B}(1.78)\end{tabular} & \begin{tabular}[c]{@{}c@{}}325.66\\ \footnotesize \color[HTML]{9B9B9B}(1.78)\end{tabular} & \begin{tabular}[c]{@{}c@{}}388.13\\ \footnotesize \color[HTML]{9B9B9B}(0.67)\end{tabular} & \multicolumn{1}{c}{\begin{tabular}[c]{@{}c@{}}291.59\\ \footnotesize \color[HTML]{9B9B9B}(1.21)\end{tabular}} & \begin{tabular}[c]{@{}c@{}}363.78\\ \footnotesize \color[HTML]{9B9B9B}(0.17)\end{tabular} & \begin{tabular}[c]{@{}c@{}}326.10\\ \footnotesize \color[HTML]{9B9B9B}(0.78)\end{tabular} \\
\texttt{FedCVAE} & \begin{tabular}[c]{@{}c@{}}214.35\\ \footnotesize \color[HTML]{9B9B9B}(0.15)\end{tabular} & \multicolumn{1}{c}{\begin{tabular}[c]{@{}c@{}}181.30\\ \footnotesize \color[HTML]{9B9B9B}(0.08)\end{tabular}} & \begin{tabular}[c]{@{}c@{}}199.71\\ \footnotesize \color[HTML]{9B9B9B}(0.94)\end{tabular} & \begin{tabular}[c]{@{}c@{}}171.36\\ \footnotesize \color[HTML]{9B9B9B}(2.01)\end{tabular} & \begin{tabular}[c]{@{}c@{}}399.27\\ \footnotesize \color[HTML]{9B9B9B}(1.03)\end{tabular} & \multicolumn{1}{c}{\begin{tabular}[c]{@{}c@{}}396.34\\ \footnotesize \color[HTML]{9B9B9B}(0.59)\end{tabular}} & \begin{tabular}[c]{@{}c@{}}385.54\\ \footnotesize \color[HTML]{9B9B9B}(0.33)\end{tabular} & \begin{tabular}[c]{@{}c@{}}376.58\\ \footnotesize \color[HTML]{9B9B9B}(1.91)\end{tabular} \\
\rowcolor[HTML]{FFF5E6} 
\texttt{FedEvg} & \begin{tabular}[c]{@{}c@{}}\textbf{171.53}\\ \footnotesize \color[HTML]{9B9B9B}(0.24)\end{tabular} & \multicolumn{1}{c}{\cellcolor[HTML]{FFF5E6}\begin{tabular}[c]{@{}c@{}}\textbf{124.72}\\ \footnotesize \color[HTML]{9B9B9B}(0.38)\end{tabular}} & \cellcolor[HTML]{FFF5E6}\begin{tabular}[c]{@{}c@{}}\textbf{175.90}\\ \footnotesize \color[HTML]{9B9B9B}(1.42)\end{tabular} & \begin{tabular}[c]{@{}c@{}}\textbf{154.79}\\ \footnotesize \color[HTML]{9B9B9B}(1.77)\end{tabular} & \begin{tabular}[c]{@{}c@{}}\textbf{359.18}\\ \footnotesize \color[HTML]{9B9B9B}(1.50)\end{tabular} & \multicolumn{1}{c}{\cellcolor[HTML]{FFF5E6}\begin{tabular}[c]{@{}c@{}}\textbf{286.26}\\ \footnotesize \color[HTML]{9B9B9B}(0.10)\end{tabular}} & \cellcolor[HTML]{FFF5E6}\begin{tabular}[c]{@{}c@{}}\textbf{338.42}\\ \footnotesize \color[HTML]{9B9B9B}(0.64)\end{tabular} & \begin{tabular}[c]{@{}c@{}}\textbf{316.49}\\ \footnotesize \color[HTML]{9B9B9B}(0.89)\end{tabular} \\ \bottomrule
\end{tabular}%
}
\end{table}
    \begin{table}[H]
\centering
\caption{Generative performance (FID) of the server-side synthetic data (lower is better) --- DermaMNIST and OrganCMNIST}
\label{tab:gen2}
\resizebox{\textwidth}{!}{%
\begin{tabular}{!{}lcccc|cccc!{}}
\toprule
\textbf{Dataset} & \multicolumn{4}{c|}{\textbf{DermaMNIST}} & \multicolumn{4}{c}{\textbf{OrganCMNIST}} \\
\textbf{Method} & \multicolumn{4}{c|}{\small (FID)} & \multicolumn{4}{c}{\small (FID)} \\ \midrule
\multicolumn{1}{c}{\# clients} & \multicolumn{2}{c}{10} & \multicolumn{2}{c|}{100} & \multicolumn{2}{c}{10} & \multicolumn{2}{c}{100} \\
\multicolumn{1}{c}{concentration ($\alpha$)} & 0.01 & 1.00 & 0.01 & 1.00 & 0.01 & 1.00 & 0.01 & 1.00 \\ \midrule
\texttt{FedCGAN} & \begin{tabular}[c]{@{}c@{}}372.89\\ \footnotesize \color[HTML]{9B9B9B}(11.12)\end{tabular} & \begin{tabular}[c]{@{}c@{}}\textbf{296.88}\\ \footnotesize \color[HTML]{9B9B9B}(15.91)\end{tabular} & \begin{tabular}[c]{@{}c@{}}364.40\\ \footnotesize \color[HTML]{9B9B9B}(7.18)\end{tabular} & \begin{tabular}[c]{@{}c@{}}326.60\\ \footnotesize \color[HTML]{9B9B9B}(12.64)\end{tabular} & \begin{tabular}[c]{@{}c@{}}389.88\\ \footnotesize \color[HTML]{9B9B9B}(16.77)\end{tabular} & \begin{tabular}[c]{@{}c@{}}333.50\\ \footnotesize \color[HTML]{9B9B9B}(12.98)\end{tabular} & \begin{tabular}[c]{@{}c@{}}461.91\\ \footnotesize \color[HTML]{9B9B9B}(17.06)\end{tabular} & \begin{tabular}[c]{@{}c@{}}386.53\\ \footnotesize \color[HTML]{9B9B9B}(16.21)\end{tabular} \\
\texttt{FedCVAE} & \begin{tabular}[c]{@{}c@{}}355.76\\ \footnotesize \color[HTML]{9B9B9B}(15.02)\end{tabular} & \begin{tabular}[c]{@{}c@{}}309.81\\ \footnotesize \color[HTML]{9B9B9B}(8.89)\end{tabular} & \begin{tabular}[c]{@{}c@{}}351.55\\ \footnotesize \color[HTML]{9B9B9B}(21.94)\end{tabular} & \begin{tabular}[c]{@{}c@{}}316.43\\ \footnotesize \color[HTML]{9B9B9B}(13.51)\end{tabular} & \begin{tabular}[c]{@{}c@{}}442.12\\ \footnotesize \color[HTML]{9B9B9B}(15.08)\end{tabular} & \begin{tabular}[c]{@{}c@{}}379.25\\ \footnotesize \color[HTML]{9B9B9B}(17.64)\end{tabular} & \begin{tabular}[c]{@{}c@{}}350.21\\ \footnotesize \color[HTML]{9B9B9B}(16.41)\end{tabular} & \begin{tabular}[c]{@{}c@{}}438.63\\ \footnotesize \color[HTML]{9B9B9B}(19.01)\end{tabular} \\
\rowcolor[HTML]{FFF5E6} 
\texttt{FedEvg} & \begin{tabular}[c]{@{}c@{}}\textbf{317.52}\\ \footnotesize \color[HTML]{9B9B9B}(10.73)\end{tabular} & \begin{tabular}[c]{@{}c@{}}319.34\\ \footnotesize \color[HTML]{9B9B9B}(13.11)\end{tabular} & \begin{tabular}[c]{@{}c@{}}\textbf{272.89}\\ \footnotesize \color[HTML]{9B9B9B}(11.42)\end{tabular} & \begin{tabular}[c]{@{}c@{}}\textbf{235.84}\\ \footnotesize \color[HTML]{9B9B9B}(12.77)\end{tabular} & \begin{tabular}[c]{@{}c@{}}\textbf{298.27}\\ \footnotesize \color[HTML]{9B9B9B}(12.14)\end{tabular} & \begin{tabular}[c]{@{}c@{}}\textbf{303.14}\\ \footnotesize \color[HTML]{9B9B9B}(13.01)\end{tabular} & \begin{tabular}[c]{@{}c@{}}\textbf{209.96}\\ \footnotesize \color[HTML]{9B9B9B}(12.28)\end{tabular} & \begin{tabular}[c]{@{}c@{}}\textbf{188.01}\\ \footnotesize \color[HTML]{9B9B9B}(13.05)\end{tabular} \\ \bottomrule
\end{tabular}%
}
\end{table}

\newpage
\paragraph{Visualization of Synthetic Data}
    The visualizations of synthetic data of each dataset (\texttt{MNIST}, \texttt{CIFAR-10}, \texttt{OrganCMNIST}, and \texttt{DermaMNIST}) 
    from each method (\texttt{FedCGAN}, \texttt{FedCVAE}, and \texttt{FedEvg}) are provided in Figure~\ref{fig:mnist},~\ref{fig:cifar},~\ref{fig:organcmnist} and Figure~\ref{fig:dermamnist}.
    All results are obtained in the setting of $K=10$ clients with $\alpha=0.01$ and $R=500$ rounds.
    Although the synthetic data generated are not as diverse as the original data, they are all sufficiently representative of each class. 

    Compared to existing methods, \texttt{FedEvg} shows far better generation quality compared to baselines.
    For example, pertaining to the result of \texttt{MNIST} dataset (Figure~\ref{fig:mnist}), \texttt{FedCVAE} fails to generate samples from classes of digit two and digit six.
    Rather, they resemble synthetic data samples from the class of digit four, which implies that \texttt{FedCVAE} does not fully understand the structural difference between these classes.
    
    In case of more complicated dataset (in terms of in-class diversity and data dimension, i.e., colored dataset), such as \texttt{CIFAR-10}, 
    the excellence of the proposed method is even more emphasized.
    While \texttt{FedCGAN} generates samples that appeared to be separated by class, but samples are too noisy and of poor quality.
    Worse, in \texttt{FedCVAE}, the generated results are so blurry that they are indiscernible by visual perception.
    
    In contrast, \texttt{FedEvg} clearly produces plausible and diverse samples whose class can be easily inferred from their appearance.
    When if the size of data is increased from $32\times32$ to $64\times64$ (e.g., \texttt{OrganCMNIST} and \texttt{DermaMNIST}), 
    the aforementioned trends are even accentuated, proving the superiority of the proposed method compared over existing baselines.
    This may promote the effectiveness of synthetic data generated by \texttt{FedEvg} for the downstream tasks after federated training, such as providing synthetic data for sample-deficient clients, or for data homogenization as a public shared dataset.     
    \begin{sidewaysfigure}
        \centering
        \includegraphics[width=\textwidth,keepaspectratio]{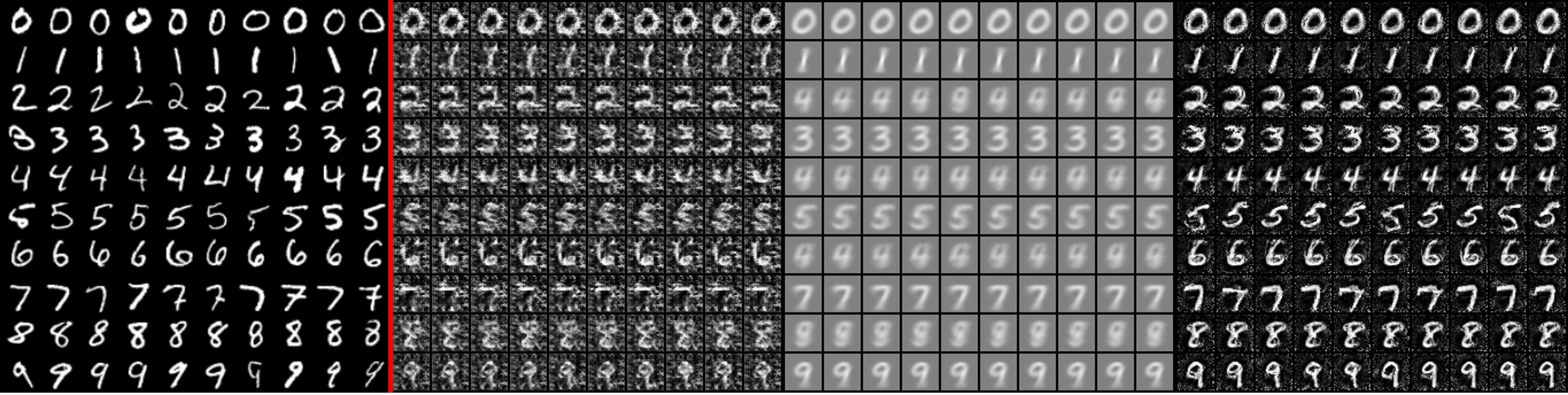}
        \caption[Generation results of MNIST dataset]
        {The first image to the left of the red vertical bar is a set of random samples from the original \texttt{MNIST} dataset.
        The sets of images to the right of the red vertical bar are the results generated by \texttt{FedCGAN}, the second by \texttt{FedCVAE}, and the third by \texttt{FedEvg}.
        Each row in each set of images consists of samples from the same class, and there are 10 samples (i.e., $\text{spc}=10$) in each row.}
        \label{fig:mnist}
    \end{sidewaysfigure}
    \begin{sidewaysfigure}
        \centering
        \includegraphics[width=\textwidth,keepaspectratio]{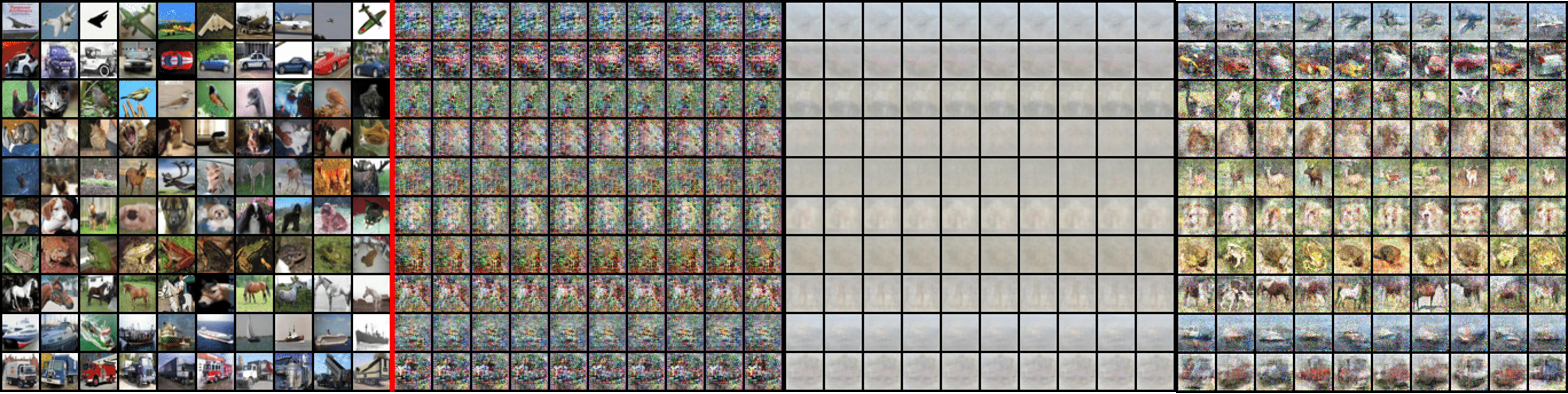}
        \caption[Generation results of CIFAR-10 dataset]
        {The first image to the left of the red vertical bar is a set of random samples from the original \texttt{CIFAR-10} dataset.
        The sets of images to the right of the red vertical bar are the results generated by \texttt{FedCGAN}, the second by \texttt{FedCVAE}, and the third by \texttt{FedEvg}.
        Each row in each set of images consists of samples from the same class, and there are 10 samples (i.e., $\text{spc}=10$) in each row.}
        \label{fig:cifar}
    \end{sidewaysfigure}
    \begin{sidewaysfigure}
        \centering
        \includegraphics[width=\textwidth,keepaspectratio]{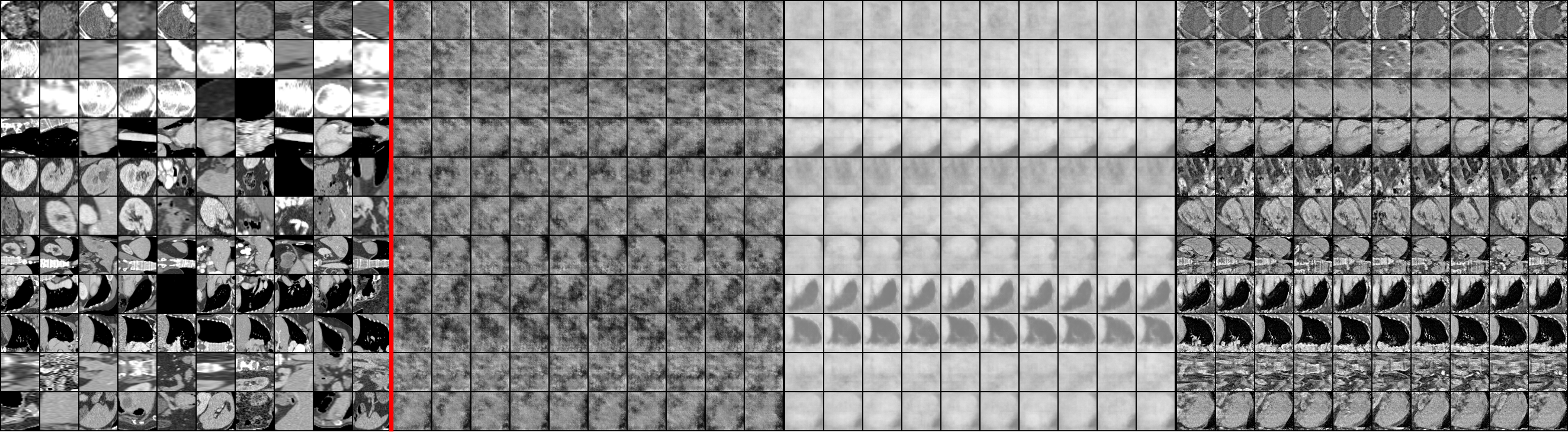}
        \caption[Generation results of OrganCMNIST dataset]
        {The first image to the left of the red vertical bar is a set of random samples from the original \texttt{OrganCMNIST} dataset.
        The sets of images to the right of the red vertical bar are the results generated by \texttt{FedCGAN}, the second by \texttt{FedCVAE}, and the third by \texttt{FedEvg}.
        Each row in each set of images consists of samples from the same class, and there are 10 samples (i.e., $\text{spc}=10$) in each row.}
        \label{fig:organcmnist}
    \end{sidewaysfigure}
    \begin{sidewaysfigure}
        \centering
        \includegraphics[width=\textwidth,keepaspectratio]{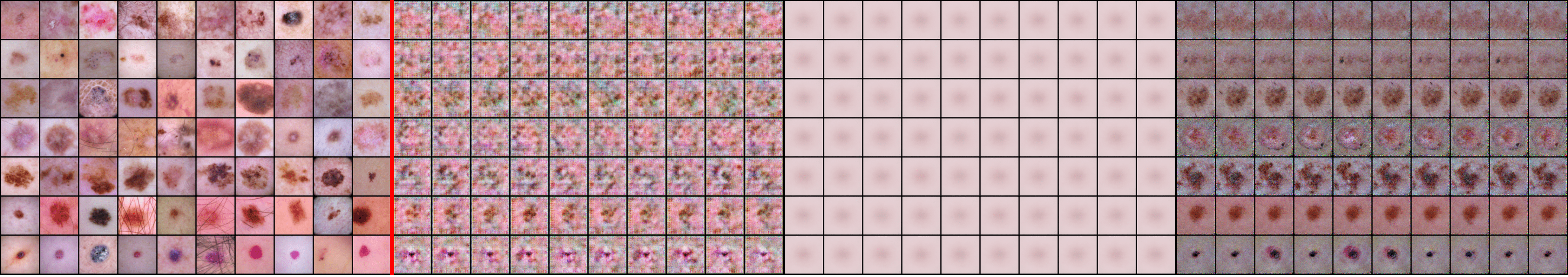}
        \caption[Generation results of DermaMNIST dataset]
        {The first image to the left of the red vertical bar is a set of random samples from the original \texttt{DermaMNIST} dataset.
        The sets of images to the right of the red vertical bar are the results generated by \texttt{FedCGAN}, the second by \texttt{FedCVAE}, and the third by \texttt{FedEvg}.
        Each row in each set of images consists of samples from the same class, and there are 10 samples (i.e., $\text{spc}=10$) in each row.}
        \label{fig:dermamnist}
    \end{sidewaysfigure}

\newpage
\section{Related Works}
\subsection{Synthetic Data in Federated Learning}
    FL is a simple and scalable distributed optimization method ~\cite{fedavg}.
    However, since the orchestrator (i.e., the central server) does not have direct access to the local samples of the participating clients, 
    the curse of statistical heterogeneity always threatens the federated system.
    Since it affects different local optimization trajectories, 
    a global model constructed from these disparate local updates may not achieve satisfactory performance or may not be even converged. 
    Most previous studies are \textit{model-centric} approaches, focusing on rectifying heterogeneous local updates to be aligned with each other.
    This includes a regularization-based methods~\cite{opt1,opt2}, usage of control variates~\cite{scaffold}, server-side momentum~\cite{dirichlet},
    or replacing aggregation schemes~\cite{agg1,agg2,agg3}, and even equipping with additional models for personalization~\cite{pfl2,pfl3,pfl4}.
    Orthogonal to these methods, \textit{data-centric} approaches have been relatively less explored.
    This encompasses the sharing of pre-defined server-side public datasets~\cite{noniid,pub1,pub2} and the use of virtual data initialized from an untrained generative model (e.g.,~\cite{vhl,ccvr}) or resorting to the low-dimensional data, which is latent representations of local models~\cite{fedfed,fedproto,datafreekd}. 
    However, the public data is not always available, and other surrogates cannot be used separately when FL training is completed. 
    Therefore, the usage of synthetic data has recently received considerable attention~\cite{syntheticflreview}.
    While promising, training a generative model for the synthetic data in federated settings is undoubtedly challenging. 
    Few studies have been conducted on this topic, with the majority of research concentrating on GANs~\cite{gan1,gan2,gan3,mdgan,priflgan}, but it requires heavy computation and communication cost, and is vulnerable to the mode collapse, a common failure of generative training.
    Other lines of work~\cite{fedsynth,feddm,fedaf} are built upon the dataset condensation method~\cite{datasetdistill,datasetcondens}, 
    but do not produce plausible synthetic data, since the generated samples are merely a compressed version of the raw counterparts.
    Our method can not only contribute to the generation of plausible synthetic data without the need to prepare a generative model,
    but can also provide local discriminative models with a slight modification in the training procedure.
    Thus, \texttt{FedEvg} can achieve the best of both worlds: server-side benefits of acquiring synthetic data samples and client-side benefits of obtaining an improved local model.

\newpage
\subsection{Energy-Based Model}
    EBMs~\cite{ebm} are a classic method of modeling an unnormalized probability density.
    The omission of the normalizing constant estimation gives EBMs much flexibility,
    such as no restriction in the model family and easy compositional operations between EBMs.
    Our work relies heavily on~\cite{your}, which revealed that the EBM can be externalized from any classifiers.
    We modify and extend this method to federated settings, where multiple classifiers are trained separately in parallel.
    To harmonize these EBM-driven classifiers, we choose to treat the sampling from the composition of EBMs as the sampling from implicitly aggregated local density functions.
    Although the composition of EBMs, also known as a mixture-of-expert, has been explored in previous works~\cite{ebmcomp1,ebmcomp2,ebmcomp3} in the data centralized setting, 
    \texttt{FedEvg} is, to the best of our knowledge, the first to bring this intriguing property into federated settings.

\newpage
\section{Discussion}
    While EBMs are flexible by not modeling a normalizing constant, their training is non-trivial in return.
    Moreover, as we do not have access to the (normalized) log-likelihood during training, this obfuscates monitoring the status of EBM training.
    Therefore, stabilizing the training of local EBMs in our work with improved techniques can be a promising research direction.
    For example, MCMC teaching~\cite{coopflow,coopdiff,adaptiveflow,mcmcteaching} and connection to diffusion modeling~\cite{ebmscore,ebmcomp3} may be good candidates.
    
    As an alternative to EBMs, score-based generative modeling~\cite{ncsn,ncsn2,ncsnpp} shows a strong generation quality by estimating the score (i.e., energy gradient) directly from the model, rather than the energy value.
    However, one can hesitate to adopt score-based modeling for the following reasons:
    First, to realize \textit{the sampling from the mixture distribution}, it is essential to know the local energy value to estimate the energy gradient as introduced in eq.~\eqref{eq:moe_grad}.
    While some tricks are introduced to construct a proxy for energy value from the score-based approach~\cite{ebmcomp3,ebmscore}, they are only supported by empirical evidence.
    Second, when using score-based modeling, each client in the federated system should consume much more local computation and sample size than the EBM to improve the generation quality of server-side synthetic data.  
    Nevertheless, it is worthwhile to use the score-based approach in future work, as more advanced techniques have been proposed for efficient training and compositional generation of score-based generative models.  
    
    Another strong candidate that can replace the current EBM-based scheme is Probabilistic Circuits (PCs)~\cite{pc1,pc2}.
    It is fundamentally designed to model probabilistic distributions based on generating polynomials between random variables, and thus it naturally supports compositional operations as EBMs.
    While this work aggressively assumes that the normalizing constants of different local EBMs are equal to each other (as in eq.~\eqref{eq:samenormconst}), this is hardly acceptable in practice.
    It is an unavoidable assumption when using EBMs as the core engine, however, the intractable modeling of the probability density is certainly a part to be improved, especially when the effects of statistical heterogeneity are also present.
    From this perspective, PCs can solve the key challenge of modeling the normalization constant because they essentially model a tractable probability density.
    Thus, replacing EBMs with PCs may address major problems with EBMs without modifying the proposed main assumption (i.e., sampling from a mixture of local distributions) for federated synthetic data generation.
    
    Apart from the modeling scheme, the empirical evaluation of current work should also be extended outside the vision domain, e.g., text and tabular benchmarks, to evaluate the practicality of the proposed method.
    In a similar context, an explicit introduction of privacy-preserving mechanisms, e.g., differential privacy~\cite{dp}, should also be considered, mirroring the deployment scenario of the federated system equipped with the proposed method.
    Although we avoid explicit risks by initializing synthetic data on the server and not using the raw energy value directly, the potential risk of specific privacy leakage from various threat scenarios requires further investigation.
    Last but not least, our method can be roughly viewed as a cooperative MCMC sampling method.
    Hence, we can borrow theoretical analyses from the rich MCMC literature on the convergence behavior of \texttt{FedEvg} to the true unknown density.

\newpage
\section{Conclusion}
    In this work, we propose \texttt{FedEvg}, a collaborative synthetic data generation method in a data-decentralized setting.
    Our method not only supports the generative ability in terms of server-side synthetic data generation, but also guarantees the strong discriminative performance required for the good of the participating clients.
    The main motivation is the hidden connection to energy-based models, which supports flexible modeling of an unknown probability density, as well as easy compositions between them.
    By viewing federated synthetic data generation as a collaborative sampling process from a mixture of inaccessible local densities, we can easily secure synthetic data using locally constructed energy signals.
    As there is a growing need for the use of synthetic data in practical federated settings, we hope that our method can serve as a stepping stone.

\newpage 
\chapter{Concluding Remarks} 
    Starting from the main objective of FL, we have presented three distinct perspectives for salvaging the collaborative learning method from the statistical heterogeneity problem: a parameter, a mixing coefficient, and local distributions.
    Contrary to dealing with a single global model as in traditional FL, a model-mixture based personalization method, \texttt{SuPerFed}, is first explored (Chapter~\ref{ch:superfed}).
    Instead of simply mixing different models, a special regularization-based method based on mode connectivity is proposed to promote explicit synergies between models.
    Furthermore, a principled online decision making framework, \texttt{AAggFF}, is established, which is implemented by finding an optimal mixing coefficient in each communication round (Chapter~\ref{ch:aaggff}).
    This is especially suited for server-side sequential decision making in that only a few bits are provided in updating the mixing coefficient in an adaptive manner.
    Lastly, a collaborative synthetic data generation method, \texttt{FedEvg}, is proposed (Chapter~\ref{ch:fedevg}), which does not depend upon the exchange of parameters.
    By disclosing hidden connections to energy-based modeling, the main objective is transformed into sampling from a mixture of inaccessible local distributions.
    Instead of communicating a model parameter, the federated system can exploit energy-based signals that contain information about local densities.

    Nevertheless, there is much room for improvement in each approach, which is a limitation of the current dissertation as well as a direction for future research.
    First of all, \texttt{SuPerFed} assumes stateful clients, where each client keeps its own local model until the end of FL training.
    While this can improve personalization performance in the cross-silo FL setting, it may not be appropriate for the cross-device setting since it assumes that each client is stateless.
    Thus, improving the method to be performed in a cross-device setting can be a promising direction.
    Second, the theoretical analysis of \texttt{AAggFF} is only presented for the mixing coefficient in terms of vanishing regret, but the convergence of finding a parameter still remains unanswered.
    This may be connected to min-max stochastic optimization or bi-level optimization, both of which are relatively underexplored in FL settings. 
    Moreover, a combination of online convex optimization (for a mixing coefficient) and stochastic optimization (for a parameter) is itself novel and has sufficient value to be studied.
    Third but not least, advanced methods for the stable training of an EBM certainly improve the utility of \texttt{FedEvg}.
    Since \texttt{FedEvg} communicates signals other than a model parameter, a new theoretical analysis should be explored for analyzing the convergence behavior.
    Therefore, an independent analysis should be provided to explain the behavior of this algorithm, 
    as well as what kind of disclosure risks exist and how we can prevent them using proper add-ons, e.g., Differential Privacy (DP).
    Last but not least, the practicality and scalability of all approaches can be greatly enhanced if each of them can deal with the setting close to reality, such as a classification task with an extreme number of classes, recognition of high-fidelity inputs, and long-sequence modeling.
    
    While many studies are proliferating in each direction, i.e., parameters, mixing coefficients, and local distributions, 
    this dissertation is the first to address all three perspectives in order.
    I hope that this work will be a minimal step towards improving the current FL framework into a more practical and scalable technology, as well as broadening the perspectives of the collaborative machine learning paradigm for machine learning as a whole.

\clearpage

\phantomsection
\addcontentsline{toc}{chapter}{References}
\bibliographystyle{IEEEtranS}
\bibliography{reference.bib}

\begin{thebibliography}{100}
\providecommand{\url}[1]{#1}
\csname url@samestyle\endcsname
\providecommand{\newblock}{\relax}
\providecommand{\bibinfo}[2]{#2}
\providecommand{\BIBentrySTDinterwordspacing}{\spaceskip=0pt\relax}
\providecommand{\BIBentryALTinterwordstretchfactor}{4}
\providecommand{\BIBentryALTinterwordspacing}{\spaceskip=\fontdimen2\font plus
\BIBentryALTinterwordstretchfactor\fontdimen3\font minus \fontdimen4\font\relax}
\providecommand{\BIBforeignlanguage}[2]{{%
\expandafter\ifx\csname l@#1\endcsname\relax
\typeout{** WARNING: IEEEtranS.bst: No hyphenation pattern has been}%
\typeout{** loaded for the language `#1'. Using the pattern for}%
\typeout{** the default language instead.}%
\else
\language=\csname l@#1\endcsname
\fi
#2}}
\providecommand{\BIBdecl}{\relax}
\BIBdecl

\bibitem{ftrl1}
J.~D. Abernethy, E.~Hazan, and A.~Rakhlin, ``Competing in the dark: An efficient algorithm for bandit linear optimization,'' 2009.

\bibitem{feddyn}
D.~A.~E. Acar, Y.~Zhao, R.~Matas, M.~Mattina, P.~Whatmough, and V.~Saligrama, ``Federated learning based on dynamic regularization,'' in \emph{International Conference on Learning Representations}, 2020.

\bibitem{ftrl3}
A.~Agarwal and E.~Hazan, ``New algorithms for repeated play and universal portfolio management,'' Princeton University Technical Report TR-740-05, Tech. Rep., 2005.

\bibitem{ons1}
A.~Agarwal, E.~Hazan, S.~Kale, and R.~E. Schapire, ``Algorithms for portfolio management based on the newton method,'' in \emph{Proceedings of the 23rd international conference on Machine learning}, 2006, pp. 9--16.

\bibitem{lmcperm2}
S.~K. Ainsworth, J.~Hayase, and S.~Srinivasa, ``Git re-basin: Merging models modulo permutation symmetries,'' \emph{arXiv preprint arXiv:2209.04836}, 2022.

\bibitem{FedPer}
M.~G. Arivazhagan, V.~Aggarwal, A.~K. Singh, and S.~Choudhary, ``Federated learning with personalization layers,'' \emph{arXiv preprint arXiv:1912.00818}, 2019.

\bibitem{exp3}
P.~Auer, N.~Cesa-Bianchi, Y.~Freund, and R.~E. Schapire, ``The nonstochastic multiarmed bandit problem,'' \emph{SIAM journal on computing}, vol.~32, no.~1, pp. 48--77, 2002.

\bibitem{gan1}
S.~Augenstein, H.~B. McMahan, D.~Ramage, S.~Ramaswamy, P.~Kairouz, M.~Chen, R.~Mathews \emph{et~al.}, ``Generative models for effective ml on private, decentralized datasets,'' \emph{arXiv preprint arXiv:1911.06679}, 2019.

\bibitem{app10}
T.~Awosika, R.~M. Shukla, and B.~Pranggono, ``Transparency and privacy: the role of explainable ai and federated learning in financial fraud detection,'' \emph{IEEE Access}, 2024.

\bibitem{ortho2}
B.~O. Ayinde, T.~Inanc, and J.~M. Zurada, ``Regularizing deep neural networks by enhancing diversity in feature extraction,'' \emph{IEEE transactions on neural networks and learning systems}, vol.~30, no.~9, pp. 2650--2661, 2019.

\bibitem{doublyrobust3}
H.~Bang and J.~M. Robins, ``Doubly robust estimation in missing data and causal inference models,'' \emph{Biometrics}, vol.~61, no.~4, pp. 962--973, 2005.

\bibitem{bregmanmd}
A.~Beck and M.~Teboulle, ``Mirror descent and nonlinear projected subgradient methods for convex optimization,'' \emph{Operations Research Letters}, vol.~31, no.~3, pp. 167--175, 2003.

\bibitem{healthtap}
\BIBentryALTinterwordspacing
A.~{Ben Abacha} and D.~Demner{-}Fushman, ``A question-entailment approach to question answering,'' \emph{{BMC} Bioinform.}, vol.~20, no.~1, pp. 511:1--511:23, 2019. [Online]. Available: \url{https://bmcbioinformatics.biomedcentral.com/articles/10.1186/s12859-019-3119-4}
\BIBentrySTDinterwordspacing

\bibitem{smpc}
\BIBentryALTinterwordspacing
M.~Ben-Or, S.~Goldwasser, and A.~Wigderson, ``Completeness theorems for non-cryptographic fault-tolerant distributed computation,'' in \emph{Proceedings of the Twentieth Annual ACM Symposium on Theory of Computing}, ser. STOC '88.\hskip 1em plus 0.5em minus 0.4em\relax New York, NY, USA: Association for Computing Machinery, 1988, p. 1–10. [Online]. Available: \url{https://doi.org/10.1145/62212.62213}
\BIBentrySTDinterwordspacing

\bibitem{benton21}
G.~Benton, W.~Maddox, S.~Lotfi, and A.~G.~G. Wilson, ``Loss surface simplexes for mode connecting volumes and fast ensembling,'' in \emph{International Conference on Machine Learning}.\hskip 1em plus 0.5em minus 0.4em\relax PMLR, 2021, pp. 769--779.

\bibitem{berka}
\BIBentryALTinterwordspacing
P.~Berka, ``Workshop notes on discovery challenge pkdd'99,'' 1999. [Online]. Available: \url{http://lisp.vse.cz/pkdd99/}
\BIBentrySTDinterwordspacing

\bibitem{blundell+15}
C.~Blundell, J.~Cornebise, K.~Kavukcuoglu, and D.~Wierstra, ``Weight uncertainty in neural network,'' in \emph{International Conference on Machine Learning}.\hskip 1em plus 0.5em minus 0.4em\relax PMLR, 2015, pp. 1613--1622.

\bibitem{fedsam}
D.~Caldarola, B.~Caputo, and M.~Ciccone, ``Improving generalization in federated learning by seeking flat minima,'' in \emph{European Conference on Computer Vision}.\hskip 1em plus 0.5em minus 0.4em\relax Springer, 2022, pp. 654--672.

\bibitem{leaf}
S.~Caldas, S.~M.~K. Duddu, P.~Wu, T.~Li, J.~Kone{\v{c}}n{\`y}, H.~B. McMahan, V.~Smith, and A.~Talwalkar, ``Leaf: A benchmark for federated settings,'' \emph{arXiv preprint arXiv:1812.01097}, 2018.

\bibitem{logdetineq}
N.~Cesa-Bianchi and G.~Lugosi, \emph{Prediction, learning, and games}.\hskip 1em plus 0.5em minus 0.4em\relax Cambridge university press, 2006.

\bibitem{cbo}
I.~Char, Y.~Chung, W.~Neiswanger, K.~Kandasamy, A.~O. Nelson, M.~Boyer, E.~Kolemen, and J.~Schneider, ``Offline contextual bayesian optimization,'' \emph{Advances in Neural Information Processing Systems}, vol.~32, 2019.

\bibitem{app11}
P.~Chatterjee, D.~Das, and D.~B. Rawat, ``Federated learning empowered recommendation model for financial consumer services,'' \emph{IEEE Transactions on Consumer Electronics}, vol.~70, no.~1, pp. 2508--2516, 2024.

\bibitem{vae1}
H.~Chen and H.~Vikalo, ``Federated learning in non-iid settings aided by differentially private synthetic data,'' in \emph{Proceedings of the IEEE/CVF Conference on Computer Vision and Pattern Recognition}, 2023, pp. 5026--5035.

\bibitem{clientlevelfairness}
H.~Chen, T.~Zhu, T.~Zhang, W.~Zhou, and P.~S. Yu, ``Privacy and fairness in federated learning: on the perspective of trade-off,'' \emph{ACM Computing Surveys}, 2023.

\bibitem{mse}
R.~Cheng and N.~Amin, ``Estimating parameters in continuous univariate distributions with a shifted origin,'' \emph{Journal of the Royal Statistical Society: Series B (Methodological)}, vol.~45, no.~3, pp. 394--403, 1983.

\bibitem{he2}
J.~H. Cheon, A.~Kim, M.~Kim, and Y.~Song, ``Homomorphic encryption for arithmetic of approximate numbers,'' in \emph{Advances in Cryptology -- ASIACRYPT 2017}, T.~Takagi and T.~Peyrin, Eds.\hskip 1em plus 0.5em minus 0.4em\relax Cham: Springer International Publishing, 2017, pp. 409--437.

\bibitem{apgap2}
J.~Chi, Y.~Tian, G.~J. Gordon, and H.~Zhao, ``Understanding and mitigating accuracy disparity in regression,'' in \emph{International conference on machine learning}.\hskip 1em plus 0.5em minus 0.4em\relax PMLR, 2021, pp. 1866--1876.

\bibitem{pc1}
Y.~Choi, A.~Vergari, and G.~Van~den Broeck, ``Probabilistic circuits: A unifying framework for tractable probabilistic models,'' \emph{UCLA. URL: http://starai. cs. ucla. edu/papers/ProbCirc20. pdf}, p.~6, 2020.

\bibitem{choro}
A.~Choromanska, M.~Henaff, M.~Mathieu, G.~B. Arous, and Y.~LeCun, ``The loss surfaces of multilayer networks,'' in \emph{Artificial intelligence and statistics}.\hskip 1em plus 0.5em minus 0.4em\relax PMLR, 2015, pp. 192--204.

\bibitem{isic}
N.~C. Codella, D.~Gutman, M.~E. Celebi, B.~Helba, M.~A. Marchetti, S.~W. Dusza, A.~Kalloo, K.~Liopyris, N.~Mishra, H.~Kittler \emph{et~al.}, ``Skin lesion analysis toward melanoma detection: A challenge at the 2017 international symposium on biomedical imaging (isbi), hosted by the international skin imaging collaboration (isic),'' in \emph{2018 IEEE 15th international symposium on biomedical imaging (ISBI 2018)}.\hskip 1em plus 0.5em minus 0.4em\relax IEEE, 2018, pp. 168--172.

\bibitem{fedrep}
L.~Collins, H.~Hassani, A.~Mokhtari, and S.~Shakkottai, ``Exploiting shared representations for personalized federated learning,'' in \emph{International Conference on Machine Learning}.\hskip 1em plus 0.5em minus 0.4em\relax PMLR, 2021, pp. 2089--2099.

\bibitem{pfl2}
------, ``Exploiting shared representations for personalized federated learning,'' in \emph{International conference on machine learning}.\hskip 1em plus 0.5em minus 0.4em\relax PMLR, 2021, pp. 2089--2099.

\bibitem{accparity2}
A.~Cotter, H.~Jiang, M.~Gupta, S.~Wang, T.~Narayan, S.~You, and K.~Sridharan, ``Optimization with non-differentiable constraints with applications to fairness, recall, churn, and other goals,'' \emph{Journal of Machine Learning Research}, vol.~20, no. 172, pp. 1--59, 2019.

\bibitem{up}
T.~M. Cover, ``Universal portfolios,'' \emph{Mathematical finance}, vol.~1, no.~1, pp. 1--29, 1991.

\bibitem{mcmcteaching}
J.~Cui and T.~Han, ``Learning energy-based model via dual-mcmc teaching,'' \emph{Advances in Neural Information Processing Systems}, vol.~36, pp. 28\,861--28\,872, 2023.

\bibitem{android}
\BIBentryALTinterwordspacing
D.~Curry, ``Android statistics (2023),'' \emph{BusinessofApps}, 2023. [Online]. Available: \url{https://www.businessofapps.com/data/android-statistics/}
\BIBentrySTDinterwordspacing

\bibitem{m5}
W.~Dai, C.~Dai, S.~Qu, J.~Li, and S.~Das, ``Very deep convolutional neural networks for raw waveforms,'' in \emph{2017 IEEE international conference on acoustics, speech and signal processing (ICASSP)}.\hskip 1em plus 0.5em minus 0.4em\relax IEEE, 2017, pp. 421--425.

\bibitem{real_silo}
I.~Dayan, H.~R. Roth, A.~Zhong, A.~Harouni, A.~Gentili, A.~Z. Abidin, A.~Liu, A.~B. Costa, B.~J. Wood, C.-S. Tsai \emph{et~al.}, ``Federated learning for predicting clinical outcomes in patients with covid-19,'' \emph{Nature medicine}, vol.~27, no.~10, pp. 1735--1743, 2021.

\bibitem{app5}
\BIBentryALTinterwordspacing
I.~Dayan, H.~R. Roth, A.~Zhong, A.~Harouni, A.~Gentili, A.~Z. Abidin, A.~Liu, A.~B. Costa, B.~J. Wood, C.-S. Tsai, C.-H. Wang, C.-N. Hsu, C.~K. Lee, P.~Ruan, D.~Xu, D.~Wu, E.~Huang, F.~C. Kitamura, G.~Lacey, G.~C. de~Ant\^onio~Corradi, G.~Nino, H.-H. Shin, H.~Obinata, H.~Ren, J.~C. Crane, J.~Tetreault, J.~Guan, J.~W. Garrett, J.~D. Kaggie, J.~G. Park, K.~Dreyer, K.~Juluru, K.~Kersten, M.~A. B.~C. Rockenbach, M.~G. Linguraru, M.~A. Haider, M.~AbdelMaseeh, N.~Rieke, P.~F. Damasceno, P.~M.~C. e~Silva, P.~Wang, S.~Xu, S.~Kawano, S.~Sriswasdi, S.~Y. Park, T.~M. Grist, V.~Buch, W.~Jantarabenjakul, W.~Wang, W.~Y. Tak, X.~Li, X.~Lin, Y.~J. Kwon, A.~Quraini, A.~Feng, A.~N. Priest, B.~Turkbey, B.~Glicksberg, B.~Bizzo, B.~S. Kim, C.~Tor-Díez, C.-C. Lee, C.-J. Hsu, C.~Lin, C.-L. Lai, C.~P. Hess, C.~Compas, D.~Bhatia, E.~K. Oermann, E.~Leibovitz, H.~Sasaki, H.~Mori, I.~Yang, J.~H. Sohn, K.~N.~K. Murthy, L.-C. Fu, M.~R.~F. de~Mendon\c{c}a, M.~Fralick, M.~K. Kang, M.~Adil, N.~Gangai, P.~Vateekul, P.~Elnajjar, S.~Hickman,
  S.~Majumdar, S.~L. McLeod, S.~Reed, S.~Gr\"{a}f, S.~Harmon, T.~Kodama, T.~Puthanakit, T.~Mazzulli, V.~L. de~Lavor, Y.~Rakvongthai, Y.~R. Lee, Y.~Wen, F.~J. Gilbert, M.~G. Flores, and Q.~Li, ``Federated learning for predicting clinical outcomes in patients with covid-19,'' \emph{Nature Medicine}, vol.~27, no.~10, p. 1735–1743, Sep. 2021. [Online]. Available: \url{http://dx.doi.org/10.1038/s41591-021-01506-3}
\BIBentrySTDinterwordspacing

\bibitem{imagenet}
J.~Deng, W.~Dong, R.~Socher, L.-J. Li, K.~Li, and L.~Fei-Fei, ``Imagenet: A large-scale hierarchical image database,'' in \emph{2009 IEEE conference on computer vision and pattern recognition}.\hskip 1em plus 0.5em minus 0.4em\relax Ieee, 2009, pp. 248--255.

\bibitem{mnist}
L.~Deng, ``The mnist database of handwritten digit images for machine learning research,'' \emph{IEEE Signal Processing Magazine}, vol.~29, no.~6, pp. 141--142, 2012.

\bibitem{apfl}
Y.~Deng, M.~M. Kamani, and M.~Mahdavi, ``Adaptive personalized federated learning,'' \emph{arXiv preprint arXiv:2003.13461}, 2020.

\bibitem{drfa}
------, ``Distributionally robust federated averaging,'' \emph{Advances in neural information processing systems}, vol.~33, pp. 15\,111--15\,122, 2020.

\bibitem{pfedme}
C.~T. Dinh, N.~H. Tran, and T.~D. Nguyen, ``Personalized federated learning with moreau envelopes,'' \emph{arXiv preprint arXiv:2006.08848}, 2020.

\bibitem{fedu}
C.~T. Dinh, T.~T. Vu, N.~H. Tran, M.~N. Dao, and H.~Zhang, ``Fedu: A unified framework for federated multi-task learning with laplacian regularization,'' \emph{arXiv preprint arXiv:2102.07148}, 2021.

\bibitem{draxler18}
F.~Draxler, K.~Veschgini, M.~Salmhofer, and F.~Hamprecht, ``Essentially no barriers in neural network energy landscape,'' in \emph{International conference on machine learning}.\hskip 1em plus 0.5em minus 0.4em\relax PMLR, 2018, pp. 1309--1318.

\bibitem{ebmcomp3}
Y.~Du, C.~Durkan, R.~Strudel, J.~B. Tenenbaum, S.~Dieleman, R.~Fergus, J.~Sohl-Dickstein, A.~Doucet, and W.~S. Grathwohl, ``Reduce, reuse, recycle: Compositional generation with energy-based diffusion models and mcmc,'' in \emph{International conference on machine learning}.\hskip 1em plus 0.5em minus 0.4em\relax PMLR, 2023, pp. 8489--8510.

\bibitem{ebmcomp2}
Y.~Du, S.~Li, and I.~Mordatch, ``Compositional visual generation with energy based models,'' \emph{Advances in Neural Information Processing Systems}, vol.~33, pp. 6637--6647, 2020.

\bibitem{cd2}
Y.~Du, S.~Li, J.~Tenenbaum, and I.~Mordatch, ``Improved contrastive divergence training of energy based models,'' \emph{arXiv preprint arXiv:2012.01316}, 2020.

\bibitem{ebmcomp1}
Y.~Du and I.~Mordatch, ``Implicit generation and modeling with energy based models,'' \emph{Advances in Neural Information Processing Systems}, vol.~32, 2019.

\bibitem{dp}
C.~Dwork, ``Differential privacy,'' in \emph{International colloquium on automata, languages, and programming}.\hskip 1em plus 0.5em minus 0.4em\relax Springer, 2006, pp. 1--12.

\bibitem{swish2}
S.~Elfwing, E.~Uchibe, and K.~Doya, ``Sigmoid-weighted linear units for neural network function approximation in reinforcement learning,'' \emph{Neural networks}, vol. 107, pp. 3--11, 2018.

\bibitem{lmcperm1}
R.~Entezari, H.~Sedghi, O.~Saukh, and B.~Neyshabur, ``The role of permutation invariance in linear mode connectivity of neural networks,'' \emph{arXiv preprint arXiv:2110.06296}, 2021.

\bibitem{Fallah}
A.~Fallah, A.~Mokhtari, and A.~Ozdaglar, ``Personalized federated learning with theoretical guarantees: A model-agnostic meta-learning approach,'' \emph{Advances in Neural Information Processing Systems}, vol.~33, pp. 3557--3568, 2020.

\bibitem{sam}
P.~Foret, A.~Kleiner, H.~Mobahi, and B.~Neyshabur, ``Sharpness-aware minimization for efficiently improving generalization,'' \emph{arXiv preprint arXiv:2010.01412}, 2020.

\bibitem{lmcdifficult}
S.~Fort, G.~K. Dziugaite, M.~Paul, S.~Kharaghani, D.~M. Roy, and S.~Ganguli, ``Deep learning versus kernel learning: an empirical study of loss landscape geometry and the time evolution of the neural tangent kernel,'' \emph{Advances in Neural Information Processing Systems}, vol.~33, pp. 5850--5861, 2020.

\bibitem{fort2019deep}
S.~Fort, H.~Hu, and B.~Lakshminarayanan, ``Deep ensembles: A loss landscape perspective,'' \emph{arXiv preprint arXiv:1912.02757}, 2019.

\bibitem{fort2019large}
S.~Fort and S.~Jastrzebski, ``Large scale structure of neural network loss landscapes,'' \emph{Advances in Neural Information Processing Systems}, vol.~32, pp. 6709--6717, 2019.

\bibitem{frankle20}
J.~Frankle, ``Revisiting" qualitatively characterizing neural network optimization problems",'' \emph{arXiv preprint arXiv:2012.06898}, 2020.

\bibitem{modeconnect}
J.~Frankle, G.~K. Dziugaite, D.~Roy, and M.~Carbin, ``Linear mode connectivity and the lottery ticket hypothesis,'' in \emph{International Conference on Machine Learning}.\hskip 1em plus 0.5em minus 0.4em\relax PMLR, 2020, pp. 3259--3269.

\bibitem{frechet}
M.~Fr{\'e}chet, ``Sur la loi de probabilit{\'e} de l'{\'e}cart maximum,'' \emph{Ann. de la Soc. Polonaise de Math.}, 1927.

\bibitem{garipov+18}
T.~Garipov, P.~Izmailov, D.~Podoprikhin, D.~Vetrov, and A.~G. Wilson, ``Loss surfaces, mode connectivity, and fast ensembling of dnns,'' in \emph{Proceedings of the 32nd International Conference on Neural Information Processing Systems}, 2018, pp. 8803--8812.

\bibitem{gaussian}
C.~F. Gau{\ss}, \emph{Theoria Motus Corporvm Coelestivm In Sectionibvs Conicis Solem Ambientivm}.\hskip 1em plus 0.5em minus 0.4em\relax Perthes et Besser, 1809.

\bibitem{ghosh+20}
A.~Ghosh, J.~Chung, D.~Yin, and K.~Ramchandran, ``An efficient framework for clustered federated learning,'' \emph{arXiv preprint arXiv:2006.04088}, 2020.

\bibitem{cfl1}
------, ``An efficient framework for clustered federated learning,'' \emph{Advances in Neural Information Processing Systems}, vol.~33, pp. 19\,586--19\,597, 2020.

\bibitem{app2}
Y.~Gong, Y.~Fang, and Y.~Guo, ``Private data analytics on biomedical sensing data via distributed computation,'' \emph{IEEE/ACM Transactions on Computational Biology and Bioinformatics}, vol.~13, no.~3, pp. 431--444, 2016.

\bibitem{gan}
I.~Goodfellow, J.~Pouget-Abadie, M.~Mirza, B.~Xu, D.~Warde-Farley, S.~Ozair, A.~Courville, and Y.~Bengio, ``Generative adversarial nets,'' \emph{Advances in neural information processing systems}, vol.~27, 2014.

\bibitem{your}
W.~Grathwohl, K.-C. Wang, J.-H. Jacobsen, D.~Duvenaud, M.~Norouzi, and K.~Swersky, ``Your classifier is secretly an energy based model and you should treat it like one,'' \emph{arXiv preprint arXiv:1912.03263}, 2019.

\bibitem{adaptiveflow}
L.~Grenioux, {\'E}.~Moulines, and M.~Gabri{\'e}, ``Balanced training of energy-based models with adaptive flow sampling,'' \emph{arXiv preprint arXiv:2306.00684}, 2023.

\bibitem{pub2}
N.~Guha, A.~Talwalkar, and V.~Smith, ``One-shot federated learning,'' \emph{arXiv preprint arXiv:1902.11175}, 2019.

\bibitem{gumbel}
E.~J. Gumbel, ``Les valeurs extr{\^e}mes des distributions statistiques,'' in \emph{Annales de l'institut Henri Poincar{\'e}}, vol.~5, no.~2, 1935, pp. 115--158.

\bibitem{ece}
C.~Guo, G.~Pleiss, Y.~Sun, and K.~Q. Weinberger, ``On calibration of modern neural networks,'' in \emph{International Conference on Machine Learning}.\hskip 1em plus 0.5em minus 0.4em\relax PMLR, 2017, pp. 1321--1330.

\bibitem{pfl4}
S.-J. Hahn, M.~Jeong, and J.~Lee, ``Connecting low-loss subspace for personalized federated learning,'' in \emph{Proceedings of the 28th ACM SIGKDD Conference on Knowledge Discovery and Data Mining}, 2022, pp. 505--515.

\bibitem{pair}
B.~Han, Q.~Yao, X.~Yu, G.~Niu, M.~Xu, W.~Hu, I.~Tsang, and M.~Sugiyama, ``Co-teaching: Robust training of deep neural networks with extremely noisy labels,'' \emph{Advances in neural information processing systems}, vol.~31, 2018.

\bibitem{l2sgd}
F.~Hanzely and P.~Richt{\'a}rik, ``Federated learning of a mixture of global and local models,'' \emph{arXiv preprint arXiv:2002.05516}, 2020.

\bibitem{real_device}
A.~Hard, K.~Rao, R.~Mathews, S.~Ramaswamy, F.~Beaufays, S.~Augenstein, H.~Eichner, C.~Kiddon, and D.~Ramage, ``Federated learning for mobile keyboard prediction,'' \emph{arXiv preprint arXiv:1811.03604}, 2018.

\bibitem{app1}
\BIBentryALTinterwordspacing
A.~S. Hard, K.~Rao, R.~Mathews, F.~Beaufays, S.~Augenstein, H.~Eichner, C.~Kiddon, and D.~Ramage, ``Federated learning for mobile keyboard prediction,'' \emph{ArXiv}, vol. abs/1811.03604, 2018. [Online]. Available: \url{https://api.semanticscholar.org/CorpusID:53207681}
\BIBentrySTDinterwordspacing

\bibitem{mdgan}
C.~Hardy, E.~Le~Merrer, and B.~Sericola, ``Md-gan: Multi-discriminator generative adversarial networks for distributed datasets,'' in \emph{2019 IEEE international parallel and distributed processing symposium (IPDPS)}.\hskip 1em plus 0.5em minus 0.4em\relax IEEE, 2019, pp. 866--877.

\bibitem{ons2}
E.~Hazan, A.~Agarwal, and S.~Kale, ``Logarithmic regret algorithms for online convex optimization,'' \emph{Machine Learning}, vol.~69, pp. 169--192, 2007.

\bibitem{ftrl2}
E.~Hazan and S.~Kale, ``Extracting certainty from uncertainty: Regret bounded by variation in costs,'' \emph{Machine learning}, vol.~80, pp. 165--188, 2010.

\bibitem{resnet}
K.~He, X.~Zhang, S.~Ren, and J.~Sun, ``Deep residual learning for image recognition,'' in \emph{Proceedings of the IEEE conference on computer vision and pattern recognition}, 2016, pp. 770--778.

\bibitem{vae2}
C.~E. Heinbaugh, E.~Luz-Ricca, and H.~Shao, ``Data-free one-shot federated learning under very high statistical heterogeneity,'' in \emph{The Eleventh International Conference on Learning Representations}, 2022.

\bibitem{eg}
D.~P. Helmbold, R.~E. Schapire, Y.~Singer, and M.~K. Warmuth, ``On-line portfolio selection using multiplicative updates,'' \emph{Mathematical Finance}, vol.~8, no.~4, pp. 325--347, 1998.

\bibitem{swish1}
D.~Hendrycks and K.~Gimpel, ``Gaussian error linear units (gelus),'' \emph{arXiv preprint arXiv:1606.08415}, 2016.

\bibitem{fid}
M.~Heusel, H.~Ramsauer, T.~Unterthiner, B.~Nessler, and S.~Hochreiter, ``Gans trained by a two time-scale update rule converge to a local nash equilibrium,'' \emph{Advances in neural information processing systems}, vol.~30, 2017.

\bibitem{cd}
G.~E. Hinton, ``Training products of experts by minimizing contrastive divergence,'' \emph{Neural computation}, vol.~14, no.~8, pp. 1771--1800, 2002.

\bibitem{lstm}
S.~Hochreiter and J.~Schmidhuber, ``Long short-term memory,'' \emph{Neural computation}, vol.~9, no.~8, pp. 1735--1780, 1997.

\bibitem{app4}
X.~Hou, J.~Wang, C.~Jiang, X.~Zhang, Y.~Ren, and M.~Debbah, ``Uav-enabled covert federated learning,'' \emph{IEEE Transactions on Wireless Communications}, vol.~22, no.~10, pp. 6793--6809, 2023.

\bibitem{mobilenet}
A.~G. Howard, M.~Zhu, B.~Chen, D.~Kalenichenko, W.~Wang, T.~Weyand, M.~Andreetto, and H.~Adam, ``Mobilenets: Efficient convolutional neural networks for mobile vision applications,'' \emph{arXiv preprint arXiv:1704.04861}, 2017.

\bibitem{fedavgm}
T.-M.~H. Hsu, H.~Qi, and M.~Brown, ``Measuring the effects of non-identical data distribution for federated visual classification,'' \emph{arXiv preprint arXiv:1909.06335}, 2019.

\bibitem{diri}
------, ``Measuring the effects of non-identical data distribution for federated visual classification,'' \emph{arXiv preprint arXiv:1909.06335}, 2019.

\bibitem{dirichlet}
------, ``Measuring the effects of non-identical data distribution for federated visual classification,'' \emph{arXiv preprint arXiv:1909.06335}, 2019.

\bibitem{fedsynth}
S.~Hu, J.~Goetz, K.~Malik, H.~Zhan, Z.~Liu, and Y.~Liu, ``Fedsynth: Gradient compression via synthetic data in federated learning,'' \emph{arXiv preprint arXiv:2204.01273}, 2022.

\bibitem{fedmgda}
Z.~Hu, K.~Shaloudegi, G.~Zhang, and Y.~Yu, ``Federated learning meets multi-objective optimization,'' \emph{IEEE Transactions on Network Science and Engineering}, vol.~9, no.~4, pp. 2039--2051, 2022.

\bibitem{mix3}
------, ``Federated learning meets multi-objective optimization,'' \emph{IEEE Transactions on Network Science and Engineering}, vol.~9, no.~4, pp. 2039--2051, 2022.

\bibitem{snapshot}
G.~Huang, Y.~Li, G.~Pleiss, Z.~Liu, J.~E. Hopcroft, and K.~Q. Weinberger, ``Snapshot ensembles: Train 1, get m for free,'' \emph{arXiv preprint arXiv:1704.00109}, 2017.

\bibitem{app9}
\BIBentryALTinterwordspacing
A.~Imteaj and M.~H. Amini, ``Leveraging asynchronous federated learning to predict customers financial distress,'' \emph{Intelligent Systems with Applications}, vol.~14, p. 200064, 2022. [Online]. Available: \url{https://www.sciencedirect.com/science/article/pii/S2667305322000059}
\BIBentrySTDinterwordspacing

\bibitem{izmailov2020subspace}
P.~Izmailov, W.~J. Maddox, P.~Kirichenko, T.~Garipov, D.~Vetrov, and A.~G. Wilson, ``Subspace inference for bayesian deep learning,'' in \emph{Uncertainty in Artificial Intelligence}.\hskip 1em plus 0.5em minus 0.4em\relax PMLR, 2020, pp. 1169--1179.

\bibitem{swa}
P.~Izmailov, D.~Podoprikhin, T.~Garipov, D.~Vetrov, and A.~G. Wilson, ``Averaging weights leads to wider optima and better generalization,'' \emph{arXiv preprint arXiv:1803.05407}, 2018.

\bibitem{heart}
A.~Janosi, W.~Steinbrunn, M.~Pfisterer, and R.~Detrano, ``Heart disease,'' UCI Machine Learning Repository, 1988, {DOI}: https://doi.org/10.24432/C52P4X.

\bibitem{opt2}
D.~Jhunjhunwala, S.~Wang, and G.~Joshi, ``Fedexp: Speeding up federated averaging via extrapolation,'' \emph{arXiv preprint arXiv:2301.09604}, 2023.

\bibitem{jiang+19}
Y.~Jiang, J.~Kone{\v{c}}n{\`y}, K.~Rush, and S.~Kannan, ``Improving federated learning personalization via model agnostic meta learning,'' \emph{arXiv preprint arXiv:1909.12488}, 2019.

\bibitem{prob_fl}
P.~Kairouz, H.~B. McMahan, B.~Avent, A.~Bellet, M.~Bennis, A.~N. Bhagoji, K.~Bonawitz, Z.~Charles, G.~Cormode, R.~Cummings \emph{et~al.}, ``Advances and open problems in federated learning,'' \emph{Foundations and Trends{\textregistered} in Machine Learning}, vol.~14, no. 1--2, pp. 1--210, 2021.

\bibitem{ftpl}
A.~Kalai and S.~Vempala, ``Efficient algorithms for online decision problems,'' \emph{Journal of Computer and System Sciences}, vol.~71, no.~3, pp. 291--307, 2005.

\bibitem{scaffold}
S.~P. Karimireddy, S.~Kale, M.~Mohri, S.~Reddi, S.~Stich, and A.~T. Suresh, ``Scaffold: Stochastic controlled averaging for federated learning,'' in \emph{International conference on machine learning}.\hskip 1em plus 0.5em minus 0.4em\relax PMLR, 2020, pp. 5132--5143.

\bibitem{ka+19}
\BIBentryALTinterwordspacing
S.~P. Karimireddy, S.~Kale, M.~Mohri, S.~J. Reddi, S.~U. Stich, and A.~T. Suresh, ``{SCAFFOLD:} stochastic controlled averaging for on-device federated learning,'' \emph{CoRR}, vol. abs/1910.06378, 2019. [Online]. Available: \url{http://arxiv.org/abs/1910.06378}
\BIBentrySTDinterwordspacing

\bibitem{vae}
D.~P. Kingma and M.~Welling, ``Auto-encoding variational bayes,'' \emph{arXiv preprint arXiv:1312.6114}, 2013.

\bibitem{evd}
S.~Kotz and S.~Nadarajah, \emph{Extreme value distributions: theory and applications}.\hskip 1em plus 0.5em minus 0.4em\relax world scientific, 2000.

\bibitem{cifar}
A.~Krizhevsky, ``Learning multiple layers of features from tiny images,'' University of Toronto, Tech. Rep., 2009.

\bibitem{mce}
F.~Kuppers, J.~Kronenberger, A.~Shantia, and A.~Haselhoff, ``Multivariate confidence calibration for object detection,'' in \emph{Proceedings of the IEEE/CVF Conference on Computer Vision and Pattern Recognition Workshops}, 2020, pp. 326--327.

\bibitem{ensemblenoise2}
S.~Laine and T.~Aila, ``Temporal ensembling for semi-supervised learning,'' \emph{arXiv preprint arXiv:1610.02242}, 2016.

\bibitem{deepensembles}
B.~Lakshminarayanan, A.~Pritzel, and C.~Blundell, ``Simple and scalable predictive uncertainty estimation using deep ensembles,'' \emph{arXiv preprint arXiv:1612.01474}, 2016.

\bibitem{tinyimagenet}
Y.~Le and X.~Yang, ``Tiny imagenet visual recognition challenge,'' Standford University, Tech. Rep., 2015.

\bibitem{ebm}
Y.~LeCun, S.~Chopra, R.~Hadsell, M.~Ranzato, and F.~Huang, ``A tutorial on energy-based learning,'' \emph{Predicting structured data}, vol.~1, no.~0, 2006.

\bibitem{pub1}
D.~Li and J.~Wang, ``Fedmd: Heterogenous federated learning via model distillation,'' \emph{arXiv preprint arXiv:1910.03581}, 2019.

\bibitem{term}
T.~Li, A.~Beirami, M.~Sanjabi, and V.~Smith, ``Tilted empirical risk minimization,'' \emph{arXiv preprint arXiv:2007.01162}, 2020.

\bibitem{ditto}
\BIBentryALTinterwordspacing
T.~Li, S.~Hu, A.~Beirami, and V.~Smith, ``Ditto: Fair and robust federated learning through personalization,'' in \emph{Proceedings of the 38th International Conference on Machine Learning}, ser. Proceedings of Machine Learning Research, M.~Meila and T.~Zhang, Eds., vol. 139.\hskip 1em plus 0.5em minus 0.4em\relax PMLR, 18--24 Jul 2021, pp. 6357--6368. [Online]. Available: \url{https://proceedings.mlr.press/v139/li21h.html}
\BIBentrySTDinterwordspacing

\bibitem{challenge_fl}
T.~Li, A.~K. Sahu, A.~Talwalkar, and V.~Smith, ``Federated learning: Challenges, methods, and future directions,'' \emph{IEEE signal processing magazine}, vol.~37, no.~3, pp. 50--60, 2020.

\bibitem{fedprox}
T.~Li, A.~K. Sahu, M.~Zaheer, M.~Sanjabi, A.~Talwalkar, and V.~Smith, ``Federated optimization in heterogeneous networks,'' \emph{arXiv preprint arXiv:1812.06127}, 2018.

\bibitem{qffl}
T.~Li, M.~Sanjabi, A.~Beirami, and V.~Smith, ``Fair resource allocation in federated learning,'' \emph{arXiv preprint arXiv:1905.10497}, 2019.

\bibitem{mix1}
------, ``Fair resource allocation in federated learning,'' \emph{arXiv preprint arXiv:1905.10497}, 2019.

\bibitem{convergence}
X.~Li, K.~Huang, W.~Yang, S.~Wang, and Z.~Zhang, ``On the convergence of fedavg on non-iid data,'' \emph{arXiv preprint arXiv:1907.02189}, 2019.

\bibitem{noniidconvergence}
------, ``On the convergence of fedavg on non-iid data,'' \emph{arXiv preprint arXiv:1907.02189}, 2019.

\bibitem{fedbn}
\BIBentryALTinterwordspacing
X.~Li, M.~Jiang, X.~Zhang, M.~Kamp, and Q.~Dou, ``Fed{\{}bn{\}}: Federated learning on non-{\{}iid{\}} features via local batch normalization,'' in \emph{International Conference on Learning Representations}, 2021. [Online]. Available: \url{https://openreview.net/pdf?id=6YEQUn0QICG}
\BIBentrySTDinterwordspacing

\bibitem{app3}
Y.~Li, X.~Tao, X.~Zhang, J.~Liu, and J.~Xu, ``Privacy-preserved federated learning for autonomous driving,'' \emph{IEEE Transactions on Intelligent Transportation Systems}, vol.~23, no.~7, pp. 8423--8434, 2022.

\bibitem{gan2}
Z.~Li, J.~Shao, Y.~Mao, J.~H. Wang, and J.~Zhang, ``Federated learning with gan-based data synthesis for non-iid clients,'' in \emph{International Workshop on Trustworthy Federated Learning}.\hskip 1em plus 0.5em minus 0.4em\relax Springer, 2022, pp. 17--32.

\bibitem{lgfedavg}
P.~P. Liang, T.~Liu, L.~Ziyin, N.~B. Allen, R.~P. Auerbach, D.~Brent, R.~Salakhutdinov, and L.-P. Morency, ``Think locally, act globally: Federated learning with local and global representations,'' \emph{arXiv preprint arXiv:2001.01523}, 2020.

\bibitem{syntheticflreview}
C.~Little, M.~Elliot, and R.~Allmendinger, ``Federated learning for generating synthetic data: a scoping review,'' \emph{International Journal of Population Data Science}, vol.~8, no.~1, 2023.

\bibitem{celeba}
Z.~Liu, P.~Luo, X.~Wang, and X.~Tang, ``Deep learning face attributes in the wild,'' in \emph{Proceedings of International Conference on Computer Vision (ICCV)}, December 2015.

\bibitem{ccvr}
M.~Luo, F.~Chen, D.~Hu, Y.~Zhang, J.~Liang, and J.~Feng, ``No fear of heterogeneity: Classifier calibration for federated learning with non-iid data,'' \emph{Advances in Neural Information Processing Systems}, vol.~34, pp. 5972--5984, 2021.

\bibitem{agg3}
X.~Ma, J.~Zhang, S.~Guo, and W.~Xu, ``Layer-wised model aggregation for personalized federated learning,'' in \emph{Proceedings of the IEEE/CVF conference on computer vision and pattern recognition}, 2022, pp. 10\,092--10\,101.

\bibitem{swa-gaussian}
W.~J. Maddox, P.~Izmailov, T.~Garipov, D.~P. Vetrov, and A.~G. Wilson, ``A simple baseline for bayesian uncertainty in deep learning,'' \emph{Advances in Neural Information Processing Systems}, vol.~32, pp. 13\,153--13\,164, 2019.

\bibitem{mansour+20}
Y.~Mansour, M.~Mohri, J.~Ro, and A.~T. Suresh, ``Three approaches for personalization with applications to federated learning,'' \emph{arXiv preprint arXiv:2002.10619}, 2020.

\bibitem{mar+21}
O.~Marfoq, G.~Neglia, A.~Bellet, L.~Kameni, and R.~Vidal, ``Federated multi-task learning under a mixture of distributions,'' \emph{arXiv preprint arXiv:2108.10252}, 2021.

\bibitem{ortho3}
P.~S. Mashhadi, S.~Nowaczyk, and S.~Pashami, ``Parallel orthogonal deep neural network,'' \emph{Neural Networks}, vol. 140, pp. 167--183, 2021.

\bibitem{mqp}
C.~H. McCreery, N.~Katariya, A.~Kannan, M.~Chablani, and X.~Amatriain, ``Effective transfer learning for identifying similar questions: matching user questions to covid-19 faqs,'' in \emph{Proceedings of the 26th ACM SIGKDD international conference on knowledge discovery \& data mining}, 2020, pp. 3458--3465.

\bibitem{fedavg}
B.~McMahan, E.~Moore, D.~Ramage, S.~Hampson, and B.~A. y~Arcas, ``Communication-efficient learning of deep networks from decentralized data,'' in \emph{Artificial intelligence and statistics}.\hskip 1em plus 0.5em minus 0.4em\relax PMLR, 2017, pp. 1273--1282.

\bibitem{oco2}
H.~B. McMahan, ``A survey of algorithms and analysis for adaptive online learning,'' \emph{The Journal of Machine Learning Research}, vol.~18, no.~1, pp. 3117--3166, 2017.

\bibitem{ftpl2}
A.~McMillan, ``Differential privacy, property testing, and perturbations,'' Ph.D. dissertation, 2018.

\bibitem{spectralnorm}
T.~Miyato, T.~Kataoka, M.~Koyama, and Y.~Yoshida, ``Spectral normalization for generative adversarial networks,'' \emph{arXiv preprint arXiv:1802.05957}, 2018.

\bibitem{alphafair}
J.~Mo and J.~Walrand, ``Fair end-to-end window-based congestion control,'' \emph{IEEE/ACM Transactions on networking}, vol.~8, no.~5, pp. 556--567, 2000.

\bibitem{afl}
M.~Mohri, G.~Sivek, and A.~T. Suresh, ``Agnostic federated learning,'' in \emph{International Conference on Machine Learning}.\hskip 1em plus 0.5em minus 0.4em\relax PMLR, 2019, pp. 4615--4625.

\bibitem{mix2}
------, ``Agnostic federated learning,'' in \emph{International Conference on Machine Learning}.\hskip 1em plus 0.5em minus 0.4em\relax PMLR, 2019, pp. 4615--4625.

\bibitem{md}
A.~S. Nemirovskij and D.~B. Yudin, ``Problem complexity and method efficiency in optimization,'' 1983.

\bibitem{ensemblenoise}
D.~T. Nguyen, C.~K. Mummadi, T.~P.~N. Ngo, T.~H.~P. Nguyen, L.~Beggel, and T.~Brox, ``Self: Learning to filter noisy labels with self-ensembling,'' \emph{arXiv preprint arXiv:1910.01842}, 2019.

\bibitem{cosine}
A.~Q. Nichol and P.~Dhariwal, ``Improved denoising diffusion probabilistic models,'' in \emph{International conference on machine learning}.\hskip 1em plus 0.5em minus 0.4em\relax PMLR, 2021, pp. 8162--8171.

\bibitem{cd1}
E.~Nijkamp, M.~Hill, S.-C. Zhu, and Y.~N. Wu, ``Learning non-convergent non-persistent short-run mcmc toward energy-based model,'' \emph{Advances in Neural Information Processing Systems}, vol.~32, 2019.

\bibitem{flamby}
J.~Ogier~du Terrail, S.-S. Ayed, E.~Cyffers, F.~Grimberg, C.~He, R.~Loeb, P.~Mangold, T.~Marchand, O.~Marfoq, E.~Mushtaq \emph{et~al.}, ``Flamby: Datasets and benchmarks for cross-silo federated learning in realistic healthcare settings,'' \emph{Advances in Neural Information Processing Systems}, vol.~35, pp. 5315--5334, 2022.

\bibitem{app6}
\BIBentryALTinterwordspacing
J.~Ogier~du Terrail, A.~Leopold, C.~Joly, C.~Béguier, M.~Andreux, C.~Maussion, B.~Schmauch, E.~W. Tramel, E.~Bendjebbar, M.~Zaslavskiy, G.~Wainrib, M.~Milder, J.~Gervasoni, J.~Guerin, T.~Durand, A.~Livartowski, K.~Moutet, C.~Gautier, I.~Djafar, A.-L. Moisson, C.~Marini, M.~Galtier, F.~Balazard, R.~Dubois, J.~Moreira, A.~Simon, D.~Drubay, M.~Lacroix-Triki, C.~Franchet, G.~Bataillon, and P.-E. Heudel, ``Federated learning for predicting histological response to neoadjuvant chemotherapy in triple-negative breast cancer,'' \emph{Nature Medicine}, vol.~29, no.~1, p. 135–146, Jan. 2023. [Online]. Available: \url{http://dx.doi.org/10.1038/s41591-022-02155-w}
\BIBentrySTDinterwordspacing

\bibitem{oco3}
F.~Orabona, ``A modern introduction to online learning,'' \emph{arXiv preprint arXiv:1912.13213}, 2019.

\bibitem{pytorch}
A.~Paszke, S.~Gross, F.~Massa, A.~Lerer, J.~Bradbury, G.~Chanan, T.~Killeen, Z.~Lin, N.~Gimelshein, L.~Antiga \emph{et~al.}, ``Pytorch: An imperative style, high-performance deep learning library,'' \emph{Advances in neural information processing systems}, vol.~32, 2019.

\bibitem{app7}
\BIBentryALTinterwordspacing
S.~Pati, U.~Baid, B.~Edwards, M.~Sheller, S.-H. Wang, G.~A. Reina, P.~Foley, A.~Gruzdev, D.~Karkada, C.~Davatzikos, C.~Sako, S.~Ghodasara, M.~Bilello, S.~Mohan, P.~Vollmuth, G.~Brugnara, C.~J. Preetha, F.~Sahm, K.~Maier-Hein, M.~Zenk, M.~Bendszus, W.~Wick, E.~Calabrese, J.~Rudie, J.~Villanueva-Meyer, S.~Cha, M.~Ingalhalikar, M.~Jadhav, U.~Pandey, J.~Saini, J.~Garrett, M.~Larson, R.~Jeraj, S.~Currie, R.~Frood, K.~Fatania, R.~Y. Huang, K.~Chang, C.~Balaña, J.~Capellades, J.~Puig, J.~Trenkler, J.~Pichler, G.~Necker, A.~Haunschmidt, S.~Meckel, G.~Shukla, S.~Liem, G.~S. Alexander, J.~Lombardo, J.~D. Palmer, A.~E. Flanders, A.~P. Dicker, H.~I. Sair, C.~K. Jones, A.~Venkataraman, M.~Jiang, T.~Y. So, C.~Chen, P.~A. Heng, Q.~Dou, M.~Kozubek, F.~Lux, J.~Michálek, P.~Matula, M.~Keřkovský, T.~Kopřivová, M.~Dostál, V.~Vybíhal, M.~A. Vogelbaum, J.~R. Mitchell, J.~Farinhas, J.~A. Maldjian, C.~G.~B. Yogananda, M.~C. Pinho, D.~Reddy, J.~Holcomb, B.~C. Wagner, B.~M. Ellingson, T.~F. Cloughesy, C.~Raymond,
  T.~Oughourlian, A.~Hagiwara, C.~Wang, M.-S. To, S.~Bhardwaj, C.~Chong, M.~Agzarian, A.~X. Falcão, S.~B. Martins, B.~C.~A. Teixeira, F.~Sprenger, D.~Menotti, D.~R. Lucio, P.~LaMontagne, D.~Marcus, B.~Wiestler, F.~Kofler, I.~Ezhov, M.~Metz, R.~Jain, M.~Lee, Y.~W. Lui, R.~McKinley, J.~Slotboom, P.~Radojewski, R.~Meier, R.~Wiest, D.~Murcia, E.~Fu, R.~Haas, J.~Thompson, D.~R. Ormond, C.~Badve, A.~E. Sloan, V.~Vadmal, K.~Waite, R.~R. Colen, L.~Pei, M.~Ak, A.~Srinivasan, J.~R. Bapuraj, A.~Rao, N.~Wang, O.~Yoshiaki, T.~Moritani, S.~Turk, J.~Lee, S.~Prabhudesai, F.~Morón, J.~Mandel, K.~Kamnitsas, B.~Glocker, L.~V.~M. Dixon, M.~Williams, P.~Zampakis, V.~Panagiotopoulos, P.~Tsiganos, S.~Alexiou, I.~Haliassos, E.~I. Zacharaki, K.~Moustakas, C.~Kalogeropoulou, D.~M. Kardamakis, Y.~S. Choi, S.-K. Lee, J.~H. Chang, S.~S. Ahn, B.~Luo, L.~Poisson, N.~Wen, P.~Tiwari, R.~Verma, R.~Bareja, I.~Yadav, J.~Chen, N.~Kumar, M.~Smits, S.~R. van~der Voort, A.~Alafandi, F.~Incekara, M.~M.~J. Wijnenga, G.~Kapsas, R.~Gahrmann, J.~W.
  Schouten, H.~J. Dubbink, A.~J. P.~E. Vincent, M.~J. van~den Bent, P.~J. French, S.~Klein, Y.~Yuan, S.~Sharma, T.-C. Tseng, S.~Adabi, S.~P. Niclou, O.~Keunen, A.-C. Hau, M.~Vallières, D.~Fortin, M.~Lepage, B.~Landman, K.~Ramadass, K.~Xu, S.~Chotai, L.~B. Chambless, A.~Mistry, R.~C. Thompson, Y.~Gusev, K.~Bhuvaneshwar, A.~Sayah, C.~Bencheqroun, A.~Belouali, S.~Madhavan, T.~C. Booth, A.~Chelliah, M.~Modat, H.~Shuaib, C.~Dragos, A.~Abayazeed, K.~Kolodziej, M.~Hill, A.~Abbassy, S.~Gamal, M.~Mekhaimar, M.~Qayati, M.~Reyes, J.~E. Park, J.~Yun, H.~S. Kim, A.~Mahajan, M.~Muzi, S.~Benson, R.~G.~H. Beets-Tan, J.~Teuwen, A.~Herrera-Trujillo, M.~Trujillo, W.~Escobar, A.~Abello, J.~Bernal, J.~Gómez, J.~Choi, S.~Baek, Y.~Kim, H.~Ismael, B.~Allen, J.~M. Buatti, A.~Kotrotsou, H.~Li, T.~Weiss, M.~Weller, A.~Bink, B.~Pouymayou, H.~F. Shaykh, J.~Saltz, P.~Prasanna, S.~Shrestha, K.~M. Mani, D.~Payne, T.~Kurc, E.~Pelaez, H.~Franco-Maldonado, F.~Loayza, S.~Quevedo, P.~Guevara, E.~Torche, C.~Mendoza, F.~Vera, E.~Ríos,
  E.~López, S.~A. Velastin, G.~Ogbole, M.~Soneye, D.~Oyekunle, O.~Odafe-Oyibotha, B.~Osobu, M.~Shu’aibu, A.~Dorcas, F.~Dako, A.~L. Simpson, M.~Hamghalam, J.~J. Peoples, R.~Hu, A.~Tran, D.~Cutler, F.~Y. Moraes, M.~A. Boss, J.~Gimpel, D.~K. Veettil, K.~Schmidt, B.~Bialecki, S.~Marella, C.~Price, L.~Cimino, C.~Apgar, P.~Shah, B.~Menze, J.~S. Barnholtz-Sloan, J.~Martin, and S.~Bakas, ``Federated learning enables big data for rare cancer boundary detection,'' \emph{Nature Communications}, vol.~13, no.~1, Dec. 2022. [Online]. Available: \url{http://dx.doi.org/10.1038/s41467-022-33407-5}
\BIBentrySTDinterwordspacing

\bibitem{pillu+19}
K.~Pillutla, S.~M. Kakade, and Z.~Harchaoui, ``Robust aggregation for federated learning,'' \emph{arXiv preprint arXiv:1912.13445}, 2019.

\bibitem{app8}
\BIBentryALTinterwordspacing
T.~Qi, F.~Wu, C.~Wu, L.~He, Y.~Huang, and X.~Xie, ``Differentially private knowledge transfer for federated learning,'' \emph{Nature Communications}, vol.~14, no.~1, Jun. 2023. [Online]. Available: \url{http://dx.doi.org/10.1038/s41467-023-38794-x}
\BIBentrySTDinterwordspacing

\bibitem{sgldconv1}
M.~Raginsky, A.~Rakhlin, and M.~Telgarsky, ``Non-convex learning via stochastic gradient langevin dynamics: a nonasymptotic analysis,'' in \emph{Conference on Learning Theory}.\hskip 1em plus 0.5em minus 0.4em\relax PMLR, 2017, pp. 1674--1703.

\bibitem{swish3}
P.~Ramachandran, B.~Zoph, and Q.~V. Le, ``Searching for activation functions,'' \emph{arXiv preprint arXiv:1710.05941}, 2017.

\bibitem{gan3}
M.~Rasouli, T.~Sun, and R.~Rajagopal, ``Fedgan: Federated generative adversarial networks for distributed data,'' \emph{arXiv preprint arXiv:2006.07228}, 2020.

\bibitem{adaptivefl}
S.~Reddi, Z.~Charles, M.~Zaheer, Z.~Garrett, K.~Rush, J.~Kone{\v{c}}n{\`y}, S.~Kumar, and H.~B. McMahan, ``Adaptive federated optimization,'' \emph{arXiv preprint arXiv:2003.00295}, 2020.

\bibitem{opt1}
------, ``Adaptive federated optimization,'' \emph{arXiv preprint arXiv:2003.00295}, 2020.

\bibitem{he1}
\BIBentryALTinterwordspacing
R.~L. Rivest and M.~L. Dertouzos, ``On data banks and privacy homomorphisms,'' 1978. [Online]. Available: \url{https://api.semanticscholar.org/CorpusID:6905087}
\BIBentrySTDinterwordspacing

\bibitem{doublyrobust2}
J.~M. Robins, A.~Rotnitzky, and L.~P. Zhao, ``Estimation of regression coefficients when some regressors are not always observed,'' \emph{Journal of the American statistical Association}, pp. 846--866, 1994.

\bibitem{evt}
P.~Rusnock and A.~Kerr-Lawson, ``Bolzano and uniform continuity,'' \emph{Historia Mathematica}, vol.~32, no.~3, pp. 303--311, 2005.

\bibitem{ebmscore}
T.~Salimans and J.~Ho, ``Should ebms model the energy or the score?'' in \emph{Energy Based Models Workshop-ICLR 2021}, 2021.

\bibitem{distilbert}
V.~Sanh, L.~Debut, J.~Chaumond, and T.~Wolf, ``Distilbert, a distilled version of bert: smaller, faster, cheaper and lighter,'' \emph{ArXiv}, vol. abs/1910.01108, 2019.

\bibitem{clustered}
F.~Sattler, K.-R. M{\"u}ller, and W.~Samek, ``Clustered federated learning: Model-agnostic distributed multitask optimization under privacy constraints,'' \emph{IEEE transactions on neural networks and learning systems}, 2020.

\bibitem{cfl2}
------, ``Clustered federated learning: Model-agnostic distributed multitask optimization under privacy constraints,'' \emph{IEEE transactions on neural networks and learning systems}, vol.~32, no.~8, pp. 3710--3722, 2020.

\bibitem{oco1}
S.~Shalev-Shwartz \emph{et~al.}, ``Online learning and online convex optimization,'' \emph{Foundations and Trends{\textregistered} in Machine Learning}, vol.~4, no.~2, pp. 107--194, 2012.

\bibitem{ftrl4}
S.~Shalev-Shwartz and Y.~Singer, ``Online learning meets optimization in the dual,'' in \emph{International Conference on Computational Learning Theory}.\hskip 1em plus 0.5em minus 0.4em\relax Springer, 2006, pp. 423--437.

\bibitem{vgg}
K.~Simonyan and A.~Zisserman, ``Very deep convolutional networks for large-scale image recognition,'' \emph{arXiv preprint arXiv:1409.1556}, 2014.

\bibitem{mocha}
V.~Smith, C.-K. Chiang, M.~Sanjabi, and A.~Talwalkar, ``Federated multi-task learning,'' \emph{arXiv preprint arXiv:1705.10467}, 2017.

\bibitem{cvae}
K.~Sohn, H.~Lee, and X.~Yan, ``Learning structured output representation using deep conditional generative models,'' \emph{Advances in neural information processing systems}, vol.~28, 2015.

\bibitem{ncsn}
Y.~Song and S.~Ermon, ``Generative modeling by estimating gradients of the data distribution,'' \emph{Advances in neural information processing systems}, vol.~32, 2019.

\bibitem{ncsn2}
------, ``Improved techniques for training score-based generative models,'' \emph{Advances in neural information processing systems}, vol.~33, pp. 12\,438--12\,448, 2020.

\bibitem{ncsnpp}
Y.~Song, J.~Sohl-Dickstein, D.~P. Kingma, A.~Kumar, S.~Ermon, and B.~Poole, ``Score-based generative modeling through stochastic differential equations,'' \emph{arXiv preprint arXiv:2011.13456}, 2020.

\bibitem{dropout}
N.~Srivastava, G.~Hinton, A.~Krizhevsky, I.~Sutskever, and R.~Salakhutdinov, ``Dropout: a simple way to prevent neural networks from overfitting,'' \emph{The journal of machine learning research}, vol.~15, no.~1, pp. 1929--1958, 2014.

\bibitem{lmcperm3}
G.~Stoica, D.~Bolya, J.~Bjorner, P.~Ramesh, T.~Hearn, and J.~Hoffman, ``Zipit! merging models from different tasks without training,'' \emph{arXiv preprint arXiv:2305.03053}, 2023.

\bibitem{efficientnet}
M.~Tan and Q.~Le, ``Efficientnet: Rethinking model scaling for convolutional neural networks,'' in \emph{International conference on machine learning}.\hskip 1em plus 0.5em minus 0.4em\relax PMLR, 2019, pp. 6105--6114.

\bibitem{fedproto}
Y.~Tan, G.~Long, L.~Liu, T.~Zhou, Q.~Lu, J.~Jiang, and C.~Zhang, ``Fedproto: Federated prototype learning across heterogeneous clients,'' in \emph{Proceedings of the AAAI Conference on Artificial Intelligence}, vol.~36, no.~8, 2022, pp. 8432--8440.

\bibitem{vhl}
Z.~Tang, Y.~Zhang, S.~Shi, X.~He, B.~Han, and X.~Chu, ``Virtual homogeneity learning: Defending against data heterogeneity in federated learning,'' in \emph{International Conference on Machine Learning}.\hskip 1em plus 0.5em minus 0.4em\relax PMLR, 2022, pp. 21\,111--21\,132.

\bibitem{pcd}
T.~Tieleman, ``Training restricted boltzmann machines using approximations to the likelihood gradient,'' in \emph{Proceedings of the 25th international conference on Machine learning}, 2008, pp. 1064--1071.

\bibitem{symm}
B.~Van~Rooyen, A.~Menon, and R.~C. Williamson, ``Learning with symmetric label noise: The importance of being unhinged,'' \emph{Advances in neural information processing systems}, vol.~28, 2015.

\bibitem{erm}
V.~Vapnik, ``Principles of risk minimization for learning theory,'' \emph{Advances in neural information processing systems}, vol.~4, 1991.

\bibitem{srm}
V.~Vapnik and A.~Y. Chervonenkis, ``Teoriya raspoznavaniya obrazov.[theory of image recognition],'' \emph{M.: Nauka}, pp. 16--17, 1974.

\bibitem{pc2}
A.~Vergari, Y.~Choi, A.~Liu, S.~Teso, and G.~Van~den Broeck, ``A compositional atlas of tractable circuit operations for probabilistic inference,'' \emph{Advances in Neural Information Processing Systems}, vol.~34, pp. 13\,189--13\,201, 2021.

\bibitem{fedma}
\BIBentryALTinterwordspacing
H.~Wang, M.~Yurochkin, Y.~Sun, D.~Papailiopoulos, and Y.~Khazaeni, ``Federated learning with matched averaging,'' in \emph{International Conference on Learning Representations}, 2020. [Online]. Available: \url{https://openreview.net/forum?id=BkluqlSFDS}
\BIBentrySTDinterwordspacing

\bibitem{agg2}
------, ``Federated learning with matched averaging,'' \emph{arXiv preprint arXiv:2002.06440}, 2020.

\bibitem{fieldguide}
J.~Wang, Z.~Charles, Z.~Xu, G.~Joshi, H.~B. McMahan, M.~Al-Shedivat, G.~Andrew, S.~Avestimehr, K.~Daly, D.~Data \emph{et~al.}, ``A field guide to federated optimization,'' \emph{arXiv preprint arXiv:2107.06917}, 2021.

\bibitem{datasetdistill}
T.~Wang, J.-Y. Zhu, A.~Torralba, and A.~A. Efros, ``Dataset distillation,'' \emph{arXiv preprint arXiv:1811.10959}, 2018.

\bibitem{fedaf}
Y.~Wang, H.~Fu, R.~Kanagavelu, Q.~Wei, Y.~Liu, and R.~S.~M. Goh, ``An aggregation-free federated learning for tackling data heterogeneity,'' \emph{arXiv preprint arXiv:2404.18962}, 2024.

\bibitem{speechcommands}
P.~Warden, ``Speech commands: A dataset for limited-vocabulary speech recognition,'' \emph{arXiv preprint arXiv:1804.03209}, 2018.

\bibitem{omd}
M.~K. Warmuth, A.~K. Jagota \emph{et~al.}, ``Continuous and discrete-time nonlinear gradient descent: Relative loss bounds and convergence,'' in \emph{Electronic proceedings of the 5th International Symposium on Artificial Intelligence and Mathematics}, vol. 326.\hskip 1em plus 0.5em minus 0.4em\relax Citeseer, 1997.

\bibitem{weibull}
W.~Weibull, ``A statistical distribution function of wide applicability,'' \emph{Journal of applied mechanics}, 1951.

\bibitem{sgld}
M.~Welling and Y.~W. Teh, ``Bayesian learning via stochastic gradient langevin dynamics,'' in \emph{Proceedings of the 28th international conference on machine learning (ICML-11)}.\hskip 1em plus 0.5em minus 0.4em\relax Citeseer, 2011, pp. 681--688.

\bibitem{nnsubspaces}
M.~Wortsman, M.~Horton, C.~Guestrin, A.~Farhadi, and M.~Rastegari, ``Learning neural network subspaces,'' \emph{arXiv preprint arXiv:2102.10472}, 2021.

\bibitem{pfl3}
Y.~Wu, S.~Zhang, W.~Yu, Y.~Liu, Q.~Gu, D.~Zhou, H.~Chen, and W.~Cheng, ``Personalized federated learning under mixture of distributions,'' in \emph{International Conference on Machine Learning}.\hskip 1em plus 0.5em minus 0.4em\relax PMLR, 2023, pp. 37\,860--37\,879.

\bibitem{coopflow}
J.~Xie, Y.~Zhu, J.~Li, and P.~Li, ``A tale of two flows: Cooperative learning of langevin flow and normalizing flow toward energy-based model,'' \emph{arXiv preprint arXiv:2205.06924}, 2022.

\bibitem{priflgan}
B.~Xin, W.~Yang, Y.~Geng, S.~Chen, S.~Wang, and L.~Huang, ``Private fl-gan: Differential privacy synthetic data generation based on federated learning,'' in \emph{ICASSP 2020-2020 IEEE International Conference on Acoustics, Speech and Signal Processing (ICASSP)}.\hskip 1em plus 0.5em minus 0.4em\relax IEEE, 2020, pp. 2927--2931.

\bibitem{feddm}
Y.~Xiong, R.~Wang, M.~Cheng, F.~Yu, and C.-J. Hsieh, ``Feddm: Iterative distribution matching for communication-efficient federated learning,'' in \emph{Proceedings of the IEEE/CVF Conference on Computer Vision and Pattern Recognition}, 2023, pp. 16\,323--16\,332.

\bibitem{sgldconv2}
P.~Xu, J.~Chen, D.~Zou, and Q.~Gu, ``Global convergence of langevin dynamics based algorithms for nonconvex optimization,'' \emph{Advances in Neural Information Processing Systems}, vol.~31, 2018.

\bibitem{medmnist}
J.~Yang, R.~Shi, D.~Wei, Z.~Liu, L.~Zhao, B.~Ke, H.~Pfister, and B.~Ni, ``Medmnist v2-a large-scale lightweight benchmark for 2d and 3d biomedical image classification,'' \emph{Scientific Data}, vol.~10, no.~1, p.~41, 2023.

\bibitem{fedfed}
Z.~Yang, Y.~Zhang, Y.~Zheng, X.~Tian, H.~Peng, T.~Liu, and B.~Han, ``Fedfed: Feature distillation against data heterogeneity in federated learning,'' \emph{Advances in Neural Information Processing Systems}, vol.~36, 2024.

\bibitem{pfnm}
\BIBentryALTinterwordspacing
M.~Yurochkin, M.~Agarwal, S.~Ghosh, K.~Greenewald, N.~Hoang, and Y.~Khazaeni, ``{B}ayesian nonparametric federated learning of neural networks,'' in \emph{Proceedings of the 36th International Conference on Machine Learning}, ser. Proceedings of Machine Learning Research, K.~Chaudhuri and R.~Salakhutdinov, Eds., vol.~97.\hskip 1em plus 0.5em minus 0.4em\relax Long Beach, California, USA: PMLR, 09--15 Jun 2019, pp. 7252--7261. [Online]. Available: \url{http://proceedings.mlr.press/v97/yurochkin19a.html}
\BIBentrySTDinterwordspacing

\bibitem{agg1}
------, ``Bayesian nonparametric federated learning of neural networks,'' in \emph{International conference on machine learning}.\hskip 1em plus 0.5em minus 0.4em\relax PMLR, 2019, pp. 7252--7261.

\bibitem{accparity}
M.~B. Zafar, I.~Valera, M.~Gomez~Rodriguez, and K.~P. Gummadi, ``Fairness beyond disparate treatment \& disparate impact: Learning classification without disparate mistreatment,'' in \emph{Proceedings of the 26th international conference on world wide web}, 2017, pp. 1171--1180.

\bibitem{propfair}
G.~Zhang, S.~Malekmohammadi, X.~Chen, and Y.~Yu, ``Proportional fairness in federated learning,'' \emph{arXiv preprint arXiv:2202.01666}, 2022.

\bibitem{dense}
J.~Zhang, C.~Chen, B.~Li, L.~Lyu, S.~Wu, S.~Ding, C.~Shen, and C.~Wu, ``Dense: Data-free one-shot federated learning,'' \emph{Advances in Neural Information Processing Systems}, vol.~35, pp. 21\,414--21\,428, 2022.

\bibitem{datasetcondens}
B.~Zhao, K.~R. Mopuri, and H.~Bilen, ``Dataset condensation with gradient matching,'' \emph{arXiv preprint arXiv:2006.05929}, 2020.

\bibitem{apgap}
H.~Zhao, A.~Coston, T.~Adel, and G.~J. Gordon, ``Conditional learning of fair representations,'' \emph{arXiv preprint arXiv:1910.07162}, 2019.

\bibitem{zhao+18}
Y.~Zhao, M.~Li, L.~Lai, N.~Suda, D.~Civin, and V.~Chandra, ``Federated learning with non-iid data,'' \emph{arXiv preprint arXiv:1806.00582}, 2018.

\bibitem{noniid}
------, ``Federated learning with non-iid data,'' \emph{arXiv preprint arXiv:1806.00582}, 2018.

\bibitem{coopdiff}
Y.~Zhu, J.~Xie, Y.~Wu, and R.~Gao, ``Learning energy-based models by cooperative diffusion recovery likelihood,'' \emph{arXiv preprint arXiv:2309.05153}, 2023.

\bibitem{datafreekd}
Z.~Zhu, J.~Hong, and J.~Zhou, ``Data-free knowledge distillation for heterogeneous federated learning,'' in \emph{International conference on machine learning}.\hskip 1em plus 0.5em minus 0.4em\relax PMLR, 2021, pp. 12\,878--12\,889.

\bibitem{ogd}
M.~Zinkevich, ``Online convex programming and generalized infinitesimal gradient ascent,'' in \emph{Proceedings of the 20th international conference on machine learning (icml-03)}, 2003, pp. 928--936.

\end{thebibliography}
\clearpage

\phantomsection
\addcontentsline{toc}{chapter}{Acknowledgements}
\chapter*{\centerline{\Large Acknowledgements}}
    A long journey has come to an end. The word `philosophy' in `Ph.D' means the love (philia) of learning (sophia).
    As the one who really enjoys learning, it is certainly a privilege to pursue it as a career.
    From the time I entered UNIST as an undergraduate in 2015 to the time I graduate UNIST with my Ph.D. now in 2024, almost a decade has been passed.
    Without the support of UNIST, Ulsan \& Ulju-gun, and the Republic of Korea, my country, I could not successfully complete this journey.
    
    The discussions about Zen and Buddhism that I had around the dining table at my parents' home, 
    as well as the experience of meditating taught by mom and dad to reach the deep and peaceful state of mind,
    made me strong and resilient even in the extremely stressful situations.
    I deeply appreciate my parents and my one and only brother, Iiju, for their unconditional love and wholehearted belief in and support of me.  
    
    I am deeply thankful to my advisor, Prof. Junghye Lee, for her guidance and support throughout my Ph.D. studies.
    She introduced me to federated learning research and guided me with great enthusiasm, patience, and encouragement.
    I believe that what I have learned from my advisor will certainly be helpful in my future career and life.
    I am also grateful to my co-advisor, Prof. Gi-Soo Kim, for sincerely guiding and supporting my research, as well as accommodating a seat for me in her lab.
    Without this consideration, I would be lost and struggle at the end of my Ph.D. period.
    Furthermore, thanks to committee members, Prof. Sung Whan Yoon, Prof. Dongyoung Lim, Prof. Saerom Park, and Prof. Sungbin Lim, for their keen, insightful, and warm comments on my works. 

    I am sincerely happy to have shared my time with nice and remarkable colleagues in the Data Mining Lab. --- Prof. Suhyeon Kim, Myeonghoon Lee, Wonho Sohn, Dongcheol Lim, and Hyerim Park. 
    I can never forget their warm words, emotional support, and joyful memories with them.
    I would also like to sincerely thank for all the people in the Statistical Decision Making Lab. for respecting me and getting along with me in short time.
    Thanks to mates from other labs and other major --- Gyeongho Kim, Jihyun Kim, Jae-Jun Lee, Jun Hwi min, Minsub Jeong, and Hoichang Jeong, I could often refresh my Ph.D. life by chewing the fat, eating tasty food, and hanging out with them.
    
    In particular, I have been extremely fortunate to know and to have a chance to work with Yoontae Hwang and Jaeho Kim, precious and wonderful colleagues who have always inspired me with their great passion, minds, and grit. They have also brought so many laughs into my Ph.D. life, and I genuinely wish them all the best, from the bottom of my heart. I hope to meet these guys in the future at Kyunghee restaurant (or Ilpoom eels restaurant if we all get success), as we did in the past. 
    
    Last but not least, I am incredibly a lucky guy to have an amazing and lovely soul mate, JaeGyeong Choi.
    She is always there to pat me on the back when I get dispirited and to congratulate me before anyone else.
    We have shared countless experiences together, from an unplanned picnic in Seonbawi, swimming in the middle of the ocean in Okinawa, watching Netflix with incessant giggling, and going around every corner of Ulsan.
    Without her unlimited love and unwavering support, I would definitely not be able to complete my Ph.D. successfully.
\clearpage

\hbox{ }
\thispagestyle{empty}
\clearpage
\end{document}